\def\eqref#1{equation~\ref{#1}}
\def\1{\bm{1}}
\DeclareMathAlphabet{\mathsfit}{\encodingdefault}{\sfdefault}{m}{sl}
\SetMathAlphabet{\mathsfit}{bold}{\encodingdefault}{\sfdefault}{bx}{n}
\newcommand{\Var}{\mathrm{Var}}
\numberwithin{equation}{section}
\newcommand{\cirone}{\text{\ding{172}}}
\newcommand{\cirtwo}{\text{\ding{173}}}
\newcommand{\cirthree}{\text{\ding{174}}}
\newcommand{\cirfour}{\text{\ding{175}}}
\renewcommand{\eqref}[1]{(\ref{#1})}
\newcommand{\eg}{{\it e.g.}}
\newcommand{\ie}{{\it i.e.}}
\newcommand{\reals}{\mathbb{R}}
\newcommand{\ints}{\mathbb{N}}
\newcommand{\diag}{\mbox{\bf diag}}
\newcommand{\conv}{\mbox{\bf conv}}
\newcommand{\ball}{\mbox{\bf B}}
\newcommand{\clip}{\textbf{clip}}
\newcommand{\poly}{\textbf{poly}}
\newcommand{\sgn}{\text{sgn}}
\newcommand{\expect}{\mathbb{E}}
\newcommand{\prob}{\mathbb{P}}
\newtheorem{thm}{Theorem}
\newtheorem{rmk}{Remark}
\newtheorem{asp}{Assumption}
\newtheorem*{asp*}{Assumption}
\newtheorem{lemma}[thm]{Lemma}
\theoremstyle{definition}
\title{Convergence of Distributed Adaptive Optimization with Local Updates}
\author{Ziheng Cheng \\
University of California, Berkeley \\
\texttt{ziheng\_cheng@berkeley.edu} \\
\And
Margalit Glasgow \\
Massachusetts Institute of Technology \\
\texttt{mglasgow@mit.edu}
}
\begin{document}

\maketitle

\begin{abstract}
    We study distributed adaptive algorithms with local updates (intermittent communication). 
    Despite the great empirical success of adaptive methods in distributed training of modern machine learning models, the theoretical benefits of local updates within adaptive methods, particularly in terms of reducing communication complexity, have not been fully understood yet. 
    In this paper, for the first time, we prove that \em Local SGD \em with momentum (\em Local \em  SGDM) and \em Local \em  Adam can outperform their minibatch counterparts in convex and weakly convex settings in certain regimes, respectively.
    Our analysis relies on a novel technique to prove contraction during local iterations, which is a crucial yet challenging step to show the advantages of local updates, under generalized smoothness assumption and gradient clipping strategy.
\end{abstract}

\section{Introduction}

Leveraging parallelism is crucial in accelerating the training of modern machine learning models for large scale optimization problems. In distributed environments such as large data-centers or in the federated learning setting, where the devices working together are spread apart, communication between the distributed workers is a key bottleneck.
In this work, we consider the task of
\begin{equation}
    \min_{x\in \reals^d} f(x):= \expect_{\xi\sim\mathcal{D}} [F(x;\xi)].
\end{equation}
in a distributed setting with $M$ workers. Each worker has access to $f$ via the stochastic gradient oracle $\nabla F(x;\xi)$, where $\xi$ is independently drawn from the distribution $\mathcal{D}$. In federated learning, this is known as the \em homogeneous \em setting, since all workers draw from the same data distribution.
% The communication budget is limited to $R$ rounds.

Perhaps the simplest algorithm for distributed optimization is distributed \em minibatch stochastic gradient descent (SGD), \em in which at each iteration, each worker computes a minibatch of gradients, and a gradient step is taken by averaging the gradient computed among the $M$ workers. However, such an algorithm requires communicating at each gradient step, which may be expensive. Thus numerous works have proposed distributed algorithms with less frequent communication. A popular and well-studied algorithm is \em Local \em  SGD, also known as FedAvg \citep{mcmahan2017communication}, where each worker runs SGD independently and periodically synchronizes with others by averaging the iterates.

Despite the success of \em Local \em  SGD in federated learning \citep{mcmahan2017communication}, it may not exhibit good performance when training Transformer-based large language models (LLMs).
Many empirical studies suggest that adaptive methods (\eg, Adam \citep{kingma2014adam}) are much better suited for natural language processing than vanilla SGD~\citep{goodfellow2016deep, zhang2020adaptive, kunstner2023noise, pan2023toward}.
Furthermore, as shown in \citet{zhang2019gradient, zhang2020adaptive}, language models tend to have unbounded global smoothness and heavy-tailed noise, which may also contribute to the worse performance of SGD. Parallelizing adaptive methods requires an even more expensive communication cost since additional terms, such as the momentum or the Adam denominator, need to be synchronized.
Previous works on distributed adaptive optimization have utilized compression and quantization techniques to address this issue \citep{bernstein2018signsgd, wangni2018gradient, wang2023cocktailsgd}. 
While \citet{douillard2023diloco} has shown the great empirical success of \em Local \em Adam, to the best of our knowledge, there are no theoretical results trying to improve training efficiency or adaptive methods from the perspective of intermittent communication. 

In this paper, we investigate \textbf{distributed adaptive optimization algorithms in the homogeneous regime}, in order to establish theoretical guarantees for the benefits of local iterations in reducing communication complexity. We focus on the convex or weakly convex setting\footnote{Under the stronger assumptions of 3rd-order smoothness \citep{glasgow2022sharp} and mean smoothness \citep{patel2022towards}, there are demonstrated advantages of local iterations in the non-convex setting. While our theoretical results are for the convex or weakly convex setting, it is likely that local iterations are advantageous in practice for non-convex objectives, just in the same way \em Local \em  SGD has been shown to be advantageous in practice for non-convex objectives \citep{mcmahan2017communication}. }.
% To handle unbounded smoothness and heavy-tailed noise, we use the coordinate-wise gradient clipping mechanism.

We propose a distributed version of Adam, namely, \em Local \em  Adam, with gradient clipping. Our algorithm also reduces to \em Local \em  SGD with momentum (\em Local \em SGDM), with some specific hyper-parameter choices.  
\begin{itemize}
    \item In Theorem~\ref{thm:sgdm_sc},\ref{thm:sgdm_c}, we establish the first convergence guarantee for \em Local \em SGDM in the convex setting, which outperforms the convergence rate of \em Minibatch \em SGDM. The rate we obtain is in line with the rate of \em Local \em SGD \citep{woodworth2020local} .
    \item In Theorem~\ref{thm:local_adam}, we establish a convergence rate for \em Local \em Adam in the weakly convex setting. We show that \em Local \em Adam can provably improve communication efficiency compared to its minibatch baseline.
\end{itemize}
For the first time, we are able to show the benefits of local iterations for the two commonly used algorithms, SGDM and Adam. This suggests that one can improve the training efficiency of large models by using intermittent communication.

Additionally, our results hold under generalized smoothness and heavy-tailed noise. Our result is the first high probability bound for distributed optimization algorithms with local updates, to the best of our knowledge.
The conventional in-expectation rate seems fail to capture some important properties like heavy/light tailed noise distribution.
The high probability convergence guarantee can sometimes be more informative and useful in practice \citep{gorbunov2020stochastic}. 

As for technical contribution, we use a \textbf{novel technique to prove contraction for adaptive methods}, which bounds the consensus error between the iterates at different workers. 
This is a key step in proving benefits of local updates.
Different from \em Local \em SGD, our update direction involves momentum or even distorted momentum due to the denominator in \em Local \em Adam, making it challenging to disentangle these accumulated stochastic gradients. 
To address this issue, we define and analyze an auxiliary sequence which is conditionally independent of the latest stochastic gradient and thus can construct a martingale. 
We will introduce the technique in more details in Section \ref{sec:proof}.

\subsection{Organization}

Section \ref{sec:related_work} provides the most related work to ours. 
Section \ref{sec:setup} provides the problem setup, assumptions and the \em Local \em Adam algorithm.
We then show our main results for \em Local \em SGDM in Section \ref{subsec:sgdm} and \em Local \em Adam in Section \ref{subsec:local_adam}.
Finally, in Section \ref{sec:proof}, we present the proof sketch of \em Local \em Adam, highlighting the technical challenges and our solution.

\subsection{Notation}
Let $\|\cdot\|$ be the standard Euclidean norm of a vector or the spectral norm of a matrix. 
For any $x,y\in\reals^d$, the expressions $x+y, x\odot y, \frac{x}{y}$ stand for coordinate-wise sum, product and division, respectively. 
And $x\preceq y$ means each coordinate of $x-y$ is no greater than $0$.
Furthermore, we use $x^2, \sqrt{x}, |x|$ to denote the coordinate-wise square, square root and absolute value. 
We use $\expect_m[X_m]$ to denote the average $\frac{1}{M}\sum_{m=1}^M X_m$. The coordinate-wise clipping operator $\clip(\cdot,\rho):\reals^d\to\reals^d$ is defined as $[\clip(X,\rho)]_i = \sgn([X]_i)\cdot\min\{|X_i|, \rho\}$. We use $[N]$ to denote the set $\{1, 2, \ldots, N\}$. For a subset $\Omega_0\subset \reals^d$, let $\conv(\cdot)$ denote the convex hull of $\Omega_0$ and $\ball_{R_0}(\Omega_0)$ denote the neighborhood of $\Omega_0$ with radius $R_0$. Finally, we use standard $\mathcal{O}(\cdot), \Omega(\cdot), \Theta(\cdot)$ to omit constant factors and $\Tilde{\mathcal{O}}(\cdot)$ to omit logarithmic factors.

\section{Related Work}\label{sec:related_work}

\paragraph{Theoretical benefits of local updates in distributed optimization.}

Algorithms with local updates have been used among practitioners for a long time to reduce communication complexity \citep{mcmahan2017communication}.
In the homogeneous and convex setting, \em Local \em SGD and its variants have been shown to outperform the minibatch baseline, for a fixed amount of gradient computations and communication rounds.
\cite{woodworth2020local} is the first to show that \em Local \em SGD can provably outperform \em Minibatch \em SGD.
\cite{yuan2020federated} develops FedAC to further accelerate \em Local \em SGD.
% We note however that without non-standard assumptions such as 3rd-order smoothness, local SGD does not outperform the best of Minibatch SGD and single-machine SGD \citep{glasgow2022sharp}.
In the heterogeneous case, \cite{woodworth2020minibatch} demonstrates the advantages of \em Local \em SGD when heterogeneity is very low. Algorithms with local updates have also been studied in the non-convex setting \citep{karimireddy2020scaffold, yang2021achieving, glasgow2022sharp}, including  momentum-based and adaptive methods \citep{reddi2020adaptive, karimireddy2020mime}, though no advantage of local iterations over minibatch has been shown, without non-standard assumptions such as 3rd-order smoothness.
Notably, \cite{liu2022communication} is one closely related work to ours, which considers \em Local \em SGD with gradient clipping in homogeneous and non-convex setting and claims that the convergence guarantee is better than naive parallel of centralized clipped-SGD.
However, it still cannot outperform minibatch baseline (with batch size $K$ for each worker in each round) and thus fails to demonstrate the benefits of local iterations.

\paragraph{Convergence of centralized Adam.}
Adam was first proposed by \cite{kingma2014adam} with convergence guarantee in online convex optimization.
However, \cite{reddi2019convergence} found a gap in the original analysis of Adam and constructed a counter example to show its divergence.
Since then, many works have developed convergence analyses of Adam with various assumptions and hyper-parameter settings.
\cite{guo2021novel} assumed the denominator is bounded from below and above by two constants, which typically requires a bounded gradient assumption or the AdaBound variant \citep{luo2019adaptive}.
\cite{defossez2020simple} assumed a bounded gradient and their convergence guarantee depends on $\poly(d)$.
\cite{zhang2022adam, wang2022provable} considered a finite sum setting and showed that Adam converges to the neighborhood of stationary points.
One closely related work to ours is \citet{li2024convergence}, which established a high probability bound without a bounded gradient assumption.
However they assumed that noise is bounded almost surely. 
Another recent work \citep{wang2024closing} provided a guarantee of $\mathcal{O}\left(1/\varepsilon^4\right)$ with dependence on $\poly(d)$.
Beyond the guarantees on gradient norm given by non-convex analyses, no stronger bounds (\eg, on function error) are known for Adam in the convex case.

\paragraph{Convergence of distributed adaptive algorithms.}

In the federated learning literature, \cite{reddi2020adaptive} introduced a framework, FedOPT, to leverage both worker optimizer and server optimizer.
Many works explored adaptive server optimizer while fixing worker side as vanilla SGD.
The theoretical results of local adaptive algorithms are much fewer.
Some works have studied \em Local \em Adam and \em Local \em AMSGrad with fixed momentum state during local iterations \citep{karimireddy2020mime, chen2020toward, zhao2022communication}.
They also needed stringent assumptions such as a huge batch size depending on the inverse of target error, bounded stochastic gradients, vanishing difference between denominator, etc., which are not standard. 
\citet{wang2021local} explored adaptive worker optimizer based on centralized algorithm, where the state of worker optimizer changes in local updates.
However, their analysis relied on an explicit assumptions \citep[Assumption 1]{wang2021local} on the contraction property of worker optimizer.
Some recent works \citep{li2024power,anyszka2024tighter} discussed Polyak stepsizes with an exact local proximal operator, which is inaccessible in most cases by gradient-based  optimizers.
To the best of our knowledge, there is no end-to-end convergence guarantee for distributed adaptive algorithms with local iterations.

\section{Problem Setup}\label{sec:setup}

Consider the distributed optimization problem
\begin{equation}
    \min_{x\in \reals^d} f(x):= \expect_{\xi\sim\mathcal{D}} [F(x;\xi)].
\end{equation}
Here $\mathcal{D}$ is the data distribution and $f$ is the population loss function.
We consider a setting with $M$ parallel workers, 
% \mg{please standardize terminology thoughout --- you already use clients so stick with that for example)} 
and a budget of $R$ total communication rounds, and $T$ total gradient computations at each worker. We will describe the implementation of the \em local \em and \em minibatch \em versions of a centralized algorithm $\mathcal{A}$, which uses a single stochastic gradient in each iteration. And these are illustrated in Figure~\ref{fig:mb_local}.

\begin{figure}[ht]
    \centering
    \includegraphics[width=0.8\linewidth]{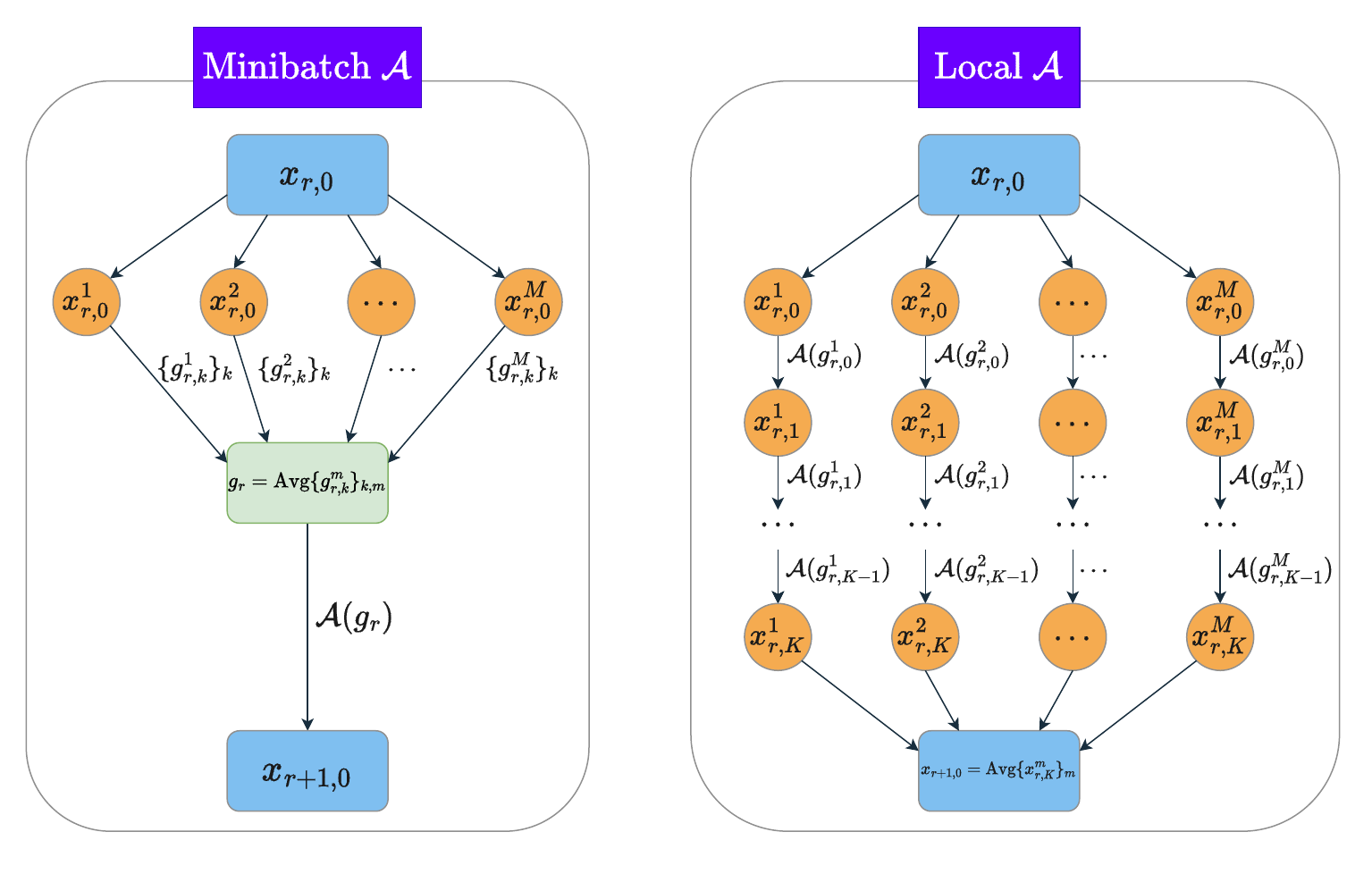}
    \caption{\em Minibatch \em $\mathcal{A}$ \em v.s. \em \em Local \em $\mathcal{A}$ in one communication round. Minibatch version computes the average of all $KM$ gradients and then executes one step of $\mathcal{A}$, while local version runs $\mathcal{A}$ independently for $K$ steps at each worker.}
    \label{fig:mb_local}
\end{figure}

In the \em local \em version of algorithm $\mathcal{A}$, in each round $r$ of the $R$ total communication rounds, each worker $m$ independently executes $K = T/R$ steps of local updates (according to the algorithm $\mathcal{A}$). For a worker $m$, we denote the $k$th gradient computed in round $r$ by $g_{r,k}^m$. Then the $M$ workers synchronize the iterates and related momentum state.
We use \em Minibatch $\mathcal{A}$ \em to denote a distributed implementation of $\mathcal{A}$ run for $R$ rounds, where $KM$ stochastic gradients are computed and averaged at each step. This is a fair baseline to compare the local update algorithms to, since the number of gradient calls and communication rounds are the same.

\em Local \em Adam is shown in Algorithm \ref{alg:local_adam}, which is a natural extension of centralized Adam \citep{kingma2014adam}.
The stochastic gradient is clipped by an coordinate-wise clipping operator with threshold $\rho$.
After $K$ steps of local updates, all the workers average their current iterates $x_t^m$,  their first order momentum $u_t^m$, and their second order momentum $v_t^m$. These averaged quantities become the values used at the beginning of the next local round.
Note that there are two slight differences from original Adam.
First, we do not involve bias correction here, \ie, $u_t^m$ and $v_t^m$ are not divided by $1-\beta_1^t$ or $1-\beta_2^t$, respectively.
Second, $\lambda$ in the denominator is in the square root, while it is outside of the denominator in original Adam. 
These modifications do not harm the spirit of Adam and are made for the convenience of analysis. 

\begin{algorithm}[t]
    \caption{\em Local \em Adam}
    \label{alg:local_adam}
    \begin{algorithmic}
        \REQUIRE{initial model $x_0$, learning rate $\eta$, momentum $\beta_1,\beta_2\in [0,1)$}
        \STATE{
            Set $x_{0,0}^m=x_0,\ u_{0,-1}^m=0,\ v_0=0$ for each worker $m\in[M]$
        }
        \FOR{$r=0, \cdots, R-1$}
            \FOR{each worker $m\in [M]$ in parallel}
                \FOR{$k=0, \cdots, K-1$}
                    \STATE{
                        $g_{r,k}^m= \nabla F(x_{r,k}^m;\xi_{r,k}^m),\ \widehat{g_{r,k}^m}=\clip(g_{r,k}^m, \rho)$  \hfill $\triangleright\,\mbox{\small{Compute clipped stochastic gradient}}$ \\
                        $u_{r,k}^m=\beta_1u_{r,k-1}^m + (1-\beta_1)\widehat{g_{r,k}^m}$ \hfill $\triangleright\,\mbox{\small{Update 1st-order momentum}}$ \\
                        $v_{r,k}^m=\beta_2v_{r,k-1}^m + (1-\beta_2)\widehat{g_{r,k}^m}\odot \widehat{g_{r,k}^m}$ \hfill $\triangleright\,\mbox{\small{Update 2nd-order momentum}}$ \\
                        $x_{r,k+1}^m=x_{r,k}^m-\frac{\eta}{\sqrt{v_{r,k}^m+\lambda^2}} \odot u_{r,k}^m$ \hfill $\triangleright\,\mbox{\small{Update model}}$
                    }
                \ENDFOR
            \ENDFOR
            \STATE{
                $x_{r+1,0}^m=\expect_m [x_{r,K}^m],\ u_{r+1,-1}^m=\expect_m [u_{r,K-1}^m],\ v_{r+1,-1}^m=v_{r+1}:=\expect_m [v_{r,K-1}^m]$ \hfill $\triangleright\,\mbox{\small{Communicate and average}}$
            }
        \ENDFOR
    \end{algorithmic}
\end{algorithm}

\subsection{Assumptions}\label{subsec:asp}
Throughout this work, we will use the following assumptions. 
\begin{asp}[Lower-boundedness]\label{asp:lb}
   $f$ is closed, twice continuously differentiable and $\inf_{x\in\reals^d} f(x)=:f(x_*)=:f_*>-\infty$.
\end{asp}

\begin{asp}[Smoothness]\label{asp:smooth}
    There exists some set $\Omega\subset \reals^d$ and $L>0$, such that for any $x,y\in \Omega$, 
    \begin{equation}\label{eq:smooth}
        \|\nabla f(x)-\nabla f(y)\|\leq L\|x-y\|,
    \end{equation}
    \begin{equation}
        \|\nabla f(x)\|^2 \leq 2L(f(x)-f_*).
    \end{equation}
\end{asp}

Similar to \citet{pmlr-v202-sadiev23a}, we only requires some properties of $f$ on a subset $\Omega$ of $\reals^d$, since we can prove that all the iterates will not leave this subset with high probability. In contrast, the typical smoothness assumption requires \eqref{eq:smooth} on the entire domain.

There are many works \citep{zhang2019gradient, crawshaw2022robustness, faw2023beyond, wang2022provable, li2024convergence} that make weaker smoothness assumptions (typically called ``generalized smoothness''), most of which are in the form of $(L_0, L_1)$-smoothness:
\begin{equation}\label{eq:L0L1}
    \|\nabla^2 f(x)\| \leq L_0 + L_1 \|\nabla f(x)\|,\ \forall x\in \reals^d.
\end{equation}
\citet{li2024convex} considers an extension called $\ell$-smoothness, which replaces the linear function of $\|\nabla f\|$ in the right hand side of \eqref{eq:L0L1} with a sub-quadratic function $\ell(\cdot)$.
As pointed out in \citet[Corollary 3.6]{li2024convex}, 
all of these will induce Assumption \ref{asp:smooth} if $\Omega$ is some level-set of the objective function\footnote{\eg, if $\Omega\subset \{x:f(x)-f_* \leq \Delta\}$, then $(L_0, L_1)$-smoothness would imply Assumption \ref{asp:smooth} for $L\asymp L_0+L_1^2\Delta$. Note that we may not obtain the optimal dependence on $L_0,L_1$ in this way though.}.  
Therefore, we directly use this more general assumption to get cleaner results.

\begin{asp}[Bounded $\alpha$-moment noise]\label{asp:moment_noise}
    There exists some set $\Omega\subset \reals^d$, $\alpha\geq 4$ and constant vector $\boldsymbol{\sigma}\succeq 0$ such that for any $x\in \Omega$,
    \begin{equation}
        \expect_{\xi\sim \mathcal{D}} |\nabla F(x;\xi) - \nabla f(x)|^{\alpha} \preceq \boldsymbol{\sigma}^{\alpha}.
    \end{equation}
    Let $\sigma_{\infty}:=\|\boldsymbol{\sigma}\|_{\infty}=\max_i\{\sigma_i\}$, $\sigma:=\|\boldsymbol{\sigma}\|=\big(\sigma_1^2+\cdots+\sigma_d^2\big)^{1/2}$.
\end{asp}

\begin{rmk}
To get a high probability bound under generalized smoothness, the assumption on stochastic noise is crucial. 
Light-tailed noise with bounded exponential moment (\eg, bounded, sub-exponential, sub-gaussian) are considered in \citet{harvey2019simple, li2020high, li2024convergence}. 
There are also attempts for heavy-tailed noise with finite $\alpha$-moment \citep{gorbunov2020stochastic, cutkosky2021high, faw2023beyond}.
In the most literatures studying heavy-tailed noise, they restrict to the case where $1<\alpha\leq 2$. 
However, in the matter of getting a logarithmic dependence on $1/\delta$, where $\delta$ is the confidence level, the essence lies in whether we assume bounded exponential moment or just polynomial moment (see Appendix \ref{app:sec:fail_sgd} for detailed discussions).
For convenience, we only consider $\alpha\geq 4$ in this paper, but our analysis methods can be extended to the case where $\alpha<4$ with some additional technical computations.
\end{rmk}

\begin{rmk}[Noise of minibatch]\label{rm:mb}
    It follows from \citet{Petrov1992} that if the gradient is estimated by a batch of i.i.d samples with batch size $N$, the $\alpha$-moment of noise has upper bound of:
    \begin{equation}
        \expect_{\{\xi_i\}\overset{i.i.d}{\sim} \mathcal{D}} \big|\frac{1}{N}\sum_{i=1}^N\nabla F(x;\xi_i) - \nabla f(x)\big|^{\alpha} \preceq c(\alpha)\big(\boldsymbol{\sigma}/\sqrt{N}\big)^{\alpha},
    \end{equation}
    where $c(\alpha)$ is a problem-independent constant. 
    It is easy to see that this bound is tight when the noise is Gaussian.
    Therefore, to get the rate for batch size $N$, we can just simply replace $\boldsymbol{\sigma}$ with $\boldsymbol{\sigma}/\sqrt{N}$ (up to a constant depending on $\alpha$) in the original convergence guarantee for batch size $1$. 
\end{rmk}

\section{Main Results}

% \mg{Add a paragraph here saying the main 2 points of these two sections .}

In this section, we provide our main results for \em Local \em Adam and its simplified version: \em Local \em SGDM. For the first time, we will be able to show the benefits of local iterations for the two algorithms, compared with their minibatch baselines in certain regime of $M,K,R$.

\subsection{Local SGDM}\label{subsec:sgdm}

Before getting into \em Local \em Adam, we start with a simpler yet also important algorithm: \em Local \em SGD with momentum. 
Note that when $\beta_2=1, \lambda=1$, Algorithm \ref{alg:local_adam} will reduce to \em Local \em SGDM. 
We restate the complete version of \em Local \em SGDM in Algorithm \ref{alg:local_sgdm} in Appendix \ref{app:sec:sgdm}.

\begin{asp}[Convexity]\label{asp:sc}
    There exists some set $\Omega\subset \reals^d$ and constant $\mu\geq 0$ such that $f$ is $\mu$-strongly convex on $\Omega$, i.e., for any $x,y\in \Omega$,
    \begin{equation}
        \langle \nabla f(x) - \nabla f(y), x-y \rangle \geq \mu \|x-y\|^2,
    \end{equation}
    \begin{equation}
        f(y) \geq f(x) + \langle \nabla f(x), y-x\rangle + \frac{\mu}{2}\|x-y\|^2.
    \end{equation}
\end{asp}

Let $D_0:=\|x_0-x_*\|$.
Now we state the results for \em Local \em SGDM below. 
Notably, our results are the first convergence guarantee for distributed SGDM with local updates in (strongly) convex setting. 

\begin{thm}[Strongly convex, full version see Theorem \ref{app:thm:sgdm_sc}]\label{thm:sgdm_sc}
    Let Assumption \ref{asp:lb}, \ref{asp:smooth}, \ref{asp:moment_noise}, \ref{asp:sc} hold for $\Omega:=\{\|x-x_*\|\leq \sqrt{3}D_0\}$ and $\mu>0$. Further assume that $K\gtrsim\log\frac{MKR}{\delta}$, $1-\beta_1=\Omega(1)$ and $\|\boldsymbol{\sigma}\|_{2\alpha}d^{\frac{1}{2}-\frac{1}{2\alpha}}=\mathcal{O}(\sigma)$.
    Then with probability no less than $1-\delta$, \em Local \em SGDM yields 
    \begin{equation}\label{eq:f_bound_sgdm_sc}
        f(\hat{x})-f_* \leq
        \exp\Big(-\Theta\left(\frac{\mu KR}{L}\right)\Big) + \Tilde{\mathcal{O}}\Big(\frac{\sigma^2}{\mu MKR} + \frac{L\sigma^2}{\mu^2KR^2} + \frac{\sigma^2}{\mu} \left(\frac{L^{\frac{1}{2}}}{\mu^{\frac{1}{2}}KR}\right)^{\frac{2(\alpha-1)}{\alpha}}\Big).
    \end{equation}
\end{thm}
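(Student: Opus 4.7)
}

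My plan is to follow the virtual-iterate approach used for \em Local \em SGD, adapted to the momentum setting. Define the averaged iterates $\bar{x}_{r,k}:=\expect_m[x_{r,k}^m]$ and $\bar{u}_{r,k}:=\expect_m[u_{r,k}^m]$, and introduce the shifted sequence
\begin{equation*}
    z_{r,k}:=\bar{x}_{r,k}-\frac{\eta\beta_1}{1-\beta_1}\bar{u}_{r,k-1},
\end{equation*}
which is a standard change of variables that turns the SGDM recursion into an SGD-like recursion on $z$ with effective stepsize $\tilde\eta:=\eta/(1-\beta_1)$ and update direction $\bar{g}_{r,k}:=\expect_m[\widehat{g_{r,k}^m}]$ evaluated at the worker iterates $x_{r,k}^m$. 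Averaging and synchronization at communication rounds preserves this identity, so $z$ evolves continuously across rounds.

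With this reformulation, I would expand $\|z_{r,k+1}-x_*\|^2$ and split $\bar g_{r,k}$ into four pieces: the true gradient $\nabla f(\bar x_{r,k})$, a consensus term $\expect_m[\nabla f(x_{r,k}^m)-\nabla f(\bar x_{r,k})]$, a clipping bias term $\expect_m[\expect\widehat{g_{r,k}^m}-\nabla f(x_{r,k}^m)]$, and a mean-zero noise term. Using strong convexity (Assumption~\ref{asp:sc}) on the inner product with $z_{r,k}-x_*$, plus the self-bounded gradient inequality in Assumption~\ref{asp:smooth} to control the squared-norm term, yields a one-step contraction of the form $\|z_{r,k+1}-x_*\|^2\leq (1-\Omega(\tilde\eta\mu))\|z_{r,k}-x_*\|^2-\tilde\eta(f(\bar x_{r,k})-f_*)+E_{r,k}$, where $E_{r,k}$ collects the noise, clipping bias and consensus drift. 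Iterating over the $KR$ local steps and tuning $\tilde\eta\asymp \log(\mu KR)/(\mu KR)$ produces the leading $\exp(-\Theta(\mu KR/L))$ term in \eqref{eq:f_bound_sgdm_sc}, once $E_{r,k}$ is under control.

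The main obstacle is bounding the consensus drifts $\expect_m\|x_{r,k}^m-\bar x_{r,k}\|^2$ and $\expect_m\|u_{r,k}^m-\bar u_{r,k}\|^2$ inside a round, which feed both the $L\sigma^2/(\mu^2KR^2)$ term and the heavy-tail tail term, and which must be controlled with high probability rather than in expectation. Because the momentum $u_{r,k}^m$ accumulates stochastic gradients across the entire round, the worker iterates are not sums of independent mean-zero increments, so the usual Bernstein/Freedman argument on $\sum_k \widehat{g_{r,k}^m}-\nabla f(x_{r,k}^m)$ does not directly apply. Following the strategy previewed in Section~\ref{sec:proof}, I would introduce an auxiliary worker sequence in which the \emph{most recent} stochastic gradient is replaced by a deterministic proxy (e.g.\ its conditional mean), making $u_{r,k}^m$ minus its auxiliary counterpart a martingale increment with respect to the filtration generated by the earlier noise. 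Applying Freedman's inequality to this difference, then unrolling backwards in $k$, gives a consensus bound of order $\tilde{\mathcal O}(\tilde\eta^2 K\sigma^2)$ with only logarithmic dependence on $1/\delta$; the heavy-tail cost is paid separately through the bias of clipping.

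The remaining ingredients are standard but must be interleaved with an induction on the good event. I would choose the clipping threshold $\rho$ so that the per-coordinate bias, bounded by Markov as $\|\boldsymbol\sigma\|_\alpha^\alpha/\rho^{\alpha-1}$, balances the gradient signal; this produces the $(L^{1/2}/(\mu^{1/2}KR))^{2(\alpha-1)/\alpha}$ term after optimization. The hypothesis $\|\boldsymbol\sigma\|_{2\alpha}d^{1/2-1/(2\alpha)}=\mathcal O(\sigma)$ is used precisely when converting the coordinate-wise moment bound of Assumption~\ref{asp:moment_noise} into the dimension-free $\ell_2$ bound needed for $z_{r,k}-x_*$. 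A union-bound induction then shows that, on an event of probability $\geq 1-\delta$, all iterates remain in $\Omega=\{\|x-x_*\|\leq \sqrt 3 D_0\}$, justifying every invocation of Assumptions~\ref{asp:smooth} and \ref{asp:moment_noise}; the hypothesis $K\gtrsim\log(MKR/\delta)$ is what allows the per-round failure probabilities to be summed. Finally, taking $\hat x$ to be a suitably weighted average of $\{\bar x_{r,k}\}$ and invoking the self-bounded inequality in Assumption~\ref{asp:smooth} converts the accumulated function-gap bound on $\{\bar x_{r,k}\}$ into the stated bound on $f(\hat x)-f_*$.
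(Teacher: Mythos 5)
Your proposal follows the paper's overall skeleton—a momentum-lookahead virtual sequence, a high-probability induction on a good event, Freedman's inequality for the martingale terms, and a clipping threshold balanced against the heavy-tail moment—and you correctly identify the constituent terms of the bound, including the order $\tilde{\mathcal{O}}(\eta^2 K\sigma^2)$ of the per-round consensus error. One cosmetic correction: with the paper's EMA update $u_t^m=\beta_1 u_{t-1}^m+(1-\beta_1)\widehat{g_t^m}$ and $x_{t+1}^m=x_t^m-\eta u_t^m$, the lookahead sequence satisfies $z_{t+1}^m=z_t^m-\eta\widehat{g_t^m}$, so the effective stepsize on $z$ is $\eta$, not $\eta/(1-\beta_1)$; you appear to be using the heavy-ball convention $u_t=\beta_1u_{t-1}+g_t$. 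Since $1-\beta_1=\Omega(1)$, this costs only constants.

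The substantive gap is in the contraction step, which is the paper's main technical contribution here and which your sketch does not actually close. The paper's Lemma~\ref{lem:contraction_sgdm} works with the \emph{per-worker} sequences $z_t^m=\frac{1}{1-\beta_1}x_t^m-\frac{\beta_1}{1-\beta_1}x_{t-1}^m$ (your $z_{r,k}$ is only their average) and relies on two structural facts: (i) $z_t^m-z_t^n$ is $\mathcal{F}_{t-1}$-measurable, so in the expansion of $\|z_{t+1}^m-z_{t+1}^n\|^2$ the cross term $\langle z_t^m-z_t^n,\ (\widehat{g_t^m}-\widehat{g_t^n})-\expect_t[\widehat{g_t^m}-\widehat{g_t^n}]\rangle$ is a genuine martingale difference amenable to Freedman; and (ii) $x_{t+1}^m=\beta_1 x_t^m+(1-\beta_1)z_{t+1}^m$, so $x_t^m-x_t^n\in\conv\{z_j^m-z_j^n\}_{j\le t}$ and bounding $z$-differences immediately bounds $x$-differences. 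The deterministic drift $\langle z_t^m-z_t^n,\ \nabla f(x_t^m)-\nabla f(x_t^n)\rangle$ is then absorbed via co-coercivity of $\nabla f$ on $\Omega$, at the price of the correction $\|(z_t^m-z_t^n)-(x_t^m-x_t^n)\|^2=\left(\frac{\eta\beta_1}{1-\beta_1}\right)^2\|u_{t-1}^m-u_{t-1}^n\|^2$, which is unrolled via the geometric sums in $u$. Your proposed construction—replace the most recent gradient in $u_{r,k}^m$ by its conditional mean so that $u_{r,k}^m$ minus the proxy is a martingale increment—controls only the latest noise increment of $u$; it does not explain how to handle the accumulated drift $\nabla f(x_{r,j}^m)-\nabla f(x_{r,j}^n)$ inside $u_{r,k}^m-u_{r,k}^n$, which depends on the very $x$-differences you are trying to bound. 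Without facts (i)–(ii) (or an equivalent device), that circularity is exactly where the argument would stall.
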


\begin{thm}[Convex, full version see Theorem \ref{app:thm:sgdm_c}]\label{thm:sgdm_c}
    Let Assumption \ref{asp:lb}, \ref{asp:smooth},  \ref{asp:moment_noise}, \ref{asp:sc} hold for $\Omega:=\{\|x-x_*\|\leq \sqrt{3}D_0\}$ and $\mu=0$. Further assume that $K\gtrsim\log\frac{MKR}{\delta}$, $1-\beta_1=\Omega(1)$ and $\|\boldsymbol{\sigma}\|_{2\alpha}d^{\frac{1}{2}-\frac{1}{2\alpha}}=\mathcal{O}(\sigma)$.
    Then with probability no less than $1-\delta$, \em Local \em SGDM yields 
    \begin{equation}\label{eq:f_bound_sgdm_c}
        f(\hat{x})-f_* \leq
        \Tilde{\mathcal{O}}\Big(\frac{LD_0^2}{KR} + \frac{\sigma D_0}{\sqrt{MKR}} + \frac{L^{\frac{1}{3}}\sigma^{\frac{2}{3}}D_0^{\frac{4}{3}}}{K^{\frac{1}{3}}R^{\frac{2}{3}}} + D_0\left(\frac{(LD_0)^{\frac{1}{2}}\sigma^{\frac{\alpha}{\alpha-1}}}{KR}\right)^{\frac{2(\alpha-1)}{3\alpha-1}}\Big).
    \end{equation}
\end{thm}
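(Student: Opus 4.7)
The overall strategy mirrors the high-probability analysis of Local SGD in the convex setting (\`a la Woodworth et al.\ 2020, extended to heavy-tailed noise via clipping), but the presence of momentum forces us to work with a carefully chosen auxiliary sequence. Let $\bar{x}_{r,k} = \expect_m[x_{r,k}^m]$, $\bar{u}_{r,k} = \expect_m[u_{r,k}^m]$, and $\bar{g}_{r,k} = \expect_m[\widehat{g_{r,k}^m}]$. Because $x_{r,k+1}^m = x_{r,k}^m - \eta \bigl(\beta_1 u_{r,k-1}^m + (1-\beta_1)\widehat{g_{r,k}^m}\bigr)/\sqrt{1+\lambda^2}$ and (under the SGDM specialization $\beta_2=1,\lambda=1$) the denominator is constant, I would first define the Polyak-style shifted sequence
\begin{equation*}
z_{r,k} \;:=\; \bar{x}_{r,k} \;-\; \tfrac{\eta\beta_1}{1-\beta_1}\,\bar{u}_{r,k-1},
\end{equation*}
which satisfies the clean recursion $z_{r,k+1} = z_{r,k} - \tilde\eta\,\bar{g}_{r,k}$ with $\tilde\eta := \eta/(1-\beta_1)$ treated as the effective learning rate. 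This removes momentum from the one-step update at the cost of a harmless shift between $z_{r,k}$ and $\bar{x}_{r,k}$.

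\textbf{Descent on the auxiliary sequence.} Expanding $\|z_{r,k+1}-x_*\|^2$ and using convexity I would obtain a Robbins--Monro-type inequality
\begin{equation*}
\|z_{r,k+1}-x_*\|^2 \;\le\; \|z_{r,k}-x_*\|^2 - 2\tilde\eta\bigl(f(\bar{x}_{r,k})-f_*\bigr) + 2\tilde\eta\,\mathrm{Err}_{r,k} + \tilde\eta^{2}\|\bar{g}_{r,k}\|^{2},
\end{equation*}
where $\mathrm{Err}_{r,k}$ collects (i) the momentum-shift bias $\langle \nabla f(\bar x_{r,k}), z_{r,k}-\bar x_{r,k}\rangle$, (ii) the consensus bias $\langle \nabla f(\bar x_{r,k}) - \expect_m \nabla f(x_{r,k}^m),\, \bar x_{r,k}-x_*\rangle$, (iii) the clipping bias $\langle \expect_m \widehat{g_{r,k}^m} - \expect_m \nabla f(x_{r,k}^m),\, \bar x_{r,k}-x_*\rangle$, and (iv) the stochastic martingale term $\langle \bar g_{r,k} - \expect_k \bar g_{r,k},\, \bar x_{r,k}-x_*\rangle$. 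Telescoping over all $KR$ steps, dividing by $\tilde\eta KR$, and applying Jensen with the averaged iterate $\hat x$ gives a bound of the form
\begin{equation*}
f(\hat x)-f_* \;\lesssim\; \frac{D_0^2}{\tilde\eta KR} + \frac{1}{KR}\sum_{r,k}\bigl(\mathrm{Err}_{r,k} + \tilde\eta\,\|\bar g_{r,k}\|^{2}\bigr),
\end{equation*}
after which tuning $\tilde\eta$ optimally produces the four terms in \eqref{eq:f_bound_sgdm_c}.

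\textbf{Bounding the error terms.} For the clipping bias and the martingale term (iv), I would use the moment assumption together with the classical bias--variance decomposition of coordinate-wise clipping: if $\rho \gtrsim \sigma_\infty \cdot (\text{log factor})$, then the bias is $\lesssim \|\boldsymbol\sigma\|^\alpha/\rho^{\alpha-1}$ and the clipped increment is bounded by $\rho$, so a Freedman inequality yields high-probability control with only a logarithmic $\log(1/\delta)$ factor — this is why Assumption~\ref{asp:moment_noise} with $\alpha\ge 4$ and the condition $\|\boldsymbol\sigma\|_{2\alpha} d^{1/2-1/(2\alpha)}=\mathcal{O}(\sigma)$ suffice. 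The momentum-shift bias (i) reduces to bounding $\eta\|\bar u_{r,k-1}\|$, which by unrolling the momentum recursion is $\lesssim \eta(\sigma+\text{gradient norms along the trajectory})$.

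\textbf{The main obstacle: consensus error with momentum.} The nontrivial step is controlling $\Xi_{r,k} := \expect_m\|x_{r,k}^m - \bar x_{r,k}\|^2$ and $\Upsilon_{r,k} := \expect_m\|u_{r,k}^m-\bar u_{r,k}\|^2$ in high probability, since the worker-level momenta accumulate noise over the entire local round and are \emph{not} conditionally independent of the latest gradient. The trick (the novelty advertised in Section~\ref{sec:proof}) is to introduce, for each worker, an \emph{auxiliary decoupled momentum} $\tilde u_{r,k}^m$ that is frozen at some earlier decision time and therefore conditionally independent of $\widehat{g_{r,k}^m}$; then $u_{r,k}^m - \tilde u_{r,k}^m$ is a martingale whose increments are bounded by $\rho$, so Freedman gives a high-probability bound of order $\rho\sqrt{K\log(1/\delta)} + \sqrt{K}\sigma$. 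Smoothness then turns this into a bound on $\Xi_{r,k}$ of order $\eta^2 K^2\sigma^2/M$ plus lower-order clipping remainders, recovering the standard Local SGD consensus scaling up to logarithmic factors.

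\textbf{Bootstrapping into $\Omega$.} Finally, since Assumptions~\ref{asp:smooth}--\ref{asp:sc} are only required on $\Omega = \{\|x-x_*\|\le \sqrt{3}D_0\}$, I would close the argument by induction on $r$: under the inductive hypothesis that all iterates up through round $r$ lie in $\Omega$, the above concentration bounds hold, which in turn force $\|\bar x_{r+1,0}-x_*\|\le \sqrt 3 D_0$ with probability at least $1-\delta/R$, and a union bound over $R$ rounds yields the claimed high-probability guarantee. Tuning $\tilde\eta$ and $\rho$ as functions of $M,K,R,\alpha$ balances the four terms in \eqref{eq:f_bound_sgdm_c}.
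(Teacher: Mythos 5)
The high-level skeleton you lay out — a Polyak-style auxiliary sequence to absorb the momentum, a Robbins–Monro descent inequality, clipping plus Freedman-type concentration for the heavy-tailed noise, and an induction that keeps the iterates in $\Omega$ — is the same skeleton as the paper's, and the error decomposition in your descent step (consensus bias, clipping bias, martingale term) matches the paper's Lemma~\ref{lem:descent_sgdm}. Two small slips: the auxiliary recursion you write satisfies $z_{r,k+1} = z_{r,k} - \eta\,\bar g_{r,k}$, not $-(\eta/(1-\beta_1))\bar g_{r,k}$, so the effective step is $\eta$, not $\tilde\eta$ (harmless, since $1-\beta_1 = \Omega(1)$); and the paper plugs in the convexity lower bound at $\bar z_t$, not $\bar x_{r,k}$, which keeps the analysis cleaner.

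The genuine gap is in the contraction step, which the paper itself flags as the key technical difficulty. You propose to control the \emph{momentum} consensus directly, by introducing a ``decoupled momentum $\tilde u_{r,k}^m$ frozen at an earlier decision time'' so that $u_{r,k}^m - \tilde u_{r,k}^m$ is a martingale, then pass to $\Xi_{r,k} = \expect_m\|x_{r,k}^m - \bar x_{r,k}\|^2$ by ``smoothness.'' This is not what the paper does, and as stated it does not give the right scaling. The paper applies the Polyak shift \emph{per worker} to get $z_t^m = x_t^m - \tfrac{\eta\beta_1}{1-\beta_1}u_{t-1}^m$ with the pure SGD-type recursion $z_{t+1}^m = z_t^m - \eta\widehat{g_t^m}$, observes $x_t^m - x_t^n \in \conv\{z_j^m - z_j^n\}_{j\le t}$ so that it suffices to bound $\|z_t^m - z_t^n\|$, and — crucially — in the expansion of $\|z_{t+1}^m - z_{t+1}^n\|^2$ uses the cocoercivity inequality $\langle x-y,\nabla f(x)-\nabla f(y)\rangle \ge \tfrac{1}{L}\|\nabla f(x)-\nabla f(y)\|^2$ to produce a negative term that cancels the $O(\eta^2)\|\nabla f(x_t^m)-\nabla f(x_t^n)\|^2$ contribution. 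That cancellation is what drives the systematic drift to zero and leaves only the noise, yielding the consensus bound $\|x_t^m - x_t^n\|^2 \lesssim \eta^2 K\sigma^2$ up to logs. Your ``frozen momentum + Freedman'' route only controls the martingale noise and, without the cocoercivity cancellation, the systematic bias $\eta\sum_j(\nabla f(x_j^m) - \nabla f(x_j^n))$ accumulates over the round and you would pick up an extra factor of $K$ (at least). Your stated target of $\eta^2 K^2\sigma^2/M$ is also wrong: the consensus error between two workers has no $M$-dependence, and the scaling needed to produce the $L^{1/3}\sigma^{2/3}D_0^{4/3}/(K^{1/3}R^{2/3})$ term in \eqref{eq:f_bound_sgdm_c} is $\eta^2 K\sigma^2$, not $\eta^2 K^2\sigma^2/M$. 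So as written, the contraction step in your plan would fail to close.
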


\begin{rmk}[Confidence level $\delta$]
    $\delta$ does not appear in the bound since we have $\log\frac{1}{\delta}$ dependence.
\end{rmk}

Our method can also be applied to \em Minibath \em SGDM (by substituting $M,K$ with $1$ and $\sigma$ with $\frac{\sigma}{\sqrt{MK}}$; see Remark~\ref{rm:mb}), whose convergence guarantee is 
\begin{equation}\label{eq:f_bound_mb_sgdm}
    f(\hat{x})-f_* \lesssim
    \left\{
        \begin{array}{ll}
            \exp\left(-\Theta\left(\frac{\mu R}{L}\right)\right) + \Tilde{\mathcal{O}}\left(\frac{\sigma^2}{\mu MKR}\right), & \text{if } \mu>0, \\
            \Tilde{\mathcal{O}}\left(\frac{LD_0^2}{R} + \frac{\sigma D_0}{\sqrt{MKR}} \right), & \text{otherwise}.
        \end{array}
    \right.
\end{equation}
This rate matches the well-known in-expectation lower bound on the convergence rate of \em Minibatch \em SGD (up to logarithmic factors). In fact, our analysis improves the state-of-the-art rate for strongly-convex SGDM (given in \citet{liu2020improved}), which has a stochastic term as $\Tilde{\mathcal{O}}\Big(\frac{L\sigma^2}{\mu^2 MKR}\Big)$. In the convex setting, our rate is consistent with the state-of-the-art centralized in-expectation bound of SGDM in \citet{sebbouh2021almost}.
Further notice that the last term in both \eqref{eq:f_bound_sgdm_sc} and \eqref{eq:f_bound_sgdm_c} is due to the bias of gradient clipping and would be negligible as long as $K^{\alpha-2}\gtrsim \frac{\mu R^2}{L}$ or $K^{\frac{3\alpha-5}{2}}\gtrsim \frac{\sigma R^2}{LD_0}$.
In this case, our guarantee for \em Local \em SGDM is aligned with the rate of \em Local \em SGD in \citet{woodworth2020local, khaled2020tighter} up to logarithmic factor.
Therefore, we can see the benefits of local iterations in the large $M$ and large $K$ regime compared to minibatch baseline.

We defer the complete version and detailed proof to Appendix \ref{app:sec:sgdm}.

\subsection{Local Adam}\label{subsec:local_adam}
    The convergence of Adam is much more difficult to prove. 
    \cite{reddi2019convergence} pointed out that the original proof in \citet{kingma2014adam} in centralized convex setting was incorrect.
    Therefore, the convergence of Adam in for convex function is of independent interest and beyond our scope.
    Instead, we turn to consider Adam in the weakly convex setting.

\begin{asp}[Weak convexity]\label{asp:wc}
    There exists constant $\tau>0$ such that $f$ is $\tau$-weakly convex, i.e., for any $x,y\in \reals^d$,
    \begin{equation}\label{eq:weak-mono}
        \langle \nabla f(x) - \nabla f(y), x-y \rangle \geq -\tau \|x-y\|^2, 
    \end{equation}
     \begin{equation}
        f(y) \geq f(x) + \langle \nabla f(x), y-x\rangle - \frac{\tau}{2}\|x-y\|^2,\
        \nabla^2 f(x) \succeq -\tau I_d.
    \end{equation}
\end{asp}
    Note that $L$-smoothness implies that Assumption \ref{asp:wc} always holds with $\tau=L$.
    Also note that here we assume the weak convexity holds in $\reals^d$ for technical simplicity.
    % For any $\gamma< \frac{1}{\tau}$, define the Moreau envelope of $f$ as
    % \begin{equation}
    %     f_\gamma(x):= \min_{y\in \reals^d} f(y) + \frac{1}{2\gamma} \|x-y\|^2.
    % \end{equation}
    Let $H_r = \diag{(\sqrt{v_r+\lambda^2})} \succeq \lambda I_d$ and $\Delta:=f(x_0)-f_*$.
    Furthermore, define an auxiliary sequence $\{z_{r,k}^m\}$ as:
    \begin{equation}
        z_{r,k+1}^m = \left\{
        \begin{array}{ll}
            (x_{r,k+1}^m-\beta_1x_{r,k}^m)/(1-\beta_1) & \text{if $\ k\neq K-1$}, \\
            (x_{r,k+1}^m-\beta_1\overline{x}_{r,k})/(1-\beta_1) & \text{otherwise}.
        \end{array}
        \right.
    \end{equation}

    Let $\overline{z}_{r,k}:=\expect_m [z_{r,k}^m]$. Now we state the main result of \em Local \em Adam below (see Theorem \ref{app:thm:local_adam_1} for more general results on Moreau envelope). 

\begin{thm}[Full version see Theorem \ref{app:thm:local_adam_2}]\label{thm:local_adam}
    Let Assumption \ref{asp:lb}, \ref{asp:smooth}, \ref{asp:moment_noise}, \ref{asp:wc} hold for $\Omega=\conv(\ball_{R_0}(\Omega_0))$, where $\Omega_0:=\{f(x) - f_*\leq 4\Delta\}$ and $R_0=\sqrt{\Delta/(80L)}$. 
    Further assume $K\gtrsim \log (MKR/\delta)$, $1-\beta_1=\Omega(1)$, $\|\boldsymbol{\sigma}\|_{2\alpha}d^{\frac{1}{2}-\frac{1}{2\alpha}}=\mathcal{O}(\sigma)$ and $1-\beta_2 = \Tilde{\mathcal{O}} (K^{-3/2}R^{-1/2})$.
    Then with probability no less than $1-\delta$, \em Local \em Adam yields 
    \begin{equation}\label{eq:grad_bound_local_adam}
        \begin{aligned}
            &\frac{\lambda}{KR}\sum_{r=0}^{R-1}\sum_{k=0}^{K-1}\|\nabla f (\overline{z}_{r,k}) \|_{H_r^{-1}}^2
            = \Tilde{\mathcal{O}}\Big(\frac{\tau\Delta}{R} + \frac{L\Delta}{KR} + \sqrt{\frac{L\Delta\sigma^2}{MKR}} + \frac{(L\Delta\sigma)^{\frac{2}{3}}}{K^{\frac{1}{3}}R^{\frac{2}{3}}} + \left(\frac{L\Delta\sigma^{\frac{\alpha}{\alpha-1}}}{KR}\right)^{\frac{2(\alpha-1)}{3\alpha-2}}\Big).
        \end{aligned}
    \end{equation}
\end{thm}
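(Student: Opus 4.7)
The approach is to track the virtual averaged auxiliary sequence $\overline{z}_{r,k}$ and derive a descent inequality for $f(\overline{z}_{r,k})$, then absorb the residuals using a consensus-contraction bound, a clipping bias/variance decomposition, and martingale concentration. The definition of $z_{r,k}^m$ is chosen precisely so that, when the Adam denominator $\sqrt{v_{r,k}^m+\lambda^2}$ is replaced by the fixed round-start value $H_r$, the $z$-increment collapses to a preconditioned stochastic-gradient step $z_{r,k+1}^m - z_{r,k}^m \approx -\eta H_r^{-1}\odot \widehat{g}_{r,k}^m$. The first step is therefore to exploit the choice $1-\beta_2 = \tilde{\mathcal{O}}(K^{-3/2}R^{-1/2})$ to show that $\sqrt{v_{r,k}^m+\lambda^2}$ stays multiplicatively close to $H_r$ for every $m$ and every $k\in\{0,\dots,K-1\}$, so the substitution contributes only a lower-order error.

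Under this approximation, averaging across workers gives $\overline{z}_{r,k+1} = \overline{z}_{r,k} - \eta H_r^{-1}\odot \expect_m[\widehat{g}_{r,k}^m]$, and smoothness plus weak convexity yields a one-step inequality of the form $f(\overline{z}_{r,k+1}) \le f(\overline{z}_{r,k}) - \eta \langle \nabla f(\overline{z}_{r,k}), H_r^{-1}\odot \expect_m[\widehat{g}_{r,k}^m]\rangle + O(L\eta^2)\|H_r^{-1}\odot \expect_m[\widehat{g}_{r,k}^m]\|^2$. I would decompose $\expect_m[\widehat{g}_{r,k}^m]$ into (i) $\nabla f(\overline{z}_{r,k})$, (ii) a zero-mean martingale piece, (iii) a coordinate-wise clipping bias controlled by $\boldsymbol{\sigma}^\alpha/\rho^{\alpha-1}$ via Assumption~\ref{asp:moment_noise}, and (iv) a consensus-error piece $\expect_m[\nabla f(x_{r,k}^m)] - \nabla f(\overline{z}_{r,k})$ controlled by $\|x_{r,k}^m-\overline{z}_{r,k}\|$ and smoothness. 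This isolates $-\eta\|\nabla f(\overline{z}_{r,k})\|_{H_r^{-1}}^2$ on the right; weak convexity then enters to handle the gap $f(\overline{z}_{r,k}) - f(\overline{x}_{r,k})$, which is what ultimately produces the $\tau\Delta/R$ term.

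The main obstacle, and the point where the novel auxiliary-sequence argument advertised in Section~\ref{sec:proof} is essential, is bounding the consensus error $\expect_m\|x_{r,k}^m-\overline{x}_{r,k}\|^2$ uniformly over $k$. Unlike Local SGD, where this quantity is a martingale sum of fresh clipped gradients and admits an immediate Freedman bound, here the update direction is the exponential moving average $u_{r,k}^m$, whose increment correlates with \emph{all} prior gradients of the round through both $\beta_1$ and the denominator $H_r$. My plan is to introduce, for each $k$, a frozen shadow sequence that replaces $\widehat{g}_{r,k}^m$ by an $\mathcal{F}_{r,k-1}$-measurable surrogate so that its step-to-step increment is conditionally mean-zero; the deviation between the true and shadow sequences can be bounded deterministically using the clipping threshold $\rho$, while the shadow itself forms a vector martingale for which a Freedman/Azuma inequality gives $\log(1/\delta)$-tailed control. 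A Grönwall-style unrolling over $k\le K$ then produces a contraction bound of order $\eta^2 K \sigma^2/M$ plus clipping bias, which is what yields the sublinear $K$-dependence in the statistical and heavy-tail terms of \eqref{eq:grad_bound_local_adam}.

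The final step is a high-probability induction on $r$: assuming the iterates have stayed in $\Omega = \conv(\ball_{R_0}(\Omega_0))$ through round $r$ so that Assumptions~\ref{asp:smooth} and~\ref{asp:moment_noise} apply, the per-round descent inequality telescopes into $\frac{\lambda}{KR}\sum_{r,k}\|\nabla f(\overline{z}_{r,k})\|_{H_r^{-1}}^2 \lesssim \frac{\Delta}{\eta KR} + \tau\Delta/R + (\text{noise/bias/consensus})$, and the hypothesis $\|\boldsymbol{\sigma}\|_{2\alpha}d^{(1-1/\alpha)/2}=\mathcal{O}(\sigma)$ is what allows the coordinate-wise clipping analysis to be consolidated into the Euclidean $\sigma$ appearing in the statement. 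The inductive step is then re-established by controlling $f(\overline{x}_{r+1,0}) - f_*$ through the same descent inequality together with the smoothness-based bound $\|\nabla f\|^2 \le 2L(f-f_\ast)$. Tuning $\eta$ and the clip threshold $\rho$ to balance the four dominant contributions (optimization, statistical, consensus, and clipping-heavy-tail) recovers the four terms in \eqref{eq:grad_bound_local_adam}. I expect the consensus/contraction step to be by far the hardest; everything else is a careful but essentially standard high-probability adaptation of weakly convex descent analysis.
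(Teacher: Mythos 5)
Your high-level architecture matches the paper's: control denominator drift so $H_{r,k}^m \approx H_r$ during a round, run a martingale-plus-bias-plus-consensus decomposition on a descent inequality for the virtual sequence $\overline{z}_{r,k}$, and close a high-probability induction that keeps the iterates inside $\Omega$. The one genuine structural difference is that you descend on $f(\overline{z}_{r,k})$ directly while the paper descends on the Moreau envelope $f_\gamma^{H_r}(\overline{z}_{r,k})$ and translates to $\|\nabla f\|_{H_r^{-1}}$ at the end (Lemma~\ref{lem:moreau_env}); since Theorem~\ref{thm:local_adam} already fixes $\gamma=\lambda/L$ (so $f_\gamma^\lambda - \min f_\gamma^\lambda \asymp f-f_*$ by Lemma~\ref{lem:moreau_1}), your route is a legitimate and slightly more elementary specialization — you avoid the change-of-potential bookkeeping at round boundaries when $H_r$ updates, at the price of losing the general-$\gamma$ statement the paper proves in Theorem~\ref{app:thm:local_adam}.

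There are, however, two slips that would derail the computation if carried out literally. First, you attribute the $\tau\Delta/R$ term to ``weak convexity handling the gap $f(\overline{z}_{r,k})-f(\overline{x}_{r,k})$''; that gap is in fact an $O(\eta^2)$ momentum-lag term controlled by smoothness alone and has nothing to do with $\tau$. Weak convexity is used \emph{only} in the contraction lemma, via the inequality $-\langle x^m-x^n,\nabla f(x^m)-\nabla f(x^n)\rangle + \tfrac{1}{L}\|\nabla f(x^m)-\nabla f(x^n)\|^2 \le 2\tau\|x^m-x^n\|^2$, which caps the per-step drift of $\|z^m_t-z^n_t\|$ at order $\eta\tau$ rather than $\eta L$. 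Your Gr\"onwall unrolling must use \emph{this} bound — otherwise the growth factor is $e^{K\eta L}$ and the resulting step-size constraint $\eta\lesssim 1/(KL)$ reproduces only the Minibatch rate $L\Delta/R$ and destroys the entire advantage of local updates. The constraint $\eta K\tau\lesssim 1$ from contraction, combined with $\Delta/(\eta T)$ in the telescoped descent, is what actually yields $\tau\Delta/R$. Second, your stated contraction bound $\eta^2 K\sigma^2/M$ has a spurious $1/M$: the consensus error $\|x^m_t-x^n_t\|^2$ is a pairwise discrepancy between two independent worker trajectories and exhibits no averaging across $M$; the paper correctly obtains $\eta^2\sigma^2 K A$ with $A=\tilde{\mathcal{O}}(1)$ and no $M$-dependence. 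The $1/M$ factor enters only through the variance of $\expect_m[\widehat{g}^m_t-\expect_t\widehat{g}^m_t]$ in the descent martingale, which produces the $\sqrt{L\Delta\sigma^2/(MKR)}$ statistical term; the consensus term $(L\Delta\sigma)^{2/3}/(K^{1/3}R^{2/3})$ has no linear speedup, exactly as in Local SGD. With those two corrections your plan balances to the five terms in \eqref{eq:grad_bound_local_adam}.
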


% The LHS of \eqref{eq:grad_bound_local_adam} is the weighted average of squared gradient norm.
% In the worst case, in Theorem \ref{app:thm:local_adam_2}, we are able to show that $\lambda I_d \preceq H_r\preceq (G_\infty+\sigma_\infty+\lambda)I_d$ with high probability, where $G_\infty$ is the $\ell_\infty$-norm of $\nabla f$ in $\conv(\ball_{R_0}(\Omega))$. 
% Therefore, if $\lambda=\Omega(G_\infty+\sigma_\infty)$, the LHS is no less than $\frac{1}{KR}\sum_{r=0}^{R-1}\sum_{k=0}^{K-1}\|\nabla f(\overline{z}_{r,k}) \|^2$ (up to constant factor).
% We leave the result with the metric however, as $G_\infty+\sigma_\infty$ are the worst case bounds.
% Note $H_r= \diag(\sqrt{v_r+\lambda^2})$ and $v_r$ is the exponential moving average (EMA) of squared gradients.
% Informally, $v_r$ vanishes as $r$ grows and thus $H_r$ should gradually reduce to $\lambda I_d$. 
% This is common in the study of adaptive algorithms \citep{alacaoglu2020convergence,chen2018convergence}.

The RHS of \eqref{eq:grad_bound_local_adam} consists of four parts.
The first part is $\frac{\tau\Delta}{R} + \frac{L\Delta}{KR}$, which is the optimization term and determined by the upper bound of learning rate $\eta$.
The second term is $\sqrt{\frac{L\Delta\sigma^2}{MKR}}$, corresponding to the standard statistical lower bound from $MKR$ stochastic gradients \citep{arjevani2023lower}.
The third component is $\frac{(L\Delta\sigma)^{\frac{2}{3}}}{K^{\frac{1}{3}}R^{\frac{2}{3}}}$, which is sourced from the discrepancy overhead of doing local iterations.
And the last one, $\Big(\frac{L\Delta\sigma^{\frac{\alpha}{\alpha-1}}}{KR}\Big)^{\frac{2(\alpha-1)}{3\alpha-2}}$, is induced by the bias of clipped stochastic gradient and can be dominated when $K^{\frac{3\alpha-4}{2}}\gtrsim \sigma^2R/(L\Delta)$. 

Our analysis method can also be applied to \em Minibatch \em Adam (by substituting $M,K$ with $1$ and $\sigma$ with $\sigma/\sqrt{MK}$; see Remark~\ref{rm:mb}), and the convergence rate is 
\begin{equation}\label{eq:grad_bound_mb_adam}
    \Tilde{\mathcal{O}}\big(
        \frac{L\Delta}{R} + \sqrt{\frac{L\Delta\sigma^2}{MKR}}\big),
\end{equation}
aligned with (up to logarithmic factor) the state-of-the-art convergence guarantees for smooth weakly convex functions \citep{davis2019stochastic, deng2021minibatch}.  
Suppose $K^{\frac{3\alpha-4}{2}}\gtrsim  \sigma^2R/(L\Delta)$ and hence the last term in \eqref{eq:grad_bound_local_adam} would be dominated and negligible.
Now we can observe the benefits of local iterations. 
Note that both \eqref{eq:grad_bound_local_adam} and \eqref{eq:grad_bound_mb_adam} have the statistical lower bound $1/\sqrt{MKR}$.
% which matches (up to logarithms) the in-expectation lower bound of first order algorithm \citep{arjevani2023lower}. 
Hence when the statistical term dominates, both algorithms have similar worst-case rate.
Once we leave the noise-dominated regime, then \em Local \em Adam converges faster than \em Minibatch \em Adam whenever $K\gtrsim \sigma^2R/(L\Delta)$.
And the gap will increase as $K$ grows until $K\asymp L/\tau$.

Therefore, we conclude that in the large $M$ and small $\tau$ regime, \em Local \em Adam would outperform \em Minibatch \em Adam. 
Since $f$ is close to convex function when $\tau$ is small, this  is consistent with \citet{woodworth2020local}. Please see Appendix \ref{app:subsec:discuss_local_adam} for more comparisons about Moreau envelop.

We defer further discussions on the choices of other important hyper-parameters including $\beta_1, \beta_2,\lambda$ to Appendix \ref{app:subsec:discuss_local_adam}.
The complete proof is in Appendix \ref{app:sec:local_adam}.

\section{Proof Sketch}\label{sec:proof}
    In this section, we show high-level ideas in our proofs.
    We only demonstrate the \em Local \em Adam here since \em Local \em SGDM is a special case of \em Local \em Adam ($\beta_2=1$) and has similar patterns.

    As a common practice in the study of weakly convex function \citep{davis2019stochastic, mai2020convergence}, the norm of the gradient of the Moreau envelope can serve as a proxy for near-stationarity.
    Here we use a generalized Moreau envelope for adaptive algorithms, proposed by \cite{alacaoglu2020convergence}.
    For any positive definite matrix $H$ and $\gamma>0$ such that $\gamma^{-1} H\succeq \tau I_d$, define the Moreau envelope of $f$ as
    \begin{equation}
        f_\gamma^H(x):= \min_{y\in \reals^d} f(y) + \frac{1}{2\gamma}\|x-y\|_H^2.
    \end{equation}
    With some abuse of notation, we define $f_\gamma^\lambda(x):=f_\gamma^{\lambda I_d}(x)=f_{\gamma/\lambda}(x)$.
    The common convergence metric for weakly-convex function is correspondingly $\|\nabla f_\gamma^H(\cdot)\|_{H^{-1}}$, which can bound $\|\nabla f(\cdot)\|_{H^{-1}}$, as shown in the following lemma.
    \begin{lemma}[Full version see Lemma \ref{lem:moreau_env}]\label{main:lem:moreau_env}
        Let $z\in \Omega_0$ and $y:=\arg\min_x f(x)+\frac{1}{2\gamma}\|x-z\|_H^2$ for some $H\succeq \lambda I_d$ and $L/\lambda \geq \gamma^{-1}\geq 2\tau/\lambda$. 
        Then
        \begin{equation}
            \nabla f_\gamma^H (z) = \nabla f(y) = H(z-y)/\gamma,\qquad
            \|\nabla f(z)\|_{H^{-1}}\leq 2\gamma L\|\nabla f_\gamma^H(z)\|_{H^{-1}}/\lambda.
        \end{equation}
    \end{lemma}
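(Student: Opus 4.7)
The plan is to handle the two assertions in turn: first identify $\nabla f_\gamma^H(z)$ via the envelope theorem, then bound $\|\nabla f(z)\|_{H^{-1}}$ in terms of $\|\nabla f(y)\|_{H^{-1}}$ using smoothness together with the norm geometry induced by $H$.

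\textbf{Step 1 (Identification of the envelope gradient).} Consider the inner objective $g(x):=f(x)+\frac{1}{2\gamma}\|x-z\|_H^2$, whose Hessian satisfies $\nabla^2 g = \nabla^2 f+\gamma^{-1}H\succeq -\tau I_d+2\tau I_d = \tau I_d \succ 0$, using weak convexity and the hypothesis $\gamma^{-1}\lambda\geq 2\tau$ combined with $H\succeq \lambda I_d$. Thus $y$ is the unique minimizer and the first-order condition $\nabla f(y)+\gamma^{-1}H(y-z)=0$ yields $\nabla f(y) = H(z-y)/\gamma$. A standard envelope/Danskin argument (or direct differentiation of $f_\gamma^H(z)=f(y(z))+\frac{1}{2\gamma}\|z-y(z)\|_H^2$, using that the $y$-partial vanishes at the optimum) then gives $\nabla f_\gamma^H(z) = \gamma^{-1}H(z-y) = \nabla f(y)$, which is the first displayed identity.

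\textbf{Step 2 (Gradient bound).} Split $\nabla f(z) = \nabla f(y)+(\nabla f(z)-\nabla f(y))$ and take $\|\cdot\|_{H^{-1}}$. Since $H\succeq \lambda I_d$ implies $H^{-1}\preceq \lambda^{-1}I_d$, smoothness gives
\begin{equation*}
    \|\nabla f(z)-\nabla f(y)\|_{H^{-1}} \leq \lambda^{-1/2}\|\nabla f(z)-\nabla f(y)\| \leq L\lambda^{-1/2}\|z-y\|.
\end{equation*}
The displacement $\|z-y\|$ is controlled by returning to the $H$-norm: Step 1 gives $z-y = \gamma H^{-1}\nabla f(y)$, so $\|z-y\|_H = \gamma\|\nabla f(y)\|_{H^{-1}}$, and $\|z-y\|\leq \lambda^{-1/2}\|z-y\|_H = \gamma\lambda^{-1/2}\|\nabla f(y)\|_{H^{-1}}$. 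Assembling these yields
\begin{equation*}
    \|\nabla f(z)\|_{H^{-1}} \leq \Bigl(1+\frac{L\gamma}{\lambda}\Bigr)\|\nabla f(y)\|_{H^{-1}}.
\end{equation*}
Finally, the assumption $\gamma^{-1}\leq L/\lambda$ means $L\gamma/\lambda\geq 1$, so $1+L\gamma/\lambda\leq 2L\gamma/\lambda$, delivering the stated inequality once $\nabla f(y)$ is rewritten as $\nabla f_\gamma^H(z)$.

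\textbf{Main subtlety.} The only step requiring care is the invocation of smoothness, which is assumed only on $\Omega=\conv(\ball_{R_0}(\Omega_0))$: I must verify $y\in\Omega$. Since $g(y)\leq g(z)$ gives $\frac{1}{2\gamma}\|y-z\|_H^2\leq f(z)-f(y)$, and weak convexity plus $\|\nabla f(z)\|\leq\sqrt{2L(f(z)-f_*)}\leq\sqrt{8L\Delta}$ (from Assumption~\ref{asp:smooth} and $z\in\Omega_0$) bound $f(z)-f(y)$ in terms of $\|y-z\|$, one can solve the resulting quadratic inequality using $\gamma^{-1}\lambda\geq 2\tau$ to obtain $\|y-z\|\leq R_0$. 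At the same time, $f(y)\leq f(z)\leq f_*+4\Delta$ places $y\in\Omega_0$. This verification, while the most tedious portion of the argument, is routine under the prescribed parameter window $\lambda/L\leq \gamma\leq \lambda/(2\tau)$ and the specified choice of $R_0=\sqrt{\Delta/(80L)}$.
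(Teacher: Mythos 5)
Your proposal follows the paper's route for the core argument. Step 1 (strong convexity of the inner objective from $\nabla^2 f+\gamma^{-1}H\succeq -\tau I_d + 2\tau I_d\succ 0$, plus the first-order condition and envelope theorem) matches the paper's proof of Lemma~\ref{lem:moreau_env} exactly. In Step 2 the decomposition $\nabla f(z)=\nabla f(y)+(\nabla f(z)-\nabla f(y))$ and the final arithmetic $1+\gamma L/\lambda\leq 2\gamma L/\lambda$ (from $\gamma\geq\lambda/L$) also match; the only variation is technical. The paper writes $\nabla f(z)-\nabla f(y)=H_g(z-y)$ via a mean-value matrix $-\tau I_d\preceq H_g\preceq LI_d$ and bounds the $H^{-1}$-norm spectrally, whereas you pass through the Euclidean norm using $H\succeq\lambda I_d$ twice together with the Lipschitz bound $\|\nabla f(z)-\nabla f(y)\|\leq L\|z-y\|$. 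Both yield the same factor $\gamma L/\lambda$; yours is marginally more elementary and needs only that the two endpoints $y,z$ lie in $\Omega$ (the mean-value form implicitly uses the whole segment, which is also in $\Omega$ since $\Omega$ is convex and contains $\conv(\Omega_0)$), so this is a fair alternative presentation.

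The one incorrect claim is in the ``main subtlety'' paragraph: $\|y-z\|\leq R_0$ does not follow from the quadratic inequality you outline. From $\frac{1}{2\gamma}\|y-z\|_H^2\leq f(z)-f(y)\leq 4\Delta$ (or equivalently from your weak-convexity route combined with $\|\nabla f(z)\|\leq\sqrt{8L\Delta}$), the resulting bound is $\|y-z\|\lesssim\sqrt{\gamma\Delta/\lambda}\asymp\sqrt{\Delta/L}$, which exceeds $R_0=\sqrt{\Delta/(80L)}$ by a large constant factor, so the displacement cannot be forced inside the $R_0$-ball. Fortunately that estimate is unnecessary: as you also observe, $f(y)\leq f(z)\leq f_*+4\Delta$ already places $y\in\Omega_0\subset\Omega$, which is exactly how the paper proceeds (via the chain $f_\gamma^\lambda(y)\leq f(y)\leq f_\gamma^H(z)\leq f(z)$) and is all that Step 2 requires. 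Drop the $R_0$-ball and the quadratic-inequality detour; the level-set membership closes the argument on its own.
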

    In the rest of this section, we provide the proof sketch for general Moreau envelop.

    For any integer $0\leq t\leq T-1$, we define $r(t),k(t)\in\ints$ such that $t=r(t)K+k(t)$ and $k(t)\leq K-1$. We will omit the dependence on $t$ and let $r=r(t),k=k(t)$ if not causing confusion. Further define 
    \begin{equation}
        x_t^m:=x_{r,k}^m, g_t^m:=g_{r,k}^m, \widehat{g_t^m}:=\widehat{g_{r,k}^m}, u_t^m=u_{r,k}^m, v_t^m=v_{r,k}^m, H_t^m:=\diag(\sqrt{v_t^m+\lambda^2})
    \end{equation} 
    Then Algorithm \ref{alg:local_adam} is equivalent to the following update rule:
    \begin{equation}
        x_{t+1}^m = \left\{
        \begin{array}{ll}
            x_t^m-\eta (H_t^m)^{-1}u_t^m & \text{if $\ t\ \text{mod}\ K \not\equiv -1$}, \\
            \overline{x}_t-\eta\expect_m[(H_t^m)^{-1}u_t^m] & \text{otherwise}.
        \end{array}
        \right.
    \end{equation}
    Define an auxiliary sequence $\{z_t^m\}$ as:
    \begin{equation}\label{eq:def_zt}
        z_{t+1}^m = \left\{
        \begin{array}{ll}
            (x_{t+1}^m-\beta_1x_t^m)/(1-\beta_1) & \text{if $\ t\ \text{mod}\ K \not\equiv -1$}, \\
            (x_{t+1}^m-\beta_1\overline{x}_t)/(1-\beta_1) & \text{otherwise}.
        \end{array}
        \right.
    \end{equation}
    Let $y_t:= \arg\min_y f(y) + \frac{1}{2\gamma}\|y-\overline{z}_t\|_{H_{r(t)}}^2 $.
    Define filtration $\mathcal{F}_{-1}=\emptyset, \mathcal{F}_t:=\sigma(\{g_{r,k}^m\}_m \cup \mathcal{F}_{t-1})$ and conditional expectation $\expect_t[\cdot]=\expect[\cdot|\mathcal{F}_t]$.

    As standard practice in distributed optimization, our proof mainly contains two parts: \textbf{contraction} and \textbf{descent}. 
    Here contraction involves showing that the iterates of local training at different workers will not diverge to different points. And decent involves showing that the objective value decreases at each iteration.
    Our strategy is to inductively prove that some probabilistic event $E_t\in \mathcal{F}_{t-1}$ holds with high probability, which are designed to ensure  contraction and descent. 
    And event $E_T$ can directly imply the upper bound in Theorem \ref{thm:local_adam}.  
    In fact, event $E_t$ has the form of
    \begin{equation}
        E_t= \left\{\mathcal{A}_{j,i} \text{ holds for all }j\leq t-1, i\in\{1,2,3,4\} \right\},
    \end{equation}
    where $\mathcal{A}_{j,i}\in \mathcal{F}_j$ (defined later) is also some probabilistic event.  
    As the components of $E_t$, each $\mathcal{A}_{j,i}$ is designed to ensure either contraction or descent.
    We will prove the high probability bound of these components in sequence. 

\subsection{Bounding the trajectory with high probability}
    Similar to \cite{pmlr-v202-sadiev23a}, we only make assumptions on $f$ and noise in certain subset $\Omega\subset \reals^d$. 
    However, we are able to show that all the iterates will not leave $\Omega$ with high probability.
    Specifically, if it holds for all iterates before time $t$, using standard techniques for weakly convex optimization, we can upper bound the function value and Moreau envelope at $\overline{z}_{t+1}$ by 
    % \begin{equation}\label{main:eq:descent}
    %     \begin{aligned}
    %         f_\gamma^{H_{r(t+1)}}(\overline{z}_{t+1}) 
    %         &\leq f_{\gamma}^{\lambda}(x_0) - \Omega(\eta) \sum_{j=0}^t\|\nabla f_\gamma^{H_{r(j)}}(\overline{z}_j)\|_{H_{r(j)}^{-1}}^2 \\
    %         &\qquad + \underbrace{\mathcal{O}(\eta) \sum_{j=0}^t\left\langle \overline{z}_{j}-\eta H_{r(j)}^{-1}\nabla f(\overline{z}_j)-y_j, \expect_m[\expect_j[\widehat{g_j^m}]-\widehat{g_j^m}]\right\rangle}_{\text{martingale}} + \mathcal{O}(\eta^2).
    %     \end{aligned}
    % \end{equation}
    \begin{equation}\label{main:eq:descent}
        \begin{aligned}
            f_\gamma^{H_{r(t+1)}}(\overline{z}_{t+1}) 
            &\leq f_{\gamma}^{\lambda}(x_0) - \Omega(\eta) \sum_{j=0}^t\|\nabla f_\gamma^{H_{r(j)}}(\overline{z}_j)\|_{H_{r(j)}^{-1}}^2 + \underbrace{\mathcal{O}(\eta^2)\sum_{j=0}^t\|\expect_m[\nabla f(x_j^m)-\widehat{g_j^m}]\|^2}_{\text{stochastic noise}} \\
            &+ \underbrace{\mathcal{O}(\eta)\sum_{j=0}^t\|\nabla f(\overline{z}_j)-\expect_m[\nabla f(x_j^m)]\|^2}_{\text{discrepancy}} \\
            &+ \underbrace{\mathcal{O}(\eta) \sum_{j=0}^t\left\langle \overline{z}_{j}-\eta H_{r(j)}^{-1}\nabla f(\overline{z}_j)-y_j, \expect_m[\expect_j[\widehat{g_j^m}]-\widehat{g_j^m}]\right\rangle}_{\text{martingale}}\\
            & + \text{higher order terms}.
        \end{aligned}
    \end{equation}
    To see that the last term is a martingale, note that $H_{r(j)}$ is independent of $\widehat{g_j^m}$ since the stochastic gradient $\widehat{g_j^m}$ is drawn during round $r$. Further note that $\expect_j[\widehat{g_j^m}]-\widehat{g_j^m}$ is almost surely bounded thanks to clipping.
    Now \eqref{main:eq:descent} allows us to inductively bound $f_\gamma^{H_{r(j)}}(\overline{z}_{j})$ and thus bound $\|\overline{z}_{j}-\eta H_{r(j)}^{-1}\nabla f(\overline{z}_j)-y_j\|$.
    After these preliminaries, we are able to apply Berstein's inequality \citep{bennett1962probability, freedman1975tail} to control this martingale.
    Hence the Moreau envelope at $\overline{z}_{t+1}$ can be bounded by a constant with high probability.
    Combining this with contraction results below, we can show that all the iterates stay in $\Omega$ with high probability.

\subsection{Contraction}
    Next, we aim to show contraction, \ie, $\|x_t^m-x_t^n\|$ will not diverge during local iterations with high probability. 
    This property is crucial for showing the benefits of local updates in distributed optimization. 
    However, different from \cite{woodworth2020local,khaled2020tighter}, the update of $x_t^m$ in Algorithm \ref{alg:local_adam} is in the direction of $(H_t^m)^{-1}u_t^m$, which distorts the gradient by both exponential moving average (EMA) and coordinate-wise product. 
    % Indeed, $u_t^m$ is a exponential moving average (EMA) of gradient terms. The multiplication by $(H_t^m)^{-1}$ then distorts this gradient-momentum term by different denominators at each coordinate. 
    Thus, the weak monotonicity \eqref{eq:weak-mono} can not be directly applied as in standard analysis of gradient descent.
    This will further impede contraction.

    Our solution has two steps. Firstly, we try to diminish the negative effects of different denominators used in local iterations. Then we turn to deal with the EMA of past gradient in first order momentum.

    \begin{lemma}[Informal]\label{main:lem:contraction_H}
        Define probabilistic events
        \begin{equation}
            \mathcal{A}_{t,1}:=\left\{ \beta_2^{K/2} \preceq H_{r(t)}^{-1}H_t^m \preceq 1+(1-\beta_2)B \text{ and for all }m\in [M] \right\},
        \end{equation}
        \begin{equation}
            \mathcal{A}_{t,2}:=\left\{ \|H_{r(t)}((H_t^m)^{-1}-(H_t^n)^{-1})\|\leq (1-\beta_2)B_1 \text{ for all }m,n\in [M] \right\},
        \end{equation}
        where $B, B_1$ are some constants.
        Define $E_{t,1} := E_t \cap \mathcal{A}_{t,1}, E_{t,2} := E_{t,1} \cap \mathcal{A}_{t,2}$.
        For $B=\Tilde{\mathcal{O}}(K), B_1=\Tilde{\mathcal{O}}(K)$, it holds that $\prob(E_{t,1}) \geq \prob(E_t) -\delta/(4T),\quad \prob(E_{t,2}) \geq \prob(E_{t,1}) -\delta/(4T)$.
    \end{lemma}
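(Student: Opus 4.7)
Since $H_t^m$ is diagonal, every spectral statement in the lemma reduces to a coordinate-wise scalar inequality about $v_t^m$. Unrolling the EMA recursion within round $r$ gives, for $t=rK+k$ with $0\leq k\leq K-1$,
\[
v_t^m = \beta_2^{k+1}v_r + (1-\beta_2)\sum_{j=0}^{k}\beta_2^{k-j}(\widehat{g_{r,j}^m})^2,
\]
so $v_t^m+\lambda^2 = \beta_2^{k+1}(v_r+\lambda^2) + (1-\beta_2^{k+1})\lambda^2 + (1-\beta_2)\sum_{j}\beta_2^{k-j}(\widehat{g_{r,j}^m})^2$. The lower bound in $\mathcal{A}_{t,1}$ is then deterministic: the last two terms are nonnegative, so dropping them and taking coordinate-wise square roots yields $H_t^m \succeq \beta_2^{(k+1)/2}H_{r(t)} \succeq \beta_2^{K/2}H_{r(t)}$, which is the claimed inequality.

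\textbf{The upper bounds.} For the upper bound in $\mathcal{A}_{t,1}$, I would show $(1-\beta_2)\sum_{j}\beta_2^{k-j}(\widehat{g_{r,j}^m})^2 \preceq (1-\beta_2)\Tilde{\mathcal{O}}(K)(v_r+\lambda^2)$ coordinate-wise on $E_t$. A naive $|\widehat{g}|\leq \rho$ estimate would cost a factor $K\rho^2/\lambda^2$ and is too lossy, so I split each $(\widehat{g_{r,j}^m})_i^2$ into its $\mathcal{F}_{j-1}$-conditional mean plus a martingale difference. The conditional mean is controlled by Assumption~\ref{asp:moment_noise} (for $\sigma_i^2$) together with the self-bounding inequality $\|\nabla f(x_j^m)\|^2 \leq 2L(f(x_j^m)-f_*)$ from Assumption~\ref{asp:smooth}, both applicable because conditioning on $E_t$ keeps every $x_j^m\in\Omega$ with controlled function value. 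Freedman's inequality, with almost-sure bound $\rho$ and the matching conditional-variance estimate, controls the martingale deviation; a union bound over the $M$ workers and $d$ coordinates is absorbed into $\Tilde{\mathcal{O}}$ via logarithmic factors, leaving failure probability at most $\delta/(4T)$. For $\mathcal{A}_{t,2}$, I would use
\[
(H_t^m)^{-1}-(H_t^n)^{-1} = (H_t^m)^{-1}(H_t^n-H_t^m)(H_t^n)^{-1}
\]
together with $\sqrt{a}-\sqrt{b}=(a-b)/(\sqrt{a}+\sqrt{b})$ coordinate-wise, reducing $\|H_t^m-H_t^n\|$ to $\|v_t^m-v_t^n\|_\infty/(2\lambda)$. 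The difference $v_t^m-v_t^n = (1-\beta_2)\sum_{j}\beta_2^{k-j}\bigl[(\widehat{g_{r,j}^m})^2-(\widehat{g_{r,j}^n})^2\bigr]$ splits into a ``mean'' part bounded by $\|x_{r,j}^m-x_{r,j}^n\|$ via smoothness (controlled inductively because $E_t$ already encodes the earlier contraction bounds) and a mean-zero stochastic part controlled by a second Freedman step with almost-sure bound $\rho^2$. On $E_{t,1}$ both $(H_t^m)^{-1}$ and $(H_t^n)^{-1}$ are sandwiched by $\beta_2^{-K/2}H_{r(t)}^{-1}$, which turns the coordinate-wise bound on $|v_t^m-v_t^n|$ into the desired $\|H_{r(t)}((H_t^m)^{-1}-(H_t^n)^{-1})\|\leq (1-\beta_2)B_1$ with $B_1=\Tilde{\mathcal{O}}(K)$. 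A union bound over pairs $(m,n)$ supplies the $\delta/(4T)$ allowance.

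\textbf{Main obstacle.} The hard part is pinning down $B=\Tilde{\mathcal{O}}(K)$ rather than the loose $K\rho^2/\lambda^2$ that a deterministic clipping argument would give. The Freedman/Bernstein calculation must be executed coordinate-wise, and the conditional-variance estimates rely on smoothness and moment bounds that are guaranteed only on $\Omega$, which is exactly why the induction is set up with $E_t$ in the conditioning. The clip $\rho$ enters the Bernstein variance term and must be compatible with the hyper-parameter schedule of Theorem~\ref{thm:local_adam}, in particular $1-\beta_2=\Tilde{\mathcal{O}}(K^{-3/2}R^{-1/2})$, so that the logarithmic overhead from the concentration and union bounds is indeed absorbed in $\Tilde{\mathcal{O}}$. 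Once these scalar concentration estimates are in place, translating them into the spectral-type inequalities on $H_{r(t)}^{-1}H_t^m$ and $H_{r(t)}((H_t^m)^{-1}-(H_t^n)^{-1})$ is mechanical because all matrices in sight are diagonal.
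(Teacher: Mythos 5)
Your high-level strategy matches the paper's actual proof (Lemmas~\ref{lem:H_ratio} and \ref{lem:contraction_H}): everything is diagonal, so both spectral statements reduce to coordinate-wise scalar bounds on $v_t^m$; the lower bound in $\mathcal{A}_{t,1}$ is the deterministic inequality $v_t^m\succeq \beta_2^{k+1}v_r$; the upper bound and $\mathcal{A}_{t,2}$ both come from splitting the sum of $\widehat{g}^2$ terms into a conditional mean plus a martingale difference, controlling the mean using the assumptions available on $\Omega$ (the induction through $E_t$ is exactly what makes those assumptions applicable), applying Freedman/Bernstein to the martingale with the a.s.\ bound $\rho^2$, and taking a union bound over coordinates and workers. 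The sandwiching of $(H_t^m)^{-1}$ and $(H_t^n)^{-1}$ via $\mathcal{A}_{t,1}$, combined with the resolvent/conjugate-root identity, is also exactly how the paper converts $|v_t^m-v_t^n|\preceq(1-\beta_2)B_1\lambda^2$ into the operator bound in $\mathcal{A}_{t,2}$.

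There is, however, one step in the $\mathcal{A}_{t,2}$ argument that would not go through as written. You claim the ``mean'' part, $(1-\beta_2)\sum_j\beta_2^{t-j}\,\expect_j\bigl[(\widehat{g_j^m})^2-(\widehat{g_j^n})^2\bigr]$, is ``bounded by $\|x_{r,j}^m-x_{r,j}^n\|$ via smoothness.'' This only holds if $x\mapsto\expect_\xi\bigl[\clip(\nabla F(x;\xi),\rho)^2\bigr]$ is Lipschitz, which would be true under an additive-noise model but is not guaranteed by Assumption~\ref{asp:moment_noise}: the noise distribution may vary arbitrarily with $x$ subject only to the marginal $\alpha$-moment bound, so the conditional variance of $\widehat{g_j^m}$ need not be close to that of $\widehat{g_j^n}$ even when $x_j^m\approx x_j^n$. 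The paper sidesteps this by inserting $\nabla f(x_j^m)^2-\nabla f(x_j^n)^2$ via the triangle inequality, which peels off a \emph{non-vanishing} $\mathcal{O}(\sigma_\infty^2)$ contribution (from Lemma~\ref{lem:clip}, $\bigl|\expect_j[\widehat{g_j^m}]_i^2-[\nabla f(x_j^m)]_i^2\bigr|\le 2\sigma_\infty^2$ for each worker) in addition to the $L G_\infty\|x_j^m-x_j^n\|_\infty$ term you have in mind. That floor is exactly why the paper's $B_1$ carries the term $16K\sigma_\infty^2/\lambda^2$. Your sketch would underestimate $B_1$ and could not actually establish $\mathcal{A}_{t,2}$. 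The conclusion $B_1=\Tilde{\mathcal{O}}(K)$ survives once the floor is included, but your route to it needs that extra term; the smoothness contribution alone is not enough under the paper's noise model.

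A smaller, harmless deviation: for the conditional-mean bound in $\mathcal{A}_{t,1}$ you invoke the self-bounding inequality $\|\nabla f\|^2\le 2L(f-f_*)$, whereas the paper simply uses the hypothesis $\|\nabla f\|_\infty\le G_\infty$ on $\Omega$ (stated in Theorem~\ref{app:thm:local_adam}). Both yield the required $\mathcal{O}(\sigma_\infty^2+G_\infty^2)$ ceiling, so this is a matter of taste.
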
 
    Event $\mathcal{A}_{t,1}$ implies the denominator of each worker during local iterations tends to be stagnant and close to the averaged one after communication.
    Event $\mathcal{A}_{t,2}$ suggests the denominator at each worker is close to each other.
    The key idea is to control the magnitude of $v_t^m=(1-\beta_2)\sum_{j=r(t)K}^{t}\beta_2^{t-j}\widehat{g_j^m}^2 + \beta_2^{k(t)+1} v_{r(t)}$. 
    Since all the iterates stay in $\conv(\ball_{R_0}(\Omega_0))$, the squared gradient $\nabla f(x_j^m)^2$ can be bounded.
    Besides, we can handle the martingale induced by $\widehat{g_j^m}^2-\expect_j[\widehat{g_j^m}^2]$ by Berstein's inequality.
    The remaining term $\expect_j[\widehat{g_j^m}^2]-\nabla f(x_j^m)^2$ is controlled by the property of clipping operator.

    Now that the denominator is relatively stagnant, the update of $x_t^m$ is approximately preconditioned by $H_{r(t)}$ for all $m$.
    Hence we can turn to handle the first order momentum.
    A vanilla idea is to do the following expansion: 
    \begin{equation}
        \|x_{t+1}^m-x_{t+1}^n\|_{H_r}^2 \approx \|x_t^m-x_t^n\|_{H_r}^2 -2\eta\left\langle x_t^m-x_t^n, u_t^m-u_t^n\right\rangle + \mathcal{O}(\eta^2).
    \end{equation}
    By the definition of $u_t^m$, however, it would be influenced by noises from past stochastic gradients.
    In this way, $u_t^m-u_t^n$ is not independent of $x_t^m-x_t^n$ and thus it is difficult to construct a martingale and apply Berstein's inequality. This is the reason why we introduce the auxiliary sequence $\{z_t^m\}$ defined in \eqref{eq:def_zt}.
    Fortunately, noticing that $x_t^m-x_t^n\in \conv(\{z_j^m-z_j^n\}_{j\leq t})$, it suffices to show that $\|z_t^m-z_t^n\|$ will not get too large with high probability. 
   
    \begin{lemma}[Informal]\label{main:lem:contraction}
         Define probabilistic event
        \begin{equation}
            \mathcal{A}_{t,3}:=\Big\{ \|z_{t+1}^m-z_{t+1}^n\|_{H_r}^2 \leq \frac{\eta^2\sigma^2}{\lambda}KA, \sum_{j=rK}^{t}\|\widehat{g_j^m}\|^2\leq \frac{(1-\beta_1)^2\sigma^2A}{2^{12}(1-\beta_2)^2B_1^2} \text{ for all } m,n\in [M] \Big\},
        \end{equation}
        where $A$ is some constant.
        Define $E_{t,3} := E_{t,2} \cap \mathcal{A}_{t,3}$.
        For $A=\Tilde{\mathcal{O}}(1)$ and $\eta = \Tilde{\mathcal{O}}\big(\min\big\{1/(K\tau),1/L\big\}\big)$, it holds that $\prob(E_{t,3})\geq \prob(E_{t,2})-\delta/(4T)$.
    \end{lemma}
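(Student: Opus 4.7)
\textbf{Proof plan for Lemma \ref{main:lem:contraction}.}

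The strategy is to establish both bounds simultaneously by an induction over $t$ within a single round $[rK,(r+1)K-1]$, using a Bernstein-type martingale argument whose construction hinges on the fact that $z_t^m-z_t^n$ is $\mathcal{F}_{t-1}$-measurable. The first preparatory step is to derive a clean recursion for the auxiliary sequence. From the definitions we compute
\begin{equation*}
z_{t+1}^m - z_t^m \;=\; -\eta (H_t^m)^{-1}\widehat{g_t^m} \;-\; \tfrac{\eta\beta_1}{1-\beta_1}\bigl((H_t^m)^{-1}-(H_{t-1}^m)^{-1}\bigr)u_{t-1}^m,
\end{equation*}
so the Nesterov-style change of variables kills the first-order momentum and leaves only a ``current gradient'' term plus a discrepancy driven by the drift of the preconditioner. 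Under $E_{t,2}$ (in particular the bound $\|H_r((H_t^m)^{-1}-(H_t^n)^{-1})\|\le(1-\beta_2)B_1$ from $\mathcal{A}_{t,2}$ and $\beta_2^{K/2}\preceq H_r^{-1}H_t^m\preceq 1+(1-\beta_2)B$ from $\mathcal{A}_{t,1}$), the preconditioners can be replaced by $H_r^{-1}$ at the cost of a controlled error.

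Subtracting the recursions for workers $m$ and $n$ and taking $\|\cdot\|_{H_r}^2$, I would expand
\begin{equation*}
\|z_{t+1}^m - z_{t+1}^n\|_{H_r}^2 \;\le\; \|z_t^m - z_t^n\|_{H_r}^2 \;-\; 2\eta\langle z_t^m-z_t^n,\ \widehat{g_t^m}-\widehat{g_t^n}\rangle \;+\; \eta^2\|\widehat{g_t^m}-\widehat{g_t^n}\|_{H_r^{-1}}^2 \;+\; \mathcal{E}_t^{m,n},
\end{equation*}
where $\mathcal{E}_t^{m,n}$ collects the drift-of-$H$ errors and cross terms; by $\mathcal{A}_{t,2}$ and the second claim of the lemma being proved, $\sum_j\mathcal{E}_j^{m,n}$ is bounded in terms of $(1-\beta_2)^2 B_1^2 \sum_j\|\widehat{g_j^m}\|^2$, which is precisely why the two bounds must be closed jointly. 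The cross term is then split as $\widehat{g}=\nabla f(x)+(\expect\widehat{g}-\nabla f(x))+(\widehat{g}-\expect\widehat{g})$: the clipping-bias piece is controlled deterministically using Assumption \ref{asp:moment_noise} and the threshold $\rho$; the weak-convexity piece contributes at most $2\eta\tau\|x_t^m-x_t^n\|^2$, absorbable into $\|z_t^m-z_t^n\|_{H_r}^2$ provided $\eta K\tau$ is small enough; and the stochastic piece forms the martingale.

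The heart of the argument is the Bernstein step. Since $z_t^m,z_t^n$ depend on $\widehat{g_j^\cdot}$ only for $j\le t-1$ and $H_r$ is frozen inside the round, the sequence
\begin{equation*}
X_j \;:=\; \bigl\langle z_j^m-z_j^n,\ (\widehat{g_j^m}-\expect_{\cdot}\widehat{g_j^m})-(\widehat{g_j^n}-\expect_{\cdot}\widehat{g_j^n})\bigr\rangle
\end{equation*}
is a martingale-difference sequence with almost-sure bound $|X_j|\lesssim \rho\|z_j^m-z_j^n\|_{H_r}/\sqrt{\lambda}$ (thanks to clipping) and conditional variance $\lesssim \sigma^2\|z_j^m-z_j^n\|_{H_r}^2/\lambda$. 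Freedman/Bernstein then produces a high-probability bound of the form $|\sum_{j=rK}^{t}X_j|\le \frac{1}{4}\sum_j\|z_j^m-z_j^n\|_{H_r}^2/\eta + \tilde{\mathcal{O}}(\eta\sigma^2 K/\lambda)$. An analogous Bernstein bound handles $\sum_j\|\widehat{g_j^m}\|^2$: decompose $\|\widehat{g_j^m}\|^2\le 2\|\nabla f(x_j^m)\|^2+2\|\widehat{g_j^m}-\nabla f(x_j^m)\|^2$, control $\sum\|\nabla f(x_j^m)\|^2$ by smoothness (Assumption \ref{asp:smooth}) since $E_{t,2}$ keeps iterates in $\Omega$, and bound the centered noise sum by Bernstein using the $\alpha$-moment assumption.

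Plugging these concentration inequalities back into the expanded inequality and iterating yields $\|z_{t+1}^m-z_{t+1}^n\|_{H_r}^2 \le \tilde{\mathcal{O}}(\eta^2\sigma^2 K/\lambda)$ and $\sum_{j=rK}^{t}\|\widehat{g_j^m}\|^2 \le \tilde{\mathcal{O}}((1-\beta_1)^2\sigma^2/[(1-\beta_2)^2 B_1^2])$ with the claimed constant $A=\tilde{\mathcal{O}}(1)$, provided $\eta = \tilde{\mathcal{O}}(\min\{1/(K\tau),1/L\})$. A final union bound over $j\in[rK,(r+1)K-1]$ and pairs $(m,n)$ gives the failure probability $\delta/(4T)$. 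The hard part is the joint induction: the martingale variance bound needs the cumulative-gradient bound (through $\mathcal{E}$), and the cumulative-gradient bound needs the contraction bound (to ensure iterates remain in $\Omega$ and to control cross terms); closing the loop requires carefully choosing constants so that both hypotheses are simultaneously strengthened, rather than merely preserved, at each step.
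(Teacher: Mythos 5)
Your plan closely mirrors the paper's proof: the auxiliary-sequence recursion $z_{t+1}^m-z_t^m=-\eta(H_t^m)^{-1}(\widehat{g_t^m}+e_t^m)$, freezing the preconditioners to $H_r$ via $\mathcal{A}_{t,1}$ and $\mathcal{A}_{t,2}$, the three-way split of $\widehat{g}$ into a true-gradient piece handled by weak convexity (Lemma~\ref{lem:wc_prop}), a deterministic clipping-bias piece, and a centered piece given to Bernstein, together with the correct observation that the two clauses of $\mathcal{A}_{t,3}$ must be closed jointly. The one step that, as written, would not close is the Bernstein bound you state in the form
\begin{equation*}
\Bigl|\sum_j X_j\Bigr| \;\le\; \frac{1}{4\eta}\sum_j\|z_j^m-z_j^n\|_{H_r}^2 \;+\; \Tilde{\mathcal{O}}\bigl(\eta\sigma^2K/\lambda\bigr).
\end{equation*}
Multiplied by $2\eta$ and plugged into the unrolled recursion, the first term becomes $\frac12\sum_j\|z_j^m-z_j^n\|_{H_r}^2$, and since there are $K$ summands each of the target magnitude $\eta^2\sigma^2KA/\lambda$, this is of order $K/2$ times what you need to conclude; a Gronwall-style attempt to absorb it would carry a factor exponential in $K$. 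The paper sidesteps this by noting that the inductive event $E_j$ already supplies the \emph{deterministic} a priori bound $\|z_j^m-z_j^n\|_{H_r}^2\le\eta^2\sigma^2KA/\lambda$, so that both the almost-sure increment bound $c$ and the total conditional variance $V$ entering Lemma~\ref{lem:martingale} are fixed numbers. With target $b=\frac14\eta^2\sigma^2KA/\lambda$, the Bernstein exponent $b^2/(2V+2cb/3)$ exceeds $\log(8M^2T/\delta)$ once $A\gtrsim\log\frac{MT}{\delta}$ and $A\gtrsim\rho^2 d\log\frac{MT}{\delta}/(K\sigma^2)$; a union bound over pairs $(m,n)$, combined with the analogous $\theta_j^m$-martingale giving the second clause, then yields total failure probability $\delta/(4T)$.
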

    Event $\mathcal{A}_{t,3}$ is the desired contraction property and can further imply that $\|x_{t+1}^m-x_{t+1}^n\|_{H_r}^2 \leq \eta^2\sigma^2KA/\lambda$ when combined with event $E_{t}$.
    In fact, for $\{z_t^m\}$, we can do the following expansion:
    \begin{equation}
        \|z_{t+1}^m-z_{t+1}^n\|_{H_r}^2 \approx \|z_t^m-z_t^n\|_{H_r}^2 -2\eta\langle z_t^m-z_t^n, \widehat{g_t^m}-\widehat{g_t^n}\rangle + \mathcal{O}(\eta^2).
    \end{equation}
    Informally speaking, $\expect_t[\widehat{g_t^m}-\widehat{g_t^n}]$ is roughly $\nabla f(x_t^m)-\nabla f(x_t^n)$, which is close to $\nabla f(z_t^m)-\nabla f(z_t^n)$ since $\|z_t^m-x_t^m\|^2=\mathcal{O}(\|x_t^m-x_{t-1}^m\|^2)=\mathcal{O}(\eta^2)$. 
    In this way, the middle term $\mathcal{O}(\eta)$ of RHS above can be turned to $-2\eta\left\langle z_t^m-z_t^n, \nabla f(z_t^m)-\nabla f(z_t^n)\right\rangle$, where the weak convexity can be applied.
    Finally we control the martingale induced by $\big\langle z_t^m-z_t^n, \widehat{g_t^m}-\widehat{g_t^n}-\expect_t[\widehat{g_t^m}-\widehat{g_t^n}]\big\rangle$ through Bersteins's inequality.

\subsection{Descent}
    Finally, we are ready to prove the descent lemma, which is the last component of $E_{t+1}$. 
    Define
    \begin{equation}
        \mathcal{A}_{t,4}:=\Big\{ f_\gamma^{H_{r(t+1)}}(\overline{z}_{t+1}) - f_*+\frac{\eta}{12}\sum_{j=0}^t\|\nabla f_\gamma^{H_{r(j)}}(\overline{z}_j)\|_{H_{r(j)}^{-1}}^2 \leq 2\Delta \Big\}.
    \end{equation}
    We proceed with \eqref{main:eq:descent}
    % \begin{equation}
    %     \begin{aligned}
    %         f_\gamma^{H_{r(t+1)}}(\overline{z}_{t+1}) 
    %         &\leq f_{\gamma}^{\lambda}(x_0) - \Omega(\eta) \sum_{j=0}^t\|\nabla f_\gamma^{H_{r(j)}}(\overline{z}_j)\|_{H_{r(j)}^{-1}}^2 + \underbrace{\mathcal{O}(\eta^2)\sum_{j=0}^t\|\expect_m[\nabla f(x_j^m)-\widehat{g_j^m}]\|^2}_{\text{stochastic noise}} \\
    %         &\qquad + \underbrace{\mathcal{O}(\eta)\sum_{j=0}^t\|\nabla f(\overline{z}_j)-\expect_m[\nabla f(x_j^m)]\|^2}_{\text{discrepancy}} \\
    %         &\qquad + \underbrace{\mathcal{O}(\eta) \sum_{j=0}^t\left\langle \overline{z}_{j}-\eta H_{r(j)}^{-1}\nabla f(\overline{z}_j)-y_j, \expect_m[\expect_j[\widehat{g_j^m}]-\widehat{g_j^m}]\right\rangle}_{\text{martingale}}\\
    %         &\qquad + \text{higher order terms}.
    %     \end{aligned}
    % \end{equation}
    and control the stochastic noise term by subtracting its expectation to construct a martingale.
    As for the discrepancy overhead, we apply the upper bound of $\|x_j^m-x_j^n\|^2$, which is induced by the event $E_t$ and utilize the $\mathcal{O}(\eta^2)$ bound on $\|\overline{z}_j-\overline{x}_j\|^2$.
    Therefore, thanks to all the foundations above, we are able to bound each of these terms.
    \begin{lemma}[Informal]\label{main:lem:descent}
        For sufficiently small $\eta$, it holds that
        $\prob(E_{t+1})\geq \prob(E_{t,3})-\delta/(4T)$.
    \end{lemma}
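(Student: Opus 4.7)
The plan is to prove $\mathcal{A}_{t,4}$ holds with conditional probability at least $1-\delta/(4T)$ given $E_{t,3}$, by instantiating the generic descent inequality \eqref{main:eq:descent} at time $t+1$ and bounding each of its four error contributions separately. The starting point is the one-step telescoping inequality for $f_\gamma^{H_{r(t+1)}}(\overline{z}_{t+1})$: expand $\|\overline{z}_{t+1}-y_j\|_{H_{r(t+1)}}^2$ using the update rule for $\overline{z}$, use $\tau$-weak convexity of $f$ together with the condition $\gamma^{-1}H_r \succeq \tau I_d$ to turn the inner product between $(\overline{z}_j - y_j)$ and $\nabla f(y_j) = H_{r(j)}(\overline{z}_j - y_j)/\gamma$ into $\|\nabla f_\gamma^{H_{r(j)}}(\overline{z}_j)\|_{H_{r(j)}^{-1}}^2$, and absorb the mismatch $H_{r(t+1)}$ vs $H_{r(j)}$ using Lemma~\ref{main:lem:contraction_H}'s control $\beta_2^{K/2} I \preceq H_{r(t)}^{-1}H_t^m \preceq (1+(1-\beta_2)B) I$. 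This yields the decomposition on the right-hand side of \eqref{main:eq:descent}; it suffices to show that, on $E_{t,3}$, the sum of the four error terms is at most $\Delta$ with the required probability.

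For the \emph{discrepancy} term $\mathcal{O}(\eta)\sum_j\|\nabla f(\overline{z}_j)-\mathbb{E}_m[\nabla f(x_j^m)]\|^2$, I would use $L$-smoothness (which holds on $\Omega$ by the inductive trajectory bound from Section 5.1) together with $\|\overline{z}_j-x_j^m\|\le\|\overline{z}_j-\overline{x}_j\|+\|\overline{x}_j-x_j^m\|$. The first piece is $\mathcal{O}(\eta)$ because $z-x$ equals $\beta_1/(1-\beta_1)$ times a one-step increment, and the second is controlled by the contraction event $\mathcal{A}_{t,3}$, which gives $\|x_j^m-x_j^n\|_{H_{r(j)}}^2 \lesssim \eta^2\sigma^2KA/\lambda$. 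Summing $L^2\cdot (\eta^2\sigma^2K/\lambda)$ over $t\lesssim KR$ indices and multiplying by $\eta$ gives an $\eta^3$ contribution that can be absorbed into $\Delta$ once $\eta$ is chosen sufficiently small (the same small-$\eta$ threshold that produced the rate in Theorem~\ref{thm:local_adam}).

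For the \emph{stochastic noise} and \emph{martingale} terms I would split the clipped stochastic gradient as $\widehat{g_j^m} = \mathbb{E}_j[\widehat{g_j^m}] + (\widehat{g_j^m}-\mathbb{E}_j[\widehat{g_j^m}])$ and further split $\mathbb{E}_j[\widehat{g_j^m}] = \nabla f(x_j^m) + \text{clipping bias}$. The clipping bias is bounded coordinate-wise by $\boldsymbol{\sigma}^\alpha/\rho^{\alpha-1}$ via Markov's inequality applied to Assumption~\ref{asp:moment_noise}, giving a deterministic contribution that is negligible for the chosen $\rho$. The noise term $\eta^2\sum_j\|\mathbb{E}_m[\nabla f(x_j^m)-\widehat{g_j^m}]\|^2$ is handled by writing it as its conditional expectation plus a martingale difference: the conditional expectation is $\mathcal{O}(\eta^2\sigma^2/M)$ per step (averaging over $M$ workers), and the martingale difference is bounded via Freedman's inequality using almost-sure bound $\rho$ (from clipping) and the $\alpha$-moment variance proxy. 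The key outer martingale $\sum_j\langle \overline{z}_j-\eta H_{r(j)}^{-1}\nabla f(\overline{z}_j)-y_j,\,\mathbb{E}_m[\mathbb{E}_j[\widehat{g_j^m}]-\widehat{g_j^m}]\rangle$ is also treated by Freedman/Bernstein; its increments are almost-surely bounded because $\widehat{g_j^m}$ is clipped and, crucially, $\|\overline{z}_j-y_j\|_{H_{r(j)}} = \gamma\|\nabla f_\gamma^{H_{r(j)}}(\overline{z}_j)\|_{H_{r(j)}^{-1}}$ by Lemma~\ref{main:lem:moreau_env}. The inductive hypothesis $\mathcal{A}_{t-1,4}$ (part of $E_t\subset E_{t,3}$) bounds $\sum_{j<t}\|\nabla f_\gamma^{H_{r(j)}}(\overline{z}_j)\|_{H_{r(j)}^{-1}}^2$ by $24\Delta/\eta$, which directly controls the conditional quadratic variation of this martingale and lets Freedman's bound give a concentration around zero up to a factor that can be absorbed into $\eta\cdot\Delta$.

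The main obstacle is this last self-referential step: the martingale we need to concentrate depends on the quantity $\sum\|\nabla f_\gamma^{H_{r(j)}}(\overline{z}_j)\|_{H_{r(j)}^{-1}}^2$ that we are simultaneously trying to bound. The resolution, and the reason the inductive event-based scheme is organized the way it is, is that $\mathcal{A}_{t-1,4}$ already sits inside $E_t\subset E_{t,3}$, so on $E_{t,3}$ all prior $\|\nabla f_\gamma\|^2$ summands are deterministically controlled; Freedman's inequality therefore applies with an a priori variance proxy of order $\eta\cdot\Delta$, yielding a deviation of order $\sqrt{\eta\Delta\log(1/\delta')} + \rho\log(1/\delta')$ with $\delta' = \delta/(4T)$, which is smaller than $\Delta$ for $\eta$ small enough and $\rho$ chosen at the level used in the other steps. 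Combining these bounds with the $-\Omega(\eta)\|\nabla f_\gamma\|^2$ contraction term and the induction hypothesis gives $\mathcal{A}_{t,4}$, completing the union-bound budget of $4\cdot (\delta/(4T))$ per time step and hence $\delta$ over all $T$ iterations.
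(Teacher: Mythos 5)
Your proposal follows essentially the same route as the paper's proof. You correctly identify all four error contributions, and the decisive ingredients are all present: using the $(\gamma^{-1}-\tau/\lambda)$-strong convexity of $y\mapsto f(y)+\frac{1}{2\gamma}\|\cdot-y\|_{H_r}^2$ to produce the negative $\|\nabla f_\gamma^{H_r}\|^2$ term; using $\mathcal{A}_{t,1}$ to absorb the $H_{r(t+1)}$-vs-$H_{r(t)}$ mismatch; bounding the discrepancy overhead via $L$-smoothness plus the contraction event $\mathcal{A}_{t,3}$; splitting each stochastic term into clipping bias (controlled by Lemma~\ref{lem:clip}) and a martingale difference; and, critically, observing that the conditional variance of the key inner-product martingale is proportional to $\|\nabla f(y_j)\|_{H_{r(j)}^{-1}}^2 = \|\nabla f_\gamma^{H_{r(j)}}(\overline{z}_j)\|_{H_{r(j)}^{-1}}^2$, whose running sum is deterministically bounded on $E_t$ by the inductive event $\mathcal{A}_{t-1,4}$, breaking the apparent circularity. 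This is exactly how the paper applies the conditional-variance form of Freedman's inequality (Lemma~\ref{lem:martingale}).

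One minor methodological caveat: you phrase the goal as proving $\prob(\mathcal{A}_{t,4}\mid E_{t,3})\geq 1-\delta/(4T)$, but naive conditioning on $E_{t,3}$ destroys the martingale structure of the increments. The paper avoids this by defining truncated random variables that equal the desired increment on $E_j$ and are zero otherwise; these remain a martingale difference sequence with almost-sure bound and variance control, and Lemma~\ref{lem:martingale} then yields a bound on $\prob\{|\sum X_i|>b\text{ and }\sum\sigma_i^2\leq V\}$ directly, which suffices because $E_t$ guarantees the variance condition. Your sketch should be read as implicitly invoking this truncation device; without it, the step ``Freedman's inequality therefore applies with an a priori variance proxy'' is not quite rigorous, though it is a standard fix. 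Beyond this presentational point, your decomposition and your identification of the self-referential variance issue match the paper's argument.
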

    Therefore, we prove that $\prob(E_{t+1})\geq \prob(E_t)-\delta/T$. And by induction rule, $\prob(E_T)\geq 1-\delta$. After carefully choosing the learning rate $\eta$, we complete the proof of Theorem \ref{thm:local_adam}. 

\section{Conclusion}
     In this paper, we prove the benefits of local updates within distributed adaptive methods to reduce communication complexity compared to their minibatch counterparts. 
     We study \em Local \em SGDM and \em Local \em Adam under convex and weakly convex setting, respectively.
     We consider generalized smoothness assumption and gradient clipping, and develop a novel technique to show contraction during local updates.
     Future works may include improved analysis of \em Local \em Adam, benefits of local adaptive algorithms in non-convex setting, advantages over non-adaptive methods, etc.

\bibliography{iclr2025_conference}
\bibliographystyle{iclr2025_conference}

\newpage

\appendix

\section{Additional Related Work}

\paragraph{Gradient clipping.}

\cite{pascanu2013difficulty} first proposed gradient clipping technique to address the issue of exploding gradient
problem of deep neural networks. 
Since then, it has become standard practice in the training of language models \citep{gehring2017convolutional, merity2017regularizing, zhang2022opt, liu2023sophia}. 
Furthermore, from theoretical perspective, gradient clipping is also used for multiple purposes, including differential privacy \citep{abadi2016deep}, distributed optimization \citep{karimireddy2021learning, liu2022communication}, heavy-tailed noise \citep{zhang2020adaptive}.

\paragraph{Generalized smoothness.}

The generalized smoothness condition was initially proposed by \citep{zhang2019gradient} to justify gradient clipping, and was called $(L_0,L_1)$-smoothness. 
The empirical evidence therein illustrated that the norm of Hessian matrix of language models depends linearly on the magnitude of gradient, contradicting the standard $L$-smoothness.
A recent work \citep{li2024convex} further generalized this condition to $\ell$-smoothness and proved convergence of classical SGD in this setting.
Apart from bounding the Hessian through gradient, \cite{pmlr-v202-sadiev23a} proposed to assume that the norm of Hessian is uniformly bounded in certain subset of whole space, in order to get high probability bounds for (accelerated) clipped-SGD. 
\cite{gorbunov2023high} further extended this setting to composite and distributed optimization without local updates.
Here we follow the setting of \citep{pmlr-v202-sadiev23a} since $(L_0,L_1)$-smoothness would reduce to it in most cases. 
See Section \ref{subsec:asp} for details.

\section{Technical Lemmas}

\begin{lemma}[\citep{bennett1962probability, freedman1975tail}]\label{lem:martingale}
    Let the sequence of random variables $\{X_i\}_{i\geq 1}$ form a martingale difference sequence, i.e. $\expect [X_i | X_{i-1}, \cdots, X_1] = 0$ for all $i\geq 1$. Assume that conditional variances $\sigma_i^2 \overset{def}{=} \expect [X_i^2| X_{i-1},\cdots, X_1]$ exist and are bounded and assume also that there exists deterministic constant $c > 0$ such that $|X_i| \leq c$ almost surely for all $i \geq 1$. Then for all $b>0, V>0$ and $n \geq 1$,
    \begin{equation}
        \prob \left\{|\sum_{i=1}^{n}X_i|>b \text{ and } \sum_{i=1}^{n}\sigma_i^2\leq V\right\} \leq 2\exp{\left( -\frac{b^2}{2V+2cb/3}\right)}.
    \end{equation}
\end{lemma}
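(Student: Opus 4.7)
The plan is to prove the stated Freedman-type inequality by the classical Chernoff (exponential-supermartingale) method, producing first a Bennett-form bound and then relaxing to the Bernstein denominator $2V + 2cb/3$. The argument splits into four conceptual blocks: a one-step conditional moment bound, assembly of a nonnegative supermartingale, optimization of the Chernoff parameter, and symmetrization.

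For the moment bound, I would fix $\lambda>0$ and prove
\[
\expect\!\left[e^{\lambda X_i}\,\big|\, X_{i-1},\ldots,X_1\right]\;\leq\;\exp\!\big(\sigma_i^2\,\psi(\lambda)\big),\qquad \psi(\lambda):=\frac{e^{\lambda c}-1-\lambda c}{c^2}.
\]
This follows from the pointwise inequality $e^y\leq 1+y+y^2\phi(\lambda c)$ for $y\leq \lambda c$, where $\phi(x):=(e^x-1-x)/x^2$ (extended continuously at $0$) is increasing on $\mathbb{R}$. Applied with $y=\lambda X_i$ (valid since $X_i\leq c$), then taking conditional expectations with $\expect[X_i\mid \cdot]=0$ and $\expect[X_i^2\mid \cdot]=\sigma_i^2$ and closing with $1+t\leq e^t$, this yields the claim.

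Next, with $S_n:=\sum_{i=1}^n X_i$, $V_n:=\sum_{i=1}^n \sigma_i^2$, define
\[
M_n(\lambda):=\exp\!\big(\lambda S_n-\psi(\lambda)\,V_n\big).
\]
Iterating the moment bound gives $\expect[M_n(\lambda)]\leq 1$. On the event $\mathcal{E}:=\{S_n>b\}\cap\{V_n\leq V\}$, since $\psi(\lambda)\geq 0$, pointwise $M_n(\lambda)\geq \exp(\lambda b-\psi(\lambda)V)$, so Markov yields $\prob(\mathcal{E})\leq \exp(-\lambda b+\psi(\lambda)V)$. Optimizing over $\lambda>0$: the first-order condition $V(e^{\lambda c}-1)/c=b$ gives $\lambda^{\star}=c^{-1}\log(1+bc/V)$ and the Bennett bound $\exp\!\big(-\tfrac{V}{c^2}h(bc/V)\big)$ with $h(u):=(1+u)\log(1+u)-u$. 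The elementary inequality $h(u)\geq \tfrac{u^2/2}{1+u/3}$ for $u\geq 0$ (checked by rearranging to $h(u)(1+u/3)-u^2/2\geq 0$ and verifying via two differentiations starting at $u=0$) converts the exponent into $-\tfrac{b^2}{2V+2cb/3}$. Finally I would apply the same argument to the martingale difference sequence $\{-X_i\}$ (which has the same conditional variances and the same almost-sure bound $c$) and take a union bound over the two one-sided tails to obtain the factor $2$.

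The main obstacle is a measurability subtlety: the event $\{V_n\leq V\}$ is $\sigma(X_1,\ldots,X_n)$-measurable rather than predictable, so one cannot ``condition away'' the quadratic-variation constraint before applying Chernoff. The clean workaround, used above, is that on $\{V_n\leq V\}$ the inequality $M_n(\lambda)\geq e^{\lambda b-\psi(\lambda)V}$ already holds pathwise (thanks to $\psi(\lambda)\geq 0$), so Markov may be applied directly to $M_n(\lambda)$ without any stopping-time construction. A secondary delicate point is extracting the sharp conditional moment-generating function $\psi$ rather than a sub-Gaussian surrogate: a Hoeffding-style bound would replace $\sigma_i^2$ by $c^2/4$ and destroy the Bernstein-form denominator $2V+2cb/3$ that is essential for downstream use in the paper.
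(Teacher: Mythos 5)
The paper does not prove this lemma; it is quoted verbatim from the cited references (Bennett 1962, Freedman 1975). Your argument is the standard exponential-supermartingale proof of Freedman's inequality, and it is correct and complete: the one-step bound via the increasing function $\phi$, the supermartingale $M_n(\lambda)=\exp(\lambda S_n-\psi(\lambda)V_n)$ with $\expect[M_n(\lambda)]\leq 1$, the pathwise lower bound on the event $\{S_n>b\}\cap\{V_n\leq V\}$, the Bennett optimization $\lambda^{\star}=c^{-1}\log(1+bc/V)$, the relaxation $h(u)\geq \tfrac{u^2/2}{1+u/3}$, and the two-sided union bound all check out. One small imprecision: $V_n=\sum_i\sigma_i^2$ is in fact predictable (each $\sigma_i^2$ is $\sigma(X_1,\ldots,X_{i-1})$-measurable), not merely $\sigma(X_1,\ldots,X_n)$-measurable; the real issue is only that $\{V_n\leq V\}$ is a random event rather than deterministic, and your pathwise workaround handles that correctly either way.
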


\begin{lemma}\label{lem:clip}
    Let $X$ be a random variable in $\reals$ and $\Tilde{X}:=\clip(X,\rho)$, Then $\|\Tilde{X}-\expect \Tilde{X}\|\leq 2\rho$. 
    Moreover, if for some $\sigma>0$ and $\alpha \geq 2$, 
    \begin{equation}
        \expect [X] = x\in \reals,\qquad \expect |X-x|^\alpha \leq \sigma^\alpha, 
    \end{equation}
    and $|x|\leq \frac{\rho}{2}$, $\rho\geq 3\sigma$, then 
    \begin{equation}
        |\expect [\Tilde{X}] - x| \leq \frac{(2\sigma)^{\alpha}}{\rho^{\alpha-1}},\qquad
        \expect |\Tilde{X}-x|^\alpha \leq \sigma^\alpha,\qquad
        \expect |\Tilde{X}-\expect [\Tilde{X}]|^\alpha \leq (2\sigma)^\alpha.
    \end{equation}
\end{lemma}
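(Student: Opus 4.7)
The lemma contains four assertions, which I would handle in order of increasing difficulty. First, $\|\tilde{X} - \expect\tilde{X}\| \leq 2\rho$ is immediate from the definition of clipping, which guarantees $|\tilde{X}| \leq \rho$ pointwise; hence $|\expect\tilde{X}| \leq \rho$ and the triangle inequality closes the argument. This bound requires no hypothesis on the distribution of $X$.

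The remaining three bounds rest on two structural observations. \emph{First}, since $|x| \leq \rho/2 < \rho$, the target $x$ lies strictly inside the clipping interval $[-\rho,\rho]$, and scalar clipping coincides with Euclidean projection onto $[-\rho,\rho]$, which is $1$-Lipschitz. Therefore $|\tilde{X}-x| \leq |X-x|$ pointwise, and taking $\alpha$-th moments immediately yields $\expect|\tilde{X}-x|^\alpha \leq \expect|X-x|^\alpha \leq \sigma^\alpha$. \emph{Second}, $X$ and $\tilde{X}$ differ only on the event $\{|X|>\rho\}$, which by $|x|\leq \rho/2$ forces $|X-x| \geq |X|-|x| \geq \rho/2$; so this event has small mass, controllable through a Markov-type tail estimate.

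For the bias, I would write
\begin{equation*}
|\expect\tilde{X} - x| = |\expect(\tilde{X}-X)| \leq \expect\bigl[(|X|-\rho)\mathbf{1}_{\{|X|>\rho\}}\bigr],
\end{equation*}
use $|X|-\rho \leq |X| \leq |X-x|+|x| \leq 2|X-x|$ on $\{|X|>\rho\}$ (the last step because $|X-x|\geq \rho/2 \geq |x|$ there), and then apply the standard Markov-type bound $\expect[|X-x|\mathbf{1}_{\{|X-x|>\rho/2\}}] \leq \expect|X-x|^\alpha/(\rho/2)^{\alpha-1}$. Combining gives $|\expect\tilde{X}-x| \leq 2\sigma^\alpha/(\rho/2)^{\alpha-1} = (2\sigma)^\alpha/\rho^{\alpha-1}$.

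For the centered $\alpha$-moment, a triangle inequality together with $(a+b)^\alpha \leq 2^{\alpha-1}(a^\alpha+b^\alpha)$ gives
\begin{equation*}
\expect|\tilde{X}-\expect\tilde{X}|^\alpha \leq 2^{\alpha-1}\bigl(\expect|\tilde{X}-x|^\alpha + |x-\expect\tilde{X}|^\alpha\bigr).
\end{equation*}
The hypothesis $\rho \geq 3\sigma$ implies $(2\sigma)^\alpha/\rho^{\alpha-1} \leq \sigma$, so the bias contributes at most $\sigma^\alpha$, and the whole expression is bounded by $2^\alpha \sigma^\alpha = (2\sigma)^\alpha$. I do not anticipate any real obstacle; the only mild subtlety is the bookkeeping that converts $|X|\mathbf{1}_{\{|X|>\rho\}}$ into a quantity controlled by $\expect|X-x|^\alpha$, which is where the assumption $|x|\leq \rho/2$ is used essentially (without it, one can no longer transfer the tail event of $X$ to a tail event of the centered variable $X-x$ with a comparable threshold).
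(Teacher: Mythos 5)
Your proof is correct, and the bias bound takes a genuinely different (and cleaner) route than the paper's. The paper writes $\Tilde{X} = \chi\frac{\rho}{|X|}X + (1-\chi)X$ with $\chi = \mathbf{1}_{\{|X|>\rho\}}$, introduces a second indicator $\eta = \mathbf{1}_{\{|X-x|>\rho/2\}}$ dominating $\chi$, and controls $\expect[\eta|X-x|] + |x|\expect[\eta]$ via H\"older plus Markov. Your estimate $|\expect\Tilde{X}-x| \leq \expect|\Tilde{X}-X| = \expect[(|X|-\rho)\mathbf{1}_{\{|X|>\rho\}}]$, then bounding $|X| \leq 2|X-x|$ on that event and applying a single Markov-type tail bound, skips the H\"older step and lands on the same constant $(2\sigma)^\alpha/\rho^{\alpha-1}$. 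The Lipschitz argument for $\expect|\Tilde{X}-x|^\alpha \leq \sigma^\alpha$ matches the paper's. For the centered moment you use the crude $(a+b)^\alpha \leq 2^{\alpha-1}(a^\alpha+b^\alpha)$, which is the $q=1/2$ instance of the weighted-Jensen bound $q^{1-\alpha}a^\alpha + (1-q)^{1-\alpha}b^\alpha$ that the paper optimizes over $q$ to get the tighter intermediate quantity $\bigl(\sigma + (2\sigma)^\alpha/\rho^{\alpha-1}\bigr)^\alpha$. Notably, both your version and the paper's reduce to $(2\sigma)^\alpha$ under exactly the same condition, namely $(2\sigma)^\alpha/\rho^{\alpha-1} \leq \sigma$, so the coarsening costs you nothing in the final statement. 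One caveat shared by both arguments: with $\rho = 3\sigma$ that condition reads $2^\alpha \leq 3^{\alpha-1}$, which holds only for $\alpha \gtrsim 2.71$, so your inline claim that \emph{$\rho \geq 3\sigma$ implies} the bias is at most $\sigma$ is not literally true for all $\alpha \geq 2$ — but the paper's proof has the identical gap, and it is immaterial in context since every invocation of the lemma uses $\alpha \geq 4$.
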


\begin{proof}
    The first claim is from \citep{pmlr-v202-sadiev23a} and we show the proof here for completeness. 
    To start the proof, we introduce two indicator random variables. Let 
    \begin{equation}
        \chi = \mathbb{I}_{\left\{X: |X| > \rho\right\}} = \begin{cases} 1, & \text{if } |X| > \rho,\\ 0, & \text{otherwise}\end{cases},~~\eta = \mathbb{I}_{\left\{X: |X-x| > \frac{\rho}{2}\right\}} = \begin{cases} 1, & \text{if } |X - x| > \frac{\rho}{2},\\ 0, & \text{otherwise}\end{cases}.
    \end{equation}
    Moreover, since $|X| \leq |x| + |X-x| \leq \frac{\rho}{2} + |X-x|$, we have $\chi \leq \eta$. Using that
    \begin{equation}
        \Tilde{X} = \min\left\{1,\frac{\rho}{|X|}\right\}X = \chi \frac{\rho}{|X|} X + (1-\chi)X,
    \end{equation}
    we obtain 
    \begin{equation}
        \begin{aligned}
            |\expect[\Tilde{X}] -x| 
            &= \bigg|\expect[X + \chi \left(\frac{\rho}{|X|}-1\right) X] -x\bigg| \\
            &= \bigg|\expect\left[\chi \left(\frac{\rho}{|X|}-1\right) X\right] \bigg| \\
            &= \expect\left[\chi \left(1-\frac{\rho}{|X|}\right)|X|\right].
        \end{aligned}
    \end{equation}
    Since $1-\frac{\rho}{|X|}\in (0,1)$ when $\chi \neq 0$, we derive
    \begin{equation}
        \begin{aligned}
            |\expect [\Tilde{X}] -x| 
            &\leq \expect\left[\chi |X|\right]\\
            &\leq \expect\left[\eta |X|\right]\\
            &\leq \expect\left[\eta |X-x| + \eta |x|\right]\\
            &\leq \left(\expect\left[|X-x|^{\alpha}\right]\right)^{\frac{1}{\alpha}}\left(\expect\left[\eta^{\frac{\alpha}{\alpha - 1}}\right]\right)^{\frac{\alpha - 1}{\alpha}} + |x|\expect\left[\eta\right]\\
            &\overset{\eta\in\{0,1\}}{\leq} \sigma\left(\expect\left[\eta\right]\right)^{\frac{\alpha-1}{\alpha}} + \frac{\rho}{2} \expect\left[\eta\right],
        \end{aligned}
    \end{equation}
    By Markov’s inequality,
    \begin{equation}
        \begin{aligned}
            \expect\left[\eta\right] 
            &= \prob\left\{|X-x|^{\alpha}>\frac{\rho^{\alpha}}{2^{\alpha}}\right\} \\
            &\leq \frac{2^{\alpha}}{\rho^{\alpha}}\expect\left[|X-x|^{\alpha}\right]\\
            &\leq \left(\frac{2\sigma}{\rho}\right)^{\alpha}.
        \end{aligned}
    \end{equation}
    Thus, in combination with the previous chain of inequalities, we finally have
    \begin{equation}
        |\expect[\Tilde{X}] -x| \leq \sigma \left(\frac{2\sigma}{\rho}\right)^{\alpha-1} + \frac{\rho}{2}\left(\frac{2\sigma}{\rho}\right)^{\alpha} = \frac{2^{\alpha}\sigma^{\alpha}}{\rho^{\alpha-1}}.
    \end{equation}
    For the second part, since
    \begin{equation}
        |\Tilde{X}-x|=|\clip(X,\rho)-\clip(x,\rho)|\leq |X-x|,
    \end{equation}
    hence $\expect |\Tilde{X}-x|^\alpha\leq \expect |X-x|^\alpha \leq \sigma^\alpha$.
    By Jensen's inequality, we have for any $q\in (0,1)$,
    \begin{equation}
        \begin{aligned}
            \expect |\Tilde{X}-\expect [\Tilde{X}]|^\alpha
            &\leq q^{1-\alpha} \expect|\Tilde{X}-x|^\alpha + (1-q)^{1-\alpha}|\expect[\Tilde{X}]-x|^\alpha \\
            &\leq q^{1-\alpha}\sigma^\alpha + (1-q)^{1-\alpha}\left(\frac{(2\sigma)^\alpha}{\rho^{\alpha-1}}\right)^\alpha.
        \end{aligned}
    \end{equation}
    Choose the optimal $q=\frac{\sigma}{\sigma+\frac{(2\sigma)^\alpha}{\rho^{\alpha-1}}}$ and we can conclude that
    \begin{equation}
        \expect |\Tilde{X}-\expect [\Tilde{X}]|^\alpha
        \leq \left(\sigma+\frac{(2\sigma)^\alpha}{\rho^{\alpha-1}}\right)^\alpha \leq (2\sigma)^\alpha.
    \end{equation}
    This completes the proof.
\end{proof}

\begin{lemma}\label{lem:4th_noise}
    For $M$ independent random vectors $X_1,\cdots,X_M \in \reals^d$ such that $\expect [X_m]=0$, $\expect [\|X_m\|^4] \leq \sigma^4$, the following holds
    \begin{equation}
        \expect \left[\|\expect_mX_m\|^2\right]^2 \leq \frac{4\sigma^4}{M^2}.
    \end{equation}
\end{lemma}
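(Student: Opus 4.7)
I read the claim as a bound on the fourth moment $\expect\|\bar X\|^4$ of the sample mean $\bar X := M^{-1}\sum_m X_m$, with the exponent $2$ being applied inside the expectation; the literal reading $(\expect\|\bar X\|^2)^2$ already follows in two lines from Jensen ($\expect\|X_m\|^2 \le (\expect\|X_m\|^4)^{1/2} \le \sigma^2$) and independence plus zero mean (giving $\expect\|\bar X\|^2 = M^{-2}\sum_m \expect\|X_m\|^2 \le \sigma^2/M$). For the fourth-moment version the plan is to expand $\|\bar X\|^4$ as a sum over four indices and use independence together with $\expect X_m = 0$ to retain only a small number of paired-index terms, bounding each survivor by $\expect\|X_m\|^4 \le \sigma^4$ and Jensen.

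Setting $S := \sum_{m=1}^M X_m$ and starting from
\[
\|S\|^2 = \sum_m \|X_m\|^2 + 2\sum_{m<n}\langle X_m, X_n\rangle,
\]
I would square both sides and take expectations, producing three groups: a diagonal squared piece $\expect[(\sum_m\|X_m\|^2)^2]$, a mixed cross piece, and a cross squared piece $\expect[(\sum_{m<n}\langle X_m, X_n\rangle)^2]$. The diagonal squared piece splits into $M$ terms $\expect\|X_m\|^4 \le \sigma^4$ plus $M(M-1)$ off-diagonal terms that factor by independence into $\expect\|X_m\|^2 \cdot \expect\|X_n\|^2 \le \sigma^4$, totalling at most $M^2\sigma^4$.

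The mixed cross piece vanishes identically: in any summand $\expect[\|X_a\|^2\langle X_m,X_n\rangle]$ with $m<n$, at least one of $X_m, X_n$ appears as an isolated mean-zero factor independent of the remaining factors (when $a\in\{m,n\}$ the factorization involves three factors rather than two, but one still isolates e.g.\ $\expect[X_n] = 0$). For the cross squared piece, any $\expect[\langle X_m,X_n\rangle\langle X_p,X_q\rangle]$ with $\{m,n\}\neq\{p,q\}$ contains an isolated mean-zero factor and vanishes; on the diagonal I apply Cauchy--Schwarz inside the expectation and then independence to get $\expect\langle X_m,X_n\rangle^2 \le \expect\|X_m\|^2 \cdot \expect\|X_n\|^2 \le \sigma^4$, summing to at most $\binom{M}{2}\sigma^4$.

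Combining, $\expect\|S\|^4 \le M^2\sigma^4 + 4\binom{M}{2}\sigma^4 \le 3M^2\sigma^4$, so dividing by $M^4$ yields the claim with constant at most $3 \le 4$. The argument is essentially combinatorial bookkeeping; I do not anticipate any conceptual obstacle, only the need to carefully enumerate index patterns and verify that precisely those with all indices paired survive taking expectation.
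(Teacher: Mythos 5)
Your proof is correct and follows essentially the same route as the paper: expand $\|\sum_m X_m\|^2$ into the diagonal piece $\sum_m\|X_m\|^2$ and the cross piece $2\sum_{m<n}\langle X_m,X_n\rangle$, square, take expectations, observe that the mixed term and all off-diagonal cross terms vanish by independence and zero mean, and bound the survivors by Cauchy--Schwarz/Jensen to land at $3\sigma^4/M^2 \leq 4\sigma^4/M^2$. Your opening observation correctly resolves the notational ambiguity in the lemma statement (the intended quantity is $\expect\|\bar X\|^4$, as confirmed by how the lemma is invoked in the variance bounds later in the paper), and the bookkeeping you lay out matches the paper's argument step for step.
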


\begin{proof}
    We prove by direct calculation as follows:
    \begin{equation}
        \begin{aligned}
            \expect \left[\|\expect_mX_m\|^2\right]^2 
            &\leq \expect \left[\frac{1}{M^2}\sum_m\|X_m\|^2 + \frac{2}{M^2}\sum_{m<n}\left\langle X_m, X_n\right\rangle\ \right]^2 \\
            &= \expect \left[\frac{1}{M^2}\sum_m\|X_m\|^2\right]^2 + \expect\left[ \frac{2}{M^2}\sum_{m<n}\left\langle X_m, X_n\right\rangle\ \right]^2 \\
            &\leq \frac{\sigma^4}{M^2}+\frac{4}{M^4}\expect\sum_{m<n}\left\langle X_m, X_n\right\rangle^2 \\
            &\leq \frac{4\sigma^4}{M^2}.
        \end{aligned}
    \end{equation}
\end{proof}

\begin{lemma}\label{lem:conv_ball}
    For any set $\Omega\in \reals^d$ and $r>0$, define $\ball_r(\Omega):=\left\{ x\in \reals^d: \exists y\in \Omega, s.t., \|x-y\|\leq r\right\}$. Then 
    \begin{equation}
        \ball_r(\conv(\Omega))=\conv(\ball_r(\Omega)).
    \end{equation}
\end{lemma}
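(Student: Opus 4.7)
The plan is to establish set equality via two inclusions, each of which reduces to a short argument about convex combinations together with the triangle inequality.

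For the inclusion $\conv(\ball_r(\Omega)) \subseteq \ball_r(\conv(\Omega))$, I would take an arbitrary $x = \sum_{i} \lambda_i x_i$ with $x_i \in \ball_r(\Omega)$, $\lambda_i \ge 0$, and $\sum_i \lambda_i = 1$. For each $i$, pick a witness $y_i \in \Omega$ with $\|x_i - y_i\| \le r$, and set $y := \sum_i \lambda_i y_i \in \conv(\Omega)$. The triangle inequality then gives
\begin{equation*}
\|x - y\| = \Big\| \sum_i \lambda_i (x_i - y_i) \Big\| \le \sum_i \lambda_i \|x_i - y_i\| \le r,
\end{equation*}
so $x \in \ball_r(\conv(\Omega))$.

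For the reverse inclusion $\ball_r(\conv(\Omega)) \subseteq \conv(\ball_r(\Omega))$, I would take $x \in \ball_r(\conv(\Omega))$ and pick $y \in \conv(\Omega)$ with $\|x - y\| \le r$, writing $y = \sum_i \lambda_i y_i$ with $y_i \in \Omega$. The key trick is to use the single displacement vector $v := x - y$ for every term: since $x = y + v = \sum_i \lambda_i (y_i + v)$ and each translated point $y_i + v$ satisfies $\|(y_i + v) - y_i\| = \|v\| \le r$, we have $y_i + v \in \ball_r(\Omega)$. This exhibits $x$ as a convex combination (with the same weights $\lambda_i$) of points in $\ball_r(\Omega)$, so $x \in \conv(\ball_r(\Omega))$.

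I do not expect any real obstacle here; both directions are essentially bookkeeping with convex combinations. The only mild subtlety is in the second direction, where one must resist the temptation to pick distinct perturbations for the different $y_i$'s and instead reuse the same $v$, which is precisely what preserves the convex-combination structure with the given weights.
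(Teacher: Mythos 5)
Your proof is correct and matches the paper's argument essentially verbatim: both directions use the same constructions (averaging the witness points $y_i$ in one direction, and translating every $y_i$ by the single displacement $v = x - y$ in the other). The only cosmetic difference is that you present the two inclusions in the opposite order.
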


\begin{proof}
    For any $x\in \ball_r(\conv(\Omega))$,there exist $y_1,\cdots,y_N\in \Omega$ and $(\lambda_1,\cdots,\lambda_N)\in \Delta^N$ for some $N$, such that
    \begin{equation}
        \|x-y\|\leq r,\ y:=\sum_{n=1}^N\lambda_ny_n.
    \end{equation}
    Then $x=y+(x-y)=\sum_{n=1}^N\lambda_n(y_n+x-y)=\sum_{n=1}^N\lambda_nx_n$, where 
    \begin{equation}
        x_n = y_n+x-y \in B_r(\Omega).
    \end{equation}
    Hence $x\in \conv(\ball_r(\Omega))$.

    On the other hand, for any $x\in \conv(\ball_r(\Omega))$, there exist $x_1,\cdots,x_N\in \ball_r(\Omega), y_1,\cdots,y_N\in \Omega$ and $(\lambda_1,\cdots,\lambda_N)\in \Delta^N$, such that
    \begin{equation}
        x = \sum_{n=1}^N\lambda_nx_n, \|x_n-y_n\|\leq r.
    \end{equation}
    Let $y:=\sum_{n=1}^N\lambda_ny_n\in \conv(\Omega)$. Then $\|x-y\|\leq \sum_{n=1}^N\lambda_n\|x_n-y_n\|\leq r$ and thus $x\in \ball_r(\conv(\Omega))$.
\end{proof}

\section{Proof of Local SGDM}\label{app:sec:sgdm}

We restate the \em Local \em SGDM algorithm here.

\begin{algorithm}[ht]
    \caption{\em Local \em SGDM}
    \label{alg:local_sgdm}
    \begin{algorithmic}
        \REQUIRE{initial model $x_0$, learning rate $\eta$, momentum $\beta_1\in [0,1)$}
        \STATE{
            Set $x_{0,0}^m=x_0,\ u_{0,-1}^m=0$ for each worker $m\in[M]$
        }
        \FOR{$r=0, \cdots, R-1$}
            \FOR{each worker $m\in [M]$ in parallel}
                \FOR{$k=0, \cdots, K-1$}
                    \STATE{
                        $g_{r,k}^m= \nabla F(x_{r,k}^m;\xi_{r,k}^m),\ \widehat{g_{r,k}^m}=\clip(g_{r,k}^m, \rho)$ \hfill $\triangleright\,\mbox{\small{Compute clipped stochastic gradient}}$ \\
                        $u_{r,k}^m=\beta_1u_{r,k-1}^m + (1-\beta_1)\widehat{g_{r,k}^m}$ \hfill $\triangleright\,\mbox{\small{Update momentum}}$ \\
                        $x_{r,k+1}^m=x_{r,k}^m-\eta u_{r,k}^m$ \hfill $\triangleright\,\mbox{\small{Update model}}$
                    }
                \ENDFOR
            \ENDFOR
            \STATE{
                $x_{r+1,0}^m=\expect_m [x_{r,K}^m],\ u_{r+1,-1}^m=\expect_m [u_{r,K-1}^m]$ \hfill $\triangleright\,\mbox{\small{Communicate and average}}$
            }
        \ENDFOR
    \end{algorithmic}
\end{algorithm}

\subsection{Overview and Main Theorem}

    For any integer $0\leq t\leq T-1$, we define $r(t),k(t)\in\ints$ such that $t=r(t)K+k(t)$ and $k(t)\leq K-1$. We omit the dependence on $t$ and let $r=r(t),k=k(t)$ through out the proof if not causing confusion. Define $x_t^m:=x_{r,k}^m, g_t^m:=g_{r,k}^m, \widehat{g_t^m}:=\widehat{g_{r,k}^m}, u_t^m=u_{r,k}^m$. Then Algorithm \ref{alg:local_sgdm} is equivalent to the following update rule:
    \begin{equation}
        u_t^m = \left\{
        \begin{array}{ll}
            \beta_1u_{t-1}^m + (1-\beta_1)\widehat{g_t^m} & \text{if $\ t\ \text{mod}\ K \not\equiv 0$}, \\
            \beta_1\overline{u}_{t-1} + (1-\beta_1)\widehat{g_t^m} & \text{otherwise}, 
        \end{array}
        \right.
    \end{equation}
    \begin{equation}
        x_{t+1}^m = \left\{
        \begin{array}{ll}
            x_t^m-\eta u_t^m & \text{if $\ t\ \text{mod}\ K \not\equiv -1$}, \\
            \overline{x}_t-\eta\overline{u}_t & \text{otherwise}.
        \end{array}
        \right.
    \end{equation}
    Define an auxiliary sequence $\{z_t^m\}$ as:
    \begin{equation}
        z_{t+1}^m = \left\{
        \begin{array}{ll}
            \frac{1}{1-\beta_1}x_{t+1}^m-\frac{\beta_1}{1-\beta_1}x_t^m & \text{if $\ t\ \text{mod}\ K \not\equiv -1$}, \\
            \frac{1}{1-\beta_1}x_{t+1}^m-\frac{\beta_1}{1-\beta_1}\overline{x}_t & \text{otherwise}.
        \end{array}
        \right.
    \end{equation}
    
    Define probabilistic events (see \eqref{eq:hyperpara_sgdm} for definition of some parameters)
    \begin{equation}
        \mathcal{A}_{t,1}:=\left\{ \|z_{t+1}^m-z_{t+1}^n\|^2 \leq \eta^2\sigma^2KA \text{ for all } m,n\in [M] \right\},
    \end{equation}
    \begin{equation}
        \mathcal{A}_{t,2}:=\left\{ \sum_{j=0}^t\frac{\eta}{2}(f(\overline{z}_j)-f_*)(1-\frac{\eta\mu}{2})^{t-j}+\|\overline{z}_{t+1}-x_*\|^2
        \leq 2(1-\frac{\eta\mu}{2})^{t+1}D_0^2\right\}.
    \end{equation}
    Besides, let 
    \begin{equation}
        E_t:= \left\{\mathcal{A}_{j,i} \text{ holds for all }j\leq t-1, i\in\{1,2\} \right\},\ E_{t,1} := E_t \cap \mathcal{A}_{t,1}.
    \end{equation}

Now we present two of our major lemmas, the first of which is to show contraction and the second is a descent lemma.

\begin{lemma}\label{lem:contraction_sgdm}
    Let $A:=\max\left\{\frac{2^{10}\rho^2d}{K\sigma^2}\log^2\frac{MT}{\delta}, 2^9\log\frac{MT}{\delta}, 2^{12}\frac{K\|2\boldsymbol{\sigma}\|_{2\alpha}^{2\alpha}}{\sigma^2\rho^{2(\alpha-1)}} \right\}$. If $\eta\leq \min\left\{\frac{(1-\beta_1)^2}{2L}, \frac{D_0}{4\sigma\sqrt{KA}}\right\}$ and $\rho\geq\max\{3\sigma_\infty, 2G_\infty\}$, then the following holds:
    \begin{equation}
        \prob(E_{t,1})\geq \prob(E_t)-\frac{\delta}{2T}.
    \end{equation}
\end{lemma}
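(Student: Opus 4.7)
The plan is to show that on the event $E_t$, the contraction event $\mathcal{A}_{t,1}$ fails with probability at most $\delta/(2T)$, so that $\prob(E_{t,1}) \geq \prob(E_t) - \delta/(2T)$. The whole argument hinges on a defining property of the auxiliary sequence: a direct unwinding of the $\{x_t^m\}, \{u_t^m\}$ recursions shows that, within a local round (i.e., for $rK \leq j < (r+1)K$ with $r = r(t)$),
\[
z_{j+1}^m - z_j^m = -\eta\,\widehat{g_j^m}, \qquad z_{rK}^m = z_{rK}^n \text{ for all } m,n.
\]
Hence the difference $\Delta_j^{mn} := z_j^m - z_j^n$ satisfies a clean clipped-SGD-type recursion $\Delta_{j+1}^{mn} = \Delta_j^{mn} - \eta(\widehat{g_j^m} - \widehat{g_j^n})$ starting from $\Delta_{rK}^{mn} = 0$, and telescoping $\|\Delta_{t+1}^{mn}\|^2$ from $j = rK$ to $t$ gives
\[
\|\Delta_{t+1}^{mn}\|^2 = -2\eta\sum_{j=rK}^{t} \langle \Delta_j^{mn},\,\widehat{g_j^m} - \widehat{g_j^n}\rangle + \eta^2\sum_{j=rK}^{t} \|\widehat{g_j^m} - \widehat{g_j^n}\|^2.
\]

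The next step is to decompose each $\widehat{g_j^m} - \widehat{g_j^n}$ into (i) the signal $\nabla f(z_j^m) - \nabla f(z_j^n)$, (ii) a deterministic bias that combines the clipping bias $\expect_j[\widehat{g_j^m}] - \nabla f(x_j^m)$ with the Taylor remainder $\nabla f(x_j^m) - \nabla f(z_j^m)$, and (iii) the centered noise $(\widehat{g_j^m} - \expect_j[\widehat{g_j^m}]) - (\widehat{g_j^n} - \expect_j[\widehat{g_j^n}])$. On $E_t$, the previously established $\mathcal{A}_{j,1}, \mathcal{A}_{j,2}$ for $j < t$ (together with $\eta\sigma\sqrt{KA} \leq D_0/4$) place all iterates inside $\Omega$, so smoothness and convexity apply; note also $\|z_j^m - x_j^m\| = \mathcal{O}(\eta\rho\sqrt{d}/(1-\beta_1))$ by clipping and the definition of $z$. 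Convexity gives $\langle \Delta_j^{mn}, \nabla f(z_j^m) - \nabla f(z_j^n)\rangle \geq 0$, killing the contribution of (i); Lemma~\ref{lem:clip} handles the clipping part of (ii) and is exactly what produces the $K\|2\boldsymbol{\sigma}\|_{2\alpha}^{2\alpha}/(\sigma^2\rho^{2(\alpha-1)})$ summand in $A$; smoothness combined with $\eta \leq (1-\beta_1)^2/(2L)$ absorbs the Taylor remainder in (ii). The diagonal $\eta^2\sum\|\widehat{g_j^m} - \widehat{g_j^n}\|^2$ is bounded by smoothness plus noise estimates and accounts for the baseline $K\sigma^2$ scale in $A$.

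The principal obstacle is the stochastic sum $S_{t+1}^{mn} := -2\eta\sum_{j=rK}^{t} \langle \Delta_j^{mn}, \xi_j^{mn}\rangle$ with $\xi_j^{mn}$ denoting the centered noise in (iii). Because $\Delta_j^{mn}$ is $\mathcal{F}_{j-1}$-measurable and $\expect_j[\xi_j^{mn}] = 0$, this is a genuine martingale — which is precisely why the auxiliary $z$-sequence is needed, since an analogous expansion on $x_j^m - x_j^n$ would put $u_j^m - u_j^n$ in place of $\widehat{g_j^m} - \widehat{g_j^n}$, and $u_j^m$ carries correlations with past $\Delta_s^{mn}$ that destroy the martingale structure. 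Clipping gives $\|\xi_j^{mn}\| \leq 4\rho\sqrt{d}$ almost surely, while Lemma~\ref{lem:clip} yields $\expect_j[\langle \Delta_j^{mn}, \xi_j^{mn}\rangle^2] \leq 16\sigma^2\|\Delta_j^{mn}\|^2$. On $E_t$, the inductive bound $\|\Delta_j^{mn}\|^2 \leq \eta^2\sigma^2 KA$ for $j \leq t$ (inherited from the earlier $\mathcal{A}_{j,1}$'s) turns both the almost-sure and the cumulative variance estimates into deterministic quantities. Applying Bernstein's inequality (Lemma~\ref{lem:martingale}) with threshold $b = \eta^2\sigma^2 KA/4$ produces a failure probability of order $\exp(-\Theta(A))$ once $A$ dominates both $\log(MT/\delta)$ and $\rho^2 d \log^2(MT/\delta)/(K\sigma^2)$ — exactly the two remaining terms in the definition of $A$. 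A union bound over the $\binom{M}{2}$ worker pairs absorbs the $M^2$ factor and produces failure probability at most $\delta/(2T)$, completing the proof.
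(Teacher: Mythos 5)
Your overall scaffolding (the auxiliary sequence with $z_{j+1}^m - z_j^m = -\eta\widehat{g_j^m}$ and $z_{rK}^m = z_{rK}^n$, the martingale decomposition of the noise, Bernstein via Lemma~\ref{lem:martingale}, and the union bound over worker pairs) matches the paper and is sound. But there is a genuine gap in how you dispose of the deterministic gradient-difference terms. You propose to use only monotonicity, $\langle \Delta_j^{mn},\nabla f(z_j^m)-\nabla f(z_j^n)\rangle\geq 0$, and then to control the diagonal $\eta^2\sum_j\|\widehat{g_j^m}-\widehat{g_j^n}\|^2$ ``by smoothness plus noise estimates'' at scale $\eta^2 K\sigma^2$. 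This does not close. Inside that diagonal sits $\eta^2\sum_{j=rK}^{t}\|\nabla f(x_j^m)-\nabla f(x_j^n)\|^2$, which by smoothness and the inductive bound $\|x_j^m-x_j^n\|\le\eta\sigma\sqrt{KA}$ is only $\lesssim \eta^4 L^2 K^2\sigma^2 A$; to fit the claimed $\eta^2\sigma^2 KA$ budget you would need $\eta L\sqrt{K}\lesssim 1$, which is strictly stronger than the stated $\eta\leq(1-\beta_1)^2/(2L)$ for any appreciable $K$. The paper avoids this by using the full cocoercivity $\langle x-y,\nabla f(x)-\nabla f(y)\rangle\geq\frac{1}{L}\|\nabla f(x)-\nabla f(y)\|^2$, which supplies a negative $-\frac{\eta}{L}\|\nabla f(x_j^m)-\nabla f(x_j^n)\|^2$ that absorbs the $2\eta^2\|\nabla f(x_j^m)-\nabla f(x_j^n)\|^2$ from the diagonal (since $\eta\leq 1/(2L)$) and then cancels against the $S_t$-telescoping. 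Simple convexity throws away exactly the slack that is needed.

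Your handling of the Taylor remainder $\nabla f(x_j^m)-\nabla f(z_j^m)$ has a second problem. Its natural deterministic bound is $L\frac{\eta\beta_1}{1-\beta_1}\|u_{j-1}^m\|$, and the only pathwise bound on $\|u_{j-1}^m\|$ is the clipping bound $\rho\sqrt{d}$; feeding that in, the contribution scales like $\eta^3 L K^{3/2}\sigma\rho\sqrt{d}\sqrt{A}/(1-\beta_1)$, which again requires $\eta\lesssim (1-\beta_1)\log(MT/\delta)/(LK)$ to fit the $\eta^2\sigma^2 KA$ budget — not implied by the hypotheses. The paper sidesteps this entirely: it never Taylor-expands $\nabla f(x)$ around $\nabla f(z)$. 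Instead it bounds the \emph{difference} $\|(z_t^m-z_t^n)-(x_t^m-x_t^n)\|^2=(\eta\beta_1/(1-\beta_1))^2\|u_{t-1}^m-u_{t-1}^n\|^2$, and crucially expands $\|u_{t-1}^m-u_{t-1}^n\|^2$ in terms of past $\|\nabla f(x_j^m)-\nabla f(x_j^n)\|^2$ plus centered noise, which then also gets cancelled by the cocoercivity-driven $S_t$ recursion. To repair your proof, keep the signal in terms of $\nabla f(x_j^m)-\nabla f(x_j^n)$ (not $\nabla f(z)$), apply cocoercivity to $\langle x_j^m-x_j^n,\cdot\rangle$ rather than convexity to $\langle z_j^m-z_j^n,\cdot\rangle$, and introduce the weighted sequence $S_t=\sum_{j=rK}^{t}\beta_1^{t-j}\|\nabla f(x_j^m)-\nabla f(x_j^n)\|^2$ so that the $u$-difference term is absorbed into the telescoping of $S_t$ under $\eta L\leq(1-\beta_1)^2/2$.
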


\begin{lemma}\label{lem:descent_sgdm}
    For any $\varepsilon>0$, let 
    \begin{equation}
        \begin{array}{l}
            \rho\geq \left\{
                \begin{array}{ll}
                    \max\left\{\left(\frac{2^8\|2\boldsymbol{\sigma}\|_{2\alpha}^{2\alpha}}{\mu\varepsilon}\right)^{\frac{1}{2(\alpha-1)}}, 3\sigma_\infty, 2G_{\infty}\right\}, & \text{ if $\mu>0$}, \\
                    \max\left\{\left(\frac{2^8D_0\|2\boldsymbol{\sigma}\|_{2\alpha}^{\alpha}}{\varepsilon}\right)^{\frac{1}{\alpha-1}}, 3\sigma_\infty, 2G_{\infty}\right\}, & \text{otherwise}.
                \end{array} 
                \right. \\
            \eta:= \left\{
                \begin{array}{ll}
                    \frac{2}{\mu T}\log\frac{4\mu D_0^2}{\varepsilon}, & \text{ if $\mu>0$}, \\
                    \frac{4D_0^2}{T\varepsilon}, & \text{otherwise}.
                \end{array}
                \right.
        \end{array}
    \end{equation}
    % And define $\eta$ such that
    % \begin{equation}
    %     \left\{
    %         \begin{array}{ll}
    %             4(1-\frac{\eta\mu}{2})^T\mu D_0^2=\varepsilon & \text{ if $\mu>0$}, \\
    %             \frac{4D_0^2}{\eta T} = \varepsilon & \text{otherwise}.
    %         \end{array}
    %     \right.
    % \end{equation}
    If 
    \begin{equation}
        \eta\lesssim \left\{
            \begin{array}{ll}
                \min\left\{\frac{(1-\beta_1)^2}{L}, \frac{M\varepsilon}{\sigma^2\log\frac{T}{\delta}}, \left(\frac{L\sigma^2KA}{\varepsilon}\right)^{-1/2}, \frac{\sqrt{\varepsilon/\mu}}{\rho\sqrt{d}\log\frac{T}{\delta}} \right\}, & \text{ if $\mu>0$}, \\
                \min\left\{\frac{(1-\beta_1)^2}{L}, \frac{M\varepsilon}{\sigma^2\log\frac{T}{\delta}}, \left(\frac{L\sigma^2KA}{\varepsilon}\right)^{-1/2}, \frac{D_0}{\rho\sqrt{d}\log\frac{T}{\delta}} \right\}, & \text{otherwise},
            \end{array}
        \right.
    \end{equation} 
    where $A$ is defined in Lemma \ref{lem:contraction_sgdm}, then the following holds
    \begin{equation}
        \prob(E_{t+1}) \geq \prob(E_{t,1})-\frac{\delta}{2T}.
    \end{equation}
\end{lemma}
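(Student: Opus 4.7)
My plan is to establish $\mathcal{A}_{t,2}$ through a one-step recursion on $\|\overline{z}_t - x_*\|^2$ and then telescope with the $(1-\eta\mu/2)$ contraction factor. The key role of the auxiliary sequence $\{z_t^m\}$ is that a short calculation from the SGDM update shows $z_{t+1}^m = z_t^m - \eta\widehat{g_t^m}$ within a local round, so $\overline{z}_{t+1} = \overline{z}_t - \eta\expect_m[\widehat{g_t^m}]$; the averaging step at communication rounds preserves $\overline{z}_t$ by definition. Squaring and splitting the gradient as $\expect_m[\widehat{g_t^m}] = \nabla f(\overline{z}_t) + (\expect_m[\widehat{g_t^m}] - \nabla f(\overline{z}_t))$ gives
\begin{equation*}
\|\overline{z}_{t+1}-x_*\|^2 = \|\overline{z}_t-x_*\|^2 - 2\eta\langle \nabla f(\overline{z}_t), \overline{z}_t-x_*\rangle + 2\eta\langle \nabla f(\overline{z}_t)-\expect_m[\widehat{g_t^m}], \overline{z}_t-x_*\rangle + \eta^2\|\expect_m[\widehat{g_t^m}]\|^2.
\end{equation*}
Strong convexity (Assumption~\ref{asp:sc}) replaces the first inner product by $f(\overline{z}_t)-f_*+(\mu/2)\|\overline{z}_t-x_*\|^2$, and smoothness (Assumption~\ref{asp:smooth}) bounds the deterministic part of the last term by $2\eta^2L(f(\overline{z}_t)-f_*)$, which is absorbed into the descent once $\eta\lesssim 1/L$, yielding the required $(1-\eta\mu/2)$ structure.

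Next I decompose the error $\nabla f(\overline{z}_t)-\expect_m[\widehat{g_t^m}]$ into three pieces: the \emph{consensus error} $\nabla f(\overline{z}_t)-\expect_m[\nabla f(x_t^m)]$, the \emph{clipping bias} $\expect_m[\nabla f(x_t^m)-\expect_t[\widehat{g_t^m}]]$, and the \emph{martingale noise} $\expect_m[\expect_t[\widehat{g_t^m}]-\widehat{g_t^m}]$. On $E_{t,1}$, Lemma~\ref{lem:contraction_sgdm} together with the easily verified $\|\overline{x}_t-\overline{z}_t\|=O(\eta)$ gives $\|x_t^m-\overline{z}_t\|=O(\eta\sigma\sqrt{KA})$, so $L$-smoothness controls the consensus error. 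Lemma~\ref{lem:clip} bounds the clipping bias coordinatewise by $(2\sigma_i)^\alpha/\rho^{\alpha-1}$; the aggregated contribution $\|2\boldsymbol{\sigma}\|_{2\alpha}^{2\alpha}/\rho^{2(\alpha-1)}$ is exactly what forces the lower bounds on $\rho$ in the statement. For the martingale noise, the clipping threshold yields the almost-sure bound $\|\widehat{g_t^m}\|\leq \rho\sqrt{d}$, and Lemma~\ref{lem:4th_noise} controls the conditional variance by $O(\sigma^2/M)$; a similar split of $\eta^2\|\expect_m[\widehat{g_t^m}]\|^2$ into $\eta^2\|\nabla f(\overline{z}_t)\|^2$ plus variance and bias squared completes the one-step bound.

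Telescoping with weights $(1-\eta\mu/2)^{t-j}$ produces the deterministic part of $\mathcal{A}_{t,2}$ up to the accumulated martingale
\begin{equation*}
M_t := \sum_{j=0}^t (1-\eta\mu/2)^{t-j}\cdot 2\eta\langle \overline{z}_j-x_*, \expect_m[\expect_j[\widehat{g_j^m}]-\widehat{g_j^m}]\rangle.
\end{equation*}
On $E_{t,1}$ the inductive hypothesis gives $\|\overline{z}_j-x_*\|\leq \sqrt{2}(1-\eta\mu/2)^{j/2}D_0$, so each increment of $M_t$ has an almost-sure bound $c=O(\eta\rho\sqrt{d}D_0)$ and total conditional variance $V=O(\eta^2\sigma^2D_0^2/(\mu M))$ (with the analogous $\mu=0$ version). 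Freedman's inequality (Lemma~\ref{lem:martingale}) with the stated choices of $\eta$ and $\rho$ then pushes $|M_t|$ below the residual slack with probability at least $1-\delta/(2T)$, giving $\mathcal{A}_{t,2}$ and hence $E_{t+1}$. The main obstacle will be the delicate simultaneous balance of parameters: $\rho$ must be large enough for the telescoped clipping bias to fall below $\varepsilon$, but the Freedman boundedness parameter $c\propto \eta\rho\sqrt{d}$ must remain small, jointly pinning down the $\eta\lesssim \sqrt{\varepsilon/\mu}/(\rho\sqrt{d}\log(T/\delta))$ constraint (and its $\mu=0$ analog), while the consensus-error absorption separately forces the $\eta\lesssim (L\sigma^2KA/\varepsilon)^{-1/2}$ piece. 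Reconciling all these constraints uniformly in $\mu$, and carrying the exponential weights correctly through the Freedman application, is the bulk of the bookkeeping.
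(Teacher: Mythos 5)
Your high-level scaffold matches the paper: a one-step recursion on $\|\overline{z}_t-x_*\|^2$ (exploiting $\overline{z}_{t+1}=\overline{z}_t-\eta\expect_m[\widehat{g_t^m}]$), a decomposition of the gradient error into consensus, clipping bias, and martingale noise, telescoping with $(1-\eta\mu/2)^{t-j}$ weights, Freedman's inequality for the martingale, and Lemma~\ref{lem:clip} for the bias. However, there is a genuine gap in how you handle the consensus error, and it is exactly the step that the paper's extra machinery is designed to avoid.

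You apply (strong) convexity directly at $\overline{z}_t$, obtaining $\langle \nabla f(\overline{z}_t),\overline{z}_t-x_*\rangle \geq f(\overline{z}_t)-f_*+\tfrac{\mu}{2}\|\overline{z}_t-x_*\|^2$, and then treat $\langle\nabla f(\overline{z}_t)-\expect_m[\nabla f(x_t^m)],\overline{z}_t-x_*\rangle$ as an error controlled by the magnitude bound $\|\nabla f(\overline{z}_t)-\expect_m[\nabla f(x_t^m)]\|=O(L\eta\sigma\sqrt{KA})$. But this inner product is paired against $\overline{z}_t-x_*$, whose norm is $O(D_0)$. Cauchy--Schwarz then yields a per-step drift $O(\eta^2 LD_0\sigma\sqrt{KA})$ (or, using Young against the $\mu$ contraction, $O(\eta L^2\eta^2\sigma^2 KA/\mu)$), and telescoping gives a constraint $\eta\lesssim \varepsilon/(LD_0\sigma\sqrt{KA})$ (or $\eta\lesssim\sqrt{\mu\varepsilon/(L^2\sigma^2 KA)}$), both of which are strictly tighter than the lemma's stated $\eta\lesssim (L\sigma^2 KA/\varepsilon)^{-1/2}$ in the nontrivial regime $LD_0^2\gtrsim\varepsilon$, by a factor of $\sqrt{LD_0^2/\varepsilon}$ (resp.\ $\sqrt{L/\mu}$). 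So as written your proof would establish a weaker result. The paper avoids this loss by \emph{not} pairing the consensus error against $\overline{z}_t-x_*$: it decomposes $\langle\overline{z}_t-x_*,\expect_m[\nabla f(x_t^m)]\rangle=\expect_m\langle\overline{x}_t-x_t^m,\nabla f(x_t^m)\rangle+\expect_m\langle x_t^m-x_*,\nabla f(x_t^m)\rangle+\langle\overline{z}_t-\overline{x}_t,\expect_m[\nabla f(x_t^m)]\rangle$ and exploits $\expect_m[\overline{x}_t-x_t^m]=0$, so that after centering by $\nabla f(\overline{x}_t)$ all the consensus corrections are quadratic in the small quantities $\|x_t^m-\overline{x}_t\|$ and $\|\overline{z}_t-\overline{x}_t\|$, with coefficient $O(L)$ and no $D_0$ or $1/\mu$ prefactor (see the paper's equations (11)--(13) and the resulting $4\eta L[\cdot]$ term in equation (17)). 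To close the gap you would need to rewrite your ``consensus error'' step in this centered form; merely invoking $L$-smoothness on the magnitude of $\nabla f(\overline{z}_t)-\expect_m[\nabla f(x_t^m)]$ is not enough.

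A secondary observation, not a gap: you replace the paper's Lyapunov argument (the $\Lambda_t$ sequence in Lemma~\ref{lem:z-x_sgdm}, combined into $\Phi_t$) with the crude bound $\|\overline{z}_t-\overline{x}_t\|\leq \tfrac{\eta\beta_1}{1-\beta_1}\|\overline{u}_{t-1}\|\leq \tfrac{\eta\beta_1}{1-\beta_1}\rho\sqrt{d}$. This actually suffices here, because the definition of $A$ guarantees $\rho\sqrt{d}\lesssim\sigma\sqrt{KA}$, so $\|\overline{z}_t-\overline{x}_t\|$ is of the same order as the contraction radius $\eta\sigma\sqrt{KA}$ and the downstream constants come out the same up to $(1-\beta_1)^{-1}=O(1)$ factors. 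That is a legitimate simplification of the paper's argument, provided the consensus-error step above is fixed.

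Finally, two minor bookkeeping slips: you would need to carry the exponential weights $(1-\eta\mu/2)^{t-j}$ through the Freedman parameters correctly (the total variance in the strongly convex case is $V=O(\eta D_0^2\sigma^2/(\mu M))$, with a single power of $\eta$ after the geometric sum, not $\eta^2$), and the squared-noise term $\eta^2\|\expect_m[\widehat{g_t^m}-\nabla f(x_t^m)]\|^2$ is itself not a martingale, so you must subtract its conditional expectation and run a second Freedman application (the paper's $\cirthree$) in addition to the linear martingale $M_t$.
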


The following is our main result, from which we will parse the implications in Theorems~\ref{thm:sgdm_sc} and \ref{thm:sgdm_c}.
\begin{thm}\label{app:thm:sgdm}
    Let Assumption \ref{asp:lb}, \ref{asp:smooth}, \ref{asp:moment_noise}, \ref{asp:sc} hold for $\Omega:=\{\|x-x_*\|\leq \sqrt{3}D_0\}$. Further assume that for any $x\in\Omega$, $\|\nabla f(x)\|_{\infty}\leq G_{\infty}$.
    Then with probability $\geq 1-\delta$, \em Local \em SGDM yields $f(\hat{x})-f_* \leq \varepsilon$ if  
    \begin{equation}\label{app:eq:T_bound_sgdm}
        T\gtrsim \left\{
            \begin{array}{ll}
                \log\frac{\mu D_0^2}{\varepsilon}\left[\frac{L}{(1-\beta_1)^2\mu}+\frac{\sigma^2}{\mu M \varepsilon}\log\frac{T}{\delta}+\sqrt{\frac{L\sigma^2KA}{\mu^2\varepsilon}}+\frac{\rho\sqrt{d}}{\sqrt{\mu\varepsilon}}\log\frac{T}{\delta}\right], & \text{ if $\mu>0$}, \\
                \frac{D_0^2}{\varepsilon}\left[\frac{L}{(1-\beta_1)^2}+\frac{\sigma^2}{M \varepsilon}\log\frac{T}{\delta}+\sqrt{\frac{L\sigma^2KA}{\varepsilon}}+\frac{\rho\sqrt{d}}{D_0}\log\frac{T}{\delta}\right], & \text{otherwise}.
            \end{array}
        \right.
    \end{equation}
    Here
    \begin{equation}\label{eq:hyperpara_sgdm}
        \begin{array}{l}
            \rho\geq \left\{
                \begin{array}{ll}
                    \max\left\{\left(\frac{2^8\|2\boldsymbol{\sigma}\|_{2\alpha}^{2\alpha}}{\mu\varepsilon}\right)^{\frac{1}{2(\alpha-1)}}, 3\sigma_\infty, 2G_{\infty}\right\}, & \text{ if $\mu>0$}, \\
                    \max\left\{\left(\frac{2^8D_0\|2\boldsymbol{\sigma}\|_{2\alpha}^{\alpha}}{\varepsilon}\right)^{\frac{1}{\alpha-1}}, 3\sigma_\infty, 2G_{\infty}\right\}, & \text{otherwise},
                \end{array}
            \right. \\
            A:=\max\left\{\frac{2^{10}\rho^2d}{K\sigma^2}\log^2\frac{MT}{\delta}, 2^9\log\frac{MT}{\delta}, 2^{12}\frac{K\|2\boldsymbol{\sigma}\|_{2\alpha}^{2\alpha}}{\sigma^2\rho^{2(\alpha-1)}} \right\}, \\
            \eta:= \left\{
                \begin{array}{ll}
                    \frac{2}{\mu T}\log\frac{4\mu D_0^2}{\varepsilon}, & \text{ if $\mu>0$}, \\
                    \frac{4D_0^2}{T\varepsilon}, & \text{otherwise}.
                \end{array}
            \right.
        \end{array}
    \end{equation}
\end{thm}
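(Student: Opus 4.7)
The proof plan is to combine the two probabilistic lemmas with an induction over time steps to control the whole trajectory, and then to extract a deterministic bound on $f(\hat x) - f_*$ from $\mathcal{A}_{T-1,2}$ via convexity. Chaining Lemma \ref{lem:contraction_sgdm} and Lemma \ref{lem:descent_sgdm} gives $\prob(E_{t+1}) \geq \prob(E_{t,1}) - \delta/(2T) \geq \prob(E_t) - \delta/T$, and since $E_0$ is vacuously true, a straightforward induction over $t = 0,\ldots,T-1$ yields $\prob(E_T) \geq 1 - \delta$. This is the probabilistic engine of the proof; the remaining work is to convert the deterministic inequality recorded in $\mathcal{A}_{T-1,2}$ into the advertised bound on $f(\hat x) - f_*$.

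On the event $E_T$, the event $\mathcal{A}_{T-1,2}$ supplies a weighted-sum bound of the shape $\sum_{j=0}^{T-1}\frac{\eta}{2}(f(\overline{z}_j)-f_*)(1-\eta\mu/2)^{T-1-j} \leq 2(1-\eta\mu/2)^T D_0^2$. I would define $\hat x$ as the corresponding weighted average $\hat x = (\sum_j w_j)^{-1}\sum_j w_j \overline{z}_j$ with weights $w_j = (1-\eta\mu/2)^{T-1-j}$ and apply Jensen's inequality (using convexity of $f$ on $\Omega$, plus the fact that $E_T$ keeps every $\overline{z}_j$ inside $\Omega$) to transfer the weighted bound on $f(\overline{z}_j)-f_*$ into one on $f(\hat x)-f_*$. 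For $\mu>0$, the weights form a geometric series summing to $\Theta(1/(\eta\mu))$ provided $\eta\mu T \gtrsim 1$, yielding $f(\hat x)-f_* \lesssim \mu(1-\eta\mu/2)^T D_0^2$; for $\mu=0$ the weights are uniform and the bound collapses to $f(\hat x)-f_* \lesssim D_0^2/(\eta T)$.

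Substituting the prescribed stepsize $\eta$ from \eqref{eq:hyperpara_sgdm} then produces the target accuracy. In the strongly convex case, $\eta = (2/(\mu T))\log(4\mu D_0^2/\varepsilon)$ forces $(1-\eta\mu/2)^T \leq e^{-\eta\mu T/2} \leq \varepsilon/(4\mu D_0^2)$, giving $f(\hat x)-f_* \lesssim \varepsilon$; in the convex case $\eta = 4D_0^2/(T\varepsilon)$ yields the same conclusion directly. Finally, each of the four upper-bound hypotheses on $\eta$ in Lemma \ref{lem:descent_sgdm} translates through this choice of $\eta$ into a lower bound on $T$, and summing them gives exactly the four additive terms of \eqref{app:eq:T_bound_sgdm}. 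A side check is needed to verify that the prescribed $\rho$ in \eqref{eq:hyperpara_sgdm} satisfies both $\rho \geq 3\sigma_\infty$ and $\rho \geq 2G_\infty$ required by Lemmas \ref{lem:contraction_sgdm} and \ref{lem:descent_sgdm}, which follows from the noise assumption $\|\boldsymbol{\sigma}\|_{2\alpha}d^{1/2-1/(2\alpha)} = \mathcal{O}(\sigma)$ and the explicit form of $\rho$.

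The main obstacle in this outer argument is bookkeeping: one must track that the event $E_T$ indeed simultaneously provides (i) the last-iterate Moreau-type descent and (ii) the guarantee $\|\overline{z}_j-x_*\|^2 \leq 2D_0^2$ needed to keep the trajectory inside $\Omega = \{\|x-x_*\|\leq \sqrt{3}D_0\}$, so that the smoothness, strong convexity, and noise assumptions can legitimately be invoked in the first place (this is why $\mathcal{A}_{t,2}$ is phrased as a joint bound on the weighted function-error sum \emph{and} $\|\overline{z}_{t+1}-x_*\|^2$). The deeper technical difficulty, of course, sits inside Lemma \ref{lem:descent_sgdm}: several martingales from the clipped stochastic gradient noise, the clipping bias, and the cross-worker discrepancy (controlled by Lemma \ref{lem:contraction_sgdm}) must each be absorbed by a distinct geometric factor so that the cumulative error does not destroy the $(1-\eta\mu/2)^{T-j}$ contraction; that is precisely why the $\eta$-hypothesis in Lemma \ref{lem:descent_sgdm} splits into four separate ceilings, one per source of error, corresponding to the four terms of \eqref{app:eq:T_bound_sgdm}.
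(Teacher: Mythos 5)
Your proposal follows the paper's proof exactly: induction over $t$ chaining Lemma~\ref{lem:contraction_sgdm} and Lemma~\ref{lem:descent_sgdm} to establish $\prob(E_T)\geq 1-\delta$, then defining $\hat x$ as the geometrically weighted average of $\{\overline{z}_j\}$ and applying convexity to convert the bound in $\mathcal{A}_{T-1,2}$ into $f(\hat x)-f_*\leq\varepsilon$ under the prescribed $\eta$, with the ceilings on $\eta$ supplying the additive terms of \eqref{app:eq:T_bound_sgdm}. One small correction: the conditions $\rho\geq 3\sigma_\infty$ and $\rho\geq 2G_\infty$ hold by fiat since \eqref{eq:hyperpara_sgdm} already defines $\rho$ as a maximum over those quantities, so no side check via the noise assumption is needed at this stage (that assumption only enters the downstream Theorems~\ref{app:thm:sgdm_sc} and \ref{app:thm:sgdm_c}).
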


\begin{proof}
    We prove by induction that $\prob(E_t)\geq 1-\frac{t\delta}{T}$ for $t=0, \cdots, T$.
    
    When $t=0$, this is trivial. Assume that the statement is true for some $t\leq T-1$. We aim to prove that $\prob(E_{t+1})\geq 1-\frac{(t+1)\delta}{T}$. 
    It is easy to verify the conditions in Lemma \ref{lem:contraction_sgdm}, \ref{lem:descent_sgdm} once \eqref{app:eq:T_bound_sgdm} and \eqref{eq:hyperpara_sgdm} hold.
    Hence we have
    \begin{equation}
        \prob(E_{t+1}) \geq \prob(E_t) - 2\cdot \frac{\delta}{2T} \geq 1-\frac{(t+1)\delta}{T}.
    \end{equation}
    Therefore by induction rule, $\prob(E_T)\geq 1-\delta$ and this implies by event $\mathcal{A}_{T, 2}$ that
    \begin{equation}
        \sum_{j=0}^{T-1}\frac{\eta}{2}(f(\overline{z}_j)-f_*)\left(1-\frac{\eta\mu}{2}\right)^{T-j} \leq 2\left(1-\frac{\eta\mu}{2}\right)^TD_0^2.
    \end{equation}
    Let $\hat{x}:=\frac{\eta\mu\sum_{j=0}^{T-1}(1-\frac{\eta\mu}{2})^{T-j}\overline{z}_j}{2(1-(1-\frac{\eta\mu}{2})^T)}$. By convexity, we have
    \begin{equation}\label{eq:general_bd_sgdm}
        f(\hat{x})-f_* \leq \frac{2(1-\frac{\eta\mu}{2})^T\mu D_0^2}{1-(1-\frac{\eta\mu}{2})^T}.
    \end{equation}

    (1) \textbf{Case} $\mu>0$.

    % In this case, 
    \begin{equation}
        f(\hat{x})-f_* \leq \frac{2(1-\frac{\eta\mu}{2})^T\mu D_0^2}{1-(1-\frac{\eta\mu}{2})^T}\leq 4(1-\frac{\eta\mu}{2})^T\mu D_0^2\leq 4e^{-\eta\mu T/2}\mu D_0^2=\varepsilon.
    \end{equation}
    % Now we estimate the upper bounds of $\eta$ used in all the lemmas.
    % Note that 
    % \begin{equation}
    %     \begin{array}{l}
    %         \rho\gtrsim \max\left\{\left(\frac{\|2\boldsymbol{\sigma}\|_{2\alpha}^{2\alpha}}{\mu\varepsilon}\right)^{\frac{1}{2(\alpha-1)}}, \sigma_\infty, G_{\infty}\right\}, \\
    %         A\asymp \max\left\{\frac{\rho^2d}{K\sigma^2}\log^2\frac{MT}{\delta}, \log\frac{MT}{\delta}, \frac{K\|2\boldsymbol{\sigma}\|_{2\alpha}^{2\alpha}}{\sigma^2\rho^{2(\alpha-1)}} \right\}.
    %     \end{array}
    % \end{equation}
    % In Lemma \ref{lem:contraction_sgdm},
    % \begin{equation}
    %     \eta\lesssim \frac{(1-\beta_1)^2}{L}.
    % \end{equation}
    % In Lemma \ref{lem:descent_sgdm},
    % \begin{equation}
    %     \eta \lesssim \min\left\{\frac{(1-\beta_1)^2}{L}, \frac{M\varepsilon}{\sigma^2\log\frac{T}{\delta}}, \left(\frac{L\sigma^2KA}{\varepsilon}\right)^{-1/2}, \frac{\sqrt{\varepsilon/\mu}}{\rho\sqrt{d}\log\frac{T}{\delta}} \right\}.
    % \end{equation}
    % Therefore we can conclude that
    % \begin{equation}
    %     T\gtrsim \log\frac{\mu D_0^2}{\varepsilon}\left[\frac{\kappa}{(1-\beta_1)^2}+\frac{\sigma^2}{\mu M \varepsilon}\log\frac{T}{\delta}+\sqrt{\frac{L\sigma^2KA}{\mu^2\varepsilon}}+\frac{\rho\sqrt{d}}{\sqrt{\mu\varepsilon}}\log\frac{T}{\delta}\right].
    % \end{equation}

    (2) \textbf{Case} $\mu=0$.

    % In this case, 
    \begin{equation}
        f(\hat{x})-f_* \leq \frac{2(1-\frac{\eta\mu}{2})^T\mu D_0^2}{1-(1-\frac{\eta\mu}{2})^T}= \frac{4D_0^2}{\eta T}=\varepsilon.
    \end{equation}
%     Now we estimate the upper bounds of $\eta$ used in all the lemmas.
%     Note that 
%     \begin{equation}
%         \begin{array}{l}
%             \rho\gtrsim \max\left\{\left(\frac{D_0\|2\boldsymbol{\sigma}\|_{2\alpha}^{\alpha}}{\varepsilon}\right)^{\frac{1}{\alpha-1}}, \sigma_\infty, G_{\infty}\right\}, \\
%             A\asymp \max\left\{\frac{\rho^2d}{K\sigma^2}\log^2\frac{MT}{\delta}, \log\frac{MT}{\delta}, \frac{K\|2\boldsymbol{\sigma}\|_{2\alpha}^{2\alpha}}{\sigma^2\rho^{2(\alpha-1)}} \right\}.
%         \end{array}
%     \end{equation}
%     In Lemma \ref{lem:contraction_sgdm},
%     \begin{equation}
%         \eta\lesssim \frac{(1-\beta_1)^2}{L}.
%     \end{equation}
%     In Lemma \ref{lem:descent_sgdm},
%     \begin{equation}
%         \eta \lesssim \min\left\{\frac{(1-\beta_1)^2}{L}, \frac{M\varepsilon}{\sigma^2\log\frac{T}{\delta}}, \left(\frac{L\sigma^2KA}{\varepsilon}\right)^{-1/2}, \frac{D_0}{\rho\sqrt{d}\log\frac{T}{\delta}} \right\}.
%     \end{equation}
%     Therefore we can conclude that
%     \begin{equation}
%         T\gtrsim \frac{D_0^2}{\varepsilon}\left[\frac{L}{(1-\beta_1)^2}+\frac{\sigma^2}{M \varepsilon}\log\frac{T}{\delta}+\sqrt{\frac{L\sigma^2KA}{\varepsilon}}+\frac{\rho\sqrt{d}}{D_0}\log\frac{T}{\delta}\right].
%     \end{equation}
\end{proof}

We now state and prove the implications of Theorem~\ref{app:thm:sgdm} which yield the results stated in the main body of our paper.
\begin{thm}[Complete version of Theorem \ref{thm:sgdm_sc}]\label{app:thm:sgdm_sc}
    Under the conditions of Theorem \ref{app:thm:sgdm} and $\mu>0$, assume $1-\beta_1=\Omega(1)$, $\left(\frac{\|\boldsymbol{\sigma}\|_{2\alpha}^{2\alpha}}{\mu\varepsilon}\right)^{\frac{1}{2(\alpha-1)}}\gtrsim G_\infty\vee \sigma_\infty$, and $K\gtrsim \log\frac{MT}{\delta}\left(\frac{\|\boldsymbol{\sigma}\|_{2\alpha}d^{\frac{1}{2}-\frac{1}{2\alpha}}}{\sigma}\right)^{\frac{2\alpha}{\alpha-2}}$. Then with probability no less than $1-\delta$, \em Local \em SGDM with optimal $\eta, \rho$ yields $f(\hat{x})-f_*\leq \varepsilon$, if
    \begin{equation}
        T\gtrsim 
        \log\frac{\mu D_0^2}{\varepsilon}\left[\frac{L}{\mu}+\frac{\sigma^2}{\mu M \varepsilon}\log\frac{T}{\delta}+\sqrt{\frac{L\sigma^2K\log\frac{MT}{\delta}}{\mu^2\varepsilon}}+\sqrt{\frac{Ld}{\mu^2\varepsilon}}\log\frac{MT}{\delta}\left(\frac{\|\boldsymbol{\sigma}\|_{2\alpha}^{2\alpha}}{\mu\varepsilon}\right)^{\frac{1}{2(\alpha-1)}}\right].
    \end{equation}
    And equivalently, let $\kappa:=L/\mu$,
    \begin{equation}
        \begin{aligned}
            f(\hat{x})-f_*
            &\lesssim \exp\left(-\Theta\left(\frac{\mu KR}{L}\right)\right) + \frac{\sigma^2\log(MKR)}{\mu MKR}\log\frac{KR}{\delta} \\
            &\quad + \frac{L\sigma^2\log^2(KR)}{\mu^2KR^2}\log\frac{MKR}{\delta} + \frac{\|\boldsymbol{\sigma}\|_{2\alpha}^2(\kappa d)^{\frac{\alpha-1}{\alpha}}}{\mu} \left(\frac{\log\frac{MKR}{\delta}}{KR}\right)^{\frac{2(\alpha-1)}{\alpha}}.
        \end{aligned}
    \end{equation}
\end{thm}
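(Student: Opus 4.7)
The full statement is a direct specialization of Theorem~\ref{app:thm:sgdm} to the case $\mu>0$ under the optimal choice of the clipping threshold $\rho$, so the proof is a bookkeeping argument with two non-trivial simplifications. I would proceed in four steps: (i) choose $\rho$ at the order of its lower bound in \eqref{eq:hyperpara_sgdm}, (ii) verify that the condition on $K$ makes the constant $A$ poly-logarithmic, (iii) substitute these into the four-term lower bound on $T$ from Theorem~\ref{app:thm:sgdm}, and (iv) invert the four resulting sufficient conditions to get the stated in-expectation bound on $f(\hat x)-f_*$.

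\textbf{Step 1 (choosing $\rho$ and simplifying $A$).} Since the $T$-bound is increasing in $\rho$, the optimal feasible choice is $\rho \asymp (\|\boldsymbol{\sigma}\|_{2\alpha}^{2\alpha}/(\mu\varepsilon))^{1/(2(\alpha-1))}$, using the assumption that this quantity already dominates $\sigma_\infty\vee G_\infty$. Plugging into $A$, the second term in the $\max$ is already $\mathcal O(\log(MT/\delta))$. For the first and third terms, one checks that they are simultaneously $\mathcal{O}(\log(MT/\delta))$ iff $K$ exceeds a threshold of the form $\log(MT/\delta)\cdot (\|\boldsymbol{\sigma}\|_{2\alpha} d^{1/2-1/(2\alpha)}/\sigma)^{2\alpha/(\alpha-2)}$; this is exactly the hypothesis imposed on $K$. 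Hence $A\lesssim \log(MT/\delta)$.

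\textbf{Step 2 (sufficient condition on $T$).} Substituting $A\lesssim \log(MT/\delta)$ and the chosen $\rho$ into Theorem~\ref{app:thm:sgdm} gives the four-term lower bound on $T$ displayed in the statement, with the fourth term obtained by writing out $\rho\sqrt{d}/\sqrt{\mu\varepsilon}$ explicitly.

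\textbf{Step 3 (inversion).} For each of the four terms in the $T$-bound I would solve for $\varepsilon$ as a function of $T=KR$. The first term $\log(\mu D_0^2/\varepsilon)\cdot L/\mu$ produces the exponential error $\exp(-\Theta(\mu KR/L))$, since requiring $T \gtrsim (L/\mu)\log(\mu D_0^2/\varepsilon)$ is equivalent to $\varepsilon \geq \mu D_0^2 \exp(-\Theta(\mu T/L))$. The statistical term gives $\varepsilon \asymp \sigma^2 \log(MKR/\delta)/(\mu MKR)$ up to a $\log(KR)$ factor coming from the outer $\log(\mu D_0^2/\varepsilon)$. The discrepancy term gives $\varepsilon \asymp L\sigma^2\log(MKR/\delta)\log^2(KR)/(\mu^2 K R^2)$, since $KA \asymp K\log(MT/\delta)$. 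The clipping-bias term, which is the only messy one, requires solving the implicit relation $\varepsilon^{\alpha/(2(\alpha-1))}\cdot\mu^{(3\alpha-2)/(2(\alpha-1))} \asymp \|\boldsymbol{\sigma}\|_{2\alpha}^{\alpha/(\alpha-1)}\sqrt d \,\widetilde{\log}/(KR)$; raising to the power $2(\alpha-1)/\alpha$ and absorbing a factor of $(L/\mu)^{(\alpha-1)/\alpha}$ (which only inflates the bound) yields the claimed form $\mu^{-1}\|\boldsymbol{\sigma}\|_{2\alpha}^2(\kappa d)^{(\alpha-1)/\alpha}\bigl(\log(MKR/\delta)/(KR)\bigr)^{2(\alpha-1)/\alpha}$.

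\textbf{Main obstacle.} The only delicate part is Step 3 for the clipping-bias term: one must keep track of the coupled occurrences of $\varepsilon$ and $\rho$ (since the threshold itself depends on $\varepsilon$ through $(\mu\varepsilon)^{-1/(2(\alpha-1))}$), and confirm that the extra $(L/\mu)^{(\alpha-1)/\alpha}$ appearing in the final form is at worst a loosening by a constant (this is why the statement uses $\lesssim$ and $\kappa$ rather than pure $1/\mu$ dependence). Everything else is routine algebra and the verification that the hypotheses of Theorem~\ref{app:thm:sgdm} are satisfied by the chosen $\eta$ and $\rho$.
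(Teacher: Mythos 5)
You correctly identify the overall structure (specialize Theorem~\ref{app:thm:sgdm}, choose $\rho$, simplify $A$, invert each of the four terms of the $T$-bound), but your Step~1 contains a genuine error that propagates into Steps~2--3.

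The claim that ``the $T$-bound is increasing in $\rho$, so the optimal feasible choice is $\rho \asymp (\|\boldsymbol{\sigma}\|_{2\alpha}^{2\alpha}/(\mu\varepsilon))^{1/(2(\alpha-1))}$'' is false. The constant $A$ in \eqref{eq:hyperpara_sgdm} depends on $\rho$ through \emph{two antagonistic} terms: $\frac{2^{10}\rho^2 d}{K\sigma^2}\log^2\frac{MT}{\delta}$ (increasing in $\rho$) and $\frac{2^{12}K\|2\boldsymbol{\sigma}\|_{2\alpha}^{2\alpha}}{\sigma^2\rho^{2(\alpha-1)}}$ (decreasing in $\rho$). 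The $T$-bound therefore has a genuine interior optimum in $\rho$, not a boundary one. If you instead set $\rho$ to the bare feasibility bound $(\|\boldsymbol{\sigma}\|_{2\alpha}^{2\alpha}/(\mu\varepsilon))^{1/(2(\alpha-1))}$, the third component of $A$ becomes $\asymp \frac{K\mu\varepsilon}{\sigma^2}$, which can be arbitrarily larger than $\log\frac{MT}{\delta}$; and the first component becomes $\frac{(\|\boldsymbol{\sigma}\|_{2\alpha}^{2\alpha}/(\mu\varepsilon))^{1/(\alpha-1)} d}{K\sigma^2}\log^2\frac{MT}{\delta}$, which is controlled only when $K$ exceeds a threshold involving $(\mu\varepsilon)^{-1/(\alpha-1)}$---something the stated hypothesis on $K$ does \emph{not} imply. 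Hence ``$A\lesssim \log(MT/\delta)$'' in your Step~1 is not valid, and your Step~3 inversion of the discrepancy term (which presumes $KA\asymp K\log$) collapses.

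The paper's proof avoids this by first substituting the definition of $A$ and then optimizing $\rho$ over the resulting expression. The balance of the two $\rho$-dependent components of $A$ gives rise to the additional candidate $\rho_1 \asymp \|\boldsymbol{\sigma}\|_{2\alpha}\bigl(K/(\sqrt{d}\log\frac{MT}{\delta})\bigr)^{1/\alpha}$, so the paper takes $\rho \asymp \max\{\rho_1,\ (\|\boldsymbol{\sigma}\|_{2\alpha}^{2\alpha}/(\mu\varepsilon))^{1/(2(\alpha-1))},\ \sigma_\infty,\ G_\infty\}$. With that choice, the term $\sqrt{\frac{L\sigma^2 KA}{\mu^2\varepsilon}}$ splits into $\sqrt{\frac{L\sigma^2 K\log}{\mu^2\varepsilon}}$ (from the middle component of $A$), a balance term $\sqrt{\frac{L}{\mu^2\varepsilon}}\,\rho_1\sqrt{d}\log\frac{MT}{\delta}$, and $\sqrt{\frac{L}{\mu^2\varepsilon}}\,\rho_2\sqrt{d}\log\frac{MT}{\delta}$. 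The hypothesis on $K$ is precisely what is needed to absorb the \emph{balance} term $\rho_1\sqrt{d}\log$ into $\sqrt{\sigma^2 K\log}$, not to make $A$ polylogarithmic. This gives exactly the four-term $T$-bound stated in the theorem. Once you repair Step~1 along these lines, the rest of your plan (including the inversion of each term) is essentially the paper's argument, up to a minor slip in the exponent of $\mu$ in your Step~3 (the correct power is $(2\alpha-1)/(2(\alpha-1))$, not $(3\alpha-2)/(2(\alpha-1))$, though the final form with $\kappa$ comes out the same after accounting for the $\sqrt{L}$ factor).
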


\begin{proof}
    Plug the definition of $A$ in \eqref{app:eq:T_bound_sgdm},
    \begin{equation}
        \begin{aligned}
            T
            &\gtrsim \log\frac{\mu D_0^2}{\varepsilon}\left[\frac{L}{\mu}+\frac{\sigma^2}{\mu M \varepsilon}\log\frac{T}{\delta}+\sqrt{\frac{L\sigma^2K\log\frac{MT}{\delta}}{\mu^2\varepsilon}}+\frac{\rho\sqrt{d}}{\sqrt{\mu\varepsilon}}\log\frac{T}{\delta}\right] \\
            &\qquad + \log\frac{\mu D_0^2}{\varepsilon}\sqrt{\frac{LK}{\mu^2\varepsilon}}\sqrt{\frac{\rho^2d}{K}\log^2\frac{MT}{\delta}+\frac{K\|2\boldsymbol{\sigma}\|_{2\alpha}^{2\alpha}}{\rho^{2(\alpha-1)}}} \\
            &\asymp \log\frac{\mu D_0^2}{\varepsilon}\left[\frac{L}{\mu}+\frac{\sigma^2}{\mu M \varepsilon}\log\frac{T}{\delta}+\sqrt{\frac{L\sigma^2K\log\frac{MT}{\delta}}{\mu^2\varepsilon}}\right] \\
            &\qquad + \log\frac{\mu D_0^2}{\varepsilon}\sqrt{\frac{LK}{\mu^2\varepsilon}}\sqrt{\frac{\rho^2d}{K}\log^2\frac{MT}{\delta}+\frac{K\|2\boldsymbol{\sigma}\|_{2\alpha}^{2\alpha}}{\rho^{2(\alpha-1)}}}.
        \end{aligned}
    \end{equation}
    Hence the optimal $\rho$ is given by 
    \begin{equation}
        \rho\asymp \max\left\{\|\boldsymbol{\sigma}\|_{2\alpha}\left(\frac{K}{\sqrt{d}\log\frac{MT}{\delta}}\right)^{1/\alpha}, \left(\frac{\|\boldsymbol{\sigma}\|_{2\alpha}^{2\alpha}}{\mu\varepsilon}\right)^{\frac{1}{2(\alpha-1)}}, \sigma_\infty, G_{\infty}\right\}.
    \end{equation}
    Note that $\left(\frac{\|\boldsymbol{\sigma}\|_{2\alpha}^{2\alpha}}{\mu\varepsilon}\right)^{\frac{1}{2(\alpha-1)}}\gtrsim G_\infty\vee \sigma_\infty$ and this implies 
    \begin{equation}
        \begin{aligned}
            T
            &\gtrsim \log\frac{\mu D_0^2}{\varepsilon}\left[\frac{L}{\mu}+\frac{\sigma^2}{\mu M \varepsilon}\log\frac{T}{\delta}+\sqrt{\frac{L\sigma^2K\log\frac{MT}{\delta}}{\mu^2\varepsilon}}\right] \\
            &\qquad + \log\frac{\mu D_0^2}{\varepsilon}\sqrt{\frac{L}{\mu^2\varepsilon}\cdot\left[\|\boldsymbol{\sigma}\|_{2\alpha}^2K^{\frac{2}{\alpha}}\left(d\log^2\frac{MT}{\delta}\right)^{1-\frac{1}{\alpha}} + \left(\frac{\|\boldsymbol{\sigma}\|_{2\alpha}^{2\alpha}}{\mu\varepsilon}\right)^{\frac{1}{(\alpha-1)}}d\log^2\frac{MT}{\delta}\right]} \\
            &\asymp \log\frac{\mu D_0^2}{\varepsilon}\left[\frac{L}{\mu}+\frac{\sigma^2}{\mu M \varepsilon}\log\frac{T}{\delta}+\sqrt{\frac{L\sigma^2K\log\frac{MT}{\delta}}{\mu^2\varepsilon}}+\sqrt{\frac{Ld}{\mu^2\varepsilon}}\log\frac{MT}{\delta}\left(\frac{\|\boldsymbol{\sigma}\|_{2\alpha}^{2\alpha}}{\mu\varepsilon}\right)^{\frac{1}{2(\alpha-1)}}\right].
        \end{aligned}
    \end{equation}
    In the last equation we use 
    $K\gtrsim \log\frac{MT}{\delta}\left(\frac{\|\boldsymbol{\sigma}\|_{2\alpha}d^{\frac{1}{2}-\frac{1}{2\alpha}}}{\sigma}\right)^{\frac{2\alpha}{\alpha-2}}$.
    % $K\gtrsim \log\frac{MT}{\delta}\left(\frac{\|\boldsymbol{\sigma}\|_{2\alpha}d^{\frac{1}{2}-\frac{1}{2\alpha}}}{\sigma^{1-\frac{1}{2\alpha}}(\mu\varepsilon)^{\frac{1}{2\alpha}}}\right)^{\frac{2\alpha}{\alpha-1}}$.
    This completes the proof.
\end{proof}

\begin{thm}[Complete version of Theorem \ref{thm:sgdm_c}]\label{app:thm:sgdm_c}
    Under the conditions of Theorem \ref{app:thm:sgdm} and $\mu=0$, assume $1-\beta_1=\Omega(1)$, $\left(\frac{D_0\|\boldsymbol{\sigma}\|_{2\alpha}^{\alpha}}{\varepsilon}\right)^{\frac{1}{\alpha-1}}\gtrsim G_\infty\vee \sigma_\infty$, and $K\gtrsim \log\frac{MT}{\delta}\left(\frac{\|\boldsymbol{\sigma}\|_{2\alpha} d^{\frac{1}{2}-\frac{1}{2\alpha}}}{\sigma}\right)^{\frac{2\alpha}{\alpha-2}}$. Then with probability no less than $1-\delta$, \em Local \em SGDM with optimal $\eta, \rho$ yields $f(\hat{x})-f_*\leq \varepsilon$ if
    \begin{equation}
        T\gtrsim 
        \frac{D_0^2}{\varepsilon}\left[L+\frac{\sigma^2}{M \varepsilon}\log\frac{T}{\delta}+\sqrt{\frac{L\sigma^2K\log\frac{MT}{\delta}}{\varepsilon}} 
        + \sqrt{\frac{dL}{\varepsilon}}\left(\frac{D_0\|\boldsymbol{\sigma}\|_{2\alpha}^\alpha}{\varepsilon}\right)^{\frac{1}{\alpha-1}}\log\frac{MT}{\delta}\right].
    \end{equation}
    And equivalently,
    \begin{equation}
        \begin{aligned}
            f(\hat{x})-f_*
            &\lesssim \frac{LD_0^2}{KR} + \frac{\sigma D_0}{\sqrt{MKR}}\log^{\frac{1}{2}}\frac{KR}{\delta} \\
            &\quad + \frac{L^{\frac{1}{3}}\sigma^{\frac{2}{3}}D_0^{\frac{4}{3}}}{K^{\frac{1}{3}}R^{\frac{2}{3}}}\log^{\frac{1}{3}}\frac{MKR}{\delta}
            +\left(\|\boldsymbol{\sigma}\|_{2\alpha}^{\frac{2\alpha}{\alpha-1}}dLD_0\right)^{\frac{\alpha-1}{3\alpha-1}}D_0\left(\frac{\log\frac{MKR}{\delta}}{KR}\right)^{\frac{2(\alpha-1)}{3\alpha-1}}.
        \end{aligned}
        % \|\boldsymbol{\sigma}\|_{2\alpha}d^{\frac{\alpha-1}{2\alpha}} D_0\left(\frac{\log\frac{KR}{\delta}}{KR}\right)^{\frac{\alpha-1}{\alpha}}.
    \end{equation}
\end{thm}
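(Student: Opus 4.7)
The strategy is to specialize Theorem \ref{app:thm:sgdm} to the case $\mu = 0$ and then optimize the clipping threshold $\rho$; all the high-probability machinery has already been set up, so what remains is algebraic. With $\mu = 0$ and $1-\beta_1 = \Omega(1)$, the sufficient condition on $T$ from \eqref{app:eq:T_bound_sgdm} becomes
\[
T \gtrsim \frac{D_0^2}{\varepsilon}\left[L + \frac{\sigma^2}{M\varepsilon}\log\frac{T}{\delta} + \sqrt{\frac{L\sigma^2 K A}{\varepsilon}} + \frac{\rho\sqrt{d}}{D_0}\log\frac{T}{\delta}\right],
\]
where $A$ is the three-way maximum in \eqref{eq:hyperpara_sgdm}. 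Using $\sqrt{a+b+c}\lesssim \sqrt{a}+\sqrt{b}+\sqrt{c}$, the term $\sqrt{L\sigma^2 KA/\varepsilon}$ splits into (i) a $\rho$-free piece $\sqrt{L\sigma^2 K\log(MT/\delta)/\varepsilon}$, (ii) a $\rho$-growing piece $\rho\sqrt{Ld/\varepsilon}\log(MT/\delta)$, and (iii) a $\rho$-decaying clipping-bias piece $K\|2\boldsymbol{\sigma}\|_{2\alpha}^{\alpha}\sqrt{L/\varepsilon}/\rho^{\alpha-1}$.

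Next I would optimize over $\rho$. Of the two $\rho$-growing contributions, the one from $\sqrt{LA}$ dominates the standalone $(\rho\sqrt{d}/D_0)\log(T/\delta)$ whenever $\varepsilon \lesssim LD_0^2$, which is the target regime. Balancing this dominant growing term against the decaying clipping-bias term yields the interior minimizer $\rho \asymp \|\boldsymbol{\sigma}\|_{2\alpha}\bigl(K/(\sqrt{d}\log(MT/\delta))\bigr)^{1/\alpha}$. In parallel, Theorem \ref{app:thm:sgdm} imposes the hard lower bound $\rho \geq (D_0\|\boldsymbol{\sigma}\|_{2\alpha}^\alpha/\varepsilon)^{1/(\alpha-1)}$, which, under the hypothesis $(D_0\|\boldsymbol{\sigma}\|_{2\alpha}^\alpha/\varepsilon)^{1/(\alpha-1)} \gtrsim G_\infty \vee \sigma_\infty$, subsumes the $G_\infty$ and $\sigma_\infty$ thresholds. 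Substituting this hard lower bound into $\rho\sqrt{Ld/\varepsilon}\log(MT/\delta)$ produces exactly the last term $\sqrt{dL/\varepsilon}\,(D_0\|\boldsymbol{\sigma}\|_{2\alpha}^\alpha/\varepsilon)^{1/(\alpha-1)}\log(MT/\delta)$ of the theorem statement.

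The assumption $K\gtrsim \log(MT/\delta)\bigl(\|\boldsymbol{\sigma}\|_{2\alpha}d^{1/2-1/(2\alpha)}/\sigma\bigr)^{2\alpha/(\alpha-2)}$ is tailored so that, when $\rho$ is placed at the interior minimizer, both the balanced growing term and the balanced clipping-bias term get absorbed into the $\rho$-free piece $\sqrt{L\sigma^2 K\log(MT/\delta)/\varepsilon}$; this is checked by raising both sides of the stated inequality to the power $(\alpha-2)/(2\alpha)$ and matching exponents of $K$, $d$, and $\|\boldsymbol{\sigma}\|_{2\alpha}/\sigma$. The four surviving complexity terms then invert, upon setting $T = KR$ and solving for $\varepsilon$ term-by-term, into the four rate terms of the theorem: $LD_0^2/(KR)$ from the smoothness slot, $\sigma D_0/\sqrt{MKR}$ from the statistical slot, $L^{1/3}\sigma^{2/3}D_0^{4/3}/(K^{1/3}R^{2/3})$ from the consensus slot (inverting $T \gtrsim D_0^2\sqrt{L\sigma^2 K}/\varepsilon^{3/2}$), and the clipping correction with exponent $2(\alpha-1)/(3\alpha-1)$ from isolating $\varepsilon$ in the hard-lower-bound contribution.

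The main obstacle is purely the bookkeeping: tracking three $\rho$-dependent terms simultaneously, verifying that the prescribed lower bound on $K$ annihilates exactly the right residuals, and carefully inverting each implicit bound in $T$ to solve for $\varepsilon$ in closed form. The probabilistic core---the inductive event construction $E_t$, the consensus control in Lemma \ref{lem:contraction_sgdm}, and the descent estimate in Lemma \ref{lem:descent_sgdm}---is inherited from the strongly convex analysis of Theorem \ref{app:thm:sgdm_sc}, so no new martingale or high-probability estimate is required for the convex specialization.
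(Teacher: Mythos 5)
Your proposal is correct and follows essentially the same approach as the paper: specialize the hyper-parameter bound of Theorem~\ref{app:thm:sgdm} to $\mu=0$, expand the three-way maximum defining $A$, optimize $\rho$ by balancing the $\rho$-growing and $\rho$-decaying contributions subject to the hard clipping-bias floor $\rho \gtrsim (D_0\|\boldsymbol{\sigma}\|_{2\alpha}^{\alpha}/\varepsilon)^{1/(\alpha-1)}$, use the lower bound on $K$ to absorb the interior-balanced residuals into the $\rho$-free consensus term, and invert $T\gtrsim\cdot$ term-by-term to solve for $\varepsilon$. Your verification that the hypothesized $K$-lower-bound is exactly the threshold at which the $\rho$-dependent residuals are dominated by $\sqrt{L\sigma^2K\log(MT/\delta)/\varepsilon}$, and that the standalone $(\rho\sqrt{d}/D_0)\log(T/\delta)$ term is dominated when $\varepsilon\lesssim LD_0^2$, matches the implicit $\asymp$ steps in the paper's proof.
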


\begin{proof}
    Plug the definition of $A$ in \eqref{app:eq:T_bound_sgdm},
    \begin{equation}
        \begin{aligned}
            T
            &\gtrsim \frac{D_0^2}{\varepsilon}\left[L+\frac{\sigma^2}{M \varepsilon}\log\frac{T}{\delta}+\sqrt{\frac{L\sigma^2K\log\frac{MT}{\delta}}{\varepsilon}}+\frac{\rho\sqrt{d}}{D_0}\log\frac{T}{\delta}\right] \\
            &\qquad + \frac{D_0^2}{\varepsilon}\sqrt{\frac{LK}{\varepsilon}} \sqrt{\frac{\rho^2d}{K}\log^2\frac{MT}{\delta}+\frac{K\|2\boldsymbol{\sigma}\|_{2\alpha}^{2\alpha}}{\rho^{2(\alpha-1)}}} \\
            &\asymp \frac{D_0^2}{\varepsilon}\left[L+\frac{\sigma^2}{M \varepsilon}\log\frac{T}{\delta}+\sqrt{\frac{L\sigma^2K\log\frac{MT}{\delta}}{\varepsilon}}+\sqrt{\frac{LK}{\varepsilon}} \sqrt{\frac{\rho^2d}{K}\log^2\frac{MT}{\delta}+\frac{K\|2\boldsymbol{\sigma}\|_{2\alpha}^{2\alpha}}{\rho^{2(\alpha-1)}}} \right].
        \end{aligned}
    \end{equation}
    Hence the optimal $\rho$ is given by 
    \begin{equation}
        \rho\asymp \max\left\{\|\boldsymbol{\sigma}\|_{2\alpha}\left(\frac{K}{\sqrt{d}\log\frac{MT}{\delta}}\right)^{1/\alpha}, \left(\frac{D_0\|\boldsymbol{\sigma}\|_{2\alpha}^{\alpha}}{\varepsilon}\right)^{\frac{1}{\alpha-1}}, \sigma_\infty, G_{\infty}\right\}.
    \end{equation}
    Note that $\left(\frac{D_0\|\boldsymbol{\sigma}\|_{2\alpha}^{\alpha}}{\varepsilon}\right)^{\frac{1}{\alpha-1}}\gtrsim G_\infty\vee \sigma_\infty$ and this implies 
    \begin{equation}
        \begin{aligned}
            T
            &\gtrsim \frac{D_0^2}{\varepsilon}\left[L+\frac{\sigma^2}{M \varepsilon}\log\frac{T}{\delta}+\sqrt{\frac{L\sigma^2K\log\frac{MT}{\delta}}{\varepsilon}}\right] \\
            &\qquad + \frac{D_0^2}{\varepsilon}\sqrt{\frac{L}{\varepsilon}\cdot \left[\|\boldsymbol{\sigma}\|_{2\alpha}^2K^{\frac{2}{\alpha}}\left(d\log^2\frac{MT}{\delta}\right)^{1-\frac{1}{\alpha}} + \left(\frac{D_0\|\boldsymbol{\sigma}\|_{2\alpha}^{\alpha}}{\varepsilon}\right)^{\frac{2}{\alpha-1}}d\log^2\frac{MT}{\delta}\right]} \\
            &\asymp \frac{D_0^2}{\varepsilon}\left[L+\frac{\sigma^2}{M \varepsilon}\log\frac{T}{\delta}+\sqrt{\frac{L\sigma^2K\log\frac{MT}{\delta}}{\varepsilon}} 
            + \sqrt{\frac{dL}{\varepsilon}}\left(\frac{D_0\|\boldsymbol{\sigma}\|_{2\alpha}^\alpha}{\varepsilon}\right)^{\frac{1}{\alpha-1}}\log\frac{MT}{\delta}\right].
        \end{aligned}
    \end{equation}
    In the last equation we use $K\gtrsim \log\frac{MT}{\delta}\left(\frac{\|\boldsymbol{\sigma}\|_{2\alpha} d^{\frac{1}{2}-\frac{1}{2\alpha}}}{\sigma}\right)^{\frac{2\alpha}{\alpha-2}}$. 
    % $K\gtrsim \log\frac{MT}{\delta}\left(\frac{D_0\|\boldsymbol{\sigma}\|_{2\alpha}^\alpha d^{\frac{\alpha-1}{2}}}{\varepsilon\sigma^{\alpha-1}}\right)^{\frac{2}{\alpha-1}}$
    Solve $\varepsilon$ and we get the upper bound of $f(\hat{x})-f_*$.
    This completes the proof.
\end{proof}

\subsection{Preliminaries}

    In this subsection, we show that event $E_t$ implies all the iterates remain in certain area, so that we can apply all kinds of properties of $f$ afterwards.
    
\begin{lemma}\label{lem:in_ball_sgdm}
    If $\eta\sigma\sqrt{KA}\leq (\sqrt{3}-\sqrt{2})D_0$, Event $E_t$ implies that for all $j\leq t, m\in[M]$, we have $x_j^m, \overline{x}_j, z_j^m, \overline{z}_j\in \Omega$. And $\|x_j^m-x_j^n\| \leq \eta\sigma\sqrt{KA}$ for all $m,n$.
\end{lemma}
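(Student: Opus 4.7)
The plan is to proceed by induction on $j$ from $0$ to $t$, using the events comprising $E_t$ to control the auxiliary $z$-sequence, and then showing that the $x$-sequence inherits membership in $\Omega$ because it can be expressed as a convex combination of $z$-values and past averaged points. The base case $j=0$ is immediate since $x_0^m = \overline{x}_0 = x_0$ satisfies $\|x_0 - x_*\| = D_0 \leq \sqrt{3}D_0$.

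For the inductive step with $j \geq 1$, the event $\mathcal{A}_{j-1,2}$ (which is included in $E_t$ since $j-1 \leq t-1$) yields $\|\overline{z}_j - x_*\|^2 \leq 2D_0^2$, hence $\overline{z}_j \in \Omega$. The event $\mathcal{A}_{j-1,1}$ gives $\|z_j^m - z_j^n\| \leq \eta\sigma\sqrt{KA}$ for all $m,n$, which in particular implies $\|z_j^m - \overline{z}_j\| \leq \eta\sigma\sqrt{KA}$ by convexity of the norm. By the triangle inequality and the hypothesis $\eta\sigma\sqrt{KA} \leq (\sqrt{3}-\sqrt{2})D_0$, I obtain $\|z_j^m - x_*\| \leq \sqrt{2}D_0 + (\sqrt{3}-\sqrt{2})D_0 = \sqrt{3}D_0$, so $z_j^m \in \Omega$. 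For the iterates $x_j^m$, I unroll the recursion, noting that the definition of $z$ gives $x_{s+1}^m = (1-\beta_1) z_{s+1}^m + \beta_1 x_s^m$ within a round and $x_{s+1}^m = (1-\beta_1) z_{s+1}^m + \beta_1 \overline{x}_s$ at a communication step. Iterating this expresses $x_j^m$ as a convex combination of $x_0$, values $\{z_s^m\}$ from the current round, and averaged points $\{\overline{x}_{s}\}$ at earlier communication boundaries; each of these lies in $\Omega$ by the inductive hypothesis and the steps above. Convexity of $\Omega$ then gives $x_j^m \in \Omega$, and averaging yields $\overline{x}_j \in \Omega$.

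For the contraction bound $\|x_j^m - x_j^n\| \leq \eta\sigma\sqrt{KA}$, the key observation is that the coefficients in the convex-combination decomposition of $x_{s+1}^m$ do not depend on the worker index $m$, and the averaged points $\overline{x}_s$ cancel in the difference $x_j^m - x_j^n$. Consequently, $x_j^m - x_j^n$ is a convex combination of the differences $\{z_s^m - z_s^n\}$ restricted to the current round (since at each communication step the worker-dependent contribution is reset), each of which is bounded in norm by $\eta\sigma\sqrt{KA}$ via the corresponding $\mathcal{A}_{s-1,1}$. The main delicacy in the argument is the bookkeeping at communication boundaries: one must verify that once an $\overline{x}_s$ term appears in the unrolled recursion, it contributes identically to $x_j^m$ and $x_j^n$ and therefore drops out of the difference. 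This is the only step that is not a direct triangle-inequality manipulation, and it is precisely what makes the contraction bound match the $z$-level bound from $\mathcal{A}_{j-1,1}$ without any additional $(1-\beta_1)^{-1}$ blow-up.
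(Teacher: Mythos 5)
Your proof is correct and rests on the same core observation as the paper's: the $x$-iterates, and their worker-to-worker differences, lie in the convex hulls of the corresponding $z$-quantities, so the $E_t$-level control on $\{\overline{z}_j\}$ and $\{z_j^m - z_j^n\}$ transfers directly to the $x$-iterates. The paper organizes this slightly differently — it first notes $\overline{x}_j \in \conv\{\overline{z}_i\}_{i\le j}$, giving the tighter $\|\overline{x}_j - x_*\| \le \sqrt{2}D_0$, and then bounds $\|x_j^m - \overline{x}_j\|$ via $x_j^m - x_j^n \in \conv\{z_i^m - z_i^n\}_{i\le j}$ and a triangle inequality, whereas you unroll $x_j^m$ directly as a convex combination involving $\overline{x}_s$ at communication boundaries and lean on the inductive hypothesis for those. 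Both routes work; the paper's has the minor advantage that the intermediate bound $\|\overline{x}_j - x_*\| \le \sqrt{2}D_0$ is reused downstream, while yours more explicitly surfaces the induction structure, but these are presentational differences rather than mathematical ones.
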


\begin{proof}
    Event $E_t$ implies that for all $j\leq t$,
    \begin{equation}
        \|\overline{z}_j-x_*\| \leq \sqrt{2}D_0,\ \|z_j^m-z_j^n\| \leq \eta\sigma\sqrt{KA}\leq (\sqrt{3}-\sqrt{2})D_0.
    \end{equation}
    Hence $\overline{z}_j\in \Omega, \|z_j^m-x_*\|\leq \sqrt{3}D_0$ and $z_j^m\in \Omega$. 
    Also, notice that $\overline{x}_j\in \conv\{\overline{z}_i\}_{i\leq j}$ and $x_j^m-x_j^n\in \conv \{z_i^m-z_i^n\}_{i\leq j}$. 
    We have 
    \begin{equation}
        \|\overline{x}_j-x_*\|\leq \sqrt{2}D_0,\ \|x_j^m-x_j^n\| \leq \eta\sigma\sqrt{KA},\ \|x_j^m-\overline{x}_j\| \leq \eta\sigma\sqrt{KA}\leq (\sqrt{3}-\sqrt{2})D_0.
    \end{equation}
    Therefore $x_j^m, \overline{x}_j\in \Omega$. This completes the proof.
\end{proof}

\subsection{Proof of Contraction Lemma \ref{lem:contraction_sgdm}}

    In this subsection, we aim to show contraction, \ie, $\|x_t^m-x_t^n\|$ won't be too large during local iterations with high probability. 
    This property is crucial for showing the benefits of local updates in distributed optimization. 
    However, different from \citep{woodworth2020local,khaled2020tighter}, the update of $x_t^m$ is in the direction of momentum $u_t^m$, which incorporates information from all past gradient. 
    Therefore, we cannot directly apply $\left\langle x_t^m-x_t^n, \expect_t[u_t^m-u_t^n]\right\rangle \geq 0$. 
    Fortunately, noticing that $x_t^m-x_t^n\in \conv(\{z_j^m-z_j^n\}_{j\leq t})$, it suffices to show that $\|z_t^m-z_t^n\|$ won't get too large with high probability. 
    Besides, the update rule of $z_t^m$ is much easier to handle.
    
% \begin{lemma}\label{lem:contraction_sgdm}
%     Let $A:=\max\left\{\frac{2^{10}\rho^2d}{K\sigma^2}\log^2\frac{MT}{\delta}, 2^9\log\frac{MT}{\delta}, 2^{12}\frac{K\|2\boldsymbol{\sigma}\|_{2\alpha}^{2\alpha}}{\sigma^2\rho^{2(\alpha-1)}} \right\}$. If $\eta\leq \frac{(1-\beta_1)^2}{2L}$, then the following holds:
%     \begin{equation}
%         \prob(E_{t,1})\geq \prob(E_t)-\frac{\delta}{2T}.
%     \end{equation}
% \end{lemma}

\begin{proof}
    First note that by the upper bound of $\eta$, Lemma \ref{lem:in_ball_sgdm} holds.
    Since $z_{t+1}^m=z_t^m-\eta \widehat{g_t^m}$,
    \begin{equation}\label{eq:9}
        \begin{aligned}
            \|z_{t+1}^m-z_{t+1}^n\|^2
            &= \|z_t^m-z_t^n\|^2-2\eta\left\langle z_t^m-z_t^n,\widehat{g_t^m}-\widehat{g_t^n}\right\rangle + \eta^2\|\widehat{g_t^m}-\widehat{g_t^n}\|^2 \\
            &\leq \|z_t^m-z_t^n\|^2-2\eta\left\langle z_t^m-z_t^n,\nabla f(x_t^m)-\nabla f(x_t^n)\right\rangle + 2\eta^2\|\nabla f(x_t^m)-\nabla f(x_t^n)\|^2 \\
            &\qquad + 2\eta\left\langle z_t^m-z_t^n,\nabla f(x_t^m)-\nabla f(x_t^n)-\widehat{g_t^m} + \widehat{g_t^n}\right\rangle + 2\eta^2\|\nabla f(x_t^m)-\nabla f(x_t^n)-\widehat{g_t^m}+\widehat{g_t^n}\|^2.
        \end{aligned}
    \end{equation}
    Event $E_t$ implies $z_t^m,x_t^m\in \Omega$ and thus by $\forall x,y\in\Omega, \langle x-y,\nabla f(x)-\nabla f(y)\rangle \geq 
    \frac{1}{L}\|\nabla f(x)-\nabla f(y)\|^2$,
    \begin{equation}
        \begin{aligned}
            \left\langle z_t^m-z_t^n,\nabla f(x_t^m)-\nabla f(x_t^n)\right\rangle
            &= \left\langle x_t^m-x_t^n,\nabla f(x_t^m)-\nabla f(x_t^n)\right\rangle + \left\langle z_t^m-z_t^n-(x_t^m-x_t^n),\nabla f(x_t^m)-\nabla f(x_t^n)\right\rangle \\
            &\geq \left\langle x_t^m-x_t^n,\nabla f(x_t^m)-\nabla f(x_t^n)\right\rangle \\
            &\qquad - \left[L\|z_t^m-z_t^n-(x_t^m-x_t^n)\|^2+\frac{1}{4L}\|\nabla f(x_t^m)-\nabla f(x_t^n)\|^2\right] \\
            &\geq \frac{3}{4L}\|\nabla f(x_t^m)-\nabla f(x_t^n)\|^2 - L\|z_t^m-z_t^n-(x_t^m-x_t^n)\|^2.
        \end{aligned}
    \end{equation}
    Therefore, for the second and third term in the RHS of \eqref{eq:9},
    \begin{equation}\label{eq:10}
        \begin{aligned}
            -2\eta\left\langle z_t^m-z_t^n,\nabla f(x_t^m)-\nabla f(x_t^n)\right\rangle &+ 2\eta^2\|\nabla f(x_t^m)-\nabla f(x_t^n)\|^2 \\
            &\leq -\frac{\eta}{L}\|\nabla f(x_t^m)-\nabla f(x_t^n)\|^2 + 2\eta L \|z_t^m-z_t^n-(x_t^m-x_t^n)\|^2.
        \end{aligned}
    \end{equation}
    By the update rule,
    \begin{equation}
        \begin{aligned}
            \|z_t^m-z_t^n-(x_t^m-x_t^n)\|^2
            &= \left(\frac{\eta\beta_1}{1-\beta_1}\right)^2\|u_{t-1}^m-u_{t-1}^n\|^2 \\
            &\leq \left(\frac{\eta\beta_1}{1-\beta_1}\right)^2\left\|(1-\beta_1)\sum_{j=rK}^{t-1}\beta_1^{t-j-1}[\widehat{g_k^m}-\widehat{g_k^n}]\right\|^2 \\
            &\leq \frac{2(\eta\beta_1)^2}{1-\beta_1}\sum_{j=rK}^{t-1}\beta_1^{t-j-1}\left[\|\nabla f(x_j^m)-\nabla f(x_j^n)\|^2+\|\widehat{g_j^m}-\widehat{g_j^n}-\nabla f(x_j^m)+\nabla f(x_j^n)\|^2\right].
        \end{aligned}
    \end{equation}
    Let $S_t:= \sum_{j=rK}^t\beta_1^{t-j}\|\nabla f(x_j^m)-\nabla f(x_j^n)\|^2$. We further get
    \begin{equation}
        \begin{aligned}
            \text{LHS of \eqref{eq:10}}
            &\leq -\frac{\eta}{L}(S_t-\beta_1S_{t-1})+ \frac{4\eta L(\eta\beta_1)^2}{1-\beta_1}\left[S_{t-1}+\sum_{j=rK}^{t-1}\beta_1^{t-j-1}[\|\widehat{g_j^m}-\widehat{g_j^n}-\nabla f(x_j^m)+\nabla f(x_j^n)\|^2]\right] \\
            &= -\frac{\eta}{L}(S_t-S_{t-1})+\frac{4\eta L(\eta\beta_1)^2}{1-\beta_1}\left[\sum_{j=rK}^{t-1}\beta_1^{t-j-1}[\|\widehat{g_j^m}-\widehat{g_j^n}-\nabla f(x_j^m)+\nabla f(x_j^n)\|^2]\right]
        \end{aligned}
    \end{equation}
    Then plug in \eqref{eq:9}, 
    \begin{equation}
        \begin{aligned}
            \|z_{t+1}^m-z_{t+1}^n\|^2
            &\leq \|z_t^m-z_t^n\|^2-\frac{\eta}{L}(S_t-S_{t-1}) \\
            &\qquad + \frac{4\eta L(\eta\beta_1)^2}{1-\beta_1}\left[\sum_{j=rK}^{t-1}\beta_1^{t-j-1}[\|\widehat{g_j^m}-\widehat{g_j^n}-\nabla f(x_j^m)+\nabla f(x_j^n)\|^2]\right] \\
            &\qquad + 2\eta\left\langle z_t^m-z_t^n,\nabla f(x_t^m)-\nabla f(x_t^n)-\widehat{g_t^m} + \widehat{g_t^n}\right\rangle + 2\eta^2\|\widehat{g_t^m}-\widehat{g_t^n}-\nabla f(x_t^m)+\nabla f(x_t^n)\|^2.
        \end{aligned}
    \end{equation}
    Notice that this recursive bound holds for any $rK\leq i\leq t$. Unroll it and recalculate the coefficients using $\eta L\leq (1-\beta_1)^2/2$,
    \begin{equation}
        \begin{aligned}
            \|z_{t+1}^m-z_{t+1}^n\|^2+\frac{\eta}{L}S_t
            &\leq \sum_{j=rK}^t 2\eta\left\langle z_j^m-z_j^n,\nabla f(x_j^m)-\nabla f(x_j^n)-\widehat{g_j^m} + \widehat{g_j^n}\right\rangle \\
            &\qquad + \sum_{j=rK}^t 4\eta^2\|\nabla f(x_j^m)-\nabla f(x_j^n)-\widehat{g_j^m}+\widehat{g_j^n}\|^2 \\
            &\leq \underbrace{\sum_{j=rK}^t 2\eta\left\langle z_j^m-z_j^n,\expect_j[\widehat{g_j^m} - \widehat{g_j^n}] - [\widehat{g_j^m} - \widehat{g_j^n}] \right\rangle}_{\cirone\text{: martingale}} \\
            &\qquad +\underbrace{\sum_{j=rK}^t 2\eta\left\langle z_j^m-z_j^n,\nabla f(x_j^m)-\nabla f(x_j^n)-\expect_j[\widehat{g_j^m} - \widehat{g_j^n}] \right\rangle}_{\cirtwo\text{: clipping bias}} \\
            &\qquad + \underbrace{\sum_{j=rK}^t 4\eta^2\left[\|\nabla f(x_j^m)-\nabla f(x_j^n)-\widehat{g_j^m} + \widehat{g_j^n}\|^2-\expect_j[\|\nabla f(x_j^m)-\nabla f(x_j^n)-[\widehat{g_j^m} - \widehat{g_j^n}]\|^2]\right]}_{\cirthree\text{: martingale}} \\
            &\qquad + 4\eta^2K\cdot 2\sigma^2.
        \end{aligned}
    \end{equation}
    
    For \cirone, define
    \begin{equation}
        \zeta_j^{m,n}=\left\{
            \begin{array}{ll}
                2\eta\left\langle z_j^m-z_j^n,\expect_j[\widehat{g_j^m} - \widehat{g_j^n}] - [\widehat{g_j^m} - \widehat{g_j^n}] \right\rangle, & \text{if event $E_j$ holds, } \\
                0 ,& \text{otherwise.}
            \end{array}
        \right.
    \end{equation}
    Then since event $E_j$ implies $\|z_j^m-z_j^n\|\leq \eta\sigma \sqrt{KA}$,
    \begin{equation}
        |\zeta_j^{m,n}|\leq 2\eta\cdot \eta\sigma \sqrt{KA} \cdot 2\rho\sqrt{d}=4\eta^2\sigma\rho\sqrt{dKA}\overset{def}{=}c,
    \end{equation}
    \begin{equation}
        \Var_j(\zeta_j^{m,n}) \leq 4\eta^2 \cdot \eta^2\sigma^2KA \cdot 2\sigma^2=8\eta^4\sigma^4KA.
    \end{equation}
    Let $b=\frac{1}{4}\eta^2\sigma^2KA$, $V=8\eta^4\sigma^4K^2A$.
    By Lemma \ref{lem:martingale}, $|\sum_{j=0}^t\zeta_j^{m,n}|\leq b$ with probability no less than
    \begin{equation}
        1-2\exp\left(\frac{b^2}{2V+2cb/3}\right)\geq 1-\frac{\delta}{4M^2T}.
    \end{equation}
    For \cirtwo, 
    \begin{equation}
        |\cirtwo|\leq 2\eta K \cdot \eta\sigma\sqrt{KA}\cdot 2\frac{\|2\boldsymbol{\sigma}\|_{2\alpha}^{\alpha}}{\rho^{(\alpha-1)}}\leq \frac{1}{4}\eta^2\sigma^2KA.
    \end{equation}
    For \cirthree, define
    \begin{equation}
        \theta_j^{m,n}=\left\{
            \begin{array}{ll}
                4\eta^2\left[\|\nabla f(x_j^m)-\nabla f(x_j^n)-\widehat{g_j^m} + \widehat{g_j^n}\|^2-\expect_j[\|\nabla f(x_j^m)-\nabla f(x_j^n)-[\widehat{g_j^m} - \widehat{g_j^n}]\|^2]\right], & \text{if event $E_j$ holds, } \\
                0 ,& \text{otherwise.}
            \end{array}
        \right.
    \end{equation}
    Then,
    \begin{equation}
        |\theta_j^{m,n}|\leq 4\eta^2\cdot 4\rho^2d = 16\eta^2\rho^2d \overset{def}{=}c,
    \end{equation}
    \begin{equation}
        \Var_j(\theta_j^{m,n}) \leq 16\eta^4\cdot \expect_j[\|\nabla f(x_j^m)-\nabla f(x_j^n)-[\widehat{g_j^m} - \widehat{g_j^n}]\|^2]^2 \leq 64\eta^4\sigma^4.
    \end{equation}
    Let $b=\frac{1}{4}\eta^2\sigma^2KA$, $V=64K\eta^4\sigma^4$.
    By Lemma \ref{lem:martingale}, $|\sum_{j=0}^t\theta_j^{m,n}|\leq b$ with probability no less than
    \begin{equation}
        1-2\exp\left(\frac{b^2}{2V+2cb/3}\right)\geq 1-\frac{\delta}{4M^2T}.
    \end{equation}
    Combine \cirone, \cirtwo, \cirthree and thus we can conclude that with probability no less than $\prob(E_t)-2\cdot\frac{\delta}{4T}$, event $E_t$ holds and $\|z_{t+1}^m-z_{t+1}^n\|^2\leq \eta^2\sigma^2KA$ for all $m,n$.
    This completes the proof.
\end{proof}

\subsection{Proof of Descent Lemma \ref{lem:descent_sgdm}}

    Now we are ready to state the main descent lemma of \em Local \em SGDM.

% \begin{lemma}\label{lem:descent_sgdm}
%     Let 
%     \begin{equation}
%         \rho\geq \left\{
%             \begin{array}{ll}
%                 \max\left\{\left(\frac{2^8\|2\boldsymbol{\sigma}\|_{2\alpha}^{2\alpha}}{\mu\varepsilon}\right)^{\frac{1}{2(\alpha-1)}}, 2\sigma_\infty, 2G_{\infty}\right\}, & \text{ if $\mu>0$}, \\
%                 \max\left\{\left(\frac{2^8D_0\|2\boldsymbol{\sigma}\|_{2\alpha}^{\alpha}}{\varepsilon}\right)^{\frac{1}{\alpha-1}}, 2\sigma_\infty, 2G_{\infty}\right\}, & \text{otherwise}.
%             \end{array}
%         \right.
%     \end{equation}
%     If 
%     \begin{equation}
%         \eta\lesssim \left\{
%             \begin{array}{ll}
%                 \min\left\{\frac{(1-\beta_1)^2}{L}, \frac{M\varepsilon}{\sigma^2\log\frac{T}{\delta}}, \left(\frac{L\sigma^2KA}{\varepsilon}\right)^{-1/2}, \frac{\sqrt{\varepsilon/\mu}}{\rho\sqrt{d}\log\frac{T}{\delta}} \right\}, & \text{ if $\mu>0$}, \\
%                 \min\left\{\frac{(1-\beta_1)^2}{L}, \frac{M\varepsilon}{\sigma^2\log\frac{T}{\delta}}, \left(\frac{L\sigma^2KA}{\varepsilon}\right)^{-1/2}, \frac{D_0}{\rho\sqrt{d}\log\frac{T}{\delta}} \right\}, & \text{otherwise},
%             \end{array}
%         \right.
%     \end{equation} then the following holds
%     \begin{equation}
%         \prob(E_{t+1}) \geq \prob(E_{t,1})-\frac{\delta}{2T}.
%     \end{equation}
% \end{lemma}

\begin{proof}
    Again, note that by the upper bound of $\eta$, Lemma \ref{lem:in_ball_sgdm} holds. Under event $E_t$,
    \begin{equation}\label{eq:14}
        \begin{aligned}
            \|\overline{z}_{t+1}-x_*\|^2
            &= \|\overline{z}_t-x_*\|^2 - 2\eta \left\langle\overline{z}_t-x_*, \expect_m [\widehat{g_t^m}]\right\rangle + \eta^2\|\expect_m [\widehat{g_t^m}]\|^2 \\
            &\leq \|\overline{z}_t-x_*\|^2 - 2\eta \left\langle\overline{z}_t-x_*, \expect_m [\nabla f(x_t^m)]\right\rangle - 2\eta \left\langle\overline{z}_t-x_*, \expect_m [\widehat{g_t^m}-\nabla f(x_t^m)]\right\rangle \\
            &\qquad + 2\eta^2\|\expect_m [\widehat{g_t^m}-\nabla f(x_t^m)]\|^2 + 2\eta^2\|\expect_m [\nabla f(x_t^m)]\|^2.
        \end{aligned}
    \end{equation}
    Since $x_t^m, \overline{x}_t, \overline{z}_t\in \Omega$, for the second term,
    \begin{equation}
        \begin{aligned}
            \left\langle\overline{z}_t-x_*, \expect_m [\nabla f(x_t^m)]\right\rangle 
            &= \left\langle\overline{x}_t-x_*, \expect_m [\nabla f(x_t^m)]\right\rangle + \left\langle\overline{z}_t-\overline{x}_t, \expect_m [\nabla f(x_t^m)]\right\rangle \\
            &= \expect_m \left[\left\langle\overline{x}_t-x_t^m, \nabla f(x_t^m)\right\rangle+\left\langle x_t^m-x_*, \nabla f(x_t^m)\right\rangle\right] \\
            &\qquad + \left\langle\overline{z}_t-\overline{x}_t, \nabla f(\overline{x}_t)\right\rangle + \left\langle\overline{z}_t-\overline{x}_t, \expect_m [\nabla f(x_t^m)-\nabla f(\overline{x}_t)]\right\rangle.
        \end{aligned}
    \end{equation}
    By smoothness,
    \begin{equation}\label{eq:11}
        \expect_m \left[\left\langle\overline{x}_t-x_t^m, \nabla f(x_t^m)\right\rangle\right] \geq -L\expect_m [\|x_t^m-\overline{x}_t\|^2],
    \end{equation}
    \begin{equation}\label{eq:12}
        f(\overline{z}_t) \leq f(\overline{x}_t)+\left\langle\overline{z}_t-\overline{x}_t, \nabla f(\overline{x}_t)\right\rangle+\frac{L}{2}\|\overline{x}_t-\overline{z}_t\|^2.
    \end{equation}
    By $\mu$-strong convexity,
    \begin{equation}\label{eq:13}
        \begin{aligned}
            \expect_m \left[\left\langle x_t^m-x_*, \nabla f(x_t^m)\right\rangle\right]
            &\geq \expect_m[f(x_t^m)-f_*+\frac{\mu}{2}\|x_t^m-x_*\|^2] \\
            &\geq f(\overline{x}_t)-f_*+\frac{\mu}{2}\|\overline{x}_t-x_*\|^2.
        \end{aligned}
    \end{equation}
    Therefore,
    \begin{equation}
        \begin{aligned}
            \left\langle\overline{z}_t-x_*, \expect_m [\nabla f(x_t^m)]\right\rangle 
            &= \left\langle\overline{x}_t-x_*, \expect_m [\nabla f(x_t^m)]\right\rangle + \left\langle\overline{z}_t-\overline{x}_t, \expect_m [\nabla f(x_t^m)]\right\rangle \\
            &\overset{\eqref{eq:11}, \eqref{eq:13}}{\geq} f(\overline{x}_t)-f_*+\frac{\mu}{2}\|\overline{x}_t-x_*\|^2 -L\expect_m [\|x_t^m-\overline{x}_t\|^2]\\
            &\qquad + \left\langle\overline{z}_t-\overline{x}_t, \nabla f(\overline{x}_t)\right\rangle + \left\langle\overline{z}_t-\overline{x}_t, \expect_m [\nabla f(x_t^m)-\nabla f(\overline{x}_t)]\right\rangle \\
            &\overset{\eqref{eq:12}\text{, AM-GM}}{\geq} f(\overline{z}_t)-f_*+\frac{\mu}{2}\|\overline{x}_t-x_*\|^2-\frac{L}{2}\|\overline{z}_t-\overline{x}_t\|^2-L\expect_m [\|x_t^m-\overline{x}_t\|^2] \\
            &\qquad -\frac{L}{2}\left(\|\overline{z}_t-\overline{x}_t\|^2+\expect_m [\|x_t^m-\overline{x}_t\|^2\right) \\
            &\overset{\text{AM-GM}}{\geq} f(\overline{z}_t)-f_*+\frac{\mu}{4}\|\overline{z}_t-x_*\|^2-\frac{3L}{2}\left(\|\overline{z}_t-\overline{x}_t\|^2+\expect_m [\|x_t^m-\overline{x}_t\|^2]\right).
        \end{aligned}
    \end{equation}
    For the last term in \eqref{eq:14},
    \begin{equation}
        \begin{aligned}
            2\eta^2\|\expect_m [\nabla f(x_t^m)]\|^2
            &\leq 6\eta^2\left[L^2\|x_t^m-\overline{x}_t\|^2+L^2\|\overline{x}_t-\overline{z}_t\|^2+\|\nabla f(\overline{z}_t)\|^2\right] \\
            &\leq 6\eta^2\left[L^2\|x_t^m-\overline{x}_t\|^2+L^2\|\overline{x}_t-\overline{z}_t\|^2+\frac{1}{2L}(f(\overline{z}_t)-f_*)\right]
        \end{aligned}
    \end{equation}
    Combine all these inequalities plugging in \eqref{eq:14} and notice that $\eta\leq \frac{1}{6L}$,
    \begin{equation}\label{eq:17}
        \begin{aligned}
            \|\overline{z}_{t+1}-x_*\|^2
            &\leq (1-\frac{\eta\mu}{2})\|\overline{z}_t-x_*\|^2 - \eta (f(\overline{z}_t)-f_*) + 4\eta L \left[\|\overline{z}_t-\overline{x}_t\|^2+\expect_m[\|x_t^m-\overline{x}_t\|^2]\right] \\
            &\qquad - 2\eta \left\langle\overline{z}_t-x_*, \expect_m [\widehat{g_t^m}-\nabla f(x_t^m)]\right\rangle + 2\eta^2\|\expect_m [\widehat{g_t^m}-\nabla f(x_t^m)]\|^2.
        \end{aligned}
    \end{equation}
    Define $\Lambda_t:= \sum_{j=0}^{t-1}a_{t,j}\|\overline{x}_j-\overline{x}_{j+1}\|^2$, where $a_{t,j}:=\beta_1^{t-j-1}(t-j+\frac{\beta_1}{1-\beta_1})$.
    By Lemma \ref{lem:z-x_sgdm}, we plug \eqref{eq:16} in the above inequality and compute $\eqref{eq:17}+\frac{2^8(\eta L)^3\beta_1^2}{(1-\beta_1)^4}\times\eqref{eq:15}$.
     % Then by Lemma \ref{lem:z-x_sgdm},
    % \begin{equation}\label{eq:17}
    %     \begin{aligned}
    %         \|\overline{z}_t-\overline{x}_t\|^2
    %         &\leq \left(\frac{\beta_1}{1-\beta_1}\right)^2 \left[16\eta^2L^2\Lambda_{t-1} + 32\eta^2\|\nabla f(\overline{z}_t)\|^2 \right]\\ 
    %         &\qquad +\left(\frac{\beta_1}{1-\beta_1}\right)^2\left[4\eta^2(1-\beta_1)\sum_{j=0}^{t-1}\beta_1^{t-j-1}\left[2L^2\expect_m[\|x_j^m-\overline{x}_j\|^2]+\|\expect_m[\widehat{g_j^m}-\nabla f(x_j^m)]\|^2 \right]\right].
    %     \end{aligned}
    % \end{equation}
    Now let $\Phi_t:=\|\overline{z}_t-x_*\|^2 + \frac{2^8(\eta L)^3\beta_1^2}{(1-\beta_1)^4}\Lambda_{t-1}$. 
    Hence we obtain
    \begin{equation}
        \begin{aligned}
            \Phi_{t+1}
            &\leq (1-\frac{\eta\mu}{2}) \Phi_t - \eta (f(\overline{z}_t)-f_*) + 4\eta L \left[\expect_m[\|x_t^m-\overline{x}_t\|^2]+64\left(\frac{\eta\beta_1}{1-\beta_1}\right)^2\|\nabla f(\overline{z}_t)\|^2\right]\\
            &\qquad + 32\eta L\left(\frac{\eta\beta_1}{1-\beta_1}\right)^2\left[(1-\beta_1)\sum_{j=0}^{t-1}\beta_1^{t-j-1}\left[2L^2\expect_m[\|x_j^m-\overline{x}_j\|^2]+\|\expect_m[\widehat{g_j^m}-\nabla f(x_j^m)]\|^2 \right]\right] \\
            &\qquad - 2\eta \left\langle\overline{z}_t-x_*, \expect_m [\widehat{g_t^m}-\nabla f(x_t^m)]\right\rangle + 2\eta^2\|\expect_m [\widehat{g_t^m}-\nabla f(x_t^m)]\|^2 \\
            &\leq (1-\frac{\eta\mu}{2}) \Phi_t - \frac{\eta}{2} (f(\overline{z}_t)-f_*) + 4\eta L \expect_m[\|x_t^m-\overline{x}_t\|^2]\\
            &\qquad + 32\eta L\left(\frac{\eta\beta_1}{1-\beta_1}\right)^2\left[(1-\beta_1)\sum_{j=0}^{t-1}\beta_1^{t-j-1}\left[2L^2\expect_m[\|x_j^m-\overline{x}_j\|^2]+\|\expect_m[\widehat{g_j^m}-\nabla f(x_j^m)]\|^2 \right]\right] \\
            &\qquad - 2\eta \left\langle\overline{z}_t-x_*, \expect_m [\widehat{g_t^m}-\nabla f(x_t^m)]\right\rangle + 2\eta^2\|\expect_m [\widehat{g_t^m}-\nabla f(x_t^m)]\|^2 \\
            &\leq (1-\frac{\eta\mu}{2}) \Phi_t - \frac{\eta}{2} (f(\overline{z}_t)-f_*) + 16\eta L \cdot \eta^2\sigma^2KA \\
            &\qquad + 32\eta L\left(\frac{\eta\beta_1}{1-\beta_1}\right)^2\left[(1-\beta_1)\sum_{j=0}^{t-1}\beta_1^{t-j-1}\|\expect_m[\widehat{g_j^m}-\nabla f(x_j^m)]\|^2 \right] \\
            &\qquad - 2\eta \left\langle\overline{z}_t-x_*, \expect_m [\widehat{g_t^m}-\nabla f(x_t^m)]\right\rangle + 2\eta^2\|\expect_m [\widehat{g_t^m}-\nabla f(x_t^m)]\|^2.
        \end{aligned}
    \end{equation}
    Here in the second inequality we use $\|\nabla f(\overline{z}_t)\|^2\leq 2L(f(\overline{z}_t)-f_*)$.
    In the last inequality, we apply contraction results implied by event $E_{t,1}$.
    
    Unroll this recursive bound and re-calculate the coefficients,
    \begin{equation}
        \begin{aligned}
            \sum_{j=0}^t\frac{\eta}{2}(f(\overline{z}_j)-f_*)(1-\frac{\eta\mu}{2})^{t-j}+\Phi_{t+1}
            &\leq (1-\frac{\eta\mu}{2})^{t+1}\Phi_0 + \frac{32\eta^2L\sigma^2KA}{\mu}\\
            &\qquad -2\eta \sum_{j=0}^t(1-\frac{\eta\mu}{2})^{t-j}\left\langle\overline{z}_j-x_*, \expect_m [\widehat{g_j^m}-\nabla f(x_j^m)]\right\rangle \\
            &\qquad +4\eta^2\sum_{j=0}^t(1-\frac{\eta\mu}{2})^{t-j}\|\expect_m [\widehat{g_j^m}-\nabla f(x_j^m)]\|^2 \\
            % &\qquad + 32\eta L\left(\frac{\eta\beta_1}{1-\beta_1}\right)^2\sum_{j=0}^t \|\expect_m [\widehat{g_j^m}-\nabla f(x_j^m)]\|^2\cdot \left[\frac{(1-\frac{\eta\mu}{2})^{t-j}(1-\beta_1)}{1-\beta_1-\eta\mu/2}\right].
            % &\leq (1-\frac{\eta\mu}{2})^{t+1}\|x_0-x_*\|^2 \\
            % &\qquad -2\eta \sum_{j=0}^t(1-\frac{\eta\mu}{2})^{t-j}\left\langle\overline{z}_j-x_*, \expect_m [\widehat{g_j^m}-\nabla f(x_j^m)]\right\rangle \\
            % &\qquad +4\eta^2\sum_{j=0}^t(1-\frac{\eta\mu}{2})^{t-j}\|\expect_m [\widehat{g_j^m}-\nabla f(x_j^m)]\|^2.
        \end{aligned}
    \end{equation}
    Simplify $\Phi_{t+1}$ term,
    \begin{equation}
        \begin{aligned}
            \sum_{j=0}^t\frac{\eta}{2}(f(\overline{z}_j)-f_*)(1-\frac{\eta\mu}{2})^{t-j}+\|\overline{z}_{t+1}-x_*\|^2
            &\leq (1-\frac{\eta\mu}{2})^{t+1}\|x_0-x_*\|^2 + \frac{32\eta^2L\sigma^2KA}{\mu}\\
            &\qquad \underbrace{-2\eta \sum_{j=0}^t(1-\frac{\eta\mu}{2})^{t-j}\left\langle\overline{z}_j-x_*, \expect_m [\widehat{g_j^m}-\expect_j[\widehat{g_j^m}]]\right\rangle}_{\cirone\text{: martingale}} \\
            &\qquad \underbrace{-2\eta \sum_{j=0}^t(1-\frac{\eta\mu}{2})^{t-j}\left\langle\overline{z}_j-x_*, \expect_m [\expect_j[\widehat{g_j^m}]-\nabla f(x_j^m)]\right\rangle}_{\cirtwo\text{: clipping bias}} \\
            &\qquad + 4\eta^2\sum_{j=0}^t(1-\frac{\eta\mu}{2})^{t-j}\|\expect_m [\widehat{g_j^m}-\nabla f(x_j^m)]\|^2.
        \end{aligned}
    \end{equation}
    For the last term,
    \begin{equation}
        \begin{aligned}
            4\eta^2\sum_{j=0}^t(1-\frac{\eta\mu}{2})^{t-j}\|\expect_m [\widehat{g_j^m}-\nabla f(x_j^m)]\|^2
            &\leq \underbrace{8\eta^2\sum_{j=0}^t(1-\frac{\eta\mu}{2})^{t-j}\left[\|\expect_m [\widehat{g_j^m}-\expect_j[\widehat{g_j^m}]]\|^2-\expect_j[\|\expect_m [\widehat{g_j^m}-\expect_j[\widehat{g_j^m}]]\|^2] \right]}_{\cirthree\text{: martingale}} \\
            &\qquad + \underbrace{8\eta^2\sum_{j=0}^t(1-\frac{\eta\mu}{2})^{t-j}\expect_j[\|\expect_m [\widehat{g_j^m}-\expect_j[\widehat{g_j^m}]]\|^2]}_{\text{Lemma \ref{lem:clip}}} \\
            &\qquad + \underbrace{8\eta^2\sum_{j=0}^t(1-\frac{\eta\mu}{2})^{t-j}\|\expect_m [\expect_j[\widehat{g_j^m}]-\nabla f(x_j^m)]\|^2}_{\cirfour\text{: clipping bias}},
        \end{aligned}
    \end{equation}
    we finally get 
    \begin{equation}\label{eq:descent_sgdm}
        \begin{aligned}
            \sum_{j=0}^t\frac{\eta}{2}(f(\overline{z}_j)-f_*)(1-\frac{\eta\mu}{2})^{t-j}+\|\overline{z}_{t+1}-x_*\|^2
            &\leq (1-\frac{\eta\mu}{2})^{t+1}D_0^2 + 32\left[\eta LKA+\frac{1}{M}\right]\frac{\eta\sigma^2}{\mu} \\
            &\qquad + \cirone+\cirtwo+\cirthree+\cirfour.
        \end{aligned}
    \end{equation}

    (1) \textbf{Case} $\mu>0$.
    
    For \cirone, define
    \begin{equation}
        \zeta_j=\left\{
            \begin{array}{ll}
                -2\eta(1-\frac{\eta\mu}{2})^{t-j}\left\langle\overline{z}_j-x_*, \expect_m [\widehat{g_j^m}-\expect_j[\widehat{g_j^m}]]\right\rangle, & \text{if event $E_j$ holds, } \\
                0 ,& \text{otherwise.}
            \end{array}
        \right.
    \end{equation}
    Then since event $E_j$ implies $\|\overline{z}_j-x_*\|\leq \sqrt{2}(1-\frac{\eta\mu}{2})^{j/2}D_0$,
    \begin{equation}
        |\zeta_j|\leq 2\eta\cdot \sqrt{2}(1-\frac{\eta\mu}{2})^{t/2}D_0 \cdot 2\rho\sqrt{d}=4(1-\frac{\eta\mu}{2})^{t/2}\eta\rho\sqrt{2d}D_0\overset{def}{=}c,
    \end{equation}
    \begin{equation}
        \Var_j(\zeta_j) \leq 4\eta^2(1-\frac{\eta\mu}{2})^{2(t-j)}\cdot 2(1-\frac{\eta\mu}{2})^j D_0^2\cdot \frac{\sigma^2}{M}=8(1-\frac{\eta\mu}{2})^{2t-j}\frac{\eta^2D_0^2\sigma^2}{M}.
    \end{equation}
    Let $b=\frac{(1-\frac{\eta\mu}{2})^{t+1}D_0^2}{5}$, $V=16(1-\frac{\eta\mu}{2})^t\frac{\eta D_0^2\sigma^2}{\mu M}$.
    By Lemma \ref{lem:martingale}, $|\sum_{j=0}^t\zeta_j|\leq b$ with probability no less than
    \begin{equation}
        1-2\exp\left(\frac{b^2}{2V+2cb/3}\right)\geq 1-\frac{\delta}{4T}.
    \end{equation}
    For \cirtwo, since by Lemma \ref{lem:clip},
    \begin{equation}
        \|\expect_j[\widehat{g_j^m}-\nabla f(x_j^m)]\|^2\leq \frac{\|2\boldsymbol{\sigma}\|_{2\alpha}^{2\alpha}}{\rho^{2(\alpha-1)}},
    \end{equation}
    event $E_t$ implies that
    \begin{equation}
        \begin{aligned}
            |\cirtwo|
            &\leq 2\eta \sum_{j=0}^t(1-\frac{\eta\mu}{2})^{t-j}\cdot \sqrt{2}(1-\frac{\eta\mu}{2})^{j/2}D_0 \cdot\frac{\|2\boldsymbol{\sigma}\|_{2\alpha}^{\alpha}}{\rho^{\alpha-1}} \\
            &\leq 4\sqrt{2}(1-\frac{\eta\mu}{2})^{t/2}\frac{D_0\|2\boldsymbol{\sigma}\|_{2\alpha}^{\alpha}}{\mu\rho^{\alpha-1}} \\
            &\leq \frac{(1-\frac{\eta\mu}{2})^{t+1}D_0^2}{5}.
        \end{aligned}
    \end{equation}
    Here we use the definition of $\eta$ and conditions of $\rho$ in \eqref{eq:hyperpara_sgdm}.
    
    For \cirthree, define
    \begin{equation}
        \theta_j=\left\{
            \begin{array}{ll}
                8\eta^2(1-\frac{\eta\mu}{2})^{t-j}\left[\|\expect_m [\widehat{g_j^m}-\expect_j[\widehat{g_j^m}]]\|^2-\expect_j[\|\expect_m [\widehat{g_j^m}-\expect_j[\widehat{g_j^m}]]\|^2]\right], & \text{if event $E_j$ holds, } \\
                0 ,& \text{otherwise.}
            \end{array}
        \right.
    \end{equation}
    Then 
    \begin{equation}
        |\theta_j|\leq 8\eta^2 \cdot 4\rho^2d=32\eta^2\rho^2d\overset{def}{=}c,
    \end{equation}
    \begin{equation}
        \Var_j(\theta_j)\leq 64\eta^4(1-\frac{\eta\mu}{2})^{2(t-j)}\cdot \expect_j [\|\expect_m [\widehat{g_j^m}-\expect_j[\widehat{g_j^m}]]\|^2]^2
        \overset{\text{Lemma \ref{lem:4th_noise}}}{\leq} 64\eta^4(1-\frac{\eta\mu}{2})^{2(t-j)}\cdot \frac{4(2\sigma)^4}{M^2}.
    \end{equation}
    Let $b=\frac{(1-\frac{\eta\mu}{2})^{t+1}D_0^2}{5}$, $V=\frac{2^{13}\eta^3\sigma^4}{\mu M^2}$.
    By Lemma \ref{lem:martingale}, $|\sum_{j=0}^t\theta_j|\leq b$ with probability no less than
    \begin{equation}
        1-2\exp\left(\frac{b^2}{2V+2cb/3}\right)\geq 1-\frac{\delta}{4T}.
    \end{equation}
    For \cirfour, by Lemma \ref{lem:clip},
    \begin{equation}
        |\cirfour|\leq \frac{16\eta}{\mu}\cdot\frac{\|2\boldsymbol{\sigma}\|_{2\alpha}^{2\alpha}}{\rho^{2(\alpha-1)}}\leq \frac{(1-\frac{\eta\mu}{2})^{t+1}D_0^2}{5}.
    \end{equation}
    Combine the above claims, with probability no less than $\prob(E_{t,1})-2\cdot\frac{\delta}{4T}$, we have $|\cirone+\cirtwo+\cirthree+\cirfour|\leq \frac{4}{5}(1-\frac{\eta\mu}{2})^{t+1}D_0^2$. By \eqref{eq:descent_sgdm}, these implies
    \begin{equation}
        \begin{aligned}
            \sum_{j=0}^t\frac{\eta}{2}(f(\overline{z}_j)-f_*)(1-\frac{\eta\mu}{2})^{t-j}+\|\overline{z}_{t+1}-x_*\|^2
            &\leq (1-\frac{\eta\mu}{2})^{t+1}D_0^2 + 32\left[\eta LKA+\frac{1}{M}\right]\frac{\eta\sigma^2}{\mu} \\
            &\qquad + \frac{4}{5}(1-\frac{\eta\mu}{2})^{t+1}D_0^2 \\
            &\leq 2(1-\frac{\eta\mu}{2})^{t+1}D_0^2.
        \end{aligned}
    \end{equation}
    Therefore, we conclude that $\prob(E_{t+1})\geq \prob(E_{t,1})-\frac{\delta}{2T}$.

    (2) \textbf{Case} $\mu=0$.

    In this case, \eqref{eq:descent_sgdm} reduces to
    \begin{equation}
        \frac{\eta}{2}\sum_{j=0}^t(f(\overline{z}_j)-f_*)+\|\overline{z}_{t+1}-x_*\|^2
        \leq D_0^2 + 16\left[\eta LKA+\frac{1}{M}\right]\eta^2\sigma^2(t+1) + \cirone+\cirtwo+\cirthree+\cirfour.
    \end{equation}
    For \cirone, define
    \begin{equation}
        \zeta_j=\left\{
            \begin{array}{ll}
                -2\eta\left\langle\overline{z}_j-x_*, \expect_m [\widehat{g_j^m}-\expect_j[\widehat{g_j^m}]]\right\rangle, & \text{if event $E_j$ holds, } \\
                0 ,& \text{otherwise.}
            \end{array}
        \right.
    \end{equation}
    Then since event $E_j$ implies $\|\overline{z}_j-x_*\|\leq \sqrt{2}D_0$,
    \begin{equation}
        |\zeta_j|\leq 2\eta\cdot \sqrt{2}D_0 \cdot 2\rho\sqrt{d}=4\eta\rho\sqrt{2d}D_0\overset{def}{=}c,
    \end{equation}
    \begin{equation}
        \Var_j(\zeta_j) \leq 4\eta^2\cdot 2D_0^2\cdot \frac{\sigma^2}{M}=\frac{8\eta^2D_0^2\sigma^2}{M}.
    \end{equation}
    Let $b=\frac{D_0^2}{5}$, $V=\frac{8\eta^2 D_0^2\sigma^2T}{M}$.
    By Lemma \ref{lem:martingale}, $|\sum_{j=0}^t\zeta_j|\leq b$ with probability no less than
    \begin{equation}
        1-2\exp\left(\frac{b^2}{2V+2cb/3}\right)\geq 1-\frac{\delta}{4T}.
    \end{equation}
    For \cirtwo, since by Lemma \ref{lem:clip},
    \begin{equation}
        \|\expect_j[\widehat{g_j^m}-\nabla f(x_j^m)]\|^2\leq \frac{\|2\boldsymbol{\sigma}\|_{2\alpha}^{2\alpha}}{\rho^{2(\alpha-1)}},
    \end{equation}
    event $E_t$ implies that
    \begin{equation}
        |\cirtwo|
        \leq 2\eta (t+1)\cdot \sqrt{2}D_0 \cdot\frac{\|2\boldsymbol{\sigma}\|_{2\alpha}^{\alpha}}{\rho^{(\alpha-1)}}
        \leq \frac{D_0^2}{5}.
    \end{equation}
    Here we again use definitions and conditions in \eqref{eq:hyperpara_sgdm}.
    
    For \cirthree, define
    \begin{equation}
        \theta_j=\left\{
            \begin{array}{ll}
                8\eta^2\left[\|\expect_m [\widehat{g_j^m}-\expect_j[\widehat{g_j^m}]]\|^2-\expect_j[\|\expect_m [\widehat{g_j^m}-\expect_j[\widehat{g_j^m}]]\|^2]\right], & \text{if event $E_j$ holds, } \\
                0 ,& \text{otherwise.}
            \end{array}
        \right.
    \end{equation}
    Then 
    \begin{equation}
        |\theta_j|\leq 8\eta^2 \cdot 4\rho^2d=32\eta^2\rho^2d\overset{def}{=}c,
    \end{equation}
    \begin{equation}
        \Var_j(\theta_j)\leq 64\eta^4\cdot \expect_j [\|\expect_m [\widehat{g_j^m}-\expect_j[\widehat{g_j^m}]]\|^2]^2
        \overset{\text{Lemma \ref{lem:4th_noise}}}{\leq} 64\eta^4\cdot \frac{4(2\sigma)^4}{M^2}.
    \end{equation}
    Let $b=\frac{D_0^2}{5}$, $V=\frac{2^{12}\eta^4\sigma^4}{M^2}$.
    By Lemma \ref{lem:martingale}, $|\sum_{j=0}^t\theta_j|\leq b$ with probability no less than
    \begin{equation}
        1-2\exp\left(\frac{b^2}{2V+2cb/3}\right)\geq 1-\frac{\delta}{4T}.
    \end{equation}
    For \cirfour, by Lemma \ref{lem:clip},
    \begin{equation}
        |\cirfour|\leq 8\eta^2(t+1)\cdot\frac{\|2\boldsymbol{\sigma}\|_{2\alpha}^{2\alpha}}{\rho^{2(\alpha-1)}}\leq \frac{D_0^2}{5}.
    \end{equation}
    Combine the above claims, with probability no less than $\prob(E_{t,1})-2\cdot\frac{\delta}{4T}$, we have $|\cirone+\cirtwo+\cirthree+\cirfour|\leq \frac{4}{5}D_0^2$. By \eqref{eq:descent_sgdm}, these implies
    \begin{equation}
        \begin{aligned}
            \frac{\eta}{2}\sum_{j=0}^t(f(\overline{z}_j)-f_*)+\|\overline{z}_{t+1}-x_*\|^2
            &\leq D_0^2 + 16\left[\eta LKA+\frac{1}{M}\right]\eta^2\sigma^2(t+1) + \frac{4}{5}D_0^2 \\
            &\leq 2D_0^2.
        \end{aligned}
    \end{equation}
    Therefore, we conclude that $\prob(E_{t+1})\geq \prob(E_{t,1})-\frac{\delta}{2T}$.
    
\end{proof}

\begin{lemma}\label{lem:z-x_sgdm}
    Let $\Lambda_t:= \sum_{j=0}^{t-1}a_{t,j}\|\overline{x}_j-\overline{x}_{j+1}\|^2$, where $a_{t,j}:=\beta_1^{t-j-1}(t-j+\frac{\beta_1}{1-\beta_1})$. Under the conditions in Lemma \ref{lem:descent_sgdm}, then the following holds:
    \begin{equation}\label{eq:15}
        \begin{aligned}
            \Lambda_t
            &\leq \left(1-\frac{(1-\beta_1)^2}{2}\right)\Lambda_{t-1} + \frac{32\eta^2}{1-\beta_1}\|\nabla f(\overline{z}_t)\|^2 \\
            &\qquad + 4\eta^2\sum_{j=0}^{t-1}\beta_1^{t-j-1}\left[2L^2\expect_m[\|x_j^m-\overline{x}_j\|^2]+\|\expect_m[\widehat{g_j^m}-\nabla f(x_j^m)]\|^2 \right]. 
        \end{aligned}
    \end{equation}
    \begin{equation}\label{eq:16}
        \begin{aligned}
            \|\overline{z}_t-\overline{x}_t\|^2
            &\leq \left(\frac{\eta\beta_1}{1-\beta_1}\right)^2 \left[16L^2\Lambda_{t-1} + 32\|\nabla f(\overline{z}_t)\|^2 \right]\\ 
            &\qquad +\frac{4\left(\eta\beta_1\right)^2}{1-\beta_1}\sum_{j=0}^{t-1}\beta_1^{t-j-1}\left[2L^2\expect_m[\|x_j^m-\overline{x}_j\|^2]+\|\expect_m[\widehat{g_j^m}-\nabla f(x_j^m)]\|^2 \right].
        \end{aligned}
    \end{equation}
\end{lemma}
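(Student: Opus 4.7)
The whole argument rests on the identity
\[
\overline{z}_t - \overline{x}_t \;=\; \tfrac{\beta_1}{1-\beta_1}(\overline{x}_t - \overline{x}_{t-1}) \;=\; -\tfrac{\eta\beta_1}{1-\beta_1}\,\overline{u}_{t-1},
\]
which follows by averaging the piecewise definition of $z^m_{t+1}$ (at a communication step the substituted $\overline{x}_t$ already equals $\expect_m x_t^m$, so the two cases collapse after averaging), combined with the uniform update $\overline{x}_{t+1} - \overline{x}_t = -\eta\,\overline{u}_t$. Unrolling the $u$-recursion across rounds (which it survives since $u$ is also re-averaged at each communication) gives the closed form $\overline{u}_{t-1} = (1-\beta_1)\sum_{j=0}^{t-1}\beta_1^{t-1-j}\,\expect_m[\widehat{g_j^m}]$, and Jensen's inequality on the exponential weights then yields
\[
\|\overline{z}_t - \overline{x}_t\|^2 \;\leq\; \left(\tfrac{\eta\beta_1}{1-\beta_1}\right)^2 (1-\beta_1)\sum_{j=0}^{t-1}\beta_1^{t-1-j}\,\|\expect_m[\widehat{g_j^m}]\|^2.
\]

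To bound each $\|\expect_m[\widehat{g_j^m}]\|^2$, I split $\expect_m[\widehat{g_j^m}] = \nabla f(\overline{z}_t) + (\expect_m\nabla f(x_j^m)-\nabla f(\overline{z}_t)) + \expect_m[\widehat{g_j^m}-\nabla f(x_j^m)]$. The first piece contributes $\|\nabla f(\overline{z}_t)\|^2$ uniformly; the third is the clipping/noise term that appears unchanged in \eqref{eq:16}. For the middle piece, smoothness plus one triangle step bounds its squared norm by $L^2\|\overline{x}_j-\overline{z}_t\|^2 + L^2\expect_m\|x_j^m-\overline{x}_j\|^2$, which gives the two standard consensus contributions. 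The crucial step is to bound $\|\overline{x}_j-\overline{z}_t\|^2$ by telescoping: writing $\overline{x}_j-\overline{z}_t = \sum_{k=j}^{t-1}(\overline{x}_k-\overline{x}_{k+1}) + \frac{\beta_1}{1-\beta_1}(\overline{x}_{t-1}-\overline{x}_t)$ and applying weighted Cauchy-Schwarz yields the prefactor $(t-j)+\frac{\beta_1}{1-\beta_1}$. After multiplying by the outer $\beta_1^{t-1-j}$ and swapping the order of summation in $j$ and $k$, the coefficient of $\|\overline{x}_k-\overline{x}_{k+1}\|^2$ collapses to $\Lambda_{t-1}$ up to absolute constants, yielding \eqref{eq:16}.

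For \eqref{eq:15}, the defining coefficients satisfy the one-step recursion $a_{t,j} = \beta_1\, a_{t-1,j} + \beta_1^{t-j-1}$ by direct computation, so
\[
\Lambda_t \;=\; \beta_1\Lambda_{t-1} + \sum_{j=0}^{t-1}\beta_1^{t-j-1}\|\overline{x}_j-\overline{x}_{j+1}\|^2 + \tfrac{\beta_1}{1-\beta_1}\|\overline{x}_{t-1}-\overline{x}_t\|^2.
\]
Each $\|\overline{x}_j-\overline{x}_{j+1}\|^2 = \eta^2\|\overline{u}_j\|^2$ is controlled by the same Jensen-plus-decomposition argument as above. The smoothness detour to $\overline{z}_t$ contributes a term of order $(\eta L)^2\Lambda_{t-1}$, which under the step-size bound $\eta L \leq (1-\beta_1)^2/2$ is at most $\frac{(1-\beta_1)^2}{2}\Lambda_{t-1}$; combined with $\beta_1\Lambda_{t-1}$ from the recursion this gives the stated factor $1-\frac{(1-\beta_1)^2}{2}$. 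The remaining pieces bundle into the $\|\nabla f(\overline{z}_t)\|^2$, consensus and noise terms on the right-hand side of \eqref{eq:15}.

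The main obstacle is purely algebraic: the doubly-nested exponential sums (inner from the closed form of $\overline{u}$, outer from $\Lambda_t$) must be swapped and re-weighted so that the coefficients collapse exactly to the $a_{t-1,k}$ pattern with the specific constants $16$, $32$ and $4$ appearing in the statement. The asymmetric treatment of the most recent displacement $\overline{x}_{t-1}-\overline{x}_t$---which carries weight $\frac{1}{1-\beta_1}$ rather than $1$ because of how $\overline{z}_t-\overline{x}_t$ is built---is precisely what forces the shift-by-one pattern $(t-j)+\frac{\beta_1}{1-\beta_1}$ in the definition of $a_{t,j}$.
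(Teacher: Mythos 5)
Your proposal follows the same overall strategy as the paper's proof: use the identity $\overline{z}_t-\overline{x}_t = \tfrac{\beta_1}{1-\beta_1}(\overline{x}_t-\overline{x}_{t-1}) = -\tfrac{\eta\beta_1}{1-\beta_1}\overline{u}_{t-1}$, expand $\overline{u}_{t-1}$ as a geometric average of $\expect_m[\widehat{g_j^m}]$, decompose into a noise/clipping part, a consensus part, and a drift part that is telescoped against $\nabla f$ at a fixed anchor, and then drive the $\Lambda$-recursion with the step-size bound $\eta L\lesssim(1-\beta_1)^2$. The two bookkeeping deviations from the paper are harmless: you telescope $\nabla f(\overline{x}_j)$ directly against $\nabla f(\overline{z}_t)$ (accepting the extra $\tfrac{\beta_1}{1-\beta_1}(\overline{x}_{t-1}-\overline{x}_t)$ leg), whereas the paper keeps the EMA intact, extracts $\nabla f(\overline{x}_t)$, and converts to $\nabla f(\overline{z}_t)$ afterwards; and you use the exact coefficient identity $a_{t,j}=\beta_1 a_{t-1,j}+\beta_1^{t-j-1}$ while the paper uses $a_{t,j}\leq\beta_1(2-\beta_1)a_{t-1,j}$ for $j\leq t-2$.

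There is one concrete slip worth flagging in your treatment of \eqref{eq:15}. Your exact recursion
\[
\Lambda_t=\beta_1\Lambda_{t-1}+\sum_{j=0}^{t-1}\beta_1^{t-j-1}\|\overline{x}_j-\overline{x}_{j+1}\|^2+\tfrac{\beta_1}{1-\beta_1}\|\overline{x}_{t-1}-\overline{x}_t\|^2
\]
is correct, but the middle sum is not negligible: its $j\leq t-2$ part has to be absorbed into $\Lambda_{t-1}$ (since $\beta_1^{t-j-1}\leq\beta_1(1-\beta_1)\,a_{t-1,j}$ for $j\leq t-2$), contributing an extra $\beta_1(1-\beta_1)\Lambda_{t-1}$ so that the leading factor is $\beta_1(2-\beta_1)=1-(1-\beta_1)^2$, not just $\beta_1$. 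Together with the smoothness term of size $\tfrac{16\eta^2L^2}{1-\beta_1}\Lambda_{t-1}\leq\tfrac{(1-\beta_1)^2}{2}\Lambda_{t-1}$ (which requires a somewhat smaller implicit constant than $\eta L\leq(1-\beta_1)^2/2$), this gives exactly $1-\tfrac{(1-\beta_1)^2}{2}$ with no slack. Your sketch of "combined with $\beta_1\Lambda_{t-1}$" therefore undercounts the coefficient; it happens not to break the conclusion because $\beta_1+\tfrac{(1-\beta_1)^2}{2}\leq 1-\tfrac{(1-\beta_1)^2}{2}$ still holds, but a complete proof needs this absorption step. Relatedly, be careful with "each $\|\overline{x}_j-\overline{x}_{j+1}\|^2$ is controlled by the same argument as above": expanding every such term would anchor the decomposition at $\nabla f(\overline{z}_{j+1})$ for different $j$, not $\nabla f(\overline{z}_t)$, and would not collapse to the stated form. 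The right move (which the paper takes) is to expand only the single displacement $\|\overline{x}_t-\overline{x}_{t-1}\|^2$ via the $\overline{u}_{t-1}$ closed form, and fold all earlier displacements into $\Lambda_{t-1}$ through the $a_{t,j}$ coefficients.
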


\begin{proof}
    By definition, $\|\overline{z_t}-\overline{x}_t\|^2=\left(\frac{\beta_1}{1-\beta_1}\right)^2\|\overline{x}_t-\overline{x}_{t-1}\|^2$ and
    \begin{equation}
        \begin{aligned}
            \|\overline{x}_t-\overline{x}_{t-1}\|^2
            &=\eta^2\|\overline{u}_{t-1}\|^2 \\
            &=\eta^2\left\|(1-\beta_1)\sum_{j=0}^{t-1}\beta_1^{t-j-1}\expect_m[\widehat{g_j^m}]\right\|^2 \\
            &\leq 2\eta^2\left[\left\|(1-\beta_1)\sum_{j=0}^{t-1}\beta_1^{t-j-1}\expect_m[\nabla f(x_j^m)]\right\|^2 + \left\|(1-\beta_1)\sum_{j=0}^{t-1}\beta_1^{t-j-1}\expect_m[\widehat{g_j^m}-\nabla f(x_j^m)]\right\|^2 \right] \\
            &\leq 4\eta^2\left\|(1-\beta_1)\sum_{j=0}^{t-1}\beta_1^{t-j-1}\nabla f(\overline{x}_j)\right\|^2 \\ 
            &\qquad + 2\eta^2(1-\beta_1)\sum_{j=0}^{t-1}\beta_1^{t-j-1}\left[2L^2\expect_m[\|x_j^m-\overline{x}_j\|^2]+\|\expect_m[\widehat{g_j^m}-\nabla f(x_j^m)]\|^2 \right].
        \end{aligned}
    \end{equation}
    Note that
    \begin{equation}
        \begin{aligned}
            \left\|(1-\beta_1)\sum_{j=0}^{t-1}\beta_1^{t-j-1}\nabla f(\overline{x}_j)\right\|^2
            &\leq 2\left\|(1-\beta_1)\sum_{j=0}^{t-1}\beta_1^{t-j-1}[\nabla f(\overline{x}_j)-\nabla f(\overline{x}_t)]\right\|^2 + 2\|\nabla f(\overline{x}_t)\|^2 \\
            &\leq 2(1-\beta_1)\sum_{j=0}^{t-1}\beta_1^{t-j-1}L^2\|\overline{x}_j-\overline{x}_t\|^2 + 2\|\nabla f(\overline{x}_t)\|^2 \\
            &\leq 2(1-\beta_1)\sum_{j=0}^{t-1}\beta_1^{t-j-1}L^2\cdot (t-j)\sum_{i=j}^{t-1}[\|\overline{x}_i-\overline{x}_{i+1}\|^2] + 2\|\nabla f(\overline{x}_t)\|^2 \\
            &\leq 2L^2\sum_{j=0}^{t-1}a_{t,j}\|\overline{x}_j-\overline{x}_{j+1}\|^2 + 4\|\nabla f(\overline{z}_t)\|^2 + 4L^2\|\overline{x}_t-\overline{z}_t\|^2 \\
            &\leq 2L^2\sum_{j=0}^{t-2}a_{t-1,j}\|\overline{x}_j-\overline{x}_{j+1}\|^2 + 4\|\nabla f(\overline{z}_t)\|^2 + \frac{4L^2}{(1-\beta_1)^2}\|\overline{x}_t-\overline{x}_{t-1}\|^2
        \end{aligned}
    \end{equation}
    Here $a_{t,j}=\beta_1^{t-j-1}(t-j+\frac{\beta_1}{1-\beta_1})$. For $j\leq t-2$, we have $a_{t,j}\leq \beta_1(2-\beta_1)a_{t-1,j}$. \
    Since $\Lambda_t=\sum_{j=0}^{t-1}a_{t,j}\|\overline{x}_j-\overline{x}_{j+1}\|^2$, we can conclude that
    \begin{equation}
        \begin{aligned}
            \|\overline{x}_t-\overline{x}_{t-1}\|^2
            &\leq 16\eta^2L^2\Lambda_{t-1} + 32\eta^2\|\nabla f(\overline{z}_t)\|^2\\ 
            &\qquad + 4\eta^2(1-\beta_1)\sum_{j=0}^{t-1}\beta_1^{t-j-1}\left[2L^2\expect_m[\|x_j^m-\overline{x}_j\|^2]+\|\expect_m[\widehat{g_j^m}-\nabla f(x_j^m)]\|^2 \right],
        \end{aligned}
    \end{equation}
    which implies \eqref{eq:16}. We complete the proof by plugging the above inequality in
    \begin{equation}
        \Lambda_t \leq \beta_1(2-\beta_1)\Lambda_{t-1} + \frac{1}{1-\beta_1}\|\overline{x}_t-\overline{x}_{t-1}\|^2.
    \end{equation}
\end{proof}

\subsection{Further Discussion}\label{app:subsec:discuss_sgdm}

\paragraph{Coordinate-wise clipping and global clipping.}
Lemma \ref{lem:clip} can be easily extended to $\reals^d$, similar to \citet[Lemma 5.1]{pmlr-v202-sadiev23a}. Therefore, our results can be easily generalized to global clipping operator $\clip_g(X,\rho_g):=\min\left\{1,\frac{\rho_g}{\|X\|}\right\}X$ with threshold $\rho_g:=\rho\sqrt{d}$. We omit the details in this paper.
Readers may also wonder why our Theorem \ref{app:thm:sgdm_sc} and Theorem \ref{app:thm:sgdm_c} depend on $\poly(d)$. 
However, if we assume $\|\boldsymbol{\sigma}\|_{2\alpha}d^{\frac{1}{2}-\frac{1}{2\alpha}}=\mathcal{O}(\sigma)$, both of which are of order $\mathcal{O}(d^{\frac{1}{2}})$, then our convergence guarantee will not depend on $\poly(d)$ explicitly.
\citet[Corollary 7]{zhang2020adaptive} claims that coordinate-wise clipping has better dependence on dimension $d$. But they simply upper bound $\expect_{\xi\sim\mathcal{D}} \|\nabla F(x,\xi)\|^\alpha$ by $d^{\alpha/2}\expect_{\xi\sim\mathcal{D}} \|\nabla F(x,\xi)\|_\alpha^\alpha$, which is too pessimistic.
In fact, if we assume $\expect_{\xi\sim\mathcal{D}} \|\nabla F(x,\xi)\|^\alpha = \mathcal{O}(d^{\alpha/2-1}\expect_{\xi\sim\mathcal{D}} \|\nabla F(x,\xi)\|_\alpha^\alpha)$, both of which are of order $\mathcal{O}(d^{\frac{\alpha}{2}})$, then there is still no difference between coordinate-wise clipping and global clipping in their setting.

\paragraph{Prior works on distributed SGDM with local updates.}
There are many works on \em Local \em SGDM in distributed setting. 
\cite{liu2020accelerating} studies \em Local \em SGDM in convex setting and rely on some strong assumptions to show convergence. 
\cite{xu2021fedcm} analyze \em Local \em SGDM with bounded gradient assumption and the use a global momentum parameter during local iterations.
\cite{yu2019linear} considers non-convex \em Local \em SGDM but is only able to prove linear speedup.
\cite{wang2019slowmo, cheng2023momentum} also study non-convex problem and use momentum to handle heterogeneity in federated learning.
All these works fail to show the benefits of local iterations compared to minibatch baseline.

\section{Proof of Local Adam}\label{app:sec:local_adam}

\subsection{Overview and Main Theorem}
    For any integer $0\leq t\leq T-1$, we define $r(t),k(t)\in\ints$ such that $t=r(t)K+k(t)$ and $k(t)\leq K-1$. We omit the dependence on $t$ and let $r=r(t),k=k(t)$ through out the proof if not causing confusion. Define $x_t^m:=x_{r,k}^m, g_t^m:=g_{r,k}^m, \widehat{g_t^m}:=\widehat{g_{r,k}^m}, u_t^m=u_{r,k}^m$. Then Algorithm \ref{alg:local_sgdm} is equivalent to the following update rule:
    \begin{equation}
        u_t^m = \left\{
        \begin{array}{ll}
            \beta_1u_{t-1}^m + (1-\beta_1)\widehat{g_t^m} & \text{if $\ t\ \text{mod}\ K \not\equiv 0$}, \\
            \beta_1\overline{u}_{t-1} + (1-\beta_1)\widehat{g_t^m} & \text{otherwise}, 
        \end{array}
        \right.
    \end{equation}
    \begin{equation}
        v_t^m = \left\{
        \begin{array}{ll}
            \beta_2v_{t-1}^m + (1-\beta_2)\widehat{g_t^m}^2 & \text{if $\ t\ \text{mod}\ K \not\equiv 0$}, \\
            \beta_2\overline{v}_{t-1} + (1-\beta_2)\widehat{g_t^m}^2 & \text{otherwise}, 
        \end{array}
        \right.
    \end{equation}
    \begin{equation}
        x_{t+1}^m = \left\{
        \begin{array}{ll}
            x_t^m-\eta (H_t^m)^{-1}u_t^m & \text{if $\ t\ \text{mod}\ K \not\equiv -1$}, \\
            \overline{x}_t-\eta \expect_m[(H_t^m)^{-1}u_t^m] & \text{otherwise}.
        \end{array}
        \right.
    \end{equation}
    Define an auxiliary sequence $\{z_t^m\}$ as:
    \begin{equation}
        z_{t+1}^m = \left\{
        \begin{array}{ll}
            \frac{1}{1-\beta_1}x_{t+1}^m-\frac{\beta_1}{1-\beta_1}x_t^m & \text{if $\ t\ \text{mod}\ K \not\equiv -1$}, \\
            \frac{1}{1-\beta_1}x_{t+1}^m-\frac{\beta_1}{1-\beta_1}\overline{x}_t & \text{otherwise}.
        \end{array}
        \right.
    \end{equation}
    Let 
    \begin{equation}\label{eq:def_e}
        e_t^m:=\frac{\beta_1}{1-\beta_1}(I_d-H_t^m(H_{t-1}^m)^{-1})u_{t-1}^m.
    \end{equation}
    Then the definition of $\{z_t^m\}$ implies 
    \begin{equation}\label{eq:diff_zt}
        \begin{aligned}
            z_{t+1}^m-z_t^m 
            & = -\frac{\eta(H_t^m)^{-1}u_t^m}{1-\beta_1} + \frac{\eta\beta_1(H_{t-1}^m)^{-1}u_{t-1}^m}{1-\beta_1} \\
            & = -\frac{\eta\beta_1}{1-\beta_1}[(H_t^m)^{-1}-(H_{t-1}^m)^{-1}]u_{t-1}^m - \eta (H_t^m)^{-1}\widehat{g_t^m} \\
            & =: -\eta (H_t^m)^{-1}(\widehat{g_t^m}+e_t^m).
        \end{aligned}
    \end{equation}
    Finally, let $y_t:=\arg\min_y f(y) + \frac{1}{2\gamma}\|y-\overline{z}_t\|_{H_{r(t)}}^2$.
    
    Define probabilistic events (see \eqref{eq:hyperpara} for definition of some parameters)
    \begin{equation}
        \mathcal{A}_{t,1}:=\left\{ \beta_2^{K/2} \preceq H_{r(t)}^{-1}H_t^m \preceq 1+(1-\beta_2)B \text{ and for all }m\in [M] \right\},
    \end{equation}
    \begin{equation}
        \mathcal{A}_{t,2}:=\left\{ \|H_{r(t)}((H_t^m)^{-1}-(H_t^n)^{-1})\|\leq (1-\beta_2)B_1 \text{ for all }m,n\in [M] \right\},
    \end{equation}
    \begin{equation}
        \mathcal{A}_{t,3}:=\left\{ \|z_{t+1}^m-z_{t+1}^n\|_{H_r}^2 \leq \frac{\eta^2\sigma^2}{\lambda}KA, \sum_{j=rK}^{t}\|\widehat{g_j^m}\|^2\leq \frac{(1-\beta_1)^2\sigma^2A}{2^{12}(1-\beta_2)^2B_1^2} \text{ for all } m,n\in [M] \right\},
    \end{equation}
    \begin{equation}
        \mathcal{A}_{t,4}:=\left\{ f_\gamma^{H_{r(t+1)}}(\overline{z}_{t+1}) - \min f_\gamma^\lambda+\frac{\eta}{12}\sum_{j=0}^t\|\nabla f_\gamma^{H_{r(j)}}(\overline{z}_j)\|_{H_{r(j)}^{-1}}^2 \leq 2\Delta \right\}.
    \end{equation}
    Here $\Delta:= f_\gamma^\lambda(x_0)-\min f_\gamma^\lambda$. Besides, let 
    \begin{equation}
        E_t:= \left\{\mathcal{A}_{j,i} \text{ holds for all }j\leq t-1, i\in\{1,2,3,4\} \right\},
    \end{equation}
    \begin{equation}
        E_{t,1} := E_t \cap \mathcal{A}_{t,1}, E_{t,2} := E_{t,1} \cap \mathcal{A}_{t,2}, E_{t,3} := E_{t,2} \cap \mathcal{A}_{t,3}.
    \end{equation}

\begin{thm}\label{app:thm:local_adam}
    For $L/\lambda\geq \gamma^{-1}\geq 2\tau/\lambda$, let Assumption \ref{asp:lb}, \ref{asp:smooth}, \ref{asp:moment_noise}, \ref{asp:wc} hold for $\Omega=\conv(\ball_{R_0}(\Omega_0))$, where $\Omega_0:=\{f_{\gamma}^\lambda(x) - \min f_\gamma^\lambda\leq 2\Delta\}$, $\Delta=f_{\gamma}^\lambda(x_0)-\min f_\gamma^\lambda$ and $R_0=\sqrt{\frac{\Delta\gamma}{160\lambda}}$. 
    Further assume that for any $x\in \Omega$, $\|\nabla f(x)\|\leq G, \|\nabla f(x)\|_{\infty}\leq G_{\infty}$, and
    \begin{equation}
        1-\beta_2 \lesssim \min\left\{\frac{1-\beta_1}{K^{1/2}B_1}\frac{(1-\beta_1)\sigma\sqrt{A}}{K^{1/2}B_1G}, \frac{\eta}{\gamma B}, \frac{1-\beta_1}{K^{1/2}B}, \frac{1}{K}\right\}.
    \end{equation}
    If $\eta=\frac{24\lambda\Delta}{\varepsilon T}$, then with probability no less than $1-\delta$, \em Local \em Adam yields $\frac{\lambda}{KR}\sum_{r=0}^{R-1}\sum_{k=0}^{K-1}\|\nabla f_\gamma^{H_r}(\overline{z}_{r,k}) \|_{H_r^{-1}}^2 \leq \varepsilon$ if  
    \begin{equation}\label{app:eq:T_bound_local_adam}
        T\gtrsim \frac{\lambda\Delta\sigma^2}{\gamma M\varepsilon^2}\log^{\frac{1}{2}}\frac{T}{\delta} + \frac{\Delta}{\varepsilon}\cdot \sqrt{\frac{L^2\sigma^2KA}{\min\{\varepsilon, \sigma_\infty^2/G_\infty\}}} + \frac{L\Delta}{(1-\beta_1)^{2}\varepsilon} + \frac{K\tau\Delta}{\varepsilon} + \frac{\sqrt{L\Delta\rho^2d\log\frac{T}{\delta}}}{(\sqrt{\beta_2}-\beta_1)\varepsilon}.
    \end{equation}
    Here
    \begin{equation}\label{eq:hyperpara}
        \begin{array}{l}
            \rho\geq \max\left\{ \left(\frac{2^6\|2\boldsymbol{\sigma}\|_{2\alpha}^{2\alpha}}{\varepsilon}\right)^{\frac{1}{2(\alpha-1)}}, 3\sigma_\infty, 2G_{\infty}\right\}, \\
            B:= \max\left\{\frac{6K(G_\infty^2+\sigma_\infty^2)}{\lambda^2}, \frac{16\rho^2}{\lambda^2}\log\frac{dMT}{\delta}, 2^6\frac{\sqrt{K}(G_\infty+\sigma_\infty)\sigma_\infty}{\lambda^2}\log^{1/2}\frac{dMT}{\delta}\right\}, \\
            B_1:=\max\left\{\frac{16K\sigma_\infty^2}{\lambda^2}, \frac{16\rho^2}{\lambda^2}\log\frac{dMT}{\delta}, 2^6\frac{\sqrt{K}(G_\infty+\sigma_\infty)\sigma_\infty}{\lambda^2}\log^{1/2}\frac{dMT}{\delta}\right\}, \\
            A:=\max\left\{\frac{2^{20}\rho^2d}{K\sigma^2}\log\frac{MT}{\delta}, 2^{20}\log^2\frac{MT}{\delta}, \frac{2^8K\|2\boldsymbol{\sigma}\|_{2\alpha}^{2\alpha}}{\sigma^2\rho^{2(\alpha-1)}} \right\}.
        \end{array}
    \end{equation}
\end{thm}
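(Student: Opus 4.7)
\textbf{Proof proposal for Theorem \ref{app:thm:local_adam}.} The plan is an inductive high-probability argument, following exactly the scaffolding introduced in Section~\ref{sec:proof}: establish $\prob(E_t) \ge 1 - t\delta/T$ for every $0 \le t \le T$, where $E_t$ is the intersection of the four sub-events $\mathcal{A}_{j,1}, \mathcal{A}_{j,2}, \mathcal{A}_{j,3}, \mathcal{A}_{j,4}$ for all $j \le t-1$. The base case is trivial. For the inductive step I would prove four conditional statements, one per sub-event, each contributing at most $\delta/(4T)$ to the failure probability: (i) denominator stagnation $\prob(E_{t,1}) \ge \prob(E_t) - \delta/(4T)$, (ii) denominator agreement across workers $\prob(E_{t,2}) \ge \prob(E_{t,1}) - \delta/(4T)$, (iii) contraction of the auxiliary sequence $\prob(E_{t,3}) \ge \prob(E_{t,2}) - \delta/(4T)$, and (iv) descent on the Moreau envelope $\prob(E_{t+1}) \ge \prob(E_{t,3}) - \delta/(4T)$. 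Once $\prob(E_T) \ge 1-\delta$ is obtained, the target bound is read off from $\mathcal{A}_{T-1,4}$ and rebalanced with the choice of $\eta$ prescribed in the theorem.

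A key preliminary, used throughout, is that on $E_t$ every iterate visited up to time $t$ lies in $\Omega = \conv(\ball_{R_0}(\Omega_0))$, so Assumptions~\ref{asp:smooth} and~\ref{asp:moment_noise} and weak convexity are available. This follows by combining the contraction bound from $\mathcal{A}_{j,3}$ (which controls $\|x_j^m - x_j^n\|$ and hence $\|x_j^m - \overline{x}_j\|$), the Moreau-envelope descent from $\mathcal{A}_{j,4}$ (which via Lemma~\ref{main:lem:moreau_env} keeps $\overline{z}_j$ inside a level set of $f_\gamma^\lambda$), and the $\mathcal{O}(\eta/(1-\beta_1))$ bound on $\|\overline{z}_j - \overline{x}_j\|$ implied by the definition of $z_t^m$. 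With this in hand, $\mathcal{A}_{t,1}$ is proved by expanding $v_t^m = (1-\beta_2)\sum_{j=rK}^{t}\beta_2^{t-j}\widehat{g_j^m}\odot\widehat{g_j^m} + \beta_2^{k+1}v_r$, bounding $\expect_j[\widehat{g_j^m}\odot\widehat{g_j^m}]$ coordinate-wise using the gradient bound on $\Omega$ plus Lemma~\ref{lem:clip}, and controlling the martingale $\widehat{g_j^m}\odot\widehat{g_j^m} - \expect_j[\widehat{g_j^m}\odot\widehat{g_j^m}]$ by Bernstein's inequality (Lemma~\ref{lem:martingale}) applied per coordinate, with almost-sure bound $\rho^2$ and conditional variance $\mathcal{O}(\sigma_\infty^2\rho^2)$. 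Event $\mathcal{A}_{t,2}$ then follows from $\mathcal{A}_{t,1}$ via the elementary inequality $|(a+\lambda^2)^{-1/2} - (b+\lambda^2)^{-1/2}| \le |a-b|/(2\lambda^3)$.

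For $\mathcal{A}_{t,3}$, I would run the recursion \eqref{eq:diff_zt} in the $H_r$-norm:
\begin{equation*}
\|z_{t+1}^m - z_{t+1}^n\|_{H_r}^2 \approx \|z_t^m - z_t^n\|_{H_r}^2 - 2\eta\langle z_t^m - z_t^n, \widehat{g_t^m} - \widehat{g_t^n}\rangle + \mathcal{O}(\eta^2),
\end{equation*}
split $\widehat{g_t^m} - \widehat{g_t^n} = (\nabla f(z_t^m) - \nabla f(z_t^n)) + [\nabla f(x_t^m) - \nabla f(z_t^m) - (\nabla f(x_t^n) - \nabla f(z_t^n))] + \text{clipping bias} + \text{noise}$, apply weak convexity $\langle z_t^m - z_t^n, \nabla f(z_t^m) - \nabla f(z_t^n)\rangle \ge -\tau\|z_t^m - z_t^n\|^2$, and bound the $\|z_t^m - x_t^m\|^2 = \mathcal{O}(\eta^2)$ gap via first-order-momentum expansion. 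The drift from $e_t^m$ in \eqref{eq:def_e} is controlled by $\mathcal{A}_{t,1}, \mathcal{A}_{t,2}$ since it carries a factor $(1-\beta_2)B_1$; summed over a local round, it remains negligible precisely when $1-\beta_2 \lesssim 1/(K^{1/2}B_1)$. The noise martingale is handled again by Bernstein. Descent $\mathcal{A}_{t,4}$ is proved from the template \eqref{main:eq:descent}: the stochastic noise term is reduced to a bounded martingale (via clipping) plus its conditional variance (of order $\sigma^2/M$ by Lemma~\ref{lem:4th_noise}); the discrepancy term is bounded by $\|x_j^m - x_j^n\|^2 \lesssim \eta^2\sigma^2 KA/\lambda$ from $\mathcal{A}_{j,3}$ combined with $\|\overline{z}_j - \overline{x}_j\|^2 = \mathcal{O}(\eta^2)$; and the clipping bias contributes the $\|\boldsymbol\sigma\|_{2\alpha}^{2\alpha}/\rho^{2(\alpha-1)}$-type correction that, after the choice of $\rho$ in \eqref{eq:hyperpara}, is absorbed into $\varepsilon$.

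The main obstacle will be the contraction step $\mathcal{A}_{t,3}$. Unlike Local SGDM, the effective update of $z_t^m$ carries the adaptive preconditioner $(H_t^m)^{-1}$ \emph{and} the error vector $e_t^m$ coming from the mismatch $(H_t^m)^{-1} - (H_{t-1}^m)^{-1}$; these couple the second-moment dynamics to the consensus error in a way that would blow up naively. The resolution is structural: $\mathcal{A}_{t,1}$ guarantees that within a round $H_t^m$ is essentially frozen at $H_r$ (up to $(1-\beta_2)B$), so the leading-order analysis reduces to preconditioned Local SGDM with preconditioner $H_r$, while the $e_t^m$ residual is a lower-order perturbation provided $1-\beta_2 = \tilde{\mathcal{O}}(K^{-3/2}R^{-1/2})$. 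This cascade of inductive events, with each relying on the previous, is what ultimately dictates the constraint on $\beta_2$ in the theorem and the shape of the final rate \eqref{eq:grad_bound_local_adam}; matching $\eta$ to balance optimization, statistical, discrepancy and clipping-bias contributions yields \eqref{app:eq:T_bound_local_adam}.
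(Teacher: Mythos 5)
Your proposal tracks the paper's proof structure very closely: the same four-event induction, the same Bernstein-martingale machinery, the same use of the auxiliary sequence $z_t^m$ and the Moreau-envelope descent template, and the same observation that $\mathcal{A}_{t,1}$ effectively freezes $H_t^m$ at $H_r$ within a round so that the contraction step reduces to a preconditioned Local-SGDM argument. The one place where you genuinely depart from the paper is the derivation of $\mathcal{A}_{t,2}$.

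You propose to deduce $\mathcal{A}_{t,2}$ from $\mathcal{A}_{t,1}$ by an elementary pointwise inequality on $v\mapsto(v+\lambda^2)^{-1/2}$. This does give a valid bound, but a strictly weaker one than the theorem states. From $\mathcal{A}_{t,1}$ alone one can only conclude
$$\|H_r\bigl((H_t^m)^{-1}-(H_t^n)^{-1}\bigr)\|\le \beta_2^{-K/2}-\frac{1}{1+(1-\beta_2)B}\lesssim (1-\beta_2)(K+B),$$
so your route yields $B_1'\asymp B$. The paper's Lemma~\ref{lem:contraction_H} instead controls $|v_t^m-v_t^n|$ directly with a fresh Bernstein argument, and the deterministic part of that bound exploits the consensus from $\mathcal{A}_{j,3}$ at earlier times:
$$\bigl|[\nabla f(x_j^m)]_i^2-[\nabla f(x_j^n)]_i^2\bigr|\le 2G_\infty L\,\|x_j^m-x_j^n\|\lesssim G_\infty L\frac{\eta\sigma}{\lambda}\sqrt{KA},$$
which is pushed below $\sigma_\infty^2$ by the stepsize constraint. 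The net effect is that the $G_\infty^2$ contribution present in $B$ is \emph{absent} from $B_1$, so $B_1\asymp K\sigma_\infty^2/\lambda^2$ rather than $\asymp K(G_\infty^2+\sigma_\infty^2)/\lambda^2$. When $G_\infty\gg\sigma_\infty$ your bound is materially looser; it would force a smaller admissible $1-\beta_2$ through the constraint involving $B_1$ and would not recover the theorem with the $B_1$ specified in \eqref{eq:hyperpara}. If you replace the elementary-inequality step with a dedicated Bernstein bound on $v_t^m-v_t^n$ that uses the previously established consensus, you obtain exactly the paper's argument. Everything else in your plan is correct and essentially identical to the paper's proof.
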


\begin{proof}
    We prove by induction that $\prob(E_t)\geq 1-\frac{t\delta}{T}$ for $t=0, \cdots, T$.
    
    When $t=0$, this is trivial. Assume that the statement is true for some $t\leq T-1$. We aim to prove that $\prob(E_{t+1})\geq 1-\frac{(t+1)\delta}{T}$. 
    By Lemma \ref{lem:H_ratio}, \ref{lem:contraction_H}, \ref{lem:contraction}, \ref{lem:descent}, we have
    \begin{equation}
        \prob(E_{t+1}) \geq \prob(E_t) - 4\cdot \frac{\delta}{4T} \geq 1-\frac{(t+1)\delta}{T}.
    \end{equation}
    Therefore by induction rule, $\prob(E_T)\geq 1-\delta$ and this implies
    \begin{equation}
        \frac{\lambda}{T}\sum_{t=0}^{T-1}\|\nabla f_\gamma^{H_{r(t)}}(\overline{z}_t)\|_{H_{r(t)}^{-1}}^2 \leq \frac{24\Delta\lambda}{\eta T} = \varepsilon.
    \end{equation}
    
    Now we verify the conditions in all the lemmas.
    In Lemma \ref{lem:in_ball},
    \begin{equation}
        \frac{\eta}{\lambda}\lesssim \sqrt{\frac{\Delta\gamma}{\lambda \sigma^2KA}}
        \Longleftarrow T\gtrsim \frac{\sigma}{\varepsilon} \sqrt{L\Delta KA}.
    \end{equation}
    In Lemma \ref{lem:contraction_H},
    \begin{equation}
        \frac{\eta}{\lambda}\lesssim \frac{\sigma_{\infty}^2}{G_\infty L\sigma\sqrt{KA}} 
        \Longleftarrow T\gtrsim \frac{\Delta}{\varepsilon}\cdot \sqrt{\frac{L^2\sigma^2KA}{\sigma_\infty^2/G_\infty}}.
    \end{equation}
    In Lemma \ref{lem:contraction},
    \begin{equation}
        \frac{\eta}{\lambda} \lesssim \min\left\{\frac{1}{K\tau}, \frac{(1-\beta_1)^2}{L}\right\}
        \Longleftarrow T\gtrsim \frac{L\Delta}{(1-\beta_1)^2\varepsilon}+\frac{K\tau\Delta}{\varepsilon}.
    \end{equation}
    In Lemma \ref{lem:descent}, by noticing that $\frac{24\Delta\lambda}{\eta T}=\varepsilon$, \eqref{eq:ub_eta_descent} is equivalent to $\rho\gtrsim \left(\frac{\|2\boldsymbol{\sigma}\|_{2\alpha}^{2\alpha}}{\varepsilon}\right)^{\frac{1}{2(\alpha-1)}}$ and
    \begin{equation}
        \frac{\eta}{\lambda} \lesssim \min\left\{\frac{(1-\beta_1)^{2}}{L}, \frac{M\gamma\varepsilon}{\lambda\sigma^2\log^{1/2}\frac{T}{\delta}}, 
        \left(\frac{L^2\sigma^2KA}{\varepsilon}\right)^{-1/2},
        \frac{M\Delta}{\sigma^2\log\frac{T}{\delta}},
        \sqrt{\frac{\gamma\Delta}{\lambda\rho^2d\log\frac{T}{\delta}}},
        \frac{\sqrt{T\varepsilon}(\sqrt{\beta_2}-\beta_1)}{L\rho\sqrt{d}\log^{1/2}\frac{T}{\delta}}
        \right\},
    \end{equation}
    which can be ensured as long as
    \begin{equation}
        T\gtrsim \max\left\{\frac{L\Delta}{(1-\beta_1)^{2}\varepsilon}, \frac{\lambda\Delta\sigma^2}{\gamma M\varepsilon^2}\log^{\frac{1}{2}}\frac{T}{\delta}, \frac{\Delta}{\varepsilon}\cdot \sqrt{\frac{L^2\sigma^2KA}{\varepsilon}}, \frac{\sqrt{L\Delta\rho^2d\log\frac{T}{\delta}}}{(\sqrt{\beta_2}-\beta_1)\varepsilon}\right\}.
    \end{equation}
    Here we use the fact that $\gamma\geq \frac{\lambda}{L}$.
    Therefore we can conclude that all the lemmas hold if
    \begin{equation}
        T\gtrsim \frac{\lambda\Delta\sigma^2}{\gamma M\varepsilon^2}\log^{\frac{1}{2}}\frac{T}{\delta} + \frac{\Delta}{\varepsilon}\cdot \sqrt{\frac{L^2\sigma^2KA}{\min\{\varepsilon, \sigma_\infty^2/G_\infty\}}} + \frac{L\Delta}{(1-\beta_1)^{2}\varepsilon} + \frac{K\tau\Delta}{\varepsilon} + \frac{\sqrt{L\Delta\rho^2d\log\frac{T}{\delta}}}{\varepsilon}.
    \end{equation}

    Finally, we verify the upper bound of $1-\beta_2$ in Lemma \ref{lem:contraction_H}, \ref{lem:contraction} and \ref{lem:descent} as:
    \begin{equation}
        1-\beta_2 \lesssim \min\left\{\frac{1-\beta_1}{K^{1/2}B_1}\frac{(1-\beta_1)\sigma\sqrt{A}}{K^{1/2}B_1G}, \frac{\eta}{\gamma B}, \frac{1-\beta_1}{K^{1/2}B}, \frac{1}{K}\right\}.
    \end{equation}
\end{proof}

\begin{thm}\label{app:thm:local_adam_1}
    Under the conditions of Theorem \ref{app:thm:local_adam}, assume $1-\beta_1=\Omega(1)$ and
    \begin{equation}
        \begin{array}{c}
            1-\beta_2 = \Tilde{\mathcal{O}} \left(\frac{1}{K^{3/2}R^{1/2}}\right),\quad \left(\frac{\|\boldsymbol{\sigma}\|_{2\alpha}^{2\alpha}}{\varepsilon}\right)^{\frac{1}{2(\alpha-1)}}\gtrsim G_\infty\vee \sigma_\infty, \varepsilon\lesssim \frac{\sigma_\infty^2}{G_\infty},\\
            K\gtrsim \log\frac{MT}{\delta}\left(\frac{\|\boldsymbol{\sigma}\|_{2\alpha}d^{\frac{1}{2}-\frac{1}{2\alpha}}}{\sigma}\right)^{\frac{2\alpha}{\alpha-2}}.
        \end{array}
    \end{equation} 
    Then with probability no less than $1-\delta$, \em Local \em Adam with optimal $\eta,\rho$ yields $\frac{\lambda}{KR}\sum_{r=0}^{R-1}\sum_{k=0}^{K-1}\|\nabla f_\gamma^{H_r}(\overline{z}_{r,k}) \|_{H_r^{-1}}^2 \leq \varepsilon$ if
    \begin{equation}
        T\gtrsim \frac{\lambda\Delta\sigma^2}{\gamma M\varepsilon^2}\log^{\frac{1}{2}}\frac{T}{\delta} + \frac{L\Delta}{\varepsilon^{\frac{3}{2}}}\cdot \sqrt{\sigma^2K\log\frac{MT}{\delta}} + \frac{(L+K\tau)\Delta}{\varepsilon} + \frac{L\Delta}{\varepsilon^{\frac{3}{2}}}\left(\frac{\|\boldsymbol{\sigma}\|_{2\alpha}^{2\alpha}}{\varepsilon}\right)^{\frac{1}{2(\alpha-1)}}d^{\frac{1}{2}}\log\frac{MT}{\delta}.
    \end{equation}
    And equivalently,
    \begin{equation}\label{app:eq:grad_bound_local_adam_1}
        \begin{aligned}
            \frac{\lambda}{KR}\sum_{r=0}^{R-1}\sum_{k=0}^{K-1}\|\nabla f_\gamma^{H_r} (\overline{z}_{r,k}) \|_{H_r^{-1}}^2
            &\lesssim \frac{\tau\Delta}{R} + \frac{L\Delta}{KR} + \sqrt{\frac{\lambda\Delta\sigma^2}{\gamma MKR}}\log^{\frac{1}{4}}\frac{KR}{\delta} \\
            &\quad + \frac{(L\Delta\sigma)^{\frac{2}{3}}}{K^{\frac{1}{3}}R^{\frac{2}{3}}}\log^{\frac{1}{3}}\frac{MKR}{\delta} + \left(\|\boldsymbol{\sigma}\|_{2\alpha}d^{\frac{1}{2}-\frac{1}{2\alpha}}\right)^{\frac{2\alpha}{3\alpha-2}}\left(\frac{L\Delta\log\frac{MKR}{\delta}}{KR}\right)^{\frac{2(\alpha-1)}{3\alpha-2}}.
        \end{aligned}
    \end{equation}
\end{thm}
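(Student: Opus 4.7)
The plan is to specialize Theorem~\ref{app:thm:local_adam} by picking the clipping threshold $\rho$ (and then $\eta$) to minimize the sufficient condition \eqref{app:eq:T_bound_local_adam}, verifying that the prescribed $1-\beta_2$ satisfies every hyperparameter constraint, and finally inverting $T = KR$ to translate the condition $T\gtrsim(\cdots)$ into an upper bound on the gradient metric on the left of \eqref{app:eq:grad_bound_local_adam_1}.

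First I would simplify the parameter $A$ from \eqref{eq:hyperpara}. Its three branches scale as $\rho^2 d\log/(K\sigma^2)$, $\log^2$, and $K\|\boldsymbol{\sigma}\|_{2\alpha}^{2\alpha}/(\sigma^2\rho^{2(\alpha-1)})$. Balancing the outer two gives $\rho^\alpha \asymp \|\boldsymbol{\sigma}\|_{2\alpha}^{\alpha} K/\sqrt{d\log}$, and the assumption $K \gtrsim \log(MT/\delta)(\|\boldsymbol{\sigma}\|_{2\alpha}d^{1/2-1/(2\alpha)}/\sigma)^{2\alpha/(\alpha-2)}$ is tuned precisely so the $\log^2$ branch dominates, yielding $A = \tilde{\Theta}(1)$. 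For $\rho$ itself I would take (up to logs) the smallest value allowed by \eqref{eq:hyperpara}, namely $\rho \asymp (\|\boldsymbol{\sigma}\|_{2\alpha}^{2\alpha}/\varepsilon)^{1/(2(\alpha-1))} \vee \sigma_\infty \vee G_\infty$. Because $\varepsilon \lesssim \sigma_\infty^2/G_\infty$ by hypothesis, $\min\{\varepsilon,\sigma_\infty^2/G_\infty\} = \varepsilon$ inside \eqref{app:eq:T_bound_local_adam}.

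Under these substitutions, and using $1-\beta_1 = \Omega(1)$ together with $\sqrt{\beta_2}-\beta_1 = \Omega(1)$ (which follows from $1-\beta_2 = \tilde{O}(K^{-3/2}R^{-1/2})$), the five terms on the right-hand side of \eqref{app:eq:T_bound_local_adam} collapse to
\begin{equation*}
\frac{\lambda\Delta\sigma^2}{\gamma M\varepsilon^2}\log^{1/2}\tfrac{T}{\delta},\quad \frac{L\Delta\sigma}{\varepsilon^{3/2}}\sqrt{K\log\tfrac{MT}{\delta}},\quad \frac{(L+K\tau)\Delta}{\varepsilon},\quad \frac{L\Delta}{\varepsilon^{3/2}}\Bigl(\frac{\|\boldsymbol{\sigma}\|_{2\alpha}^{2\alpha}}{\varepsilon}\Bigr)^{\frac{1}{2(\alpha-1)}}d^{1/2}\log\tfrac{MT}{\delta}.
\end{equation*}
Setting $T = KR$ and solving each of the corresponding inequalities $KR \gtrsim(\cdot)$ for $\varepsilon$ yields the four families of terms on the right of \eqref{app:eq:grad_bound_local_adam_1}: respectively $\sqrt{\lambda\Delta\sigma^2/(\gamma MKR)}$, $(L\Delta\sigma)^{2/3}/(K^{1/3}R^{2/3})$, $L\Delta/(KR) + \tau\Delta/R$, and the heavy-tailed contribution, with matching logarithmic factors.

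Finally I would verify that $1-\beta_2 = \tilde{O}(K^{-3/2}R^{-1/2})$ satisfies all four upper bounds in Theorem~\ref{app:thm:local_adam}. The dominant branches of $B$ and $B_1$ are both $K(G_\infty^2+\sigma_\infty^2)/\lambda^2 = \tilde{O}(K)$, so, using $\eta = \Theta(\lambda\Delta/(\varepsilon T))$ with $T = KR$, the four required upper bounds reduce (up to polylogarithms) to quantities no tighter than $K^{-3/2}R^{-1/2}$. The main obstacle is purely bookkeeping: the parameters $\rho, A, B, B_1, \eta$ all depend on $\varepsilon$ and on each other, so one must verify that the choice of $\rho$ which minimizes the clipping-bias term in \eqref{app:eq:T_bound_local_adam} is compatible with the one that forces $A = \tilde{\Theta}(1)$, and that this balance survives after substituting $T = KR$ and solving for $\varepsilon$. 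No new probabilistic argument is needed — Theorem~\ref{app:thm:local_adam} has already done the contraction and descent analysis.
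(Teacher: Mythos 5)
Your high-level route matches the paper's: specialize Theorem~\ref{app:thm:local_adam}, tune $\rho$, use the hypotheses to simplify, verify the $\beta_2$ constraints, and invert $T=KR$ to solve for $\varepsilon$. The pieces about $\min\{\varepsilon,\sigma_\infty^2/G_\infty\}=\varepsilon$, $\sqrt{\beta_2}-\beta_1=\Omega(1)$, and the inversion $T=KR$ are fine.

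There is, however, a genuine gap in the treatment of $\rho$ and $A$. You first argue that balancing the outer two branches of $A$ gives $\rho^* \asymp \|\boldsymbol{\sigma}\|_{2\alpha}\bigl(K/(\sqrt{d}\log\tfrac{MT}{\delta})\bigr)^{1/\alpha}$ and that the $K$-assumption then makes the $\log^2$ branch dominate, so $A=\tilde{\Theta}(1)$; but you then declare that $\rho$ is actually taken to be the constraint floor $\rho_{\min}\asymp(\|\boldsymbol{\sigma}\|_{2\alpha}^{2\alpha}/\varepsilon)^{1/(2(\alpha-1))}\vee\sigma_\infty\vee G_\infty$. These two choices are incompatible. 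The paper takes $\rho\asymp\max\{\rho^*,\rho_{\min}\}$, and in the regime $\rho_{\min}>\rho^*$ (which occurs for small enough $\varepsilon$) the first branch of $A$ dominates and scales like $\rho_{\min}^2 d\log/(K\sigma^2)$, which is \emph{not} $\tilde{\Theta}(1)$. This is exactly the mechanism that produces the clipping-bias term: feeding $A\sim\rho_{\min}^2 d\log/(K\sigma^2)$ into $\frac{\Delta}{\varepsilon}\sqrt{L^2\sigma^2 KA/\varepsilon}$ gives $\frac{L\Delta}{\varepsilon^{3/2}}\rho_{\min}\sqrt{d}\,\log\tfrac{MT}{\delta}$, which is the fourth term of the $T$-bound you need. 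Under your claim $A=\tilde{\Theta}(1)$, the only remaining $\rho$-dependent term in \eqref{app:eq:T_bound_local_adam} is $\sqrt{L\Delta\rho^2 d\log\tfrac{T}{\delta}}/((\sqrt{\beta_2}-\beta_1)\varepsilon)$, which yields $\sqrt{L\Delta}\,\rho_{\min}\sqrt{d\log}/\varepsilon$ — short of the required term by a factor $\sqrt{L\Delta\log/\varepsilon}$. So the four-term ``collapse'' you assert is not actually a consequence of your stated parameter choice. To fix it, keep $\rho\asymp\max\{\rho^*,\rho_{\min}\}$ and substitute into $A$ without prematurely replacing it by a constant; the $K$-assumption is used only to absorb the $\rho^*$-branch contribution (a $K^{2/\alpha}$ piece) into the $\sigma\sqrt{K\log}$ term, while the $\rho_{\min}$-branch survives as the clipping-bias term.
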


\begin{proof}
    Plug the definition of $A$ in \eqref{app:eq:T_bound_local_adam},
    \begin{equation}
        \begin{aligned}
            T
            &\gtrsim \frac{\lambda\Delta\sigma^2}{\gamma M\varepsilon^2}\log^{\frac{1}{2}}\frac{T}{\delta} + \frac{\Delta}{\varepsilon}\cdot \sqrt{\frac{L^2\sigma^2K\log\frac{MT}{\delta}}{\varepsilon}} + \frac{(L+K\tau)\Delta}{\varepsilon} + \frac{\sqrt{L\Delta\rho^2d\log\frac{T}{\delta}}}{\varepsilon} \\
            &\qquad + \frac{\Delta}{\varepsilon}\cdot \sqrt{\frac{L^2K}{\varepsilon}}\sqrt{\frac{d\log^2\frac{MT}{\delta}}{K}\rho^2+K\|\boldsymbol{\sigma}\|_{2\alpha}^{2\alpha}\cdot\rho^{2(1-\alpha)}} \\
            &\asymp \frac{\lambda\Delta\sigma^2}{\gamma M\varepsilon^2}\log^{\frac{1}{2}}\frac{T}{\delta} + \frac{\Delta}{\varepsilon}\cdot \sqrt{\frac{L^2\sigma^2K\log\frac{MT}{\delta}}{\varepsilon}} + \frac{(L+K\tau)\Delta}{\varepsilon} \\
            &\qquad + \frac{\Delta}{\varepsilon}\cdot \sqrt{\frac{L^2K}{\varepsilon}}\sqrt{\frac{d\log^2\frac{MT}{\delta}}{K}\rho^2+K\|\boldsymbol{\sigma}\|_{2\alpha}^{2\alpha}\cdot\rho^{2(1-\alpha)}}.
        \end{aligned}
    \end{equation}
    Hence the optimal $\rho$ is given by 
    \begin{equation}\label{app:eq:def_rho}
        \rho\asymp \max\left\{\|\boldsymbol{\sigma}\|_{2\alpha}\left(\frac{K}{\sqrt{d}\log\frac{MT}{\delta}}\right)^{1/\alpha}, \left(\frac{\|\boldsymbol{\sigma}\|_{2\alpha}^{2\alpha}}{\varepsilon}\right)^{\frac{1}{2(\alpha-1)}}, \sigma_\infty, G_{\infty}\right\}.
    \end{equation}
    Note that $\left(\frac{\|\boldsymbol{\sigma}\|_{2\alpha}^{2\alpha}}{\varepsilon}\right)^{\frac{1}{2(\alpha-1)}}\gtrsim G_\infty\vee \sigma_\infty$ and this implies 
    \begin{equation}
        \begin{aligned}
            T
            &\gtrsim \frac{\lambda\Delta\sigma^2}{\gamma M\varepsilon^2}\log^{\frac{1}{2}}\frac{T}{\delta} + \frac{\Delta}{\varepsilon}\cdot \sqrt{\frac{L^2\sigma^2K\log\frac{MT}{\delta}}{\varepsilon}} + \frac{(L+K\tau)\Delta}{\varepsilon} \\
            &\qquad + \frac{L\Delta}{\varepsilon^{\frac{3}{2}}}\left[\|\boldsymbol{\sigma}\|_{2\alpha}d^{\frac{1}{2}-\frac{1}{2\alpha}}K^{\frac{1}{\alpha}}\log^{1-\frac{1}{\alpha}}\frac{MT}{\delta} + \left(\frac{\|\boldsymbol{\sigma}\|_{2\alpha}^{2\alpha}}{\varepsilon}\right)^{\frac{1}{2(\alpha-1)}}d^{\frac{1}{2}}\log\frac{MT}{\delta}\right] \\
            &\asymp \frac{\lambda\Delta\sigma^2}{\gamma M\varepsilon^2}\log^{\frac{1}{2}}\frac{T}{\delta} + \frac{L\Delta}{\varepsilon^{\frac{3}{2}}}\cdot \sqrt{\sigma^2K\log\frac{MT}{\delta}} + \frac{(L+K\tau)\Delta}{\varepsilon} + \frac{L\Delta}{\varepsilon^{\frac{3}{2}}}\left(\frac{\|\boldsymbol{\sigma}\|_{2\alpha}^{2\alpha}}{\varepsilon}\right)^{\frac{1}{2(\alpha-1)}}d^{\frac{1}{2}}\log\frac{MT}{\delta}.
        \end{aligned}
    \end{equation}
    In the last equation we use $K\gtrsim \log\frac{MT}{\delta}\left(\frac{\|\boldsymbol{\sigma}\|_{2\alpha}d^{\frac{1}{2}-\frac{1}{2\alpha}}}{\sigma}\right)^{\frac{2\alpha}{\alpha-2}}$.
    Solve $\varepsilon$ and we get the upper bound of $\frac{\lambda}{KR}\sum_{r=0}^{R-1}\sum_{k=0}^{K-1}\|\nabla f_\gamma^{H_r}(\overline{z}_{r,k}) \|_{H_r^{-1}}^2$.

    Further note that $A=\Tilde{\mathcal{O}}(1), B=\Tilde{\mathcal{O}}(K), B_1=\Tilde{\mathcal{O}}(K), \eta=\Tilde{\mathcal{O}}(1/\sqrt{T})$ and we can get the upper bound of $1-\beta_2$ as:
    \begin{equation}
        1-\beta_2 = \Tilde{\mathcal{O}}\left(\frac{1}{K^{3/2}R^{1/2}}\right).
    \end{equation}
    This completes the proof.
\end{proof}

\begin{thm}[Complete version of Theorem \ref{thm:local_adam}]\label{app:thm:local_adam_2}
    Under the conditions of Theorem \ref{app:thm:local_adam_1}, let $\gamma=\frac{\lambda}{L}$ and thus $\Omega_0\subset \{x: f(x)-f_*\leq 4(f(x_0)-f_*)\}, \Delta\asymp f(x_0)-f_*$. 
    Then with probability no less than $1-\delta$, \em Local \em Adam with optimal $\eta,\rho$ yields $\frac{\lambda}{KR}\sum_{r=0}^{R-1}\sum_{k=0}^{K-1}\|\nabla f(\overline{z}_{r,k}) \|_{H_r^{-1}}^2 \leq \varepsilon$ if
    \begin{equation}\label{app:eq:coro_T_bound_local_adam}
        T\gtrsim \frac{L\Delta\sigma^2}{M\varepsilon^2}\log^{\frac{1}{2}}\frac{T}{\delta} + \frac{L\Delta}{\varepsilon^{\frac{3}{2}}}\cdot \sqrt{\sigma^2K\log\frac{MT}{\delta}} + \frac{(L+K\tau)\Delta}{\varepsilon} + \frac{L\Delta}{\varepsilon^{\frac{3}{2}}}\left(\frac{\|\boldsymbol{\sigma}\|_{2\alpha}^{2\alpha}}{\varepsilon}\right)^{\frac{1}{2(\alpha-1)}}d^{\frac{1}{2}}\log\frac{MT}{\delta}.
    \end{equation}
    And equivalently,
    \begin{equation}
        \begin{aligned}
            \frac{\lambda}{KR}\sum_{r=0}^{R-1}\sum_{k=0}^{K-1}\|\nabla f(\overline{z}_{r,k}) \|_{H_r^{-1}}^2 
            &\lesssim \frac{\tau\Delta}{R} + \frac{L\Delta}{KR} + \sqrt{\frac{L\Delta\sigma^2}{MKR}}\log^{\frac{1}{4}}\frac{KR}{\delta} \\
            &\quad + \frac{(L\Delta\sigma)^{\frac{2}{3}}}{K^{\frac{1}{3}}R^{\frac{2}{3}}}\log^{\frac{1}{3}}\frac{MKR}{\delta} + \left(\|\boldsymbol{\sigma}\|_{2\alpha}d^{\frac{1}{2}-\frac{1}{2\alpha}}\right)^{\frac{2\alpha}{3\alpha-2}}\left(\frac{L\Delta\log\frac{MKR}{\delta}}{KR}\right)^{\frac{2(\alpha-1)}{3\alpha-2}}.
        \end{aligned}
    \end{equation}
    Further, if $1-\beta_2\lesssim \frac{G_\infty^2+\sigma_\infty^2}{\rho^2\log\frac{dR}{\delta}}$, where $\rho$ is definded in \eqref{app:eq:def_rho}, then with probability no less than $1-2\delta$,
    \begin{equation}
        \begin{aligned}
            \frac{1}{KR}\sum_{r=0}^{R-1}\sum_{k=0}^{K-1}\|\nabla f(\overline{z}_{r,k}) \|^2 
            &\lesssim \left(1+\frac{G_\infty+\sigma_\infty}{\lambda}\right)\left[\frac{\tau\Delta}{R} + \frac{L\Delta}{KR} + \sqrt{\frac{L\Delta\sigma^2}{MKR}}\log^{\frac{1}{4}}\frac{KR}{\delta} + \frac{(L\Delta\sigma)^{\frac{2}{3}}}{K^{\frac{1}{3}}R^{\frac{2}{3}}}\log^{\frac{1}{3}}\frac{MKR}{\delta} \right. \\
            &\qquad\qquad \left.+ \left(\|\boldsymbol{\sigma}\|_{2\alpha}d^{\frac{1}{2}-\frac{1}{2\alpha}}\right)^{\frac{2\alpha}{3\alpha-2}}\left(\frac{L\Delta\log\frac{MKR}{\delta}}{KR}\right)^{\frac{2(\alpha-1)}{3\alpha-2}}\right].
        \end{aligned}
    \end{equation}
\end{thm}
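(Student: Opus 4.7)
The statement has three ingredients to produce: (i) the high-probability bound on $\tfrac{\lambda}{KR}\sum\|\nabla f(\overline z_{r,k})\|_{H_r^{-1}}^2$, (ii) its restatement as a rate in $K,R,M$, and (iii) the stronger $1-2\delta$ bound on the \emph{unpreconditioned} $\|\nabla f(\overline z_{r,k})\|^2$. The overall strategy is to specialize Theorem~\ref{app:thm:local_adam_1} at $\gamma=\lambda/L$, peel off the generalized Moreau envelope via Lemma~\ref{main:lem:moreau_env}, and for part~(iii) additionally control $\|H_r\|_\infty$ through Bernstein's inequality on the EMA defining $v_r$. The choice $\gamma=\lambda/L$ is admissible in Lemma~\ref{main:lem:moreau_env} since $\tau\le L$; if needed, $\gamma=\lambda/(2L)$ handles the borderline case at no cost in the rates.

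\paragraph{Steps (i)--(ii): stripping the envelope.}
With $\gamma=\lambda/L$, $f_\gamma^\lambda$ reduces to the ordinary Moreau envelope $f_{1/L}$ of $f$. Standard envelope identities then give $\min f_\gamma^\lambda=f_*$ and $f_\gamma^\lambda\le f$. For the level-set inclusion, using $L$-smoothness at the prox-point $y^*(x)$ together with the optimality condition $\nabla f(y^*(x))=L(x-y^*(x))$, one obtains $f(x)-f_*\lesssim f_\gamma^\lambda(x)-f_*$, which yields both $\Delta\asymp f(x_0)-f_*$ and $\Omega_0\subset\{f-f_*\le 4(f(x_0)-f_*)\}$. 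Theorem~\ref{app:thm:local_adam_1} then applies verbatim; substituting $\lambda/\gamma=L$ into its $T$-bound produces exactly \eqref{app:eq:coro_T_bound_local_adam}. Applying Lemma~\ref{main:lem:moreau_env} at each iterate $\overline z_{r,k}\in\Omega_0$ (guaranteed by the induction event $E_T$ in the proof of Theorem~\ref{app:thm:local_adam}) converts $\|\nabla f_\gamma^{H_r}(\overline z_{r,k})\|_{H_r^{-1}}$ into $\|\nabla f(\overline z_{r,k})\|_{H_r^{-1}}$ with only the constant factor $2\gamma L/\lambda=2$; the ``equivalently'' rate form follows by solving the $T$-inequality for $\varepsilon$, which is routine algebra.

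\paragraph{Step (iii): stripping the preconditioner.}
Since $H_r$ is diagonal positive definite, $\|\nabla f\|^2\le \|H_r\|_\infty\,\|\nabla f\|_{H_r^{-1}}^2$, and it suffices to show $\max_i(v_r)_i\lesssim G_\infty^2+\sigma_\infty^2$ simultaneously for all $r\in[R]$ with probability $\ge 1-\delta$. Unrolling the recursion, each coordinate $(v_r)_i$ is a $(1-\beta_2)$-weighted average of squared clipped gradients $\widehat{g^m_{s,j,i}}^2\in[0,\rho^2]$ whose conditional expectations are at most $G_\infty^2+\sigma_\infty^2$ (by boundedness of $\nabla f$ on $\Omega$ and Lemma~\ref{lem:clip}). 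The crucial ingredient is the \emph{sharp} variance bound $\mathrm{Var}(\widehat g_i^2)\le \rho^2\,\mathbb{E}[\widehat g_i^2]\lesssim \rho^2(G_\infty^2+\sigma_\infty^2)$, obtained from $X^2\le \rho^2 X$ for $X\in[0,\rho^2]$. Plugging this into Bernstein's inequality (Lemma~\ref{lem:martingale}), whose effective weight mass scales with $(1-\beta_2)$, yields a tail bound of the form $\exp(-c(G_\infty^2+\sigma_\infty^2)/[(1-\beta_2)\rho^2])$ for deviations of order $G_\infty^2+\sigma_\infty^2$. Under the hypothesis $1-\beta_2\lesssim (G_\infty^2+\sigma_\infty^2)/(\rho^2\log(dR/\delta))$ this tail is at most $\delta/(dR)$; a union bound over $i\in[d]$ and $r\in[R]$ gives $\|H_r\|_\infty\lesssim \lambda+G_\infty+\sigma_\infty$ with probability $1-\delta$, and intersecting with the event of (i)--(ii) produces the final $1-2\delta$ statement with the stated prefactor $1+(G_\infty+\sigma_\infty)/\lambda$.

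\paragraph{Main obstacle.}
The only delicate step is the Bernstein argument in (iii): matching the precise condition $1-\beta_2\lesssim (G_\infty^2+\sigma_\infty^2)/(\rho^2\log(dR/\delta))$ \emph{requires} the sharp variance inequality $\mathrm{Var}(X^2)\le \rho^2\mathbb{E} X^2$; using the trivial $\mathrm{Var}(X^2)\le \rho^4$ would produce a condition of order $(G_\infty^2+\sigma_\infty^2)^2/(\rho^4\log(dR/\delta))$, which is quadratically worse in $\rho$. Everything else is bookkeeping on top of Theorems~\ref{app:thm:local_adam_1} and~\ref{app:thm:local_adam}.
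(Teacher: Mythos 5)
Your proposal follows the paper's proof closely: parts (i)--(ii) specialize $\gamma=\lambda/L$, use Lemma~\ref{lem:moreau_1} for the level-set inclusion $\Omega_0\subset\{f-f_*\le 4(f(x_0)-f_*)\}$ and $\Delta\asymp f(x_0)-f_*$, and Lemma~\ref{lem:moreau_env} for the constant-factor conversion $\|\nabla f(\overline z_{r,k})\|_{H_r^{-1}}\le 2\|\nabla f_\gamma^{H_r}(\overline z_{r,k})\|_{H_r^{-1}}$; and part (iii) controls $[v_r]_i$ coordinatewise by splitting off the conditional mean (bounded by Lemma~\ref{lem:clip}) and applying Freedman's inequality to the resulting martingale, followed by a union bound over $i\in[d]$ and $r\in[R]$.

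The one place you deviate from the paper is the conditional-variance estimate in part (iii): you invoke $\mathrm{Var}(\widehat g_i^2)\le\rho^2\,\mathbb{E}[\widehat g_i^2]\lesssim\rho^2(G_\infty^2+\sigma_\infty^2)$ from the pointwise bound $\widehat g_i^2\in[0,\rho^2]$, whereas the paper factors $\widehat g_i^2-(\nabla f)_i^2=(\widehat g_i-(\nabla f)_i)(\widehat g_i+(\nabla f)_i)$ and uses Lemma~\ref{lem:clip} to obtain the sharper estimate $\lesssim\sigma_\infty^2(\sigma_\infty^2+G_\infty^2)$. In this particular application both estimates give the same threshold $1-\beta_2\lesssim(G_\infty^2+\sigma_\infty^2)/(\rho^2\log(dR/\delta))$, because in Freedman's bound the dominating term is the almost-sure bound $c=(1-\beta_2)\rho^2$ rather than the variance (given $\rho\gtrsim\sigma_\infty\vee G_\infty$ and $M\ge 1$, one has $V\lesssim cb$ in both versions). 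Your ``main obstacle'' paragraph therefore slightly overstates matters -- the precise inequality $\mathrm{Var}(X^2)\le\rho^2\mathbb{E}X^2$ is not \emph{required}, only a per-step variance that does not exceed $\mathcal O(\rho^2(G_\infty^2+\sigma_\infty^2))$ -- but your observation that the crude $\rho^4$ bound would degrade the condition quadratically in $\rho$ is correct, and the argument as you sketch it does go through.
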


\begin{proof}
    By Lemma \ref{lem:moreau_1}, we have $\Omega_0\subset \{x: f(x)-f_*\leq 4(f(x_0)-f_*)\}, \Delta\asymp f(x_0)-f_*$.
    By Lemma \ref{lem:moreau_env}, we have $\|\nabla f(\overline{z}_{r,k})\|_{H_r^{-1}}\leq 2\|\nabla f_\gamma^{H_r}(\overline{z}_{r,k})\|_{H_r^{-1}}$. 
    Therefore, the bound for $T$ in Theorem \ref{app:thm:local_adam_1} will reduce to \eqref{app:eq:coro_T_bound_local_adam}. Solve $\varepsilon$ and we get the upper bound of $\frac{\lambda}{KR}\sum_{r=0}^{R-1}\sum_{k=0}^{K-1}\|\nabla f(\overline{z}_{r,k}) \|_{H_r^{-1}}^2$.

    Now we turn to bound $\|H_r\|$. Note that $H_{r+1}=\diag(\sqrt{v_{r+1}+\lambda^2})$ and 
    \begin{equation}
        \begin{aligned}
            \relax[v_{r+1}]_i
            &=(1-\beta_2)\sum_{j=0}^{rK-1}\beta_2^{rK-j-1}\expect_m[\widehat{g_j^m}]_i^2 \\
            &= (1-\beta_2)\sum_{j=0}^{rK-1}\beta_2^{rK-j-1}\left(\expect_m\left[[\widehat{g_j^m}]_i^2-\expect_j[\widehat{g_j^m}]_i^2\right] + \expect_m\expect_j[\widehat{g_j^m}]_i^2\right) \\
            &\leq (1-\beta_2)\sum_{j=0}^{rK-1}\beta_2^{rK-j-1}\expect_m\left[[\widehat{g_j^m}]_i^2-\expect_j[\widehat{g_j^m}]_i^2\right] + \sigma_\infty^2 + 3G_\infty^2,
        \end{aligned}
    \end{equation}
    where the last inequality is due to Lemma \ref{lem:clip}.
    Define 
    \begin{equation}
        [\theta_j]_i = \left\{
            \begin{array}{ll}
                (1-\beta_2)\beta_2^{rK-j-1}\expect_m\left[[\widehat{g_j^m}]_i^2 - \expect_j[\widehat{g_j^m}]_i^2\right], & \text{if event $E_j$ holds,} \\
                0, & \text{otherwise}.
            \end{array}
        \right.
    \end{equation}
    Further note that
    \begin{equation}
        |[\theta_j]_i| \leq (1-\beta_2)\rho^2 \overset{def}{=}c,
    \end{equation}
    \begin{equation}
        \begin{aligned}
            \Var_j([\theta_j]_i) 
            &\leq \frac{(1-\beta_2)^2\beta_2^{2(rK-j-1)}}{M}\expect_m\expect_j \left[[\widehat{g_j^m}]_i^2 - \expect_j[\widehat{g_j^m}]_i^2\right]^2 \\
            &\leq \frac{(1-\beta_2)^2\beta_2^{2(rK-j-1)}}{M}\expect_m\expect_j \left[[\widehat{g_j^m}]_i^2 - [\nabla f(x_j^m)]_i^2\right]^2 \\
            % &\leq 2 \expect_j \left[[\widehat{g_j^m}]_i - [\nabla f(x_j^m)]_i\right]^4 + 8\expect_j \left[[\widehat{g_j^m}]_i - [\nabla f(x_j^m)]_i\right]^2[\nabla f(x_j^m)]_i^2 \\
            &\leq \frac{(1-\beta_2)^2\beta_2^{2(rK-j-1)}}{M}(2\sigma_\infty^4 + 8\sigma_\infty^2G_\infty^2).
        \end{aligned}
    \end{equation}
    Let $b=G_\infty^2+3\sigma_\infty^2, V=\frac{2(1-\beta_2)\sigma_\infty^2(\sigma_\infty^2+4G_\infty^2)}{M}$.
    If $1-\beta_2\lesssim \frac{G_\infty^2+\sigma_\infty^2}{\rho^2\log\frac{dR}{\delta}}$, then by Lemma \ref{lem:martingale}, we have $|\sum_{j=0}^{rK-1}[\theta_j]_i|\leq b$ with probability no less than 
    \begin{equation}
        1 - 2\exp{\left( -\frac{b^2}{2V+2cb/3}\right)} \geq 1-\frac{\delta}{dR},
    \end{equation}
    which implies $[H_r]_{i,i}\leq \lambda+2G_\infty+2\sigma_\infty$.
    Therefore, we have
    \begin{equation}
        \prob\left\{E_T \text{ and } \|H_r\|\leq \lambda+2G_\infty+2\sigma_\infty \text{ for all }r\leq R \right\} \geq 1-2\delta.
    \end{equation}
    And thus
    \begin{equation}
        \begin{aligned}
            \frac{1}{KR}\sum_{r=0}^{R-1}\sum_{k=0}^{K-1}\|\nabla f(\overline{z}_{r,k}) \|^2 
            &\lesssim \left(1+\frac{G_\infty+\sigma_\infty}{\lambda}\right)\left[\frac{\tau\Delta}{R} + \frac{L\Delta}{KR} + \sqrt{\frac{L\Delta\sigma^2}{MKR}}\log^{\frac{1}{4}}\frac{T}{\delta} + \frac{(L\Delta\sigma)^{\frac{2}{3}}}{K^{\frac{1}{3}}R^{\frac{2}{3}}}\log^{\frac{1}{3}}\frac{MKR}{\delta} \right. \\
            &\qquad\qquad \left.+ \left(\|\boldsymbol{\sigma}\|_{2\alpha}d^{\frac{1}{2}-\frac{1}{2\alpha}}\right)^{\frac{2\alpha}{3\alpha-2}}\left(\frac{L\Delta\log\frac{MKR}{\delta}}{KR}\right)^{\frac{2(\alpha-1)}{3\alpha-2}}\right].
        \end{aligned}
    \end{equation}
\end{proof}

\subsection{Preliminaries}

    We start with theoretical properties of weakly convex function and Moreau envelop, which are repeatedly used in our proof.

\begin{lemma}\label{lem:moreau_env}
    Let $z\in \reals^d$ and $y=y(z):=\arg\min_x f(x)+\frac{1}{2\gamma}\|x-z\|_H^2$ for some $H\succeq 
    \lambda I_d$ and $L/\lambda \geq \gamma^{-1}\geq 2\tau/\lambda$. Then
    \begin{equation}\label{eq:24}
        \nabla f_\gamma^H (z) = \nabla f(y) = \frac{H(z-y)}{\gamma}.
    \end{equation}
    If further assume $f_\gamma^H(z)-\min f_\gamma^\lambda\leq 2\Delta$, $0\leq\eta\leq \frac{\lambda}{L}$, then $z,y\in \Omega_0$, and 
    \begin{equation}
        \|\nabla f(z)\|_{H^{-1}}\leq \frac{2\gamma L}{\lambda}\|\nabla f_\gamma^H(z)\|_{H^{-1}},
    \end{equation}
    \begin{equation}\label{eq:moreau_env_2}
        \|H(z-y)-\eta \nabla f(z)\|_{H^{-1}}\leq \gamma\|\nabla f(y)\|_{H^{-1}}.
    \end{equation}
    \begin{equation}
        \|\nabla f_\gamma^H(z)\|_{H^{-1}}^2\leq \frac{2}{\gamma}(f_\gamma^H(z)-\min f_\gamma^\lambda).
    \end{equation}
\end{lemma}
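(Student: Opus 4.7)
The unifying idea is that, under the hypotheses $\gamma^{-1}\geq 2\tau/\lambda$ and $H\succeq \lambda I$, the auxiliary function $\tilde f(x):=f(x)+\frac{1}{2\gamma}\|x-z\|_H^2$ has Hessian $\nabla^2\tilde f=\nabla^2 f+\gamma^{-1}H\succeq(-\tau+\gamma^{-1}\lambda)I\succeq\tau I$, so it is convex and $y$ is its unique minimizer. The first-order optimality condition $\nabla f(y)+\gamma^{-1}H(y-z)=0$ rearranges to $\nabla f(y)=H(z-y)/\gamma$; applying Danskin's envelope theorem to $f_\gamma^H(z)=\min_x\{f(x)+\frac{1}{2\gamma}\|x-z\|_H^2\}$ then gives $\nabla f_\gamma^H(z)=-\gamma^{-1}H(y-z)=H(z-y)/\gamma$, establishing \eqref{eq:24}.

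For the containment $z,y\in\Omega_0$: since $H\succeq\lambda I$ we have $f_\gamma^H(z)\geq f_\gamma^\lambda(z)$, so $f_\gamma^\lambda(z)-\min f_\gamma^\lambda\leq 2\Delta$ and hence $z\in\Omega_0$; likewise $f_\gamma^\lambda(y)\leq f(y)\leq f(y)+\frac{1}{2\gamma}\|y-z\|_H^2=f_\gamma^H(z)$ yields $y\in\Omega_0$, and the segment $[z,y]$ lies in $\conv(\Omega_0)\subset\Omega$, so $L$-smoothness of $f$ is available along it. The third inequality is then immediate: $\|\nabla f_\gamma^H(z)\|_{H^{-1}}^2=\gamma^{-2}\|z-y\|_H^2$, while $f_\gamma^H(z)-\min f_\gamma^\lambda\geq f(y)-f_*+\frac{1}{2\gamma}\|z-y\|_H^2\geq\frac{1}{2\gamma}\|z-y\|_H^2$ (using $\min f_\gamma^\lambda\leq f_*$ and $f(y)\geq f_*$). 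For the first inequality, triangle and $L$-smoothness give $\|\nabla f(z)\|_{H^{-1}}\leq\|\nabla f(y)\|_{H^{-1}}+\lambda^{-1/2}L\|z-y\|$, and $\|z-y\|\leq\lambda^{-1/2}\|z-y\|_H=\gamma\lambda^{-1/2}\|\nabla f(y)\|_{H^{-1}}$ collapses this to $(1+\gamma L/\lambda)\|\nabla f(y)\|_{H^{-1}}\leq (2\gamma L/\lambda)\|\nabla f_\gamma^H(z)\|_{H^{-1}}$ using $\gamma L/\lambda\geq 1$.

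The main obstacle is the second inequality, $\|H(z-y)-\eta\nabla f(z)\|_{H^{-1}}\leq\gamma\|\nabla f(y)\|_{H^{-1}}$: a naive triangle bound only yields $(\gamma-\eta+\eta\gamma L/\lambda)\|\nabla f(y)\|_{H^{-1}}$, which is strictly larger than $\gamma\|\nabla f(y)\|_{H^{-1}}$ once $\gamma L/\lambda>1$. The plan is to reinterpret the claim as a contraction for a preconditioned gradient step on $\tilde f$: since $\nabla\tilde f(z)=\nabla f(z)$ (the quadratic regularizer vanishes in gradient at $z$) and $\nabla\tilde f(y)=0$, one checks $H(z-y)-\eta\nabla f(z)=H(z-\eta H^{-1}\nabla\tilde f(z)-y)$, so the $H^{-1}$-norm of the left-hand side equals the $H$-norm of the discrepancy between the preconditioned gradient iterate and the optimum $y$ of $\tilde f$. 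The elementary computation $H^{-1/2}\nabla^2\tilde f H^{-1/2}\preceq(L/\lambda+\gamma^{-1})I\preceq(2L/\lambda)I$ shows $\tilde f$ is convex and $(2L/\lambda)$-smooth in the $H$-norm; the $H$-norm co-coercivity inequality (obtained by the change of variables $w=H^{1/2}x$ from standard Euclidean co-coercivity) then gives $2\langle z-y,\nabla f(z)\rangle=2\langle z-y,\nabla\tilde f(z)-\nabla\tilde f(y)\rangle\geq(\lambda/L)\|\nabla f(z)\|_{H^{-1}}^2$. Expanding $\|H(z-y)-\eta\nabla f(z)\|_{H^{-1}}^2=\|z-y\|_H^2-2\eta\langle z-y,\nabla f(z)\rangle+\eta^2\|\nabla f(z)\|_{H^{-1}}^2$ and using $\eta\leq\lambda/L$ to derive $\eta^2\|\nabla f(z)\|_{H^{-1}}^2\leq\eta(\lambda/L)\|\nabla f(z)\|_{H^{-1}}^2\leq 2\eta\langle z-y,\nabla f(z)\rangle$ leaves exactly $\|z-y\|_H^2=\gamma^2\|\nabla f(y)\|_{H^{-1}}^2$, which is \eqref{eq:moreau_env_2}.
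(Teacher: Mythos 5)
Your proof is correct, and it follows the paper's plan for \eqref{eq:24}, the containment $z,y\in\Omega_0$, and the last display. The main deviation is in the two gradient inequalities. The paper writes $\nabla f(z)-\nabla f(y)=H_g(z-y)$ via the mean value theorem with a symmetric $H_g$ satisfying $-\tau I_d\preceq H_g\preceq L I_d$, and derives \eqref{eq:moreau_env_2} by identifying $H(z-y)-\eta\nabla f(z)=\gamma\Lambda\nabla f(y)$ and bounding the operator norm $\|H^{-1/2}\Lambda H^{1/2}\|\leq 1$ using $\eta\leq\lambda/L$ and $\gamma^{-1}\geq 2\tau/\lambda$. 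You instead expand $\|H(z-y)-\eta\nabla f(z)\|_{H^{-1}}^2$ and use a co-coercivity inequality $2\langle z-y,\nabla f(z)\rangle\geq(\lambda/L)\|\nabla f(z)\|_{H^{-1}}^2$ together with $\eta\leq\lambda/L$; this is the natural ``preconditioned-gradient-step contraction'' view and is a genuinely different (and arguably more conceptual) decomposition. For the first gradient inequality the paper keeps the $H_g$ matrix throughout, whereas you pass through crude Euclidean bounds $\|\cdot\|_{H^{-1}}\leq\lambda^{-1/2}\|\cdot\|$; you lose a factor but recover the same constant $2\gamma L/\lambda$ since $\gamma L/\lambda\geq 1$.

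One point to tighten: you invoke co-coercivity of $\tilde f$ in the $H$-norm, but co-coercivity is a consequence of \emph{global} convexity and \emph{global} smoothness, while $L$-smoothness of $f$ (hence $(2L/\lambda)$-smoothness of $\tilde f$ in the $H$-norm) is only guaranteed on $\Omega$. The safe version is to bypass co-coercivity and use the same mean value theorem along the segment $[y,z]\subset\conv(\Omega_0)\subset\Omega$ that the paper uses: writing $S:=\gamma^{-1}H+H_g$ and $\tilde S:=H^{-1/2}SH^{-1/2}$, $w:=H^{1/2}(z-y)$, the target inequality becomes $w^\top\tilde S(2I_d-\tfrac{\lambda}{L}\tilde S)w\geq 0$, which holds since $0\preceq\tilde S\preceq\tfrac{2L}{\lambda}I_d$ and the two factors commute. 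This substitutes cleanly for your co-coercivity step and gives exactly the bound you need.
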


\begin{proof}
    Since $y$ is the minimizer, 
    \begin{equation}
        0 = \nabla_y \left[f(y)+\frac{1}{2\gamma}\|y-z\|_H^2\right] = \nabla f(y) + \frac{H(y-z)}{\gamma},
    \end{equation}
    and note that
    \begin{equation}
        \nabla f_\gamma^H (z) = \nabla_z \left[f(y(z))+\frac{1}{2\gamma}\|y(z)-z\|_H^2\right] = \frac{H(z-y)}{\gamma}.
    \end{equation}
    If $f_\gamma^H(z)-\min f_\gamma^\lambda\leq 2\Delta$,
    then $f_\gamma^\lambda(z)\leq f_\gamma^H(z)$ and
    \begin{equation}
        f_\gamma^\lambda(y)\leq f_\gamma^H(y)\leq f(y) \leq f_\gamma^H(z) \leq f(z),
    \end{equation}
    which implies $y,z\in \Omega_0$.
    
    By mean value theorem, there exists a symmetric matrix $-\tau I_d \preceq H_g \preceq L I_d$, such that
    \begin{equation}
        \nabla f(z) - \nabla f(y) = H_g(z-y) = \gamma H_gH^{-1} \nabla f(y).
    \end{equation}
    Hence,
    \begin{equation}
        \|\nabla f(z) - \nabla f(y)\|_{H^{-1}}
        \leq \gamma \|H^{-1}\nabla f(y) \|_{H_gH^{-1}H_g}
        \leq \frac{\gamma L}{\lambda} \|\nabla f_\gamma^H(z)\|_{H^{-1}}.
    \end{equation}
    \begin{equation}
        \|\nabla f(z)\|_{H^{-1}}\leq (1+\frac{\gamma L}{\lambda}) \|\nabla f_\gamma^H(z)\|_{H^{-1}}\leq \frac{2\gamma L}{\lambda} \|\nabla f_\gamma^H(z)\|_{H^{-1}}.
    \end{equation}
    Also,
    \begin{equation}
        H(z-y)-\eta \nabla f(z) = (\gamma I_d-\eta(I_d+\gamma H_gH^{-1}))\nabla f(y) =: \gamma \Lambda\nabla f(y).
    \end{equation}
    By noticing that 
    \begin{equation}
        -I_d\preceq H^{-1/2}\Lambda H^{1/2} = I_d - \eta\gamma^{-1} - \eta H^{-1/2}H_gH^{-1/2} \preceq I_d,
    \end{equation}
    we have $\|H(z-y)-\eta \nabla f(z)\|_{H^{-1}}\leq \gamma\|\nabla f(y)\|_{H^{-1}}$.
    
    Last, 
    \begin{equation}
        \begin{aligned}
            \min f_\gamma^\lambda\leq f_\gamma^\lambda(y)\leq f(y) = f_\gamma^H(z) - \frac{1}{2\gamma}\|y-z\|_H^2 = f_\gamma^H(z)-\frac{\gamma}{2}\|\nabla f_\gamma^H(z)\|_{H^{-1}}^2. 
        \end{aligned}
    \end{equation}
    This completes the proof.
\end{proof}

\begin{lemma}\label{lem:wc_prop}
    If $x,y\in \Omega$, then
    \begin{equation}
        - \left\langle x-y, \nabla f(x)-\nabla f(y)\right\rangle + \frac{1}{L} \|\nabla f(x)-\nabla f(y)\|^2 \leq 2\tau\|x-y\|^2.
    \end{equation}
\end{lemma}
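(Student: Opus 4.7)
The inequality is a property of smooth weakly convex functions that is strictly sharper than what one gets by adding the smoothness bound $\tfrac{1}{L}\|\nabla f(x)-\nabla f(y)\|^2 \le L\|x-y\|^2$ and the weak-convexity bound $-\langle x-y,\nabla f(x)-\nabla f(y)\rangle \le \tau\|x-y\|^2$, which would only yield $(L+\tau)\|x-y\|^2$ on the right. The plan is to combine the two assumptions through the integral representation of the gradient increment, turning the left-hand side into a quadratic form in the averaged Hessian and then controlling it by spectral analysis.

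Concretely, set $a := \nabla f(x)-\nabla f(y)$ and $b := x-y$. Since $\Omega = \conv(\ball_{R_0}(\Omega_0))$ is convex by construction, the segment $\{y+tb : t \in [0,1]\}$ lies in $\Omega$, so twice continuous differentiability (Assumption~\ref{asp:lb}) gives $a = Mb$ with
\begin{equation*}
M := \int_0^1 \nabla^2 f(y+tb)\,dt, \qquad -\tau I_d \preceq M \preceq L\, I_d,
\end{equation*}
where the lower bound comes from Assumption~\ref{asp:wc} (global) and the upper bound from Assumption~\ref{asp:smooth} (valid on the segment, which lies in $\Omega$). Substituting $a = Mb$, the left-hand side becomes the quadratic form $\langle (\tfrac{1}{L}M^2 - M)b,\,b\rangle$.

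The remaining step is purely spectral: every eigenvalue of $\tfrac{1}{L}M^2 - M$ has the form $\tfrac{\lambda(\lambda-L)}{L}$ for some $\lambda \in [-\tau, L]$. When $\lambda \in [0,L]$ this is non-positive. When $\lambda \in [-\tau,0]$ the scalar map $\lambda \mapsto \tfrac{\lambda(\lambda-L)}{L}$ is decreasing on $(-\infty,L/2]$, hence maximized at $\lambda = -\tau$, giving $\tau + \tau^2/L$. Since $L$-smoothness automatically implies $L$-weak-convexity, we may take $\tau \le L$, so this bound is at most $2\tau$. Therefore $\tfrac{1}{L}M^2 - M \preceq 2\tau I_d$, which yields the claim.

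I do not foresee a serious obstacle; the only subtlety is confirming that $\nabla^2 f(z) \preceq L I_d$ pointwise on $\Omega$, which follows from $L$-Lipschitzness of $\nabla f$ on $\Omega$ together with twice continuous differentiability (passing to the limit in difference quotients along directions inside $\Omega$, using convexity of $\Omega$). An alternative route via the convex surrogate $g(z) := \tfrac{L}{2}\|z\|^2 - f(z)$ combined with co-coercivity of convex $(L+\tau)$-smooth functions reaches the same intermediate identity $\|a\|^2 \le (L-\tau)\langle a,b\rangle + L\tau\|b\|^2$ and then closes the gap via the Cauchy--Schwarz consequence $-\tfrac{\tau}{L}\langle a,b\rangle \le \tau\|b\|^2$, but requires slightly more care in justifying co-coercivity on the subdomain $\Omega$, so I would prefer the direct integral argument above.
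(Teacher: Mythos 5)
Your proof is correct and is essentially the same argument the paper gives: the paper also writes $\nabla f(x)-\nabla f(y)=H(x-y)$ for a symmetric $H$ with $-\tau I_d\preceq H\preceq L I_d$ (it cites this as the "mean value theorem," which for a vector-valued map is really the integral representation $\int_0^1\nabla^2 f(y+t(x-y))\,dt$ that you state explicitly), and then bounds the quadratic form $(x-y)^\top(-H+\tfrac{1}{L}H^2)(x-y)$ by $(\tau+\tfrac{\tau^2}{L})\|x-y\|^2\leq 2\tau\|x-y\|^2$, which matches your spectral computation exactly. Your version is a bit more careful in justifying the intermediate matrix (via the integral form and the convexity of $\Omega$) and in carrying out the eigenvalue optimization, but the idea and the resulting bound are identical to the paper's.
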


\begin{proof}
    By mean value theorem, there exists a symmetric matrix $-\tau I_d \preceq H \preceq L I_d$, such that
    \begin{equation}
        \nabla f(x) - \nabla f(y) = H(x-y).
    \end{equation}
    Therefore,
    \begin{equation}
        \begin{aligned}
            - \left\langle x-y, \nabla f(x)-\nabla f(y)\right\rangle + \frac{1}{L} \|\nabla f(x)-\nabla f(y)\|^2
            &= (x-y)^T(-H + \frac{H^2}{L})(x-y) \\
            &\leq (\tau+\frac{\tau^2}{L})\|x-y\|^2 \\
            &\leq 2\tau\|x-y\|^2.
        \end{aligned}
    \end{equation}
\end{proof}

\begin{lemma}\label{lem:moreau_1}
    If $\gamma=\frac{\lambda}{L}$, then for $z\in \Omega_0$, it holds that $\frac{f(z)-f_*}{2}\leq f_{1/L}(z)-f_*\leq f(z)-f_*$.
\end{lemma}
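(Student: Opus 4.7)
The plan is to handle the two inequalities separately, with the upper bound essentially immediate and the lower bound relying on Lemma \ref{lem:moreau_env} together with the self-bounding property $\|\nabla f(y)\|^2 \leq 2L(f(y)-f_*)$ from Assumption \ref{asp:smooth}. First I would unpack the definitions: with $\gamma=\lambda/L$ and $H=\lambda I_d$, one has $f_\gamma^\lambda(z)=f_{1/L}(z)=\min_y\{f(y)+\tfrac{L}{2}\|y-z\|^2\}$, and choosing $y=x_*$ shows $\min f_{1/L}=f_*$. Plugging $y=z$ into the definition gives the upper bound $f_{1/L}(z)-f_*\leq f(z)-f_*$.

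For the lower bound I would let $y$ denote the prox point, so that $f_{1/L}(z)-f_*=f(y)-f_*+\tfrac{L}{2}\|z-y\|^2$. Since $z\in\Omega_0$ means $f_{1/L}(z)-\min f_{1/L}\leq 2\Delta$, Lemma \ref{lem:moreau_env} applies and yields $y,z\in\Omega_0\subset\Omega$ together with the identity $\nabla f(y)=L(z-y)$. Then I would apply $L$-smoothness of $f$ on $\Omega$:
\begin{equation*}
f(z)\leq f(y)+\langle\nabla f(y),z-y\rangle+\tfrac{L}{2}\|z-y\|^2 = f(y)+\tfrac{3L}{2}\|z-y\|^2,
\end{equation*}
so that $f(z)-f_*\leq f_{1/L}(z)-f_*+L\|z-y\|^2$.

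To close the bound I would use $\|\nabla f(y)\|^2\leq 2L(f(y)-f_*)$, which combined with $\nabla f(y)=L(z-y)$ gives $\tfrac{L}{2}\|z-y\|^2\leq f(y)-f_*$. Adding $\tfrac{L}{2}\|z-y\|^2$ to both sides produces $L\|z-y\|^2\leq f_{1/L}(z)-f_*$, and substituting back yields $f(z)-f_*\leq 2(f_{1/L}(z)-f_*)$, as desired.

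No step is really hard; the only thing to be careful about is making sure $y$ lies in the set $\Omega$ on which smoothness and the self-bounding property are assumed, which is exactly what Lemma \ref{lem:moreau_env} provides. Everything else is a short algebraic manipulation of the Moreau envelope's defining identity.
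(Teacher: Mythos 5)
Your proof is correct and follows essentially the same route as the paper: both use the prox identity $\nabla f(y)=L(z-y)$ together with the self-bounding inequality $\|\nabla f(y)\|^2\leq 2L(f(y)-f_*)$ to obtain $L\|z-y\|^2\leq f_{1/L}(z)-f_*$, and then close via the quadratic upper bound $f(z)\leq f_{1/L}(z)+L\|z-y\|^2$. The only cosmetic difference is that the paper derives the latter inequality from $2L$-smoothness of the proximal objective $x\mapsto f(x)+\tfrac{L}{2}\|x-z\|^2$, whereas you derive it directly from $L$-smoothness of $f$ plus the first-order stationarity at $y$; these are equivalent.
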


\begin{proof}
    By definition of Moreau envelop, the second inequality is trivial.
    Let $y=\arg\min_x f(x)+\frac{L}{2}\|x-z\|^2$.
    Note that $x\to f(x)+\frac{L}{2}\|x-z\|^2$ is $2L$-smooth.
    Then we have
    \begin{equation}
        f(z) \leq f(y) + \frac{L}{2} \|y-z\|^2 + L \|y-z\|^2 = f_{1/L}(z) + L \|y-z\|^2.
    \end{equation}
    % \begin{equation}
    %     \begin{aligned}
    %         f_{1/L}(z)=f(y)+\frac{L}{2}\|z-y\|^2
    %         &\geq f(z) + \langle \nabla f(z), y-z\rangle + \frac{L-\tau}{2} \|y-z\|^2 \\
    %         &\geq f(y) - \frac{1}{2(L-\tau)}\|\nabla f(z)\|^2.
    %     \end{aligned}
    % \end{equation}
    Furthermore, by Lemma \ref{lem:moreau_env} 
    \begin{equation}
        \frac{L}{2}\|y-z\|^2 = \frac{1}{2L} \|\nabla f(y)\|^2\leq f(y)-f_*. 
    \end{equation}
    % \begin{equation}
    %     f_{1/L}(z)=f(y)+\frac{L}{2}\|z-y\|^2=f(y) + \frac{1}{2L}\|\nabla f(y)\|^2\geq f_* + \frac{1}{2L}\|\nabla f(y)\|^2.
    % \end{equation}
    % Therefore, $f_{1/L}(z)-f_*\geq \frac{L-\tau}{2L-\tau}(f(z)-f_*)\geq \frac{f(z)-f_*}{3}$.
    Therefore, $f(z)-f_*\leq f_{1/L}(z)-f_* + L\|y-z\|^2 \leq 2(f_{1/L}(z)-f_*)$.
\end{proof}

    Next, we show that event $E_t$ implies all the iterates remain in certain area.

\begin{lemma}\label{lem:in_ball}
    If $\frac{\eta\sigma}{\lambda}\sqrt{KA}\leq \sqrt{\frac{\Delta\gamma}{160\lambda}}$, then event $E_t$ implies that for all $j\leq t, m\in[M]$, we have $\overline{z}_j\in\Omega_0, x_j^m, \overline{x}_j, z_j^m\in \Omega$. And $\|x_j^m-x_j^n\| \leq \frac{\eta\sigma}{\lambda}\sqrt{KA}$ for all $m,n$.
\end{lemma}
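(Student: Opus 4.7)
The lemma is a bookkeeping argument that decodes the four sub-events making up $E_t$ and threads them through the convex-combination structure of the iterates. The only non-routine ingredient is the monotonicity $H \succeq \lambda I_d \Rightarrow f_\gamma^H(x) \geq f_\gamma^\lambda(x)$ of the generalized Moreau envelope, which follows from the pointwise inequality $\|y-x\|_H^2 \geq \lambda\|y-x\|^2$ inside the $\min_y$ defining the envelope.

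First I would certify $\overline{z}_j \in \Omega_0$ for all $j \leq t$. For $j \geq 1$, the event $\mathcal{A}_{j-1,4} \subset E_t$ gives $f_\gamma^{H_{r(j)}}(\overline{z}_j) - \min f_\gamma^\lambda \leq 2\Delta$, and combining with the envelope monotonicity yields $f_\gamma^\lambda(\overline{z}_j) - \min f_\gamma^\lambda \leq 2\Delta$, which is exactly the defining inequality of $\Omega_0$. The base case $j=0$ is immediate since $\overline{z}_0 = x_0 \in \Omega_0$ by definition of $\Delta$. Next I would bound $\|z_j^m - z_j^n\|$ via $\mathcal{A}_{j-1,3}$, which gives $\|z_j^m - z_j^n\|_{H_{r(j-1)}}^2 \leq \eta^2\sigma^2 KA/\lambda$; since $H_{r(j-1)} \succeq \lambda I_d$, this reduces to $\|z_j^m - z_j^n\| \leq \eta\sigma\sqrt{KA}/\lambda$, which is $\leq R_0$ by the hypothesis on $\eta$. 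Jensen's inequality then gives $\|z_j^m - \overline{z}_j\| \leq R_0$, placing each $z_j^m$ in $\ball_{R_0}(\Omega_0) \subset \Omega$.

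Finally I would lift everything from the $z$-sequence to the $x$-sequence using the identities $x_{t+1}^m = (1-\beta_1)z_{t+1}^m + \beta_1 x_t^m$ within a round and $x_{t+1}^m = (1-\beta_1)z_{t+1}^m + \beta_1 \overline{x}_t$ at a synchronization step; both express $x_{t+1}^m$ as a convex combination of previously-controlled quantities. Induction on $j$ then writes every $x_j^m$ (and hence $\overline{x}_j$) as a convex combination of elements of $\{z_i^{m'}\}_{i\leq j, m'\in[M]} \cup \{x_0\}$, all of which lie in the convex set $\Omega$. For the consensus bound, at each synchronization step $x_{rK}^m$ is the same across workers, so the difference $x_{rK}^m - x_{rK}^n$ vanishes; unrolling the within-round recursion $x_{t+1}^m - x_{t+1}^n = (1-\beta_1)(z_{t+1}^m - z_{t+1}^n) + \beta_1(x_t^m - x_t^n)$ starting from zero expresses $x_j^m - x_j^n$ as a convex combination of $\{z_i^m - z_i^n\}_{i \text{ in round}}$, yielding $\|x_j^m - x_j^n\| \leq \max_i \|z_i^m - z_i^n\| \leq \eta\sigma\sqrt{KA}/\lambda$.

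No substantive obstacle arises; the argument reduces to carefully matching event indices ($\mathcal{A}_{j-1,\cdot}$ controls quantities at time $j$) and invoking envelope monotonicity plus the convex-combination structure of the update. The only subtlety is keeping the boundary case $j=0$ and the synchronization steps (where the recursion changes form) straight, but in both situations the required inclusions hold trivially.
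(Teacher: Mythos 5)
Your proof is correct and follows essentially the same route as the paper's: decode $\mathcal{A}_{j-1,4}$ plus the envelope monotonicity $f_\gamma^{H}\geq f_\gamma^\lambda$ (for $H\succeq\lambda I_d$) to place $\overline{z}_j$ in $\Omega_0$, decode $\mathcal{A}_{j-1,3}$ plus $H_{r(j-1)}\succeq\lambda I_d$ to bound $\|z_j^m-z_j^n\|\leq\eta\sigma\sqrt{KA}/\lambda\leq R_0$, then propagate to $x_j^m,\overline{x}_j$ via the convex-combination structure of the updates. The only cosmetic difference is in the final step: you conclude $x_j^m\in\Omega$ by writing it as a convex combination of $\{z_i^{m'}\}\cup\{x_0\}\subset\Omega$ and invoking convexity of $\Omega=\conv(\ball_{R_0}(\Omega_0))$ directly, whereas the paper instead shows $\overline{x}_j\in\conv(\Omega_0)$ and $\|x_j^m-\overline{x}_j\|\leq R_0$ and then appeals to Lemma \ref{lem:conv_ball} ($\ball_{R_0}(\conv(\Omega_0))=\conv(\ball_{R_0}(\Omega_0))$); both arguments are valid and equally elementary.
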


\begin{proof}
    Event $E_t$ implies that for all $j\leq t$,
    \begin{equation}
        f_\gamma^\lambda(\overline{z}_j)-\min f_\gamma^\lambda \leq 2\Delta,\ \|z_j^m-z_j^n\| \leq \frac{\eta\sigma}{\lambda}\sqrt{KA}\leq \sqrt{\frac{\Delta\gamma}{160\lambda}}.
    \end{equation}
    Hence $\overline{z}_j\in \Omega_0, \|z_j^m-\overline{z}_j\|\leq \frac{\eta\sigma}{\lambda}\sqrt{KA}$ and $z_j^m\in \ball_{R_0}(\Omega_0)\subset \Omega$. 
    Also, notice that $\overline{x}_j\in \conv\{\overline{z}_i\}_{i\leq j}\subset \conv(\Omega_0)\subset\Omega$ and $x_j^m-x_j^n\in \conv \{z_i^m-z_i^n\}_{i\leq j}$. 
    We have 
    \begin{equation}
        \|x_j^m-x_j^n\| \leq \frac{\eta\sigma}{\lambda}\sqrt{KA},\ \|x_j^m-\overline{x}_j\| \leq \frac{\eta\sigma}{\lambda}\sqrt{KA}\leq \sqrt{\frac{\Delta\gamma}{160\lambda}}.
    \end{equation}
    Therefore by Lemma \ref{lem:conv_ball}, $x_j^m\in \ball_{R_0}(\conv(\Omega_0)) = \Omega$.
\end{proof}

    The following lemma shows that the second order momentum $v_t^m$ does not change too much from $v_{r(t)}$ during local training with high probability, which is also repeatedly used in our proof.

\begin{lemma}\label{lem:H_ratio}
    Let $B:= \max\left\{\frac{6K(G_\infty^2+\sigma_\infty^2)}{\lambda^2}, \frac{16\rho^2}{\lambda^2}\log\frac{dMT}{\delta}, 2^6\frac{\sqrt{K}(G_\infty+\sigma_\infty)\sigma_\infty}{\lambda^2}\log^{1/2}\frac{dMT}{\delta}\right\}$. If $\rho\geq \max\{3\sigma_\infty, 2G_\infty\}$, then the following holds
    \begin{equation}
        \prob (E_{t,1}) \geq \prob(E_t) - \frac{\delta}{4T}.
    \end{equation}
\end{lemma}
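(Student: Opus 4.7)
The lemma has two sides. The lower bound is deterministic: unrolling the recursion within the current local round gives
$$v_t^m = \beta_2^{k(t)+1}v_{r(t)} + (1-\beta_2)\sum_{j=r(t)K}^{t}\beta_2^{t-j}\widehat{g_j^m}^2 \succeq \beta_2^{K}v_{r(t)},$$
and since $\beta_2 \le 1$, we get $v_t^m + \lambda^2 \succeq \beta_2^{K}(v_{r(t)}+\lambda^2)$, hence $H_t^m \succeq \beta_2^{K/2} H_{r(t)}$, with no probabilistic content.

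For the upper bound it suffices to prove the coordinatewise inequality $[v_t^m]_i - [v_{r(t)}]_i \le 2(1-\beta_2) B \lambda^2$: this yields $H_t^m \preceq \sqrt{1+2(1-\beta_2)B}\, H_{r(t)} \preceq (1+(1-\beta_2)B)H_{r(t)}$. Since the ``$(\beta_2^{k+1}-1)v_{r(t)}$'' contribution is nonpositive, it is enough to control $(1-\beta_2)\sum_{j=r(t)K}^{t}\beta_2^{t-j}[\widehat{g_j^m}]_i^2$. I split each summand as
$$[\widehat{g_j^m}]_i^2 = \mathbb{E}_j[\widehat{g_j^m}]_i^2 + \bigl([\widehat{g_j^m}]_i^2 - \mathbb{E}_j[\widehat{g_j^m}]_i^2\bigr),$$
and treat the two pieces separately.

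For the conditional-mean piece, event $E_t$ combined with Lemma~\ref{lem:in_ball} guarantees $x_j^m \in \Omega$ for $j \le t$, so $|[\nabla f(x_j^m)]_i| \le G_\infty$. Since $\rho \ge \max\{3\sigma_\infty,2G_\infty\}$, Lemma~\ref{lem:clip} bounds the clipping bias and the clipped variance by $\mathcal{O}(\sigma_\infty^2)$, so $\mathbb{E}_j[\widehat{g_j^m}]_i^2 \lesssim G_\infty^2 + \sigma_\infty^2$. Summing over the at most $K$ terms in the round accounts exactly for the first defining term of $B$. For the martingale piece, I zero out the increment outside $E_j$ to obtain a bounded martingale difference sequence with almost-sure bound $c = \rho^2$. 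Expanding
$$X^2 - \mathbb{E}X^2 = (X-\mathbb{E}X)^2 - \mathbb{E}(X-\mathbb{E}X)^2 + 2\mathbb{E}X\cdot(X-\mathbb{E}X)$$
and invoking the $\alpha=4$ case of Lemma~\ref{lem:clip} yields conditional variance $\lesssim \sigma_\infty^2(G_\infty+\sigma_\infty)^2$ per term, hence total conditional variance $V \lesssim K\sigma_\infty^2(G_\infty+\sigma_\infty)^2$. Applying Lemma~\ref{lem:martingale} with deviation $b \asymp B\lambda^2$ and union-bounding over $(i,m) \in [d]\times[M]$ delivers failure probability $\le \delta/(4T)$ provided $b^2/V \gtrsim \log(dMT/\delta)$ and $b/c \gtrsim \log(dMT/\delta)$; these two conditions pin down the second and third defining terms of $B$.

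The main obstacle is the conditional-variance computation: the naive bound $X^4 \le \rho^2 X^2$ would force $V$ to scale with $\rho^2$ rather than $\sigma_\infty^2$, inflating the third term of $B$ to a $\rho$-dependent quantity and breaking the downstream bounds in Theorem~\ref{thm:local_adam}. The expansion above, together with the fourth-moment control of clipped noise from Lemma~\ref{lem:clip}, is what produces the sharper $\sigma_\infty(G_\infty+\sigma_\infty)$ scaling; Assumption~\ref{asp:moment_noise} with $\alpha \ge 4$ is precisely what licenses this step.
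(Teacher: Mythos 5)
Your proposal is correct and follows essentially the same route as the paper: a deterministic lower bound from unrolling $v_t^m$ within the round, a split of the upper bound into the conditional mean (bounded via the clipping lemma and $x_j^m\in\Omega$) and a martingale difference zeroed outside $E_j$, Freedman's inequality with union bound over $(i,m)\in[d]\times[M]$, and crucially the algebraic expansion of $X^2$ to get conditional variance $\lesssim \sigma_\infty^2(G_\infty+\sigma_\infty)^2$ rather than $\rho^2\sigma_\infty^2$. The only cosmetic difference is that the paper centers the variance computation at $[\nabla f(x_j^m)]_i$ (writing $\Var_j([\theta_j^m]_i)\le \expect_j\bigl[[\widehat{g_j^m}]_i^2-[\nabla f(x_j^m)]_i^2\bigr]^2$ and then factoring) whereas you center at $\expect_j[\widehat{g_j^m}]_i$; both give the same $\sigma_\infty$-scaled bound via the fourth-moment control from Lemma~\ref{lem:clip} with $\alpha\ge 4$, which you correctly single out as the load-bearing step.
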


\begin{proof}
    Let $t=rK+k$. By the update rule of local Adam, we have
    \begin{equation}
        v_t^m = \beta_2^{k+1}v_r + (1-\beta_2)\sum_{j=rK}^{t}\beta_2^{t-j}\widehat{g_j^m}\odot \widehat{g_j^m} \succeq \beta_2^{K}v_r,
    \end{equation}
    and hence 
    \begin{equation}
        H_t^m = \diag(\sqrt{v_t^m + \lambda^2}) \succeq \beta_2^{K/2} \diag(\sqrt{v_r + \lambda^2}) = \beta_2^{K/2}H_r.
    \end{equation}
    For the upper bound, for any index $i\in [d]$, by Lemma \ref{lem:clip},
    \begin{equation}
        \expect_j [\widehat{g_j^m}]_i^2 \leq \sigma_i^2 + [\expect_j [\widehat{g_j^m}]_i]^2 \leq \sigma_\infty^2 + 3G_{\infty}^2.
    \end{equation}
    Therefore,
    \begin{equation}
        [v_t^m]_i \leq [v_r]_i + (1-\beta_2)K(\sigma_\infty^2 + 3G_{\infty}^2)+(1-\beta_2)\sum_{j=rK}^t\left[[\widehat{g_j^m}]_i^2 - \expect_j [\widehat{g_j^m}]_i^2\right].
    \end{equation}
    Define 
    \begin{equation}
        [\theta_j^m]_i = \left\{
            \begin{array}{ll}
                [\widehat{g_j^m}]_i^2 - \expect_j [\widehat{g_j^m}]_i^2, & \text{if event $E_j$ holds,} \\
                0, & \text{otherwise}.
            \end{array}
        \right.
    \end{equation}
    Event $E_t$ implies $[\theta_j^m]_i=[\widehat{g_j^m}]_i^2 - \expect_j [\widehat{g_j^m}]_i^2$. 
    Further note that $|[\theta_j^m]_i| \leq \rho^2 \overset{def}{=}c$,
    % \begin{equation}
    %     |[\theta_j^m]_i| \leq \rho^2 \overset{def}{=}c,
    % \end{equation}
    \begin{equation}
        \begin{aligned}
            \Var_j([\theta_j^m]_i) 
            &\leq \expect_j \left[[\widehat{g_j^m}]_i^2 - [\nabla f(x_j^m)]_i^2\right]^2 \\
            &= \expect_j \left[[\widehat{g_j^m}]_i - [\nabla f(x_j^m)]_i\right]^2\left[[\widehat{g_j^m}]_i - [\nabla f(x_j^m)]_i+2[\nabla f(x_j^m)]_i\right]^2 \\
            &\overset{\text{AM-GM}}{\leq} 2 \expect_j \left[[\widehat{g_j^m}]_i - [\nabla f(x_j^m)]_i\right]^4 + 8\expect_j \left[[\widehat{g_j^m}]_i - [\nabla f(x_j^m)]_i\right]^2[\nabla f(x_j^m)]_i^2 \\
            &\overset{\text{Lemma \ref{lem:clip}}}{\leq} 2\sigma_\infty^4 + 8\sigma_\infty^2G_\infty^2.
        \end{aligned}
    \end{equation}
    Let $b=B\lambda^2/2, V=2K\sigma_\infty^2(\sigma_\infty^2+4G_\infty^2)$.
    Applying Lemma \ref{lem:martingale}, we have $|\sum_{j=rK}^t[\theta_j^m]_i|\leq b$ with probability no less than 
    \begin{equation}
        1 - 2\exp{\left( -\frac{b^2}{2V+2cb/3}\right)} \geq 1-\frac{\delta}{4dMT},
    \end{equation}
    which implies with probability no less than $1-\frac{\delta}{4T}$, for any $m\in [M]$,
    \begin{equation}
        v_t^m \preceq v_r + (1-\beta_2)K(\sigma_\infty^2 + 3G_{\infty}^2) + (1-\beta_2)B\lambda^2/2
        \preceq v_r + (1-\beta_2)B\lambda^2.
    \end{equation}
    and thus
    \begin{equation}
        H_t^m \preceq \sqrt{1+(1-\beta_2)B} H_r.
    \end{equation}
\end{proof}

\subsection{Proof of Contraction}

    In this subsection, we aim to show contraction, \ie, $\|x_t^m-x_t^n\|$ will not get too large during local iterations with high probability. However, since the update of $x_t^m$ involves the coupling of both first order momentum and second order momentum, it is much harder than showing the contraction of \em Local \em SGDM. Our solution below is in two folds. 

    We begin with showing contraction of the second order momentum in some sense.

\begin{lemma}\label{lem:contraction_H}
    Let $B_1:=\max\left\{\frac{16K\sigma_\infty^2}{\lambda^2}, \frac{16\rho^2}{\lambda^2}\log\frac{dMT}{\delta}, 2^6\frac{\sqrt{K}(G_\infty+\sigma_\infty)\sigma_\infty}{\lambda^2}\log^{1/2}\frac{dMT}{\delta}\right\}$ and $1-\beta_2\leq \frac{1}{4K}$.
    If $\rho\geq \max\{3\sigma_\infty, 2G_\infty\}, \frac{\eta L\sigma}{\lambda}\sqrt{KA}G_{\infty}\leq 2\sigma_{\infty}^2$, then the following holds:
    \begin{equation}
        \prob (E_{t,2})\geq \prob(E_{t,1})-\frac{\delta}{4T}
    \end{equation}
\end{lemma}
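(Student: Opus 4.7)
The plan is to reduce the spectral norm of the diagonal matrix $H_r((H_t^m)^{-1}-(H_t^n)^{-1})$ to a coordinate-wise bound on $|[v_t^m]_i-[v_t^n]_i|$ scaled by $([v_r]_i+\lambda^2)^{-1}$, and then control this scaled difference by a martingale-plus-bias decomposition on the local window $[rK,t]$. Since $H_t^m$ is diagonal, a direct manipulation
$$
\bigl|[H_r((H_t^m)^{-1}-(H_t^n)^{-1})]_{ii}\bigr| = \frac{\sqrt{[v_r]_i+\lambda^2}\,|[v_t^n]_i-[v_t^m]_i|}{\sqrt{([v_t^m]_i+\lambda^2)([v_t^n]_i+\lambda^2)}\bigl(\sqrt{[v_t^m]_i+\lambda^2}+\sqrt{[v_t^n]_i+\lambda^2}\bigr)},
$$
combined with the lower bound $[v_t^m]_i+\lambda^2\geq \beta_2^K([v_r]_i+\lambda^2)$ guaranteed by $\mathcal{A}_{t,1}$ and the elementary estimate $\beta_2^{3K/2}\geq (1-K(1-\beta_2))^{3/2}\gtrsim 1$ coming from $1-\beta_2\leq 1/(4K)$, yields
$$
\bigl|[H_r((H_t^m)^{-1}-(H_t^n)^{-1})]_{ii}\bigr|\lesssim \frac{|[v_t^m]_i-[v_t^n]_i|}{[v_r]_i+\lambda^2}.
$$
So it suffices to prove $|[v_t^m]_i-[v_t^n]_i|\leq (1-\beta_2)B_1\lambda^2$ simultaneously for all $i\in[d]$ and $m,n\in[M]$.

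Using the recursion $v_t^m=\beta_2^{k+1}v_r+(1-\beta_2)\sum_{j=rK}^{t}\beta_2^{t-j}\widehat{g_j^m}^2$, decompose
\begin{equation*}
[v_t^m]_i-[v_t^n]_i = (1-\beta_2)\sum_{j=rK}^{t}\beta_2^{t-j}\bigl(\widehat{Z}_j^{m,i}-\widehat{Z}_j^{n,i}\bigr)+(1-\beta_2)\sum_{j=rK}^{t}\beta_2^{t-j}\bigl(\expect_j[\widehat{g_j^m}]_i^2-\expect_j[\widehat{g_j^n}]_i^2\bigr),
\end{equation*}
where $\widehat{Z}_j^{m,i}:=[\widehat{g_j^m}]_i^2-\expect_j[\widehat{g_j^m}]_i^2$ are martingale differences. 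For the martingale part, apply Lemma~\ref{lem:martingale} to the truncated sequence (multiplied by $\mathbb{I}_{E_j}$ as in Lemma~\ref{lem:H_ratio}): almost surely $|\widehat{Z}_j^{m,i}|\leq\rho^2$, and using $\expect_j(\widehat{g_j^m}-\nabla f(x_j^m))^2\leq\sigma_\infty^2$ together with $|\nabla f(x_j^m)|\leq G_\infty$ on $\Omega$, the conditional variance is bounded by $O(\sigma_\infty^4+G_\infty^2\sigma_\infty^2)$. Tuning $b,V$ exactly as in Lemma~\ref{lem:H_ratio} and taking a union bound over $d$ coordinates and $M^2$ pairs produces, with probability at least $\prob(E_{t,1})-\delta/(4T)$, a martingale bound of order $(1-\beta_2)\bigl(\rho^2\log(dMT/\delta)+\sqrt{K}(G_\infty+\sigma_\infty)\sigma_\infty\log^{1/2}(dMT/\delta)\bigr)$, which is absorbed by the second and third entries in the definition of $B_1$.

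For the bias part, apply the triangle inequality:
$$
|\expect_j[\widehat{g_j^m}]_i^2-\expect_j[\widehat{g_j^n}]_i^2|\leq \Delta_m+\bigl|[\nabla f(x_j^m)]_i^2-[\nabla f(x_j^n)]_i^2\bigr|+\Delta_n,
$$
where $\Delta_m:=|\expect_j[\widehat{g_j^m}]_i^2-[\nabla f(x_j^m)]_i^2|$. Writing $\expect_j[\widehat{g_j^m}]_i^2=\Var_j[\widehat{g_j^m}]_i+(\expect_j[\widehat{g_j^m}]_i)^2$ and invoking Lemma~\ref{lem:clip} (legal since $\rho\geq 3\sigma_\infty\vee 2G_\infty$) gives $\Delta_m\lesssim \sigma_\infty^2+G_\infty\sigma_\infty$. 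The middle term is $\leq 2G_\infty L\|x_j^m-x_j^n\|\leq 2G_\infty L\cdot\tfrac{\eta\sigma}{\lambda}\sqrt{KA}\leq 4\sigma_\infty^2$ by Lemma~\ref{lem:in_ball} (implied by $E_{t,1}$) and the hypothesis $\tfrac{\eta L\sigma}{\lambda}\sqrt{KA}G_\infty\leq 2\sigma_\infty^2$. Hence the pointwise bias is $O(\sigma_\infty^2+G_\infty\sigma_\infty)$, and after summing against the geometric weights (whose total is at most $K(1-\beta_2)\leq \tfrac14$ under $1-\beta_2\leq 1/(4K)$) we obtain a bias bound of order $(1-\beta_2)K\sigma_\infty^2/\lambda^2\cdot\lambda^2$ absorbed by the first entry of $B_1$.

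The main obstacle is the bias term: unlike the martingale contribution, it is not automatically $O(1-\beta_2)$-small, so one must exploit the window length constraint $K(1-\beta_2)\lesssim 1$ and the contraction $\|x_j^m-x_j^n\|\lesssim \eta\sigma\sqrt{KA}/\lambda$ from $\mathcal{A}_{t-1,3}\subset E_{t,1}$ in concert — precisely the purpose of the somewhat unusual hypothesis $\tfrac{\eta L\sigma}{\lambda}\sqrt{KA}G_\infty\leq 2\sigma_\infty^2$. Once both pieces are in place, combining the high-probability martingale bound with the deterministic bias bound and matching against the three max-entries in $B_1$ completes the proof.
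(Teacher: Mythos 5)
Your proposal follows the same approach as the paper's proof: both reduce the coordinate-wise bound on $\|H_r((H_t^m)^{-1}-(H_t^n)^{-1})\|$ to a bound on $|[v_t^m]_i-[v_t^n]_i|$ via the same explicit ratio formula (using $\mathcal{A}_{t,1}$ and $1-\beta_2\leq 1/(4K)$ to control the denominators), and both then decompose $v_t^m-v_t^n$ into a martingale part handled by Lemma~\ref{lem:martingale} and a bias/gradient-difference part handled by Lemma~\ref{lem:clip} together with the consensus bound $\|x_j^m-x_j^n\|\leq \tfrac{\eta\sigma}{\lambda}\sqrt{KA}$ from $E_{t,1}$. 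The three max-entries of $B_1$ are matched to the three pieces in exactly the same way.

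One small imprecision: when you bound the pointwise clipping bias you write $\Delta_m\lesssim \sigma_\infty^2+G_\infty\sigma_\infty$ and then absorb it directly into the $\tfrac{16K\sigma_\infty^2}{\lambda^2}$ entry of $B_1$, which implicitly requires $G_\infty\sigma_\infty\lesssim\sigma_\infty^2$ — not something the hypotheses give you directly. You already have the ingredients to close this: since the cross term is $2G_\infty\cdot\frac{(2\sigma_\infty)^\alpha}{\rho^{\alpha-1}}$, using $\rho\geq 2G_\infty$ gives $2G_\infty\cdot\frac{(2\sigma_\infty)^\alpha}{\rho^{\alpha-1}}\leq\frac{(2\sigma_\infty)^\alpha}{\rho^{\alpha-2}}$, and then $\rho\geq 3\sigma_\infty$ with $\alpha\geq 4$ yields $\frac{(2\sigma_\infty)^\alpha}{\rho^{\alpha-2}}\leq 9(2/3)^\alpha\sigma_\infty^2\leq 2\sigma_\infty^2$. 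The paper implicitly does this in reaching its $(1-\beta_2)K\cdot 4\sigma_\infty^2$ bound. With that correction your argument is complete.
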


\begin{proof}
    Event $E_{t,1}$ implies for all $j\leq t$, $x_j^m,x_j^n\in \Omega$ and for any index $i\in [d]$,
    \begin{equation}
        \begin{aligned}
            \bigg|[v_t^m-v_t^n]_i\bigg|
            &= \bigg|(1-\beta_2)\sum_{j=rK}^t\beta_2^{t-j}\left[[\widehat{g_j^m}]_i^2-[\widehat{g_j^n}]_i^2\right]\bigg| \\
            &\leq \bigg|(1-\beta_2)\sum_{j=rK}^t\beta_2^{t-j}\left[[\widehat{g_j^m}]_i^2-[\widehat{g_j^n}]_i^2-\expect_j\left[[\widehat{g_j^m}]_i^2-[\widehat{g_j^n}]_i^2\right]\right]\bigg| \\
            &\qquad + \bigg|(1-\beta_2)\sum_{j=rK}^t\beta_2^{t-j}\left[\expect_j\left[[\widehat{g_j^m}]_i^2-[\widehat{g_j^n}]_i^2\right]-\left[[\nabla f(x_j^m)]_i^2-[\nabla f(x_j^n)]_i^2\right]\right]\bigg| \\
            &\qquad + \bigg|(1-\beta_2)\sum_{j=rK}^t\beta_2^{t-j}\left[[\nabla f(x_j^m)]_i^2-[\nabla f(x_j^n)]_i^2\right]\bigg| \\
            &\leq \bigg|(1-\beta_2)\sum_{j=rK}^t\beta_2^{t-j}\left[[\widehat{g_j^m}]_i^2-[\widehat{g_j^n}]_i^2-\expect_j\left[[\widehat{g_j^m}]_i^2-[\widehat{g_j^n}]_i^2\right]\right]\bigg| \\
            &\qquad + (1-\beta_2)K\cdot 4\sigma_\infty^2 + (1-\beta_2)K\cdot 2G_{\infty}\frac{\eta L\sigma}{\lambda}\sqrt{KA} \\
            &\leq \bigg|(1-\beta_2)\sum_{j=rK}^t\beta_2^{t-j}\left[[\widehat{g_j^m}]_i^2-[\widehat{g_j^n}]_i^2-\expect_j\left[[\widehat{g_j^m}]_i^2-[\widehat{g_j^n}]_i^2\right]\right]\bigg| + 8(1-\beta_2)K\cdot \sigma_\infty^2.
        \end{aligned}
    \end{equation}
    Here in the second inequality we apply Lemma \ref{lem:clip} and contraction results implied by $E_{t,1}$.
    
    Define
    \begin{equation}
        [\Xi_j^{m,n}]_i = \left\{\
        \begin{array}{ll}
            \beta_2^{t-j}\left[[\widehat{g_j^m}]_i^2-[\widehat{g_j^n}]_i^2-\expect_j\left[[\widehat{g_j^m}]_i^2-[\widehat{g_j^n}]_i^2\right]\right], & \text{if event $E_j$ holds,} \\
            0, & \text{otherwise.} 
        \end{array}
        \right.
    \end{equation}
    Then we have
    \begin{equation}
        \bigg|[\Xi_j^{m,n}]_i\bigg|\leq 2\rho^2\overset{def}{=}c,
    \end{equation}
    \begin{equation}
        \begin{aligned}
            \Var_j([\Xi_j^{m,n}]_i)
            &\leq 2\expect_j \left[[\widehat{g_j^m}]_i^2-\expect_j[\widehat{g_j^m}]_i^2\right]^2 \\
            &\leq 2\expect_j \left[[\widehat{g_j^m}]_i^2-[\nabla f(x_j^m)]_i^2\right]^2 \\
            &\leq 4\expect_j \left[[\widehat{g_j^m}]_i-[\nabla f(x_j^m)]_i\right]^2\cdot\left[\left[[\widehat{g_j^m}]_i-[\nabla f(x_j^m)]_i\right]^2 + 4[\nabla f(x_j^m)]_i^2\right]  \\
            &\overset{\text{Lemma \ref{lem:clip}}}{\leq} 4\sigma_\infty^4+16\sigma_\infty^2G_{\infty}^2.
        \end{aligned}
    \end{equation}
    Let $b=B_1\lambda^2/2$, $V=4K\sigma_{\infty}^2(\sigma_\infty^2+4G_\infty^2)$ and by Lemma \ref{lem:martingale}, we have $|\sum_{j=rK}^t [\Xi_j^{m,n}]_i|\leq b$ with probability no less than
    \begin{equation}
        1 - 2\exp{\left(\frac{b^2}{2V+2cb/3}\right)} \geq 1-\frac{\delta}{4dM^2T}.
    \end{equation}
    This implies with probability no less than $1-\frac{\delta}{4M^2T}$,
    \begin{equation}
        \bigg|v_t^m-v_t^n\bigg| \preceq (1-\beta_2)B_1\lambda^2/2+8(1-\beta_2)K\cdot \sigma_\infty^2 \preceq (1-\beta_2)B_1\lambda^2.
    \end{equation}

    Combine this inequality and event $E_{t,1}$,
    \begin{equation}
        \begin{aligned}
            \bigg| \frac{H_r}{H_t^m}-\frac{H_r}{H_t^n} \bigg|
            &= \frac{\sqrt{v_r+\lambda^2}|v_t^n-v_t^m|}{\sqrt{v_t^m+\lambda^2}\sqrt{v_t^n+\lambda^2}(\sqrt{v_t^m+\lambda^2}+\sqrt{v_t^n+\lambda^2})} \\
            &\preceq (1-\beta_2)B_1\frac{\sqrt{v_r+\lambda^2}}{(\sqrt{v_t^m+\lambda^2}+\sqrt{v_t^n+\lambda^2})} \\
            &\preceq (1-\beta_2)B_1.
        \end{aligned}
    \end{equation}
    The last inequality is due to event $E_{t,1}$ and $1-\beta_2\leq \frac{1}{4K}$.
    We can conclude that under event $E_{t,1}$, with probability no less than $1-\frac{\delta}{4T}$, the inequality above holds for any $m,n \in [M]$, which implies $\prob(E_{t,2})\geq \prob(E_{t,1})-\frac{\delta}{4T}$.
\end{proof}

    Now we are ready to prove contraction of $z_t^m$.

\begin{lemma}\label{lem:contraction}
    Let $A:=\max\left\{\frac{2^{20}\rho^2d}{K\sigma^2}\log\frac{MT}{\delta}, 2^{20}\log\frac{MT}{\delta}, \frac{2^8K\|2\boldsymbol{\sigma}\|_{2\alpha}^{2\alpha}}{\sigma^2\rho^{2(\alpha-1)}} \right\}$. 
    If $\eta\leq \min\left\{ \frac{\lambda}{60K\tau}, \frac{(1-\beta_1)^2\lambda}{64L}\right\}$, $\rho\geq \max\{3\sigma_\infty, 2G_\infty\}$, and
    \begin{equation}
        (1-\beta_2)K^{1/2}\leq \min\left\{\frac{(1-\beta_1)}{4B_1}, \frac{(1-\beta_1)\sigma}{2^{12}B_1G}\sqrt{A}, \frac{1-\beta_1}{4B} \right\},
    \end{equation}
    then the following holds:
    \begin{equation}
        \prob (E_{t,3})\geq \prob(E_{t,2})-\frac{\delta}{4T}.
    \end{equation}
\end{lemma}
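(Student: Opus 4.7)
\textbf{Proof proposal for Lemma \ref{lem:contraction}.} My plan is to mimic the SGDM contraction proof (Lemma \ref{lem:contraction_sgdm}) but with the preconditioner $H_r$ playing the role of the identity, using events $\mathcal{A}_{t,1}$ and $\mathcal{A}_{t,2}$ to control the discrepancy between $H_t^m$, $H_t^n$ and $H_r$. Starting from the identity $z_{t+1}^m-z_t^m=-\eta (H_t^m)^{-1}(\widehat{g_t^m}+e_t^m)$ from \eqref{eq:diff_zt}, I would expand
\[
\|z_{t+1}^m-z_{t+1}^n\|_{H_r}^2=\|z_t^m-z_t^n\|_{H_r}^2-2\eta\langle z_t^m-z_t^n, H_r(H_t^m)^{-1}\widehat{g_t^m}-H_r(H_t^n)^{-1}\widehat{g_t^n}\rangle+\mathcal{O}(\eta^2),
\]
then split the cross term into a ``main'' piece $-2\eta\langle z_t^m-z_t^n,\widehat{g_t^m}-\widehat{g_t^n}\rangle$ and ``correction'' pieces involving $(H_r(H_t^m)^{-1}-I)\widehat{g_t^m}$ and $H_r((H_t^m)^{-1}-(H_t^n)^{-1})\widehat{g_t^n}$. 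Under $\mathcal{A}_{t,1}$ the first correction is of order $(1-\beta_2)B\|\widehat{g_t^m}\|$ and under $\mathcal{A}_{t,2}$ the second is of order $(1-\beta_2)B_1\|\widehat{g_t^n}\|$; the hypotheses on $(1-\beta_2)K^{1/2}$ ensure these are small enough to be absorbed by the final budget $\eta^2\sigma^2KA/\lambda$.

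For the main piece, I would replace $\widehat{g_t^m}-\widehat{g_t^n}$ by $\expect_t[\widehat{g_t^m}-\widehat{g_t^n}]\approx \nabla f(x_t^m)-\nabla f(x_t^n)$ modulo a clipping bias controlled by Lemma \ref{lem:clip} and a martingale piece. Because $\|z_t^m-x_t^m\|=\mathcal{O}(\eta)$ and all iterates lie in $\Omega$ (by Lemma \ref{lem:in_ball}), weak convexity of $f$ plus the smoothness consequence in Lemma \ref{lem:wc_prop} give $\langle z_t^m-z_t^n,\nabla f(x_t^m)-\nabla f(x_t^n)\rangle \ge -2\tau\|z_t^m-z_t^n\|^2 + L^{-1}\|\nabla f(x_t^m)-\nabla f(x_t^n)\|^2-\mathcal{O}(\eta^2)\cdot\text{(past squared gradient terms)}$; the weak-convexity slack $\tau$ is absorbed by $\eta\le \lambda/(60K\tau)$ when summed over the round, and the quadratic $\mathcal{O}(\eta^2)$ term is absorbed using $\eta L\le (1-\beta_1)^2\lambda/64$, exactly as in the SGDM argument, except that now the $u_{t-1}^m$--dependent error from $e_t^m-e_t^n$ feeds into the momentum bookkeeping and must also be accounted for.

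To handle $e_t^m-e_t^n$ I would use $\|e_t^m\|_{H_r}\le \tfrac{\beta_1}{1-\beta_1}(1-\beta_2)B_1\|u_{t-1}^m\|$, and since $\|u_{t-1}^m\|^2\lesssim \sum_{j=rK}^{t-1}\|\widehat{g_j^m}\|^2$ this ties the bound on the first coordinate of $\mathcal{A}_{t,3}$ to the second coordinate. That is why the event also controls $\sum_{j=rK}^t\|\widehat{g_j^m}\|^2$: I would prove it in parallel by writing $\|\widehat{g_j^m}\|^2\le 2\|\nabla f(x_j^m)\|^2+2\|\widehat{g_j^m}-\nabla f(x_j^m)\|^2$, bounding the first term by $4L(f(x_j^m)-f_*)\le \mathcal{O}(L\Delta)$ via Assumption \ref{asp:smooth} and the inclusion $x_j^m\in\Omega$, and controlling the noise sum by a Bernstein martingale argument on $\|\widehat{g_j^m}-\nabla f(x_j^m)\|^2-\expect_j[\cdot]$ together with the clipping variance bound from Lemma \ref{lem:clip}. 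The threshold $\tfrac{(1-\beta_1)^2\sigma^2 A}{2^{12}(1-\beta_2)^2 B_1^2}$ is chosen precisely so that the resulting bound on $\|e_t^m-e_t^n\|_{H_r}$ is $\mathcal{O}((1-\beta_1)\sigma\sqrt{KA/K}/\lambda)$, comparable with $\|\widehat{g_t^m}-\widehat{g_t^n}\|_{H_r^{-1}}$.

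The remaining martingale $\sum_{j=rK}^t\langle z_j^m-z_j^n,\widehat{g_j^m}-\widehat{g_j^n}-\expect_j[\widehat{g_j^m}-\widehat{g_j^n}]\rangle$ in $H_r^{-1}$ norm is controlled by Lemma \ref{lem:martingale}: conditionally on $E_{t,2}$, $\|z_j^m-z_j^n\|_{H_r}\le \eta\sigma\sqrt{KA/\lambda}$ and each increment is bounded by $4\eta^2\sigma\rho\sqrt{dKA/\lambda^3}$ with conditional variance $\mathcal{O}(\eta^4\sigma^4 KA/\lambda^2)$, so $b=\Theta(\eta^2\sigma^2 KA/\lambda)$ and $V$ scale to yield failure probability $\le \delta/(4M^2T)$ thanks to the three terms comprising $A$ (the $\log(MT/\delta)$ term controls the sub-Gaussian regime, the $\rho^2 d$ term controls the sub-exponential regime, and the $K\|2\boldsymbol\sigma\|_{2\alpha}^{2\alpha}/\rho^{2(\alpha-1)}$ term absorbs the clipping bias); a union bound over the $\binom{M}{2}$ pairs and the two quantities inside $\mathcal{A}_{t,3}$ yields the stated $\delta/(4T)$. \textbf{The main obstacle} is bookkeeping the coupling between the momentum lag in $e_t^m$ and the squared-gradient sum in $\mathcal{A}_{t,3}$: $\|e_t^m\|$ is bounded by $\sum\|\widehat{g_j^m}\|^2$, which itself depends on the contraction property we are trying to establish, so the two pieces of $\mathcal{A}_{t,3}$ must be proved simultaneously by induction, with parameters $A,B,B_1$ and the ceiling on $(1-\beta_2)K^{1/2}$ tuned so that every error term fits into the target budget.
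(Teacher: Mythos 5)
Your proposal follows essentially the same route as the paper: expand $\|z_{t+1}^m-z_{t+1}^n\|_{H_r}^2$ via the $z$-update, split the cross term into a weak-convexity main piece plus denominator corrections controlled by $\mathcal{A}_{t,1}$/$\mathcal{A}_{t,2}$ and an $e_t^m$ piece, replace clipped gradients by conditional expectations (clipping bias plus Bernstein martingale), and prove both coordinates of $\mathcal{A}_{t,3}$ in tandem so that the squared-gradient sum controls the momentum error in $e_t^m$. The only minor imprecision is attributing $\|I_d-H_t^m(H_{t-1}^m)^{-1}\|$ to $B_1$ (it is a same-worker, cross-time ratio, hence controlled by $\mathcal{A}_{t,1}$ and $B$; the $B_1$ factor enters the $e_t^m-e_t^n$ difference, as in the paper's \eqref{eq:4}), but this does not change the argument.
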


\begin{proof}
    If $t \text{ mod}\ K \equiv -1$, then $z_{t+1}^m=z_{t+1}^n$ for all $m,n$ and the claim is trivial. 
    Below we assume that $t \text{ mod}\ K \not\equiv -1$.
    The update rules implies
    \begin{equation}\label{eq:contraction_expansion}
        \begin{aligned}
            \|z_{t+1}^m-z_{t+1}^n\|_{H_r}^2
            &\overset{\eqref{eq:diff_zt}}{=} \|z_t^m-z_t^n\|_{H_r}^2 - 2\eta \left\langle z_t^m-z_t^n, (H_t^m)^{-1}(\widehat{g_t^m}+e_t^m) - (H_t^n)^{-1}(\widehat{g_t^n}+e_t^n)\right\rangle_{H_r} \\
            &\qquad + \eta^2\underbrace{\left\| (H_t^m)^{-1}(\widehat{g_t^m}+e_t^m) - (H_t^n)^{-1}(\widehat{g_t^n}+e_t^n) \right\|_{H_r}^2}_{\cirone}.
        \end{aligned}
    \end{equation}
    Note that the first order term is
    \begin{equation}\label{eq:18}
        \begin{aligned}
            &\quad \left\langle z_t^m-z_t^n, (H_t^m)^{-1}(\widehat{g_t^m}+e_t^m) - (H_t^n)^{-1}(\widehat{g_t^n}+e_t^n)\right\rangle_{H_r} \\
            &= \left\langle z_t^m-z_t^n, \nabla f(x_t^m)-\nabla f(x_t^n) \right\rangle \\
            &+ \left\langle z_t^m-z_t^n, \widehat{g_t^m}-\widehat{g_t^n}-\nabla f(x_t^m)+\nabla f(x_t^n) \right\rangle \\
            &+ \underbrace{\left\langle z_t^m-z_t^n, (H_t^m)^{-1}e_t^m - (H_t^n)^{-1}e_t^n \right\rangle_{H_r}}_{\cirtwo}\\
            &+ \underbrace{\left\langle z_t^m-z_t^n, (H_r(H_t^m)^{-1}-I_d)\widehat{g_t^m} - (H_r(H_t^n)^{-1}-I_d)\widehat{g_t^n} \right\rangle}_{\cirthree}.
        \end{aligned}
    \end{equation}
    And for the first term above,
    \begin{equation}\label{eq:1}
        \begin{aligned}
            \left\langle z_t^m-z_t^n, \nabla f(x_t^m)-\nabla f(x_t^n) \right\rangle
            &= \left\langle x_t^m-x_t^n, \nabla f(x_t^m)-\nabla f(x_t^n) \right\rangle \\
            &\qquad + \left\langle z_t^m-z_t^n-(x_t^m-x_t^n), \nabla f(x_t^m)-\nabla f(x_t^n) \right\rangle \\
            &\geq \left\langle x_t^m-x_t^n, \nabla f(x_t^m)-\nabla f(x_t^n) \right\rangle \\
            &\qquad -\frac{L}{\lambda}\left\| (z_t^m-z_t^n)-(x_t^m-x_t^n) \right\|_{H_r}^2 - \frac{\lambda}{4L}\left\| \nabla f(x_t^m)-\nabla f(x_t^n) \right\|_{H_r^{-1}}^2
        \end{aligned}
    \end{equation}
    By definition of $\{z_t^m\}$ and event $E_{t,2}$,
    \begin{equation}\label{eq:2}
        \begin{aligned}
            \left\| (z_t^m-z_t^n)-(x_t^m-x_t^n) \right\|_{H_r}^2
            &= \left(\frac{\eta\beta_1}{1-\beta_1}\right)^2 \| (H_t^m)^{-1}u_t^m-(H_t^n)^{-1}u_t^n\|_{H_r}^2 \\
            &\leq 2\left(\frac{\eta\beta_1}{1-\beta_1}\right)^2 \left[ \left\| \left((H_t^m)^{-1}-(H_t^n)^{-1}\right)u_t^m\right\|_{H_r}^2 + \|(H_t^n)^{-1} (u_t^m-u_t^n) \|_{H_r^{-1}}^2\right] \\
            &\overset{\mathcal{A}_{t,1},\mathcal{A}_{t,2}}{\leq} 2\left(\frac{\eta\beta_1}{1-\beta_1}\right)^2 \left[ [(1-\beta_2)B_1]^2\|u_t^m\|_{H_r^{-1}}^2 + 4\|u_t^m-u_t^n \|_{H_r^{-1}}^2\right].
        \end{aligned}
    \end{equation}

    Besides,
    \begin{equation}\label{eq:cirone}
        \begin{aligned}
            \cirone 
            &\leq \underbrace{4\left\|(H_t^m)^{-1}e_t^m - (H_t^n)^{-1}e_t^n\right\|_{H_r}^2}_{(*)} + 4\underbrace{\left\| (H_r(H_t^m)^{-1}-I_d)\widehat{g_t^m} - (H_r(H_t^n)^{-1}-I_d)\widehat{g_t^n} \right\|_{H_r^{-1}}^2}_{(**)} \\
            &\qquad + 4 \|\widehat{g_t^m}-\widehat{g_t^n}-\nabla f(x_t^m)+\nabla f(x_t^n)\|_{H_r^{-1}}^2 + 4 \|\nabla f(x_t^m)-\nabla f(x_t^n)\|_{H_r^{-1}}^2,
        \end{aligned}
    \end{equation}
    \begin{equation}\label{eq:cirtwo}
        |\cirtwo| 
        \leq \frac{1}{8\eta K}\|z_t^m-z_t^n\|_{H_r}^2 + 2\eta K\cdot(*).
    \end{equation}
    \begin{equation}\label{eq:cirthree}
        |\cirthree| 
        \leq \frac{1}{8\eta K}\|z_t^m-z_t^n\|_{H_r}^2 + 2\eta K\cdot (**).
    \end{equation}
    \begin{equation}\label{eq:4}
        \begin{aligned}
            (*)
            &\overset{\eqref{eq:def_e}}{=}\left(\frac{\beta_1}{1-\beta_1}\right)^2\left\|\left[(H_t^m)^{-1}-(H_{t-1}^m)^{-1}\right]u_t^m-\left[(H_t^n)^{-1}-(H_{t-1}^n)^{-1}\right]u_t^n\right\|_{H_r}^2 \\
            &\leq 2\left(\frac{\beta_1}{1-\beta_1}\right)^2\left[ \left\|\left[(H_t^m)^{-1}-(H_{t-1}^m)^{-1}-(H_t^n)^{-1}+(H_{t-1}^n)^{-1}\right]u_t^m\right\|_{H_r}^2 \right. \\
            &\qquad \left. +\left\|\left[(H_t^n)^{-1}-(H_{t-1}^n)^{-1}\right](u_t^m-u_t^n)\right\|_{H_r}^2\right] \\
            &\overset{\mathcal{A}_{t,1},\mathcal{A}_{t,2}}{\leq} 2\left(\frac{\beta_1}{1-\beta_1}\right)^2\left[ 4[(1-\beta_2)B_1]^2\|u_t^m\|_{H_r^{-1}}^2 +4[(1-\beta_2)B]^2\left\|(u_t^m-u_t^n)\right\|_{H_r^{-1}}^2\right] \\
            &= 8\left(\frac{\beta_1(1-\beta_2)}{1-\beta_1}\right)^2\left[ B_1^2\|u_t^m\|_{H_r^{-1}}^2 +B^2\left\|(u_t^m-u_t^n)\right\|_{H_r^{-1}}^2\right] \\
        \end{aligned}
    \end{equation}
    \begin{equation}\label{eq:5}
        \begin{aligned}
            (**)
            &\leq 2\left[\left\| H_r((H_t^m)^{-1}-(H_t^n)^{-1})\widehat{g_t^m}\right\|_{H_r^{-1}}^2 + \left\|(H_r(H_t^n)^{-1}-I_d)(\widehat{g_t^m}-\widehat{g_t^n}) \right\|_{H_r^{-1}}^2\right] \\
            &\overset{\mathcal{A}_{t,1},\mathcal{A}_{t,2}}{\leq} 2\left[[(1-\beta_2)B_1]^2\|\widehat{g_t^m}\|_{H_r^{-1}}^2 + [(1-\beta_2)B]^2\|\widehat{g_t^m}-\widehat{g_t^m}\|_{H_r^{-1}}^2\right] \\
            &\leq 2(1-\beta_2)^2\left[ B_1^2\|\widehat{g_t^m}\|_{H_r^{-1}}^2 + 2B^2\left(\|\widehat{g_t^m}-\widehat{g_t^m}-\nabla f(x_t^m)+\nabla f(x_t^n)\|_{H_r^{-1}}^2+\|\nabla f(x_t^m)-\nabla f(x_t^n)\|_{H_r^{-1}}^2\right)\right]
        \end{aligned}
    \end{equation}
    Here we repeatedly apply $\|H_r(H_t^n)^{-1}-I_d\|\leq (1-\beta_2)B$ and $\|H_r((H_t^m)^{-1}-(H_t^n)^{-1})\|\leq (1-\beta_2)B_1$ by event $E_{t,2}$.
    Plug in \eqref{eq:contraction_expansion},
    \begin{equation}\label{eq:19}
        \begin{aligned}
            \|z_{t+1}^m-z_{t+1}^n\|_{H_r}^2
            &\leq \|z_t^m-z_t^n\|_{H_r}^2 \underbrace{\overset{\eqref{eq:18}}{-}2\eta \left\langle z_t^m-z_t^n, \widehat{g_t^m}-\widehat{g_t^n}-\nabla f(x_t^m)+\nabla f(x_t^n) \right\rangle \overset{\eqref{eq:1}}{-} 2\eta \left\langle x_t^m-x_t^n, \nabla f(x_t^m)-\nabla f(x_t^n)\right\rangle}_{(***)} \\
            &\quad \overset{\eqref{eq:1}}{+} 2\eta \left[\frac{L}{\lambda}\left\| (z_t^m-z_t^n)-(x_t^m-x_t^n) \right\|_{H_r}^2 + \frac{\lambda}{4L}\left\| \nabla f(x_t^m)-\nabla f(x_t^n) \right\|_{H_r^{-1}}^2\right] \\
            &\quad \overset{\eqref{eq:1}}{-} 2\eta\cdot(\cirtwo+\cirthree) + \eta^2\cdot\cirone  \\
            &\leq \|z_t^m-z_t^n\|_{H_r}^2 + (***) + 2\eta \left[\frac{L}{\lambda}\left\| (z_t^m-z_t^n)-(x_t^m-x_t^n) \right\|_{H_r}^2 + \frac{\lambda}{4L}\left\| \nabla f(x_t^m)-\nabla f(x_t^n) \right\|_{H_r^{-1}}^2\right] \\
            &\quad + 2\eta \left[\frac{1}{4\eta K}\|z_t^m-z_t^n\|_{H_r}^2 \overset{\eqref{eq:cirtwo}}{+} 2\eta K\cdot(*)+2\eta K\cdot(**)\right] \\
            &\quad \overset{\eqref{eq:cirone}}{+} 4\eta^2 \left[(*) + (**) + \|\widehat{g_t^m}-\widehat{g_t^n}-\nabla f(x_t^m)+\nabla f(x_t^n)\|_{H_r^{-1}}^2 + \|\nabla f(x_t^m)-\nabla f(x_t^n)\|_{H_r^{-1}}^2\right] \\
            &\leq (1+\frac{1}{2K})\|z_t^m-z_t^n\|_{H_r}^2 + (***) + \frac{2\eta L}{\lambda}\left\| (z_t^m-z_t^n)-(x_t^m-x_t^n) \right\|_{H_r}^2\\
            &\quad + (\frac{\eta}{2L}+\frac{4\eta^2}{\lambda}) \|\nabla f(x_t^m)-\nabla f(x_t^n)\|^2 + 4\eta^2\underbrace{\|\widehat{g_t^m}-\widehat{g_t^m}-\nabla f(x_t^m)+\nabla f(x_t^n)\|_{H_r^{-1}}^2}_{(\sharp)}\\
            &\quad + 8\eta^2K \left((*)+(**)\right) \\
            &\leq (1+\frac{1}{2K})\|z_t^m-z_t^n\|_{H_r}^2 \underbrace{- 2\eta \left\langle x_t^m-x_t^n, \nabla f(x_t^m)-\nabla f(x_t^n)\right\rangle + \frac{\eta}{L} \|\nabla f(x_t^m)-\nabla f(x_t^n)\|^2}_{(\sharp\sharp)} \\
            &\quad -2\eta \left\langle z_t^m-z_t^n, \widehat{g_t^m}-\widehat{g_t^n}-\nabla f(x_t^m)+\nabla f(x_t^n)\right\rangle + 8\eta^2\cdot (\sharp)\\
            &\quad \overset{\eqref{eq:2}}{+} \frac{4\eta L}{\lambda}\left(\frac{\eta\beta_1}{1-\beta_1}\right)^2 \left[ [(1-\beta_2)B_1]^2\|u_t^m\|_{H_r^{-1}}^2 + 4\|u_t^m-u_t^n \|_{H_r^{-1}}^2\right] \\
            &\quad \overset{\eqref{eq:4}}{+} 64\eta^2K\left(\frac{\beta_1(1-\beta_2)}{1-\beta_1}\right)^2\left[ B_1^2\|u_t^m\|_{H_r^{-1}}^2 +B^2\left\|(u_t^m-u_t^n)\right\|_{H_r^{-1}}^2\right] \overset{\eqref{eq:5}}{+} 16\eta^2K(1-\beta_2)^2B_1^2\|\widehat{g_t^m}\|_{H_r^{-1}}^2 \\
            &\leq (1+\frac{1}{2K})\|z_t^m-z_t^n\|_{H_r}^2 + (\sharp\sharp) + 8\eta^2\cdot(\sharp)\\
            &\quad -2\eta \left\langle z_t^m-z_t^n, \widehat{g_t^m}-\widehat{g_t^n}-\expect_t[\widehat{g_t^m}-\widehat{g_t^n}] \right\rangle - 2\eta \left\langle z_t^m-z_t^n, \expect_t[\widehat{g_t^m}-\widehat{g_t^n}]-\nabla f(x_t^m)+\nabla f(x_t^n)\right\rangle \\
            &\quad + \underbrace{24\eta^2 \|u_t^m-u_t^n \|_{H_r^{-1}}^2 + 65\eta^2K\left(\frac{\beta_1(1-\beta_2)}{1-\beta_1}\right)^2B_1^2\|u_t^m\|_{H_r^{-1}}^2 + 16\eta^2K(1-\beta_2)^2B_1^2\|\widehat{g_t^m}\|_{H_r^{-1}}^2}_{(\sharp\sharp\sharp)} \\
            &\leq (1+\frac{1}{K})\|z_t^m-z_t^n\|_{H_r}^2 + (\sharp\sharp) + 8\eta^2\cdot (\sharp) -2\eta \left\langle z_t^m-z_t^n, \widehat{g_t^m}-\widehat{g_t^n}-\expect_t[\widehat{g_t^m}-\widehat{g_t^n}]\right\rangle\\
            &\qquad + (\sharp\sharp\sharp) \overset{\text{Lemma \ref{lem:clip}}}{+} \frac{8\eta^2K}{\lambda}\cdot \frac{\|2\boldsymbol{\sigma}\|_{2\alpha}^{2\alpha}}{\rho^{2(\alpha-1)}}.
        \end{aligned}
    \end{equation}
    In the second to last inequality we apply $8K(1-\beta_2)^2B^2\leq (1-\beta_1)^2$ and $\frac{\eta L}{\lambda}\leq (1-\beta_1)^2$.
    Also notice that by definition of $\{u_t^m\}$, 
    \begin{equation}
        u_t^m=(1-\beta_1)\sum_{j=rK}^t \beta_1^{t-j}\widehat{g_j^m}+\beta_1^{t-rK+1}u_r,
    \end{equation}
    which implies
    \begin{equation}
        \|u_t^m\|_{H_r^{-1}}^2\leq (1-\beta_1)\sum_{j=rK}^t \beta_1^{t-j}\|\widehat{g_j^m}\|_{H_r^{-1}}^2+\beta_1^{t-rK+1}\|u_r\|_{H_r^{-1}}^2.
    \end{equation}
    \begin{equation}\label{eq:3}
        \begin{aligned}
            \|u_t^m-u_t^n\|_{H_r^{-1}}^2
            &\leq (1-\beta_1)\sum_{j=rK}^t \beta_1^{t-j} \|\widehat{g_j^m}-\widehat{g_j^n}\|_{H_r^{-1}}^2 \\
            &\leq 2(1-\beta_1)\sum_{j=rK}^t \beta_1^{t-j} \left[\|\nabla f(x_j^m)-\nabla f(x_j^n)\|_{H_r^{-1}}^2 + \|\widehat{g_j^m}-\widehat{g_j^n}-[\nabla f(x_j^m)-\nabla f(x_j^n)]\|_{H_r^{-1}}^2\right].
        \end{aligned}
    \end{equation}
    And thus 
    \begin{equation}\label{eq:20}
        \sum_{j=rK}^t\|u_j^m-u_j^n\|_{H_r^{-1}}^2\leq 2\sum_{j=rK}^t\left[\|\nabla f(x_j^m)-\nabla f(x_j^n)\|_{H_r^{-1}}^2 + \|\widehat{g_j^m}-\widehat{g_j^n}-[\nabla f(x_j^m)-\nabla f(x_j^n)]\|_{H_r^{-1}}^2\right].
    \end{equation}
    Unroll the recursive bound \eqref{eq:19} and note that $(1+\frac{1}{K})^K\leq 3$,
    \begin{equation}
        \begin{aligned}
            \|z_{t+1}^m-z_{t+1}^n\|_{H_r}^2 
            &\leq \underbrace{-\sum_{j=rK}^{t} 2\eta(1+\frac{1}{K})^{t-j} \left\langle z_j^m-z_j^n, \widehat{g_j^m}-\widehat{g_j^n}-\expect_j[\widehat{g_j^m}-\widehat{g_j^n}] \right\rangle}_{\cirone\text{: martingale}} \\
            &\quad + \sum_{j=rK}^{t} (1+\frac{1}{K})^{t-j}\left[- 2\eta \left\langle x_j^m-x_j^n, \nabla f(x_j^m)-\nabla f(x_j^n)\right\rangle + \frac{\eta}{L} \|\nabla f(x_j^m)-\nabla f(x_j^n)\|^2\right] \\
            &\quad + 24\sum_{j=rK}^{t}\eta^2\|\widehat{g_j^m}-\widehat{g_j^n}-\nabla f(x_j^m)+\nabla f(x_j^n)\|_{H_r^{-1}}^2 + 72\eta^2\sum_{j=rK}^{t} \|u_j^m-u_j^n \|_{H_r^{-1}}^2 \\
            &\quad + 195\eta^2K\frac{(1-\beta_2)^2B_1^2}{(1-\beta_1)^3}\|u_r\|_{H_r^{-1}}^2
            +48\eta^2K\left(\frac{1-\beta_2}{1-\beta_1}\right)^2B_1^2\sum_{j=rK}^{t}\|\widehat{g_j^m}\|_{H_r^{-1}}^2 + \frac{24\eta^2K^2}{\lambda}\cdot \frac{\|2\boldsymbol{\sigma}\|_{2\alpha}^{2\alpha}}{\rho^{2(\alpha-1)}} \\
            &\overset{\eqref{eq:20}}{\leq} \cirone + \sum_{j=rK}^{t} (1+\frac{1}{K})^{t-j}\left[- 2\eta \left\langle x_j^m-x_j^n, \nabla f(x_j^m)-\nabla f(x_j^n)\right\rangle + \frac{2\eta}{L} \|\nabla f(x_j^m)-\nabla f(x_j^n)\|^2\right] \\
            &\quad + 144\sum_{j=rK}^{t}\eta^2\|\widehat{g_j^m}-\widehat{g_j^n}-\nabla f(x_j^m)+\nabla f(x_j^n)\|_{H_r^{-1}}^2 + 195\eta^2K\frac{(1-\beta_2)^2B_1^2}{(1-\beta_1)^3}\|u_r\|_{H_r^{-1}}^2\\
            &\quad +48\eta^2K\left(\frac{1-\beta_2}{1-\beta_1}\right)^2B_1^2\sum_{j=rK}^{t}\|\widehat{g_j^m}\|_{H_r^{-1}}^2 + \frac{24\eta^2K^2}{\lambda}\cdot \frac{\|2\boldsymbol{\sigma}\|_{2\alpha}^{2\alpha}}{\rho^{2(\alpha-1)}}.
        \end{aligned}
    \end{equation}
    Note that by definition, $u_r=(1-\beta_1)\sum_{j=1}^{K}\beta_1^{j-1}\expect_m \widehat{g_{rK-j}^m}+\beta_1^Ku_{r-1}$.
    By Cauchy-Schwarz inequality,
    \begin{equation}
        \|u_r\|\leq \beta_1^K\|u_{r-1}\|+\sqrt{\sum_{j=1}^K\|\expect_m \widehat{g_{rK-j}^m}\|^2\sum_{j=1}^K(1-\beta_1)^2\beta_1^{2(j-1)}}.
    \end{equation}
    Therefore, event $E_{t,2}$ implies 
    \begin{equation}\label{eq:33}
        \|u_r\|^2\leq \frac{(1-\beta_1)^2\sigma^2A}{2^{12}(1-\beta_2)^2B_1^2} \cdot \frac{1-\beta_1}{1-\beta_1^K}\leq \frac{(1-\beta_1)^3\sigma^2A}{2^{11}(1-\beta_2)^2B_1^2}.
    \end{equation}
    
    By Lemma \ref{lem:wc_prop}, and $\|\nabla f(x_j^m)\|\leq G$,
    \begin{equation}\label{eq:21}
        \begin{aligned}
            \|z_{t+1}^m-z_{t+1}^n\|_{H_r}^2 
            &\leq \cirone \overset{\text{Lemma \ref{lem:wc_prop}}}{+} 6\eta\tau K \cdot \frac{\eta^2\sigma^2}{\lambda^2}KA\\
            &\quad + \frac{288\eta^2}{\lambda}\sum_{j=rK}^{t}\left[\|\widehat{g_j^m}-\nabla f(x_j^m)\|^2+\|\widehat{g_j^n}-\nabla f(x_j^n)\|^2\right] \\
            &\quad + 96\eta^2K\left(\frac{1-\beta_2}{1-\beta_1}\right)^2\frac{B_1^2}{\lambda}\sum_{j=rK}^{t}\left(\|\widehat{g_j^m}-\nabla f(x_j^m)\|^2 + G^2\right) \\
            &\quad \overset{\eqref{eq:33}}{+}\frac{\eta^2\sigma^2KA}{10\lambda}+ \frac{24\eta^2K^2}{\lambda}\cdot \frac{\|2\boldsymbol{\sigma}\|_{2\alpha}^{2\alpha}}{\rho^{2(\alpha-1)}} \\
            &\leq \cirone + 6\eta\tau K \cdot \frac{\eta^2\sigma^2}{\lambda^2}KA \overset{\text{Lemma \ref{lem:clip}}}{+}\frac{2^{10}\eta^2}{\lambda}K\sigma^2\\
            &\quad + \frac{2^{10}\eta^2}{\lambda}\max_{s\in[M]}\underbrace{\sum_{j=rK}^{t}\left[\|\widehat{g_j^s}-\nabla f(x_j^s)\|^2-\expect_j[\|\widehat{g_j^s}-\nabla f(x_j^s)\|^2]\right]}_{\cirtwo\text{: martingale}} \\
            &\quad + 96\eta^2K^2\left(\frac{1-\beta_2}{1-\beta_1}\right)^2\frac{B_1^2}{\lambda}G^2 +\frac{\eta^2\sigma^2KA}{10\lambda}+ \frac{24\eta^2K^2}{\lambda}\cdot \frac{\|2\boldsymbol{\sigma}\|_{2\alpha}^{2\alpha}}{\rho^{2(\alpha-1)}}.
        \end{aligned}
    \end{equation}
    Define 
    \begin{equation}
        \zeta_j^{m,n} = \left\{\
        \begin{array}{ll}
            -2\eta(1+\frac{1}{K})^{t-j} \left\langle z_j^m-z_j^n, \widehat{g_j^m}-\widehat{g_j^n}-\expect_j[\widehat{g_j^m}-\widehat{g_j^n}] \right\rangle, & \text{if event $E_j$ holds,} \\
            0, & \text{otherwise.} 
        \end{array}
        \right.
    \end{equation}
    \begin{equation}
        \theta_j^m = \left\{\
        \begin{array}{ll}
            \|\widehat{g_j^m}-\nabla f(x_j^m)\|^2-\expect_j[\|\widehat{g_j^m}-\nabla f(x_j^m)\|^2] , & \text{if event $E_j$ holds,} \\
            0, & \text{otherwise.} 
        \end{array}
        \right.
    \end{equation}
    Then \eqref{eq:21} implies $\|z_{t+1}^m-z_{t+1}^n\|_{H_r}^2 \leq \frac{\eta^2\sigma^2}{2\lambda}KA + \sum_{j=rK}^t \zeta_j^{m,n}+\frac{2^{10}\eta^2}{\lambda}\max_{s\in[M]}\sum_{j=rK}^t\theta_j^s$.
    Note that by Lemma \ref{lem:clip},
    \begin{equation}
        |\theta_j^m| \leq 4\rho^2d\overset{def}{=}c.
    \end{equation}
    \begin{equation}
        \Var_j(\theta_j^m) \leq \expect_j[\|\widehat{g_j^m}-\nabla f(x_j^m)\|^4]\leq \sigma^4.
    \end{equation}
    Let $b=\frac{\sigma^2KA}{2^{12}}, V=\sigma^4K$. 
    Then by Lemma \ref{lem:martingale}, $|\sum_{j=rK}^t \theta_j^m|\leq b$ with probability no less than 
    \begin{equation}
        1 - 2\exp{\left(\frac{b^2}{2V+2cb/3}\right)} \geq 1-\frac{\delta}{8MT}.
    \end{equation}
    This implies with probability no less than $1-\frac{\delta}{8T}$,
    \begin{equation}\label{eq:bound_theta}
        |\sum_{j=rK}^t \theta_j^m|\leq \frac{\sigma^2KA}{2^{12}}, \forall m\in [M].
    \end{equation}
    Also note that
    \begin{equation}
        |\zeta_j^{m,n}| \leq 6\eta \cdot \frac{\eta\sigma}{\lambda}\sqrt{KA}\cdot 4\rho\sqrt{d} = \frac{24\eta^2\sigma\rho\sqrt{d}}{\lambda}\sqrt{KA}\overset{def}{=}c.
    \end{equation}
    \begin{equation}
        \Var_j(\zeta_j^{m,n}) \leq \left( 6\eta \cdot \frac{\eta\sigma}{\lambda}\sqrt{KA}\right)^2\cdot 2\sigma^2 = \frac{72\eta^4\sigma^4}{\lambda^2}KA.
    \end{equation}
    Let $b=\frac{\eta^2\sigma^2}{4\lambda}KA, V=\frac{72\eta^4\sigma^4}{\lambda^2}K^2A$. Then by Lemma \ref{lem:martingale}, $|\sum_{j=rK}^t \zeta_j^{m,n}|\leq b$ with probability no less than 
    \begin{equation}
        1 - 2\exp{\left(\frac{b^2}{2V+2cb/3}\right)} \geq 1-\frac{\delta}{8M^2T}.
    \end{equation}
    This implies with probability no less than $1-\frac{\delta}{8T}$,
    \begin{equation}\label{eq:bound_zeta}
        |\sum_{j=rK}^t \zeta_j^{m,n}|\leq \frac{\eta^2\sigma^2}{4\lambda}KA, \forall m,n\in [M].
    \end{equation}
    % Also,
    % \begin{equation}
    %     |\theta_j^m| \leq \frac{256\eta^2}{\lambda} \cdot 4\rho^2d=\frac{1024\eta^2\rho^2d}{\lambda}\overset{def}{=}c.
    % \end{equation}
    % \begin{equation}
    %     \Var_j(\theta_j^m) \leq \left(\frac{256\eta^2}{\lambda}\right)^2\expect_j[\|\widehat{g_j^m}-\nabla f(x_j^m)\|^4]\leq \frac{2^{16}\eta^4\sigma^4}{\lambda^2}.
    % \end{equation}
    % Let $b=\frac{\eta^2\sigma^2}{6\lambda}KA, V=\frac{2^{16}\eta^4\sigma^4K}{\lambda^2}$. Then by Lemma \ref{lem:martingale}, $|\sum_{j=rK}^t \theta_j^m|\leq b$ with probability no less than 
    % \begin{equation}
    %     1 - 2\exp{\left(\frac{b^2}{2V+2cb/3}\right)} \geq 1-\frac{\delta}{8MT}.
    % \end{equation}

    We now turn to deal with $\sum_{j=rK}^{t}\|\widehat{g_j^m}\|^2$.
    \begin{equation}
        \begin{aligned}
            \sum_{j=rK}^{t}\|\widehat{g_j^m}\|^2
            &\leq 2\sum_{j=rK}^{t}[\|\widehat{g_j^m}-\nabla f(x_j^m)\|^2 + \|\nabla f(x_j^m)\|^2] \\
            &\leq  2\sum_{j=rK}^{t}\left[\|\widehat{g_j^m}-\nabla f(x_j^m)\|^2-\expect_j[\|\widehat{g_j^m}-\nabla f(x_j^m)\|^2]\right] + 2\sum_{j=rK}^{t}\expect_j[\|\widehat{g_j^m}-\nabla f(x_j^m)\|^2] + 2KG^2 \\
            &\overset{\text{Lemma \ref{lem:clip}}}{\leq}  2\sum_{j=rK}^{t}\left[\|\widehat{g_j^m}-\nabla f(x_j^m)\|^2-\expect_j[\|\widehat{g_j^m}-\nabla f(x_j^m)\|^2]\right] + 2K(\sigma^2+G^2).
        \end{aligned}
    \end{equation}
    Then $\sum_{j=rK}^{t}\|\widehat{g_j^m}\|^2\leq 2\sum_{j=rK}^t \theta_j^m + 2K(\sigma^2+G^2)$ under event $E_t$.
    Therefore, by \eqref{eq:bound_theta}, 
    \begin{equation}\label{eq:bound_g}
        \sum_{j=rK}^{t}\|\widehat{g_j^m}\|^2\leq \frac{\sigma^2KA}{2^{11}} + 2K(\sigma^2+G^2) \leq \frac{(1-\beta_1)^2\sigma^2A}{2^{12}(1-\beta_2)^2B_1^2}.
    \end{equation}

    In conclusion, combining \eqref{eq:bound_theta}, \eqref{eq:bound_zeta}, \eqref{eq:bound_g}, we have
    \begin{equation}
        \prob \left\{ E_{t,2} \text{ and } \|z_{t+1}^m-z_{t+1}^n\|_{H_r}^2 \leq \frac{\eta^2\sigma^2KA}{\lambda}, \sum_{j=rK}^{t}\|\widehat{g_j^m}\|^2\leq \frac{(1-\beta_1)^2\sigma^2A}{2^{12}(1-\beta_2)^2B_1^2}\text{ for all }m,n\right\}\geq \prob(E_{t,2})-\frac{\delta}{4T}.
    \end{equation}
\end{proof}

\subsection{Proof of Descent Lemma}

    After laying all the groundwork above, we are now in the position of showing the main descent lemma.

\begin{lemma}\label{lem:descent}
    Assume that $\rho\geq \max\{3\sigma_\infty, 2G_\infty\}$ and
    \begin{equation}\label{eq:ub_eta_descent}
        % \frac{\eta}{\lambda} \lesssim \min\left\{\frac{(1-\beta_1)^{2}}{L}, \frac{M\gamma\varepsilon}{\lambda\sigma^2}, \left(\frac{L^2\sigma^2KA}{\varepsilon}\right)^{-1/2},\frac{M\Delta}{\sigma^2\log\frac{T}{\delta}},\frac{(\sqrt{\beta_2}-\beta_1)\sqrt{\varepsilon}}{L\sigma}, \frac{\varepsilon \gamma T}{\rho^2 d \lambda\log\frac{T}{\delta}}, \frac{\Delta}{\rho\sqrt{\frac{\lambda\Delta d}{\gamma}}\log\frac{T}{\delta}} \right\},
        \begin{array}{c}
            \frac{\eta\sigma^2}{\lambda M}\log\frac{T}{\delta}\lesssim \Delta,\ 
            \frac{\eta\rho\sqrt{d}}{(1-\beta_1)\sqrt{\gamma\lambda}}\log^{\frac{1}{2}}\frac{T}{\delta}\lesssim\sqrt{\Delta},\
            \frac{\left(\frac{\eta L}{\lambda}\right)^3\log\frac{T}{\delta}}{(1-\beta_1)(\sqrt{\beta_2}-\beta_1)}\lesssim \frac{L\Delta}{\rho^2 d}, \\
            \left(\frac{\eta L}{\lambda}\right)^3\sigma^2KA\lesssim \frac{L\Delta}{T},\
            \frac{\eta^2\sigma^2}{\lambda\gamma M}\lesssim \frac{\Delta}{T},\ 
            \frac{\eta }{\lambda}\frac{\|2\boldsymbol{\sigma}\|_{2\alpha}^{2\alpha}}{\rho^{2(\alpha-1)}}\lesssim \frac{\Delta}{T},
        \end{array}        
    \end{equation}
    and
    \begin{equation}
        (1-\beta_2)B\leq \frac{\eta}{4\gamma}\leq \frac{\eta L}{4\lambda},\ \frac{\eta L}{\lambda}\leq \frac{(1-\beta_1)^2}{2^6}.
    \end{equation}
    Then the following holds:
    \begin{equation}
        \prob(E_{t+1}) \geq \prob(E_{t,3}) - \frac{\delta}{4T}.
    \end{equation}
\end{lemma}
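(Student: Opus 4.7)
The plan is to verify $\mathcal{A}_{t,4}$ with probability at least $1-\delta/(4T)$ on top of $E_{t,3}$. First I would derive a one-step descent inequality for the Moreau envelope $f_\gamma^{H_{r(t+1)}}(\overline{z}_{t+1})$ along the lines of \eqref{main:eq:descent} in the proof sketch. Using the minimizer $y_t=\arg\min_y f(y)+\tfrac{1}{2\gamma}\|y-\overline{z}_t\|_{H_{r(t)}}^2$ and weak convexity of the inner problem, $f_\gamma^{H_{r(t)}}(\overline{z}_{t+1})\le f(y_t)+\tfrac{1}{2\gamma}\|\overline{z}_{t+1}-y_t\|_{H_{r(t)}}^2$. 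Expanding $\overline{z}_{t+1}-y_t=\overline{z}_t-y_t-\eta H_{r(t)}^{-1}\expect_m[\widehat{g_t^m}+e_t^m]+(\eta H_{r(t)}^{-1}-\eta\expect_m[(H_t^m)^{-1}])\expect_m[\widehat{g_t^m}+e_t^m]$ and using \eqref{eq:moreau_env_2} of Lemma~\ref{lem:moreau_env} to turn the cross term $\langle\overline{z}_t-y_t,\nabla f(\overline{z}_t)\rangle$ into $-\gamma\|\nabla f_\gamma^{H_{r(t)}}(\overline{z}_t)\|_{H_{r(t)}^{-1}}^2$ plus controllable error, yields the target inequality with negative progress $-\Omega(\eta)\|\nabla f_\gamma^{H_{r(j)}}(\overline{z}_j)\|_{H_{r(j)}^{-1}}^2$. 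Then the change from $H_{r(t)}$ to $H_{r(t+1)}$ at the round boundary is absorbed using $\mathcal{A}_{t,1}$: the hypothesis $(1-\beta_2)B\le \eta/(4\gamma)$ makes $|f_\gamma^{H_{r(t+1)}}(\overline{z}_{t+1})-f_\gamma^{H_{r(t)}}(\overline{z}_{t+1})|$ a higher-order correction.

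After summing from $j=0$ to $t$, four error families remain. (i) The discrepancy $\|\nabla f(\overline{z}_j)-\expect_m[\nabla f(x_j^m)]\|^2$ is handled via smoothness together with the contraction bound $\|x_j^m-\overline{x}_j\|^2\lesssim \eta^2\sigma^2 KA/\lambda^2$ from $\mathcal{A}_{j,3}$ and $\|\overline{z}_j-\overline{x}_j\|^2=O(\eta^2)$; the condition $(\eta L/\lambda)^3\sigma^2KA\lesssim L\Delta/T$ makes this contribute at most $\Delta$ in total. (ii) The stochastic-noise magnitude $\|\expect_m[\widehat{g_j^m}-\expect_j\widehat{g_j^m}]\|^2$ is split into its conditional expectation (bounded by $4\sigma^2/M$ via Lemma~\ref{lem:4th_noise}) and a martingale; the bound $\eta^2\sigma^2/(\lambda\gamma M)\lesssim \Delta/T$ takes care of the mean. (iii) The clipping-bias term $\|\expect_j\widehat{g_j^m}-\nabla f(x_j^m)\|$ is bounded using Lemma~\ref{lem:clip} by $\|2\boldsymbol{\sigma}\|_{2\alpha}^\alpha/\rho^{\alpha-1}$, so the accumulated bias is controlled by $\eta \|2\boldsymbol{\sigma}\|_{2\alpha}^{2\alpha}/(\lambda\rho^{2(\alpha-1)})\lesssim \Delta/T$. (iv) The remaining martingale terms are dispatched by Bernstein's inequality (Lemma~\ref{lem:martingale}): thanks to clipping, each increment is almost surely bounded, and conditional variances are controlled using the inductive bound $f_\gamma^{H_{r(j)}}(\overline{z}_j)-\min f_\gamma^\lambda\le 2\Delta$ implied by $E_t$, which gives $\|\overline{z}_j-y_j\|_{H_{r(j)}^{-1}}^2\lesssim \gamma\Delta/\lambda$. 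The conditions $\eta\rho\sqrt{d}/((1-\beta_1)\sqrt{\gamma\lambda})\log^{1/2}(T/\delta)\lesssim \sqrt{\Delta}$ and $\eta\sigma^2\log(T/\delta)/(\lambda M)\lesssim \Delta$ are exactly what Bernstein requires for each martingale to contribute at most a constant fraction of $\Delta$ with probability $1-\delta/(16T)$. A union bound over the (constantly many) martingales yields the desired probability.

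The main obstacle will be constructing bona fide martingales in the presence of both the adaptive preconditioner and the momentum. The trick is to preconditon by $H_{r(j)}$ rather than $H_j^m$ inside the inner product: since $H_{r(j)}$ is determined by information up to round $r(j)-1$, it is $\mathcal{F}_{r(j)K-1}$-measurable and hence independent of $\widehat{g_j^m}$, so the increments $\langle\overline{z}_j-\eta H_{r(j)}^{-1}\nabla f(\overline{z}_j)-y_j,\expect_m[\expect_j\widehat{g_j^m}-\widehat{g_j^m}]\rangle$ truly form a martingale-difference sequence. The gap introduced by replacing $H_j^m$ with $H_{r(j)}$ is a non-martingale term whose size is controlled by $\mathcal{A}_{t,1},\mathcal{A}_{t,2}$; the hypothesis $(\eta L/\lambda)^3\log(T/\delta)/((1-\beta_1)(\sqrt{\beta_2}-\beta_1))\lesssim L\Delta/(\rho^2 d)$ absorbs its contribution, with the $\sqrt{\beta_2}-\beta_1$ factor reflecting the geometric decay needed so that old-round gradient information inside $u_t^m$ cannot bias the current martingale increment. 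Once these terms are all bounded by $\Delta/C$ for a sufficiently large absolute $C$, summing and invoking induction closes $\mathcal{A}_{t,4}$ and completes the proof.
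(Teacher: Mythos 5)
Your proposal is correct and follows essentially the same route as the paper's proof: a one-step Moreau-envelope descent via weak convexity and Lemma~\ref{lem:moreau_env}, absorption of the $H_{r(t)}\to H_{r(t+1)}$ change through $(1-\beta_2)B\le\eta/(4\gamma)$, separation into discrepancy/noise/clipping-bias/martingale pieces matched one-to-one to the hypotheses in \eqref{eq:ub_eta_descent}, and crucially the observation that preconditioning by $H_{r(j)}$ (which is $\mathcal{F}_{j-1}$-measurable) makes the increments genuine martingale differences amenable to Bernstein's inequality. The only small misattribution is that the $(\sqrt{\beta_2}-\beta_1)^{-1}$ factor enters through the geometric sum bounding $\expect_m\|u_{t-1}^m\|_{H_{r}^{-1}}^2$ inside the $e_t^m$ error term rather than directly from the $H_j^m\!\to\!H_{r(j)}$ substitution, but this does not affect the argument.
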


\begin{proof}
    For any $x\in \reals^d$, since $\nabla^2 f(\cdot)\succeq -\tau I_d$ and $H_r\succeq \lambda I_d$, $y\mapsto f(y)+\frac{1}{2\gamma} \|x-y\|^2_{H_r}$ is $(\frac{1}{\gamma}-\frac{\tau}{\lambda})$-convex with respect to $\|\cdot\|_{H_r}$. 
    Note that under event $E_t$, $\overline{z}_t\in\Omega_0$.
    Let $y_t:=\arg\min_y f(y)+\frac{1}{2\gamma} \|\overline{z}_t-y\|^2_{H_{r}}$ and by Lemma \ref{lem:moreau_env}, $y_t\in \Omega_0$. Then
    \begin{equation}\label{eq:22}
        f(y_t) + \frac{1}{2\gamma}\|y_t-\overline{z}_t\|_{H_r}^2 
        \leq f(\overline{z}_{t+1}) + \frac{1}{2\gamma}\|\overline{z}_{t+1}-\overline{z}_t\|_{H_r}^2-\frac{1}{2}(\frac{1}{\gamma}-\frac{\tau}{\lambda})\|\overline{z}_{t+1}-y_t\|_{H_r}^2.
    \end{equation}
    Recall that the definition of $\{z_t^m\}$ implies 
    \begin{equation}
        \begin{aligned}
            z_{t+1}^m-z_t^m 
            & = -\frac{\eta(H_t^m)^{-1}u_t^m}{1-\beta_1} + \frac{\eta\beta_1(H_{t-1}^m)^{-1}u_{t-1}^m}{1-\beta_1} \\
            & = -\frac{\eta\beta_1}{1-\beta_1}[(H_t^m)^{-1}-(H_{t-1}^m)^{-1}]u_{t-1}^m - \eta (H_t^m)^{-1}\widehat{g_t^m} \\
            & = -\eta (H_t^m)^{-1}(\widehat{g_t^m}+e_t^m).
        \end{aligned}
    \end{equation}
    Here $e_t^m=\frac{\beta_1}{1-\beta_1}(I_d-H_t^m(H_{t-1}^m)^{-1})u_{t-1}^m$.
    
    Also, since $\|\overline{z}_{t+1}-\overline{z}_t\| \leq \frac{(1+\beta_1)\eta\rho\sqrt{d}}{(1-\beta_1)\lambda}\leq \sqrt{\frac{\Delta\gamma}{160\lambda}}=R_0$, we have $\overline{z}_{t+1}\in\Omega$ and
    \begin{equation}
        \begin{aligned}
            f(\overline{z}_{t+1})-f(y_t)
            &\leq f(\overline{z}_t) + \langle \nabla f(\overline{z}_t), \overline{z}_{t+1}-\overline{z}_t\rangle + \frac{L}{2}\|\overline{z}_{t+1}-\overline{z}_t\|^2 - f(y_t) \\
            &\leq \langle \nabla f(\overline{z}_t), \overline{z}_{t+1}-y_t\rangle + \frac{\tau}{2}\|\overline{z}_t-y_t\|^2+\frac{L}{2}\|\overline{z}_{t+1}-\overline{z}_t\|^2 \\
            &\leq \langle \nabla f(\overline{z}_t), \overline{z}_{t+1}-y_t\rangle + \frac{\tau}{2\lambda}\|\overline{z}_t-y_t\|_{H_r}^2+\frac{L}{2\lambda}\|\overline{z}_{t+1}-\overline{z}_t\|_{H_r}^2. 
        \end{aligned}
    \end{equation}
    
    Combine this with \eqref{eq:22},
    \begin{equation}\label{eq:23}
        \begin{aligned}
            & \frac{\frac{1}{\eta}+\frac{1}{\gamma}-\frac{\tau}{\lambda}}{2} \|\overline{z}_{t+1}-y_t\|_{H_r}^2 - \frac{\frac{1}{\eta}-\frac{1}{\gamma}+\frac{\tau}{\lambda}}{2} \|\overline{z}_t-y_t\|_{H_r}^2 + \frac{\frac{1}{\eta}+\frac{1}{\gamma}-\frac{L}{\lambda}}{2} \|\overline{z}_{t+1}-\overline{z}_t\|_{H_r}^2 \\
            &\qquad \leq \left\langle \overline{z}_{t+1}-y_t, \nabla f(\overline{z}_t)+\frac{H_r(\overline{z}_{t+1}-\overline{z}_t)}{\eta}\right\rangle \\
            &\qquad = \left\langle \overline{z}_{t}-\eta \expect_m[(H_t^m)^{-1}(\widehat{g_t^m}+e_t^m)]-y_t, \nabla f(\overline{z}_t)-H_r\expect_m[(H_t^m)^{-1}(\widehat{g_t^m}+e_t^m)]\right\rangle \\
            &\qquad = \left\langle \overline{z}_{t}-\eta H_r^{-1}\nabla f(\overline{z}_t)-y_t, \nabla f(\overline{z}_t)-H_r\expect_m[(H_t^m)^{-1}(\widehat{g_t^m}+e_t^m)]\right\rangle \\
            &\qquad\qquad + \eta \|\nabla f(\overline{z}_t)-H_r\expect_m[(H_t^m)^{-1}(\widehat{g_t^m}+e_t^m)]\|_{H_r^{-1}}^2 \\
            &\qquad\leq \left\langle \overline{z}_{t}-\eta H_r^{-1}\nabla f(\overline{z}_t)-y_t, \nabla f(\overline{z}_t)-H_r\expect_m[(H_t^m)^{-1}(\widehat{g_t^m}+e_t^m)]\right\rangle \\
            &\qquad\qquad + 4\eta \|\nabla f(\overline{z}_t)-\expect_m[\nabla f(x_t^m)]\|_{H_r^{-1}}^2 + 4\eta\|\expect_m[\nabla f(x_t^m)-\widehat{g_t^m}]\|_{H_r^{-1}}^2 \\
            &\qquad\qquad +4\eta\left\|\expect_m[(H_r(H_t^m)^{-1}-I_d)\widehat{g_t^m}]\right\|_{H_r^{-1}}^2+4\eta\left\| \expect_m[(H_t^m)^{-1}e_t^m] \right\|_{H_r}^2.
        \end{aligned}
    \end{equation}
    By Lemma \ref{lem:moreau_env}, we have
    \begin{equation}
        \begin{aligned}
            &\quad \left\langle \overline{z}_{t}-\eta H_r^{-1}\nabla f(\overline{z}_t)-y_t, \nabla f(\overline{z}_t)-H_r\expect_m[(H_t^m)^{-1}\widehat{g_t^m}]\right\rangle \\
            &= \left\langle \overline{z}_{t}-\eta H_r^{-1}\nabla f(\overline{z}_t)-y_t, \nabla f(\overline{z}_t)-\expect_m[\nabla f(x_t^m)]\right\rangle \\
            &\quad+ \left\langle \overline{z}_{t}-\eta H_r^{-1}\nabla f(\overline{z}_t)-y_t, \expect_m[\nabla f(x_t^m)-\widehat{g_t^m}]\right\rangle \\
            &\quad+ \left\langle \overline{z}_{t}-\eta H_r^{-1}\nabla f(\overline{z}_t)-y_t, \expect_m[(I_d-H_r(H_t^m)^{-1})\widehat{g_t^m}]\right\rangle \\
            &\overset{\eqref{eq:moreau_env_2}}{\leq} \frac{\gamma}{16}\|\nabla f(y_t)\|_{H_r^{-1}}^2 + 8\gamma \|\nabla f(\overline{z_t})-\expect_m[\nabla f(x_t^m)]\|_{H_r^{-1}}^2 + 8\gamma \left\|\expect_m[(H_r(H_t^m)^{-1}-I_d)\widehat{g_t^m}]\right\|_{H_r^{-1}}^2 \\
            &\qquad+ \left\langle \overline{z}_{t}-\eta H_r^{-1}\nabla f(\overline{z}_t)-y_t, \expect_m[\nabla f(x_t^m)-\widehat{g_t^m}]\right\rangle.
        \end{aligned}
    \end{equation}
    Also,
    \begin{equation}
        \left\langle \overline{z}_{t}-\eta H_r^{-1}\nabla f(\overline{z}_t)-y_t, -H_r\expect_m[(H_t^m)^{-1}e_t^m]\right\rangle 
        \leq \frac{\gamma}{16}\|\nabla f(y_t)\|_{H_r^{-1}}^2 + 4\gamma \left\| \expect_m[(H_t^m)^{-1}e_t^m] \right\|_{H_r}^2
    \end{equation}
    Further noticing that $\eta\leq \frac{\gamma}{4}$ and by AM-GM inequality, we conclude that 
    \begin{equation}\label{eq:6}
        \begin{aligned}
            & \text{LHS of \eqref{eq:23}} \\
            &\qquad \leq \frac{\gamma}{8}\|\nabla f(y_t)\|_{H_r^{-1}}^2 + 9\gamma \|\nabla f(\overline{z}_t)-\expect_m[\nabla f(x_t^m)]\|_{H_r^{-1}}^2 + 9\gamma \left\|\expect_m [(H_r(H_t^m)^{-1}-I_d)\widehat{g_t^m}]\right\|_{H_r^{-1}}^2 \\
            &\qquad\qquad + 4\eta \left\| \expect_m[\nabla f(x_t^m)-\widehat{g_t^m}]\right\|_{H_r^{-1}}^2 + 5\gamma \left\| \expect_m[(H_t^m)^{-1}e_t^m] \right\|_{H_r}^2\\
            &\qquad\qquad + \left\langle \overline{z}_{t}-\eta H_r^{-1}\nabla f(\overline{z}_t)-y_t, \expect_m[\nabla f(x_t^m)-\widehat{g_t^m}]\right\rangle.
        \end{aligned}
    \end{equation}
    If $t\ \text{mod}\ K \equiv -1$, then $r(t+1)=r(t)+1=r+1$ and event $E_{t,1}$ implies
    \begin{equation}
        H_r^{-1}H_{r+1} \preceq 1 + (1-\beta_2)B \preceq 1 + \frac{\eta}{4\gamma},
    \end{equation}
    \begin{equation}
        \begin{aligned}
            f_\gamma^{H_{r+1}}(\overline{z}_{t+1}) 
            &\leq f(y_t) +\frac{1}{2\gamma}\| \overline{z}_{t+1}-y_t\|_{H_{r+1}}^2 \\
            &\leq f(y_t)+\frac{1+\eta/4\gamma}{2\gamma}\| \overline{z}_{t+1}-y_t\|_{H_r}^2.
        \end{aligned}
    \end{equation}
    On the other hand, if $t\ \text{mod}\ K \not\equiv -1 $, then $r(t+1)=r(t)=r$,
    \begin{equation}
        f_\gamma^{H_{r(t+1)}}(\overline{z}_{t+1}) 
        \leq f(y_t) +\frac{1}{2\gamma}\| \overline{z}_{t+1}-y_t\|_{H_r}^2.
    \end{equation}
    Hence the following always holds:
    \begin{equation}
        \begin{aligned}
            f_\gamma^{H_{r(t+1)}}(\overline{z}_{t+1}) 
            &\leq f_\gamma^{H_r}(\overline{z}_t)-\frac{1}{2\gamma}\|\overline{z}_t-y_t\|_{H_r}^2+\frac{1+\eta/4\gamma}{2\gamma}\| \overline{z}_{t+1}-y_t\|_{H_r}^2 \\
            &\overset{\eqref{eq:6}}{\leq} f_\gamma^{H_r}(\overline{z}_t) - \frac{7\gamma^{-1}}{8\gamma(\eta^{-1}+\gamma^{-1})}\|\overline{z}_t-y_t\|_{H_r}^2 \\
            &\quad + \frac{(1+\eta/4\gamma)\left[ \frac{1}{8}\|\nabla f(y_t)\|_{H_r^{-1}}^2 + 9 \|\nabla f(\overline{z}_t)-\expect_m[\nabla f(x_t^m)]\|_{H_r^{-1}}^2 + 9 \left\|\expect_m [(H_r(H_t^m)^{-1}-I_d)\widehat{g_t^m}]\right\|_{H_r^{-1}}^2 \right]}{\eta^{-1} +\gamma^{-1}-\tau/\lambda} \\
            &\quad + \frac{(1+\eta/4\gamma)\left[ 4\eta \left\| \expect_m[\nabla f(x_t^m)-\widehat{g_t^m}]\right\|_{H_r^{-1}}^2 + 5\gamma \left\| \expect_m[(H_t^m)^{-1}e_t^m] \right\|_{H_r}^2 \right]}{\gamma(\eta^{-1} +\gamma^{-1}-\tau/\lambda)} \\
            &\quad + \frac{(1+\eta/4\gamma)\left\langle \overline{z}_{t}-\eta H_r^{-1}\nabla f(\overline{z}_t)-y_t, \expect_m[\nabla f(x_t^m)-\widehat{g_t^m}]\right\rangle }{\gamma(\eta^{-1} +\gamma^{-1}-\tau/\lambda)} \\
            &\overset{\eqref{eq:24}}{\leq} f_\gamma^{H_r}(\overline{z}_t) - \frac{\eta}{8} \|\nabla f(y_t)\|_{H_r^{-1}}^2 + \frac{5\eta^2}{\lambda\gamma} \|\expect_m[\nabla f(x_t^m)-\widehat{g_t^m}]\|^2 + 6\eta \left\| \expect_m[(H_t^m)^{-1}e_t^m] \right\|_{H_r}^2 \\
            &\quad + \frac{10\eta}{\lambda} \|\nabla f(\overline{z}_t)-\expect_m[\nabla f(x_t^m)]\|^2 + 10\eta\left\|\expect_m [(H_r(H_t^m)^{-1}-I_d)\widehat{g_t^m}]\right\|_{H_r^{-1}}^2 \\
            &\quad + \frac{1+\eta/4\gamma}{\gamma(\eta^{-1} +\gamma^{-1}-\tau/\lambda)}\left\langle \overline{z}_{t}-\eta H_r^{-1}\nabla f(\overline{z}_t)-y_t, \expect_m[\nabla f(x_t^m)-\widehat{g_t^m}]\right\rangle.
        \end{aligned}
    \end{equation}
    Sum over $t$ and we get 
    \begin{equation}\label{eq:descent_1}
        \begin{aligned}
            f_\gamma^{H_{r(t+1)}}(\overline{z}_{t+1}) 
            &\leq f_{\gamma}^{\lambda}(x_0) - \frac{\eta}{8} \sum_{j=0}^t\|\nabla f(y_j)\|_{H_{r(j)}^{-1}}^2 + \frac{5\eta^2}{\lambda\gamma} \sum_{j=0}^t\|\expect_m[\nabla f(x_j^m)-\widehat{g_j^m}]\|^2 + 6\eta \sum_{j=0}^t\left\| \expect_m[(H_j^m)^{-1}e_j^m] \right\|_{H_{r(j)}}^2 \\
            &\quad + \frac{10\eta}{\lambda} \sum_{j=0}^t\|\nabla f(\overline{z}_j)-\expect_m[\nabla f(x_j^m)]\|^2 + 10\eta \sum_{j=0}^t \left\|\expect_m [(H_{r(j)}(H_j^m)^{-1}-I_d)\widehat{g_j^m}]\right\|_{H_{r(j)}^{-1}}^2\\
            &\quad + \underbrace{\frac{1+\eta/4\gamma}{\gamma(\eta^{-1} +\gamma^{-1}-\tau/\lambda)} \sum_{j=0}^t\left\langle \overline{z}_{j}-\eta H_{r(j)}^{-1}\nabla f(\overline{z}_j)-y_j, \expect_m[\nabla f(x_j^m)-\widehat{g_j^m}]\right\rangle}_{(*)}.
        \end{aligned}
    \end{equation}
    By AM-GM inequality and notice that $\overline{x}_t, \overline{z}_t\in \Omega$,
    \begin{equation}\label{eq:25}
        \begin{aligned}
            &\|\nabla f(\overline{z}_t)-\expect_m[\nabla f(x_t^m)]\|^2 \\
            &\qquad \leq 2\|\nabla f(\overline{z}_t)-\nabla f(\overline{x}_t)\|^2 +  2\|\nabla f(\overline{x}_t)-\expect_m[\nabla f(x_t^m)]\|^2 \\
            &\qquad \leq 2L^2\|\overline{z}_t-\overline{x}_t\|^2 +  2\|\nabla f(\overline{x}_t)-\expect_m[\nabla f(x_t^m)]\|^2.
        \end{aligned}
    \end{equation}
    Under event $E_{t,3}$,
    \begin{equation}\label{eq:26}
        \left\|\expect_m [(H_r(H_t^m)^{-1}-I_d)\widehat{g_t^m}]\right\|_{H_r^{-1}}^2 \leq (1-\beta_2)^2B^2\expect_m\left[\|\widehat{g_t^m}\|_{H_r^{-1}}^2\right].
    \end{equation}
    \begin{equation}\label{eq:27}
        \left\| \expect_m[(H_t^m)^{-1}e_t^m] \right\|_{H_r}^2 \leq 4\left(\frac{\beta_1(1-\beta_2)}{1-\beta_1}\right)^2B^2\expect_m\left[\|u_{t-1}^m\|_{H_r^{-1}}^2\right].
    \end{equation}
    By the definition of $u_{t-1}^m$, we have
    \begin{equation}\label{eq:28}
        \begin{aligned}
            \expect_m\left[\|u_{t-1}^m\|_{H_r^{-1}}^2\right] 
            &\leq (1-\beta_1)\sum_{j=0}^{t-1}\beta_1^{t-j-1} \expect_m\left[\|\widehat{g_{j}^m}\|_{H_r^{-1}}^2\right] \\
            &\leq \frac{(1-\beta_1)}{\beta_2^{K/2}}\sum_{j=0}^{t-1}(\beta_1/\sqrt{\beta_2})^{t-j-1} \expect_m\left[\|\widehat{g_{j}^m}\|_{H_{r(j)}^{-1}}^2\right].
        \end{aligned}
    \end{equation}
    Plug these inequalities above in \eqref{eq:descent_1},
    \begin{equation}\label{eq:descent_2}
        \begin{aligned}
            f_\gamma^{H_{r(t+1)}}(\overline{z}_{t+1}) 
            &\leq f_{\gamma}^{\lambda}(x_0) - \frac{\eta}{8} \sum_{j=0}^t\|\nabla f(y_j)\|_{H_{r(j)}^{-1}}^2 + \frac{5\eta^2}{\lambda\gamma} \sum_{j=0}^t\|\expect_m[\nabla f(x_j^m)-\widehat{g_j^m}]\|^2 \\
            &\quad \overset{\eqref{eq:25}}{+} \frac{20\eta}{\lambda} \sum_{j=0}^t\left[L^2\|\overline{z}_j-\overline{x}_j\|^2 + \|\nabla f(\overline{x}_j)-\expect_m[\nabla f(x_j^m)]\|^2\right] \\
            &\quad \overset{\text{\eqref{eq:26}-\eqref{eq:28}}}{+} \eta\left(\frac{48\beta_1^2}{(1-\beta_1)(\sqrt{\beta_2}-\beta_1)}+10\right)(1-\beta_2)^2B^2 \sum_{j=0}^t\expect_m\left[\|\widehat{g_j^m}\|_{H_{r(j)}^{-1}}^2\right] + (*).
        \end{aligned}
    \end{equation}
    By AM-GM inequality and Lemma \ref{lem:moreau_env},
    \begin{equation}
        \begin{aligned}
            \expect_m\left[\|\widehat{g_t^m}\|_{H_r^{-1}}^2\right] 
            &\leq 4\expect_m\left[\|\widehat{g_t^m}-\nabla f(x_t^m)\|_{H_r^{-1}}^2 + \|\nabla f(x_t^m)-\nabla f(\overline{x}_t)\|_{H_r^{-1}}^2\right. \\
            &\qquad \left. + \|\nabla f(\overline{x}_t)-\nabla f(\overline{z}_t)\|_{H_r^{-1}}^2 + \|\nabla f(\overline{z}_t)\|_{H_r^{-1}}^2\right] \\
            &\leq \frac{4}{\lambda}\left[\expect_m\|\widehat{g_t^m}-\nabla f(x_t^m)\|^2 + L^2\expect_m[\|x_t^m-\overline{x}_t\|^2] + L^2\|\overline{z}_t-\overline{x}_t\|^2\right] + \frac{16(\gamma L)^2}{\lambda^2}\|\nabla f_\gamma^{H_r}(\overline{z}_t)\|_{H_r^{-1}}^2.
        \end{aligned}
    \end{equation}
    Therefore, we achieve that
    \begin{equation}\label{eq:29}
        \begin{aligned}
            f_\gamma^{H_{r(t+1)}}(\overline{z}_{t+1}) 
            &\leq f_\gamma^{H_0}(x_0) - \frac{\eta}{9} \sum_{j=0}^t\|\nabla f(y_j)\|_{H_{r(j)}^{-1}}^2 + \frac{5\eta^2}{\lambda\gamma} \sum_{j=0}^t\|\expect_m[\nabla f(x_j^m)-\widehat{g_j^m}]\|^2 \\
            &\quad + \frac{40\eta}{\lambda} \sum_{j=0}^t\left[L^2\|\overline{z}_j-\overline{x}_j\|^2 + \|\nabla f(\overline{x}_j)-\expect_m[\nabla f(x_j^m)]\|^2\right] \\
            &\quad + \frac{160\eta(1-\beta_2)^2B^2}{\lambda(1-\beta_1)(\sqrt{\beta_2}-\beta_1)}
            \sum_{j=0}^t\left[\expect_m\|\widehat{g_j^m}-\nabla f(x_j^m)\|^2 + L^2\expect_m[\|x_j^m-\overline{x}_j\|^2]\right] + (*).
        \end{aligned}
    \end{equation}
    By \eqref{eq:7}, \eqref{eq:8} in Lemma \ref{lem:z-x}, under event $E_{t,3}$,
    \begin{equation}
        \begin{aligned}
            \|\overline{z}_j-\overline{x}_j\|^2 
            &\leq \left(\frac{\beta_1}{1-\beta_1}\right)^2 \left[64\eta^2\left( \left\|\nabla f(\overline{z}_j)\right\|_{H_{r(j)}^{-2}}^2 + \frac{L^2}{\lambda^2}\Lambda_{j-1}\right) \right. \\
            &\qquad \left. + \frac{36\eta^2}{\lambda^2}(1-\beta_1)\sum_{i=r(j)K}^{j-1}\beta_1^{j-i-1} \left[\frac{\eta^2L^2\sigma^2}{\lambda^2}KA+\expect_m\|\widehat{g_i^m}-\nabla f(x_i^m)\|^2\right]\right].
        \end{aligned}
    \end{equation}
    Hence
    \begin{equation}
        \begin{aligned}
            \sum_{j=0}^t\|\overline{z}_j-\overline{x}_j\|^2 
            &\leq \left(\frac{\beta_1}{1-\beta_1}\right)^2 \left[64\eta^2\sum_{j=0}^t\left( \left\|\nabla f(\overline{z}_j)\right\|_{H_{r(j)}^{-2}}^2 + \frac{L^2}{\lambda^2}\Lambda_{j-1}\right) \right. \\
            &\qquad \left. + \frac{36\eta^2}{\lambda^2}\sum_{j=0}^{t-1} \left[\frac{\eta^2L^2\sigma^2}{\lambda^2}KA+\expect_m\|\widehat{g_j^m}-\nabla f(x_j^m)\|^2\right]\right].
        \end{aligned}
    \end{equation}
    Additionally by Lemma \ref{lem:z-x},
    \begin{equation}
        \begin{aligned}
            \Lambda_t + \frac{(1-\beta_1)^2}{2}\sum_{j=0}^{t-1}\Lambda_j 
            &\leq \frac{64\eta^2}{1-\beta_1}\sum_{j=0}^t\left\|\nabla f(\overline{z}_j)\right\|_{H_{r(j)}^{-2}}^2 \\
            &\qquad + \frac{36\eta^2}{\lambda^2}(1-\beta_1)\sum_{j=0}^{t-1} \left[\frac{\eta^2L^2\sigma^2}{\lambda^2}KA+\expect_m\|\widehat{g_j^m}-\nabla f(x_j^m)\|^2\right].
        \end{aligned}
    \end{equation}
    Therefore, by noticing that $\Lambda_t\geq 0$ and $\frac{\eta L}{\lambda}\leq \frac{(1-\beta_1)^2}{16}$,
    \begin{equation}\label{eq:30}
        \begin{aligned}
            \sum_{j=0}^t\|\overline{z}_j-\overline{x}_j\|^2 
            &\leq 2\left(\frac{\eta\beta_1}{1-\beta_1}\right)^2\left[64\sum_{j=0}^t\left\|\nabla f(\overline{z}_j)\right\|_{H_{r(j)}^{-2}}^2+\frac{36}{\lambda^2}\sum_{j=0}^{t-1} \left[\frac{\eta^2L^2\sigma^2}{\lambda^2}KA+\expect_m\|\widehat{g_j^m}-\nabla f(x_j^m)\|^2\right]\right] 
        \end{aligned}
    \end{equation}
    % Again, notice that event $E_t$ implies $\|x_j^m-\overline{x}_j\|^2\leq \frac{\eta^2\sigma^2}{\lambda^2}KA$ for all $j\leq t$ and by Lemma \ref{lem:z-x},
    For the third term of RHS of \eqref{eq:27},
    \begin{equation}\label{eq:31}
        \begin{aligned}
            \frac{5\eta^2}{\lambda\gamma} \sum_{j=0}^t\|\expect_m[\nabla f(x_j^m)-\widehat{g_j^m}]\|^2
            &\leq \frac{10\eta^2}{\lambda\gamma} \sum_{j=0}^t\left[\|\expect_m[\widehat{g_j^m}-\expect_j[\widehat{g_j^m}]]\|^2+\|\expect_m[\nabla f(x_j^m)-\expect_j[\widehat{g_j^m}]]\|^2\right] \\
            &\overset{\text{Lemma \ref{lem:clip}}}{\leq} \frac{10\eta^2}{\lambda\gamma} \sum_{j=0}^t\left[\|\expect_m[\widehat{g_j^m}-\expect_j[\widehat{g_j^m}]]\|^2+\frac{\|2\boldsymbol{\sigma}\|_{2\alpha}^{2\alpha}}{\rho^{2(\alpha-1)}}\right] \\
            &\quad \leq \underbrace{\frac{10\eta^2}{\lambda\gamma} \sum_{j=0}^t\left[\|\expect_m[\widehat{g_j^m}-\expect_j[\widehat{g_j^m}]]\|^2 - \expect_j\left[\|\expect_m[\widehat{g_j^m}-\expect_j[\widehat{g_j^m}]]\|^2\right]\right]}_{\cirone\text{: martingale}} \\
            &\quad\quad + \frac{10\eta^2T}{\lambda\gamma}\left[\frac{\|2\boldsymbol{\sigma}\|_{2\alpha}^{2\alpha}}{\rho^{2(\alpha-1)}}+\frac{\sigma^2}{M}\right]
        \end{aligned}
    \end{equation}
    For the $(*)$ term of RHS of \eqref{eq:27},
    \begin{equation}\label{eq:32}
        \begin{aligned}
            &\frac{1+\eta/4\gamma}{\gamma(\eta^{-1} +\gamma^{-1}-\tau/\lambda)} \sum_{j=0}^t\left\langle \overline{z}_{j}-\eta H_{r(j)}^{-1}\nabla f(\overline{z}_j)-y_j, \expect_m[\nabla f(x_j^m)-\widehat{g_j^m}]\right\rangle \\
            &\quad = \frac{1+\eta/4\gamma}{\gamma(\eta^{-1} +\gamma^{-1}-\tau/\lambda)} \sum_{j=0}^t\left\langle \overline{z}_{j}-\eta H_{r(j)}^{-1}\nabla f(\overline{z}_j)-y_j, \expect_m[\nabla f(x_j^m)-\expect_j[\widehat{g_j^m}]]\right\rangle \\
            &\quad\quad + \underbrace{\frac{1+\eta/4\gamma}{\gamma(\eta^{-1} +\gamma^{-1}-\tau/\lambda)} \sum_{j=0}^t\left\langle \overline{z}_{j}-\eta H_{r(j)}^{-1}\nabla f(\overline{z}_j)-y_j, \expect_m[\expect_j[\widehat{g_j^m}]-\widehat{g_j^m}]\right\rangle}_{\cirtwo\text{: martingale}} \\
            &\overset{\text{AM-GM}}{\leq} \frac{2\eta}{\gamma} \sum_{j=0}^t \left[ \frac{1}{120\gamma}\|H_{r(j)}(\overline{z}_{j}-y_j)-\eta \nabla f(\overline{z}_j)\|_{H_{r(j)}^{-1}}^2+30\gamma\frac{\|2\boldsymbol{\sigma}\|_{2\alpha}^{2\alpha}}{\lambda\rho^{2(\alpha-1)}}\right] + \cirtwo \\
            &\quad \overset{\eqref{eq:moreau_env_2}}{\leq} \frac{\eta}{60}\sum_{j=0}^t\|\nabla f(y_j)\|_{H_{r(j)}^{-1}}^2 + \frac{60\eta T}{\lambda} \frac{\|2\boldsymbol{\sigma}\|_{2\alpha}^{2\alpha}}{\rho^{2(\alpha-1)}} +\cirtwo
        \end{aligned}
    \end{equation}
    Here we remark that $\cirtwo$ is a martingale because $H_{r(j)}$ only depends on stochastic gradients drawn strictly before round $r(j)$ and thus independent of $\widehat{g_j^m}$, which is drawn during round $r(j)$.
    
    Plug \eqref{eq:30},\eqref{eq:31}, \eqref{eq:32} in \eqref{eq:27},
    \begin{equation}
        \begin{aligned}
            f_\gamma^{H_{r(t+1)}}(\overline{z}_{t+1})
            &\leq f_\gamma^{\lambda}(x_0) - \frac{\eta}{12} \sum_{j=0}^t\|\nabla f(y_j)\|_{H_{r(j)}^{-1}}^2 + \cirone + \frac{10\eta^2T}{\lambda\gamma}\left[\frac{\|2\boldsymbol{\sigma}\|_{2\alpha}^{2\alpha}}{\rho^{2(\alpha-1)}}+\frac{\sigma^2}{M}\right] \\
            &\quad + \frac{40\eta}{\lambda} \sum_{j=0}^t\left[ \frac{72(\eta L\beta_1)^2}{(\lambda(1-\beta_1))^2}\left[\frac{\eta^2L^2\sigma^2}{\lambda^2}KA + \expect_m\|\widehat{g_j^m}-\nabla f(x_j^m)\|^2\right] + \frac{\eta^2L^2\sigma^2}{\lambda^2} KA\right] \\
            &\quad + \frac{160\eta(1-\beta_2)^2B^2}{\lambda(1-\beta_1)(\sqrt{\beta_2}-\beta_1)}
            \sum_{j=0}^t\left[\expect_m\|\widehat{g_j^m}-\nabla f(x_j^m)\|^2 + \frac{\eta^2L^2\sigma^2}{\lambda^2}KA\right] \\
            &\quad + \frac{60\eta T}{\lambda} \frac{\|2\boldsymbol{\sigma}\|_{2\alpha}^{2\alpha}}{\rho^{2(\alpha-1)}} +\cirtwo \\
            &\leq f_\gamma^{\lambda}(x_0) - \frac{\eta}{12} \sum_{j=0}^t\|\nabla f(y_j)\|_{H_{r(j)}^{-1}}^2 + \cirone + \frac{10\eta^2T}{\lambda\gamma}\left[\frac{\|2\boldsymbol{\sigma}\|_{2\alpha}^{2\alpha}}{\rho^{2(\alpha-1)}}+\frac{\sigma^2}{M}\right] \\
            &\quad + \frac{160\eta}{\lambda}\frac{[18(\frac{\eta L\beta_1}{\lambda})^2+(1-\beta_2)^2B^2]}{(1-\beta_1)(\sqrt{\beta_2}-\beta_1)} \sum_{j=0}^t\left[\expect_m\|\widehat{g_j^m}-\nabla f(x_j^m)\|^2\right] \\
            &\quad + \frac{160\eta T}{\lambda}\cdot \left[\frac{1}{4}+\frac{18(\frac{\eta L\beta_1}{\lambda})^2+(1-\beta_2)^2B^2}{(1-\beta_1)(\sqrt{\beta_2}-\beta_1)}\right] \cdot \frac{\eta^2L^2\sigma^2}{\lambda^2}KA \\
            &\quad + \frac{60\eta T}{\lambda} \frac{\|2\boldsymbol{\sigma}\|_{2\alpha}^{2\alpha}}{\rho^{2(\alpha-1)}} +\cirtwo \\
            &\leq f_\gamma^{\lambda}(x_0) - \frac{\eta}{12} \sum_{j=0}^t\|\nabla f(y_j)\|_{H_{r(j)}^{-1}}^2 + \cirone + \frac{10\eta^2T}{\lambda\gamma}\left[\frac{\|2\boldsymbol{\sigma}\|_{2\alpha}^{2\alpha}}{\rho^{2(\alpha-1)}}+\frac{\sigma^2}{M}\right] \\
            &\quad + \underbrace{\frac{160\eta}{\lambda}\frac{20(\frac{\eta L}{\lambda})^2}{(1-\beta_1)(\sqrt{\beta_2}-\beta_1)} \sum_{j=0}^t\expect_m\left[\|\widehat{g_j^m}-\nabla f(x_j^m)\|^2-\expect_j\left[\|\widehat{g_j^m}-\nabla f(x_j^m)\|^2\right]\right]}_{\cirthree\text{: martingale}} \\
            &\quad + \frac{50\eta T}{\lambda}\cdot \frac{\eta^2L^2\sigma^2}{\lambda^2}\left(KA+\frac{64}{(1-\beta_1)(\sqrt{\beta_2}-\beta_1)}\right) + \frac{60\eta T}{\lambda} \frac{\|2\boldsymbol{\sigma}\|_{2\alpha}^{2\alpha}}{\rho^{2(\alpha-1)}} +\cirtwo\\
            &\leq f_\gamma^{\lambda}(x_0) - \frac{\eta}{12} \sum_{j=0}^t\|\nabla f(y_j)\|_{H_{r(j)}^{-1}}^2 + \frac{10\eta^2\sigma^2}{\lambda\gamma M}T + \frac{60\eta T}{\lambda}\cdot \frac{\eta^2L^2\sigma^2}{\lambda^2}KA + \frac{60\eta T}{\lambda} \frac{\|2\boldsymbol{\sigma}\|_{2\alpha}^{2\alpha}}{\rho^{2(\alpha-1)}}\\
            &\quad + \cirone + \cirtwo + \cirthree.
        \end{aligned}
    \end{equation}
    where in the third inequality, we apply $(1-\beta_2)B\leq \frac{\eta L}{\lambda}$.
    
    For \cirone, define 
    \begin{equation}
        \theta_j=\left\{
            \begin{array}{ll}
                \frac{10\eta^2}{\lambda\gamma}\left[\left\|\expect_m[\widehat{g_j^m}-\expect_j[\widehat{g_j^m}]]\right\|^2-\expect_j\left[\left\|\expect_m[\widehat{g_j^m}-\expect_j[\widehat{g_j^m}]]\right\|^2\right]\right], & \text{if event $E_j$ holds, } \\
                0 ,& \text{otherwise.}
            \end{array}
        \right.
    \end{equation}
    Then event $E_t$ implies $\cirone=\sum_{j=0}^t\theta_j$ and notice that
    \begin{equation}
        |\theta_j|\leq \frac{10\eta^2}{\lambda\gamma}\cdot 4\rho^2d=\frac{40\eta^2\rho^2d}{\lambda\gamma}\overset{def}{=}c,
    \end{equation}
    \begin{equation}
        \Var_j(\theta_j) 
        \leq \left(\frac{10\eta^2}{\lambda\gamma}\right)^2 \expect_j\left[\|\expect_m[\widehat{g_j^m}-\expect_j[\widehat{g_j^m}]]\|^2\right]^2 \overset{\text{Lemma \ref{lem:4th_noise}
        } }{\leq} 1600\left(\frac{\eta^2\sigma^2}{\lambda\gamma M}\right)^2.
    \end{equation}
    Let $b=\Delta/4$, $V=1600T\left(\frac{\eta^2\sigma^2}{\lambda\gamma M}\right)^2$. Then by Lemma \ref{lem:martingale}, $|\sum_{j=0}^t\theta_j|\leq b$ with probability no less than
    \begin{equation}
        1 - 2\exp{\left(-\frac{b^2}{2V+2cb/3}\right)} \geq 1-\frac{\delta}{12T}.
    \end{equation}
    For \cirthree, define
    \begin{equation}
        \xi_j=\left\{
            \begin{array}{ll}
                \frac{160\eta}{\lambda}\frac{20(\frac{\eta L}{\lambda})^2}{(1-\beta_1)(\sqrt{\beta_2}-\beta_1)}\left(\expect_m\left[\|\widehat{g_j^m}-\nabla f(x_j^m)\|^2-\expect_j[\|\widehat{g_j^m}-\nabla f(x_j^m)\|^2]\right]\right), & \text{if event $E_j$ holds, } \\
                0 ,& \text{otherwise.}
            \end{array}
        \right.        
    \end{equation}
    Note that
    \begin{equation}
        |\xi_j|\leq \frac{160\eta}{\lambda}\frac{20(\frac{\eta L}{\lambda})^2}{(1-\beta_1)(\sqrt{\beta_2}-\beta_1)}\cdot 4\rho^2d\overset{def}{=}c
    \end{equation}
    \begin{equation}
        \begin{aligned}
            \Var_j(\xi_j) 
            &\leq \left(\frac{160\eta}{\lambda}\frac{20(\frac{\eta L}{\lambda})^2}{(1-\beta_1)(\sqrt{\beta_2}-\beta_1)}\right)^2\frac{\expect_j\expect_m \|\widehat{g_j^m}-\nabla f(x_j^m)\|^4}{M} \\
            &\leq \left(\frac{160\eta}{\lambda}\frac{20(\frac{\eta L}{\lambda})^2}{(1-\beta_1)(\sqrt{\beta_2}-\beta_1)}\right)^2\frac{\sigma^4}{M}.
        \end{aligned}
    \end{equation}
    Let $b=\Delta/4$, $V=\left(\frac{160\eta}{\lambda}\frac{20(\frac{\eta L}{\lambda})^2}{(1-\beta_1)(\sqrt{\beta_2}-\beta_1)}\right)^2\frac{T\sigma^4}{M}$. Then by Lemma \ref{lem:martingale}, $|\sum_{j=0}^t\xi_j|\leq b$ with probability no less than
    \begin{equation}
        1 - 2\exp{\left(-\frac{b^2}{2V+2cb/3}\right)} \geq 1-\frac{\delta}{12T}.
    \end{equation}
        For \cirtwo, define
    \begin{equation}
        \zeta_j=\left\{
            \begin{array}{ll}
                \frac{1+\eta/4\gamma}{\gamma(\eta^{-1} +\gamma^{-1}-\tau/\lambda)} \left\langle \overline{z}_{j}-\eta H_{r(j)}^{-1}\nabla f(\overline{z}_j)-y_j, \expect_m[\expect_j[\widehat{g_j^m}]-\widehat{g_j^m}]\right\rangle, & \text{if event $E_j$ holds, } \\
                0 ,& \text{otherwise.}
            \end{array}
        \right.
    \end{equation}
    Then event $E_t$ implies $\cirtwo=\sum_{j=0}^t\zeta_j$ and notice that by Lemma \ref{lem:moreau_env},
    \begin{equation}
        \begin{aligned}
            \|\overline{z}_{j}-\eta H_{r(j)}^{-1}\nabla f(\overline{z}_j)-y_j\|^2
            &\leq \frac{\left\|H_{r(j)}(\overline{z}_j-y_j)-\eta\nabla f(\overline{z}_j)\right\|_{H_{r(j)}^{-1}}^2}{\lambda} \\
            &\leq \frac{\gamma^2\|\nabla f_\gamma^{H_{r(j)}}(\overline{z}_j)\|_{H_{r(j)}^{-1}}^2}{\lambda} \\
            &\leq \frac{2\gamma \Delta}{\lambda}.
        \end{aligned}
    \end{equation}
    Therefore,
    \begin{equation}
        |\zeta_j|\leq \frac{2\eta}{\gamma} \cdot\sqrt{\frac{2\gamma\Delta}{\lambda}}\cdot 2\rho\sqrt{d} = 4\eta\rho\sqrt{\frac{2\Delta d}{\gamma\lambda}}\overset{def}{=}c,
    \end{equation}
    \begin{equation}
        \Var_j(\zeta_j) 
        \leq \left(\frac{2\eta}{\gamma}\right)^2\cdot \frac{\gamma^2}{\lambda}\|\nabla f(y_j)\|_{H_{r(j)}^{-1}}^2 \cdot \frac{\sigma^2}{M} \leq \frac{4\eta^2\sigma^2}{\lambda M}\|\nabla f(y_j)\|_{H_{r(j)}^{-1}}^2.
    \end{equation}
    Let $b=\Delta/4$, $V=\frac{100\eta\sigma^2\Delta}{\lambda M}$. Then by Lemma \ref{lem:martingale}, 
    \begin{equation}
        \prob \left\{|\sum_{j=0}^{t}\zeta_j|>b \text{ and } \sum_{j=0}^{t}\Var_j(\zeta_j)\leq V\right\} \leq 2\exp{\left( -\frac{b^2}{2V+2cb/3}\right)}\leq \frac{\delta}{12T}.
    \end{equation}
    Note that  by Lemma \ref{lem:moreau_env} and event $E_t$, 
    \begin{equation}
        \|\nabla f(y_t)\|_{H_{r(t)}^{-1}}^2\leq \frac{2}{\gamma}(f_\gamma^{H_{r(t)}}(\overline{z}_t)-\min f_\gamma^\lambda)\leq \frac{4\Delta}{\gamma}.
    \end{equation}
    \begin{equation}
        \sum_{j=0}^{t}\Var_j(\zeta_j)
        \leq \frac{4\eta^2\sigma^2}{\lambda M}\sum_{j=0}^t\|\nabla f(y_j)\|_{H_{r(j)}^{-1}}^2
        \leq \frac{4\eta^2\sigma^2}{\lambda M}\cdot (\frac{24\Delta}{\eta}+\frac{4\Delta}{\gamma})\leq V.
    \end{equation}
    Therefore, combining \cirone, \cirtwo, \cirthree, with probability no less than $\prob(E_{t,3})-3\cdot\frac{\delta}{12T}$, event $E_{t,3}$ holds and $|\sum_{j=0}^t \zeta_j|\leq \frac{\Delta}{4}, |\sum_{j=0}^t \theta_j|\leq \frac{\Delta}{4}$, $|\sum_{j=0}^t \xi_j|\leq \frac{\Delta}{4}$. These implies
    \begin{equation}
        \begin{aligned}
            f_\gamma^{H_{r(t+1)}}(\overline{z}_{t+1})-\min f_\gamma^\lambda
            &\leq \frac{7}{4}\Delta - \frac{\eta}{12} \sum_{j=0}^t\|\nabla f(y_j)\|_{H_{r(j)}^{-1}}^2 
            +\frac{10\eta^2\sigma^2}{\lambda\gamma M}T + \frac{60\eta T}{\lambda}\cdot \frac{\eta^2L^2\sigma^2}{\lambda^2}KA + \frac{60\eta T}{\lambda} \frac{\|2\boldsymbol{\sigma}\|_{2\alpha}^{2\alpha}}{\rho^{2(\alpha-1)}} \\
            &\leq 2\Delta- \frac{\eta}{12} \sum_{j=0}^t\|\nabla f_\gamma^{H_{r(j)}}(\overline{z}_j)\|_{H_{r(j)}^{-1}}^2.
        \end{aligned}
    \end{equation}
    In the last inequality, we apply
    \begin{equation}
        \frac{10\eta^2\sigma^2}{\lambda\gamma M}T \leq \frac{\Delta}{12},\quad \frac{60\eta}{\lambda}T\cdot \frac{\eta^2L^2\sigma^2}{\lambda^2}KA \leq \frac{\Delta}{12},\quad 
        \frac{60\eta T}{\lambda} \frac{\|2\boldsymbol{\sigma}\|_{2\alpha}^{2\alpha}}{\rho^{2(\alpha-1)}} \leq \frac{\Delta}{12}
    \end{equation}
    Therefore, we can conclude that $\prob(E_{t+1})\geq \prob(E_{t,3})-\frac{\delta}{4T}$. 
\end{proof}

\begin{lemma}\label{lem:z-x}
    Define $\Lambda_t:=\sum_{j=0}^{t-1}a_{t,j}\|\overline{x}_j-\overline{x}_{j+1}\|^2$ where $a_{t,j}:=\beta_1^{t-j-1}(t-j+\frac{\beta_1}{1-\beta_1})$. Under the same conditions in Lemma \ref{lem:descent}, event $E_{t,3}$ implies 
    \begin{equation}
        \begin{aligned}
            \Lambda_t 
            &\leq \left(1-\frac{(1-\beta_1)^2}{2}\right)\Lambda_{t-1} + \frac{64\eta^2}{1-\beta_1}\left\|\nabla f(\overline{z}_t)\right\|_{H_r^{-2}}^2 \\
            &\qquad + \frac{36\eta^2}{\lambda^2}(1-\beta_1)\sum_{j=rK}^{t-1}\beta_1^{t-j-1} \left[\frac{\eta^2L^2\sigma^2}{\lambda^2}KA+\expect_m\|\widehat{g_j^m}-\nabla f(x_j^m)\|^2\right].
        \end{aligned}
    \end{equation}
\end{lemma}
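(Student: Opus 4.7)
The plan is to mirror Lemma \ref{lem:z-x_sgdm}, adapting the argument to account for the $H$-preconditioning in Local Adam. From the definition of $\{z_t^m\}$, one has $\overline{z}_t - \overline{x}_t = \frac{\beta_1}{1-\beta_1}(\overline{x}_t - \overline{x}_{t-1})$ within a round, so a bound on $\|\overline{x}_t - \overline{x}_{t-1}\|^2$ transfers directly to the accompanying bound on $\|\overline{z}_j - \overline{x}_j\|^2$ used in the descent lemma. From the update rule, $\overline{x}_t - \overline{x}_{t-1} = -\eta\,\expect_m[(H_{t-1}^m)^{-1}u_{t-1}^m]$, and under event $\mathcal{A}_{t-1,1}$ the matrix $(H_{t-1}^m)^{-1}$ can be replaced by $H_r^{-1}$ up to a multiplicative $(1+\mathcal{O}((1-\beta_2)B))$ factor, reducing the task to estimating $\|H_r^{-1}\expect_m[u_{t-1}^m]\|$.

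Next, I would unroll the EMA within the round as $u_{t-1}^m = (1-\beta_1)\sum_{j=rK}^{t-1}\beta_1^{t-1-j}\widehat{g_j^m} + \beta_1^{t-rK}u_r$, and split each $\widehat{g_j^m}$ as $\nabla f(\overline{x}_j) + [\nabla f(x_j^m)-\nabla f(\overline{x}_j)] + [\widehat{g_j^m}-\nabla f(x_j^m)]$. After averaging over $m$ and applying Cauchy--Schwarz against the geometric EMA weights, the consensus term contributes $L^2\expect_m\|x_j^m-\overline{x}_j\|^2$, which is absorbed by the contraction bound $\|x_j^m-\overline{x}_j\|^2\leq \eta^2\sigma^2 KA/\lambda^2$ from event $\mathcal{A}_{t,3}$, producing the $\frac{\eta^2L^2\sigma^2}{\lambda^2}KA$ term in the target inequality. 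The clipped-noise term $\|\expect_m[\widehat{g_j^m}-\nabla f(x_j^m)]\|^2$ is left as is to match the second sum in the statement.

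The deterministic piece $\|H_r^{-1}(1-\beta_1)\sum_{j=rK}^{t-1}\beta_1^{t-1-j}\nabla f(\overline{x}_j)\|^2$ is the core of the argument. I would add and subtract $\nabla f(\overline{x}_t)$, use smoothness to bound $\|\nabla f(\overline{x}_j)-\nabla f(\overline{x}_t)\|^2 \leq L^2(t-j)\sum_{i=j}^{t-1}\|\overline{x}_i-\overline{x}_{i+1}\|^2$, and swap the order of summation to convert the resulting double sum into $\sum_j a_{t,j}\|\overline{x}_j-\overline{x}_{j+1}\|^2 = \Lambda_t$. Then the inequality $a_{t,j} \leq \beta_1(2-\beta_1)\,a_{t-1,j}$ for $j\leq t-2$ (which is exactly the arithmetic identity used in the SGDM analogue) converts the $\Lambda_t$ on the right-hand side into $\beta_1(2-\beta_1)\Lambda_{t-1}$ plus an isolated $j=t-1$ contribution of size $\eta^2L^2/(\lambda^2(1-\beta_1)^2)\cdot\|\overline{x}_t-\overline{x}_{t-1}\|^2$, which is absorbed back into the LHS using $\eta L/\lambda \ll (1-\beta_1)^2$; the residual $\nabla f(\overline{x}_t)$ is replaced by $\nabla f(\overline{z}_t)$ via the same smoothness correction on $\|\overline{x}_t-\overline{z}_t\|^2$, delivering the claimed factor $1-(1-\beta_1)^2/2$ in front of $\Lambda_{t-1}$ together with the $64\eta^2/(1-\beta_1)\,\|\nabla f(\overline{z}_t)\|_{H_r^{-2}}^2$ driving term.

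The main obstacle is consistent bookkeeping between Euclidean and $H_r$-weighted norms: smoothness yields Euclidean estimates on gradient differences while the target inequality is in the $H_r^{-2}$ norm, so every translation costs a factor of $\lambda^{-1}$ from $\lambda I_d \preceq H_r$ (this explains the $\lambda^2$ denominators). A secondary care-point is the boundary momentum $u_r$ inherited from the previous round; however, event $\mathcal{A}_{t,3}$ already controls $\|u_r\|^2$ by $(1-\beta_1)^3\sigma^2 A/(2^{11}(1-\beta_2)^2 B_1^2)$, which, together with the hypothesis $(1-\beta_2)B\lesssim \eta L/\lambda$ ensured in Lemma \ref{lem:descent}, renders this contribution of strictly lower order than the terms already present on the right-hand side.
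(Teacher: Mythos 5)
Your overall decomposition matches the paper's proof: relate $\|\overline{z}_t-\overline{x}_t\|^2$ to $\|\overline{x}_t-\overline{x}_{t-1}\|^2$, expand the momentum EMA, split each $\widehat{g_j^m}$ into deterministic plus consensus plus noise, use smoothness to generate $\Lambda_t$, then apply $a_{t,j}\leq\beta_1(2-\beta_1)a_{t-1,j}$ and swap $\nabla f(\overline{x}_t)$ for $\nabla f(\overline{z}_t)$. The $a_{t,j}$ arithmetic and the $\lambda^{-1}$ bookkeeping are correctly identified.

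However, the first reduction is not right as stated. You claim that under $\mathcal{A}_{t-1,1}$ the matrix $(H_{t-1}^m)^{-1}$ can be replaced by $H_r^{-1}$ ``up to a multiplicative $(1+\mathcal{O}((1-\beta_2)B))$ factor,'' reducing the task to estimating $\|H_r^{-1}\expect_m[u_{t-1}^m]\|$. Since $(H_{t-1}^m)^{-1}$ depends on $m$, it cannot be pulled outside the average: take $u^1=a$, $u^2=-a$, $(H^1)^{-1}=1$, $(H^2)^{-1}=1+\epsilon$, $H_r^{-1}=1$; then $\|H_r^{-1}\expect_m[u^m]\|=0$ while $\|\expect_m[(H^m)^{-1}u^m]\|=\epsilon a/2\neq 0$, so no multiplicative comparison exists. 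The paper avoids this by first splitting in $u$ rather than in $H$: it writes $\expect_m[(H_{t-1}^m)^{-1}u_{t-1}^m]=\expect_m[(H_{t-1}^m)^{-1}]\,\overline{u}_{t-1}+\expect_m[(H_{t-1}^m)^{-1}(u_{t-1}^m-\overline{u}_{t-1})]$; the first piece has a single worker-independent vector so the diagonal bound from $\mathcal{A}_{t,1}$ applies, while the second piece is bounded crudely by $\lambda^{-1}\expect_m\|u_{t-1}^m-\overline{u}_{t-1}\|$ and then controlled by contraction. That momentum-consensus term is not lower order; it is one of the two main contributions producing the $\sum_{j=rK}^{t-1}\beta_1^{t-j-1}[\tfrac{\eta^2L^2\sigma^2}{\lambda^2}KA+\expect_m\|\widehat{g_j^m}-\nabla f(x_j^m)\|^2]$ term in the target. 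Your later gradient-split recovers similarly shaped terms, but the initial reduction you rely on does not hold, so the proof as organized would fail at that step; replacing it with the bias--variance split above repairs the argument and matches the paper.

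A secondary inaccuracy: the bound $\|u_r\|^2\leq(1-\beta_1)^3\sigma^2A/(2^{11}(1-\beta_2)^2B_1^2)$ is not part of $\mathcal{A}_{t,3}$ itself but is derived (eq.~\eqref{eq:33} in the contraction lemma) from the $\sum_j\|\widehat{g_j^m}\|^2$ control in $\mathcal{A}_{j,3}$ for earlier rounds. The paper's proof of this lemma actually avoids needing an explicit $u_r$ bound here by unrolling $\overline{u}_{t-1}$ over all rounds rather than truncating at $rK$; either choice works, but the attribution to $\mathcal{A}_{t,3}$ is imprecise.
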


\begin{proof}
    By the update rule, it always holds that
    \begin{equation}\label{eq:7}
        \|\overline{z}_t-\overline{x}_t\|^2 = (\frac{\beta_1}{1-\beta_1})^2 \|\overline{x}_t-\overline{x}_{t-1}\|^2.
    \end{equation}
    By AM-GM inequality and event $E_{t,1}$,
    \begin{equation}
        \begin{aligned}
            \|\overline{x}_t-\overline{x}_{t-1}\|^2
            &= \eta^2\|\expect_m (H_{t-1}^m)^{-1}u_{t-1}^m \|^2 \\
            &\leq 2\eta^2\|\expect_m (H_{t-1}^m)^{-1}\overline{u}_{t-1} \|^2 + \frac{2\eta^2}{\lambda^2} \expect_m\|u_{t-1}^m-\overline{u}_{t-1}\|^2 \\
            &\leq 4\eta^2\|\expect_m H_r^{-1}\overline{u}_{t-1} \|^2 + \frac{2\eta^2}{\lambda^2} \expect_m\|u_{t-1}^m-\overline{u}_{t-1}\|^2.
        \end{aligned}
    \end{equation}
    Event $E_{t,1}$ implies $z_j^m, x_j^m\in \conv(\ball_{R_0}(\Omega))$ for all $j\leq t$ and thus 
    \begin{equation}
        \begin{aligned}
            \expect_m\|u_{t-1}^m-\overline{u}_{t-1}\|^2
            &\leq (1-\beta_1)\sum_{j=rK}^{t-1}\beta_1^{t-j-1}\expect_m[\|\widehat{g_j^m}-\overline{g}_j\|^2] \\
            &\leq 2(1-\beta_1)\sum_{j=rK}^{t-1}\beta_1^{t-j-1}\expect_m\left[\|\widehat{g_j^m}-\nabla f(x_j^m)\|^2+\|\nabla f(x_j^m)-\expect_m\nabla f(x_j^m)\|^2\right] \\
            &\leq 2(1-\beta_1)\sum_{j=rK}^{t-1}\beta_1^{t-j-1}\expect_{m}\left[L^2\| x_j^m-\overline{x}_j\|^2+\|\widehat{g_j^m}-\nabla f(x_j^m)\|^2\right] \\
            &\leq 2(1-\beta_1)\sum_{j=rK}^{t-1}\beta_1^{t-j-1} \left[\frac{\eta^2L^2\sigma^2}{\lambda^2}KA+\expect_m\|\widehat{g_j^m}-\nabla f(x_j^m)\|^2\right].
        \end{aligned}
    \end{equation}
    \begin{equation}
        \begin{aligned}
            \frac{1}{4}\|\overline{u}_{t-1} \|_{H_r^{-2}}^2
            &\leq \left\|(1-\beta_1)\sum_{j=0}^{t-1}\beta_1^{t-j-1} \nabla f(\overline{x}_t)\right\|_{H_r^{-2}}^2 + \left\|(1-\beta_1)\sum_{j=0}^{t-1}\beta_1^{t-j-1} [\nabla f(\overline{x}_j) - \nabla f(\overline{x}_t)] \right\|_{H_r^{-2}}^2  \\
            &\qquad +\left\|(1-\beta_1)\sum_{j=0}^{t-1}\beta_1^{t-j-1} \expect_m [\nabla f(x_j^m) - \nabla f(\overline{x}_j)]\right\|_{H_r^{-2}}^2 + \left\|(1-\beta_1)\sum_{j=0}^{t-1}\beta_1^{t-j-1} \expect_m [\widehat{g_j^m}-\nabla f(x_j^m)]\right\|_{H_r^{-2}}^2 \\
            &\leq \left\|\nabla f(\overline{x}_t)\right\|_{H_r^{-2}}^2 + \frac{(1-\beta_1)}{\lambda^2}\sum_{j=0}^{t-1}\beta_1^{t-j-1}L^2\|\overline{x}_j-\overline{x}_t\|^2 \\
            &\qquad + \frac{(1-\beta_1)}{\lambda^2}\sum_{j=0}^{t-1}\beta_1^{t-j-1}\left[ \| \expect_m[\widehat{g_j^m}-\nabla f(x_j^m)] \|^2 + \|\expect_m[\nabla f(x_j^m)-\nabla f(\overline{x}_j)] \|^2\right] \\
            &\leq 2\left\|\nabla f(\overline{z}_t)\right\|_{H_r^{-2}}^2 + \frac{2L^2}{\lambda^2}\|\overline{z}_t-\overline{x}_t\|^2 + \frac{(1-\beta_1)}{\lambda^2}\sum_{j=0}^{t-1}\beta_1^{t-j-1}L^2(t-j)\sum_{i=j}^{t-1}\|\overline{x}_i-\overline{x}_{i+1}\|^2 \\
            &\qquad + \frac{(1-\beta_1)}{\lambda^2}\sum_{j=0}^{t-1}\beta_1^{t-j-1}\left[ \| \expect_m[\widehat{g_j^m}-\nabla f(x_j^m)] \|^2 + \|\expect_m[\nabla f(x_j^m)-\nabla f(\overline{x}_j)] \|^2\right] \\
            &\leq 2\left\|\nabla f(\overline{z}_t)\right\|_{H_r^{-2}}^2 + \frac{2L^2}{\lambda^2}\|\overline{z}_t-\overline{x}_t\|^2 + \frac{L^2}{\lambda^2}\sum_{j=0}^{t-1}a_{t,j}\|\overline{x}_j-\overline{x}_{j+1}\|^2 \\
            &\qquad + \frac{(1-\beta_1)}{\lambda^2}\sum_{j=0}^{t-1}\beta_1^{t-j-1}\left[ \| \expect_m[\widehat{g_j^m}-\nabla f(x_j^m)] \|^2 + \|\expect_m[\nabla f(x_j^m)-\nabla f(\overline{x}_j)] \|^2\right].
        \end{aligned}
    \end{equation}
    Here $a_{t,j}:=\beta_1^{t-j-1}(t-j+\frac{\beta_1}{1-\beta_1})$. 
    For $j\leq t-2$, we have $a_{t,j}\leq \beta_1(2-\beta_1)a_{t-1,j}$. 
    Since $\Lambda_t=\sum_{j=0}^{t-1}a_{t,j}\|\overline{x}_j-\overline{x}_{j+1}\|^2$, we conclude that
    \begin{equation}\label{eq:8}
        \begin{aligned}
            \|\overline{x}_t-\overline{x}_{t-1}\|^2
            &\leq 64\eta^2\left[ \left\|\nabla f(\overline{z}_t)\right\|_{H_r^{-2}}^2 + \frac{L^2}{\lambda^2}\Lambda_{t-1}\right] + \frac{4\eta^2}{\lambda^2}(1-\beta_1)\sum_{j=rK}^{t-1}\beta_1^{t-j-1} \left[\frac{\eta^2L^2\sigma^2}{\lambda^2}KA+\expect_m\|\widehat{g_j^m}-\nabla f(x_j^m)\|^2\right] \\
            &\qquad + \frac{32\eta^2(1-\beta_1)}{\lambda^2}\sum_{j=0}^{t-1}\beta_1^{t-j-1}\left[ \| \expect_m[\widehat{g_j^m}-\nabla f(x_j^m)] \|^2 
            + \|\expect_m[\nabla f(x_j^m)-\nabla f(\overline{x}_j)] \|^2 \right] \\
            &\leq 64\eta^2\left[ \left\|\nabla f(\overline{z}_t)\right\|_{H_r^{-2}}^2 + \frac{L^2}{\lambda^2}\Lambda_{t-1}\right] \\
            &\qquad + \frac{36\eta^2}{\lambda^2}(1-\beta_1)\sum_{j=rK}^{t-1}\beta_1^{t-j-1} \left[\frac{\eta^2L^2\sigma^2}{\lambda^2}KA+\expect_m\|\widehat{g_j^m}-\nabla f(x_j^m)\|^2\right],
        \end{aligned}
    \end{equation}
    and
    \begin{equation}
        \Lambda_t \leq \beta_1(2-\beta_1)\Lambda_{t-1} + \frac{1}{1-\beta_1}\|\overline{x}_t-\overline{x}_{t-1}\|^2.
    \end{equation}
    This completes the proof.
\end{proof}

\subsection{Further Discussion}\label{app:subsec:discuss_local_adam}

\paragraph{Compared to other results under centralized weakly convex setting.}

Theorem \ref{app:thm:local_adam_1} can reduce to Minibatch Adam (by substituting $M,K$ with $1$ and $\sigma$ with $\frac{\sigma}{\sqrt{MK}}$ in \eqref{app:eq:grad_bound_local_adam_1} \citep{Petrov1992}), and the convergence guarantee is
\begin{equation}
    \frac{\lambda}{R}\sum_{r=0}^{R-1}\|\nabla f_\gamma^{H_r} (\overline{z}_r)\|_{H_r^{-1}}^2=\Tilde{\mathcal{O}}\left(
        \frac{L\Delta}{R} + \sqrt{\frac{\lambda\Delta\sigma^2}{\gamma MKR}} + \left(\frac{L\Delta\sigma^{\frac{\alpha}{\alpha-1}}}{(MK)^{\frac{\alpha}{2(\alpha-1)}}R}\right)^{\frac{2(\alpha-1)}{3\alpha-2}}\right).
\end{equation}
Therefore, in centralized setting with iteration number $R$ and batch size $1$, our guarantee for squared norm of gradient of Moreau envelope is
\begin{equation}
    \Tilde{\mathcal{O}}\left(\frac{L\Delta}{R} + \sqrt{\frac{\lambda\Delta\sigma^2}{\gamma R}}+\left(\frac{L\Delta\sigma^{\frac{\alpha}{\alpha-1}}}{R}\right)^{\frac{2(\alpha-1)}{3\alpha-2}}\right).
\end{equation}
The last term is induced by the bias of clipped gradient.
For simplicity, let $R\gtrsim \frac{L\Delta}{\sigma^2}$ so that the last term can be dominated by the first term.
Then we obtain
\begin{equation}
    \Tilde{\mathcal{O}}\left(\frac{L\Delta}{R} + \sqrt{\frac{\lambda\Delta\sigma^2}{\gamma MKR}}\right).
\end{equation}
In the previous literature of weakly convex function \citep{davis2019stochastic, alacaoglu2020convergence, mai2021stability}, $f$ is typically non-smooth and stochastic gradient is assumed to have bounded second order moment. 
This is weaker than the smoothness assumption but stronger than that of noise with bounded moment.
There are a few existing results for smooth objective \citep{davis2019stochastic, mai2020convergence, deng2021minibatch}, but they set $\tau=L$.
Overall, our result is the first convergence guarantee for smooth weakly convex function with $\tau\ll L$ and bounded-moment noise.

\paragraph{Dependence on $\beta_2$.}

The default setting of $\beta_2$ in the Adam optimizer of PyTorch is $0.999$, which is a constant close to $1$. 
Adam with small $\beta_2$ has been shown to diverge in some examples \citep{reddi2019convergence}. 
However, if it is too close to $1$, \eg, $\beta_2\geq 1-\mathcal{O}(T^{-1})$, then the denominator would be too stagnant to provide adaptivity. 
Therefore, to derive a proper range for $\beta_2$ is crucial in the theoretical analysis of Adam.

On the other hand, $\beta_2$ is notoriously difficult to handle even under centralized setting. 
In finite sum case, \cite{zou2019sufficient} assumes $\beta_2\geq 1-\mathcal{O}(T^{-1})$.
\cite{shi2020rmsprop} suggests that $\beta_2\geq 1-\mathcal{O}(n^{-3.5})$ suffices, where $n$ is sample size.
\cite{zhang2022adam} claims Adam can converge to the neighborhood of stationary points with constant radius if $\beta_2\geq 1-\mathcal{O}(n^{-3})$. 
Further, \cite{wang2022provable} shows Adam can converge to stationary points if $\beta_2$ is sufficiently close to $1$, but the explicit bound is missing.
In streaming data case, \cite{defossez2020simple} shows $\beta_2$ can be a constant but relies on the bounded gradient assumption. \citep{li2024convergence} suggests $\beta_2\geq 1-\Tilde{\mathcal{O}}(T^{-\frac{1}{2}})$. 

As for distributed setting, works discussing the range of $\beta_2$ are much fewer.
Our theory requires $\beta_2\geq 1-\Tilde{\mathcal{O}}(K^{-\frac{3}{2}}R^{-\frac{1}{2}})$. 
For distributed Adam, \cite{karimireddy2020mime, zhao2022communication} fixed the denominator during local iterations and thus did not discuss the range of $\beta_2$.
To the best of our knowledge, our result is the first one to show the $\Tilde{\mathcal{O}}(R^{-\frac{1}{2}})$ dependence with respect to $R$.
Nevertheless, it is an interesting question to improve the dependence on $K$.
Since $K$ is usually a constant in practice, our results suggest $\beta_2\geq 1-\Tilde{\mathcal{O}}(\mathcal{R}^{-\frac{1}{2}})$ in essence. 
Still, we believe that the dependence on $K$ has room for improvement.
We leave this for future work.

\paragraph{Dependence on $\lambda$.}
$\lambda$ in the denominator of Adam is aimed to avoid numerical instability, and usually a small constant in practice. 
Note $H_r= \diag(\sqrt{V_r+\lambda^2})$ and $v_r$ is the EMA of squared past gradients.
Informally, $v_r$ vanishes as $r$ grows and thus $H_r$ would gradually reduce to $\lambda I_d$. 
In the worst case, $H_r$ can be bounded by a constant.
In conclusion, the LHS in \eqref{eq:grad_bound_local_adam} is roughly the averaged squared gradient norm if $\lambda$ is not too small.
It is worth noting that $\lambda$ can be arbitrarily small or even $0$ in \citep{defossez2020simple, wang2022provable, wang2024closing}. 
However, their results all depend on $\poly(d)$. 
It is still an interesting question to get dimension-free result with small $\lambda$. 

\paragraph{Dependence on $\beta_1$.}
The default setting of $\beta_1$ in PyTorch is $0.9$, a constant away from $0$ and $1$.
In the centralized setting, \cite{li2024convergence} requires $\beta_1=1-\mathcal{O}(T^{-\frac{1}{2}})$ to converge, which is too large.
\cite{defossez2020simple} shows $\mathcal{O}\left((1-\beta_1)^{-1}\right)$, which is the state of the art result to the best of our knowledge. 
However, it relies on the bounded gradient assumption. 
Regarding the dependence on $\beta_1$, our convergence rate in Theorem \ref{app:thm:local_adam} suggests $\mathcal{O}\left((1-\beta_1)^{-2}\right)$. 
Although it also supports any constant choice of $\beta_1$, we leave the exploration of better dependence for future work.

\section{Failure of Standard SGD with Heavy-Tailed Noise}\label{app:sec:fail_sgd}

The convergence of standard SGD in high probability is widely studied. 
If we assume the noises are light-tailed, \eg, sub-exponential, sub-gaussian, then SGD can get high probability bound depending on $\log\frac{1}{\delta}$.
However, if only finite variance is assumed, \cite{pmlr-v202-sadiev23a} has shown that standard SGD fails to get a high probability bound having logarithmic dependence on $\frac{1}{\delta}$. 
In fact, this claim is still valid when the stochastic noises only have finite $\alpha$th-moment, as shown in Theorem \ref{app:thm:fail_sgd}  below. 
Therefore, gradient clipping is necessary to get the $\log\frac{1}{\delta}$ bound.

\begin{thm}\label{app:thm:fail_sgd}
    For any $\varepsilon > 0$, $\delta \in (0,1)$, and SGD with the iteration number $T$ and learning rate $\eta$, there exists an 1D-problem satisfying Assumption \ref{asp:lb}, \ref{asp:smooth}, \ref{asp:moment_noise}, \ref{asp:sc}, with $\Omega=\reals$ and $L=\mu$, such that, if $0 < \eta \leq 1/L$, then
    \begin{equation}
        \prob \left\{f(x_T)-f_* \geq \varepsilon\right\} \leq \delta \Longrightarrow T = \Tilde{\Omega} \left(\frac{\sigma}{\delta^{1/\alpha}}\sqrt{\frac{L}{\varepsilon}}\right).
    \end{equation}
\end{thm}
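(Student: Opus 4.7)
My plan is to exhibit an explicit 1D quadratic problem together with a sharply heavy-tailed noise distribution, and to combine a noise-driven failure event with a classical noiseless-convergence bound in two complementary regimes of the learning rate.

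The setup I would take is $f(x) = \tfrac{L}{2}x^2$ on $\Omega = \reals$, which satisfies all four assumptions with $\mu = L$ and $f_\ast = 0$. Writing $q := 1-\eta L \in [0,1)$, the SGD recursion unrolls to $x_T = q^T x_0 - \eta \sum_{t=0}^{T-1} q^{T-1-t}\xi_t$, so the failure event $\{f(x_T) \geq \varepsilon\}$ is exactly $\{|x_T| \geq \sqrt{2\varepsilon/L}\}$. I will pick iid $\xi_t$ from the symmetric three-point distribution $\prob(\xi_t = \pm A) = p/2$ and $\prob(\xi_t = 0) = 1-p$, calibrated by $A^\alpha p = \sigma^\alpha$ so that Assumption~\ref{asp:moment_noise} is saturated with equality, leaving one free parameter.

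For the noise-driven regime, starting from $x_0 = 0$, the single-shock event $E := \{\xi_{T-1} \in \{\pm A\},\ \xi_t = 0 \text{ for all } t < T-1\}$ has probability $p(1-p)^{T-1}$, and on $E$ the identity $x_T = -\eta\xi_{T-1}$ holds exactly---the three-point support is chosen precisely so that no residual summand appears. Setting $A := \sqrt{2\varepsilon/L}/\eta$ makes $E$ imply failure, and the moment constraint then fixes $p = (\eta\sigma)^\alpha \cdot (L/(2\varepsilon))^{\alpha/2}$. For $Tp \lesssim 1$ one obtains $\prob\{f(x_T)-f_\ast \geq \varepsilon\} \gtrsim p$, and imposing the hypothesis $\prob\{f(x_T)-f_\ast \geq \varepsilon\} \leq \delta$ forces $p \lesssim \delta$, equivalently $\eta \lesssim \delta^{1/\alpha}\sqrt{\varepsilon/L}/\sigma$. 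To convert this cap on $\eta$ into a lower bound on $T$, I would shift to $x_0$ with $f(x_0) \geq 4\varepsilon$: on the zero-shock event of probability $(1-p)^T = \Omega(1)$, the iterate reduces to $x_T = q^T x_0$, and $f(q^T x_0) \geq \varepsilon$ unless $q^T \leq 1/2$. For $\eta L \leq 1/2$ this forces $T \gtrsim 1/(\eta L)$; substituting the cap on $\eta$ yields, up to the $\log(f(x_0)/\varepsilon)$ factor absorbed into $\tilde{\Omega}(\cdot)$, the claimed bound $T = \tilde{\Omega}(\sigma\sqrt{L/\varepsilon}/\delta^{1/\alpha})$. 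The complementary regime $\eta L > 1/2$ is handled by the single-shock argument alone, since contraction is then so strong that only the final-step noise matters.

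The main obstacle is the clean patching of the two regimes: for every admissible $\eta \in (0, 1/L]$, exactly one of the two arguments must apply, and the thresholds must line up at the stated lower bound rather than leaving a gap. A more delicate technical point is that the three-point distribution is what lets $|x_T| = \eta A$ hold as an identity on $E$ rather than as a concentration estimate---otherwise, controlling the residual noise from the other $T-1$ steps would require Bernstein-type inequalities and introduce slack that would loosen the final bound by polynomial factors. A minor bookkeeping issue is to confirm that on $E$ the zero-shock subevent underlying the convergence argument retains $\Omega(1)$ probability, which is automatic once we enforce $Tp = O(1)$; this bound on $Tp$ is compatible with the noise-regime derivation whenever $T$ is below the target lower bound, closing the loop.
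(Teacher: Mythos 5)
You take essentially the same strategy as the paper---a 1D quadratic, noise supported on $\{0,\pm A\}$, and a combination of a final-step shock (to cap $\eta$) with a noiseless-convergence bound (to turn that cap into a lower bound on $T$)---with one genuine departure: you use iid noise at every step, whereas the paper's noise is deterministically zero for $t<T-1$ (strictly, the paper's noise is $t$-dependent, so your version hews closer to the spirit of Assumption~\ref{asp:moment_noise} as a distributional assumption). The price is the extra bookkeeping with $(1-p)^T$ that the paper sidesteps entirely by construction.

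The central gap is the starting point. You run the single-shock argument from $x_0=0$, then separately ``shift to $x_0$ with $f(x_0)\geq 4\varepsilon$'' for the convergence argument. These are two different problem instances; the $\eta$-cap established for the first does not transfer to the second, and the theorem requires a single instance on which both deductions hold. The repair is to fix one nonzero $x_0$ throughout, enlarge $A$ to $2\sqrt{2\varepsilon/L}/\eta$, and use the triangle inequality so that on the single-shock event $|q^T x_0 + \eta\xi_{T-1}| \geq \eta A - q^T|x_0| \geq \sqrt{2\varepsilon/L}$ once $q^T|x_0|\leq\sqrt{2\varepsilon/L}$ has been established---this is exactly the paper's route. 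A secondary issue is the ``closing the loop'' step: since $p=(\sigma/A)^\alpha$ scales as $\eta^\alpha$, the condition $Tp\lesssim 1$ is governed by $\eta$ (a free parameter in $(0,1/L]$), not by $T$, so assuming $T$ is below the target does not by itself bound $Tp$. You can sidestep the $Tp$ case split altogether: when $q^T|x_0|>\sqrt{2\varepsilon/L}$, the symmetry of the iid noise (and its atom at $0$) gives $\prob\{|x_T|\geq q^T|x_0|\}\geq 1/2$, hence $\prob\{f(x_T)-f_*\geq\varepsilon\}\geq 1/2>\delta$; this forces $q^T|x_0|\leq\sqrt{2\varepsilon/L}$ unconditionally, and $T\gtrsim\frac{1}{\eta L}\log(Lx_0^2/\varepsilon)$ follows (for $\eta L$ bounded away from $1$---a point the paper also elides).
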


\begin{proof}
    We follow the construction of the counter example in \cite{pmlr-v202-sadiev23a}. 
    To prove the above theorem, we consider a simple 1D-problem $f(x) = L x^2/2$. 
    It is easy to see that the considered problem is $L$-strongly convex, $L$-smooth, and has optimum at $x_* = 0$. 
    We construct the noise in an adversarial way with respect to the parameters of the SGD. 
    Concretely, the noise depends on the number of iterates $t$, learning rate $\eta$, target precision $\varepsilon$, the starting point $x_0$, and the moment bound $\sigma$ such that 
    \begin{equation}
        \nabla F(x_t;\xi_t) = L x_t - \sigma \xi_t, 
    \end{equation}
    where 
    \begin{align}\label{app:eq:sgd_noise}
        \xi_t = 
        \begin{cases}
            0, & \text{if } t < T-1 \text{ or } (1 - \eta L)^T |x_0| > \sqrt{\frac{2\varepsilon}{L}} ,\\ 
            \begin{cases}
                - A,  & \text{with probability } \frac{1}{2A^\alpha}, \\
                0, & \text{with probability } 1 - \frac{1}{A^\alpha},\\
                A,  & \text{with probability } \frac{1}{2A^\alpha}, \\
            \end{cases}
            &\text{otherwise}
        \end{cases}
    \end{align}
    where $A = \max\left\{\frac{2\sqrt{\frac{2\varepsilon}{L}}}{\eta \sigma}, 1\right\}$. 
    We note that $\expect\left[\xi_t\right] = 0$ and $\expect\left[\nabla F(x_t;\xi_t)\right]  = \nabla f (x_t)$. 
    Furthermore, 
    \begin{equation}
        \expect [|\xi_t|^\alpha] \leq \frac{1}{2A^\alpha}  A^\alpha + \frac{1}{2A^\alpha}  A^\alpha = 1, 
    \end{equation}
    which implies that Assumption \ref{asp:moment_noise} holds. 

    We are interested in the situation when  
    \begin{equation}
        \prob \left\{f(x_T)-f_* \geq \varepsilon\right\} \leq \delta,
    \end{equation}
    for $\delta \in (0,1)$. 
    We first prove that this implies $(1 - \eta L)^T |x_0| \leq \sqrt{\frac{2\varepsilon}{L}}$. 
    To do that we proceed by contradiction and assume that 
    \begin{equation}\label{app:eq:fail_sgd_1}
        (1 - \eta L)^T |x_0| > \sqrt{\frac{2\varepsilon}{L}}.
    \end{equation}
    By construction, this implies that  $\xi_t = 0, \forall t \in \{0,\cdots, T-1\}$. 
    This, in turn, implies that $x_T = (1-\eta L)^T x_0$, and further, by \eqref{app:eq:fail_sgd_1} that
    \begin{align*}
        \prob \left\{f(x_T)-f_* \geq \varepsilon\right\} = \prob\left\{|x_T| \geq \sqrt{\frac{2\varepsilon}{L}}\right\} = 1.
    \end{align*}
    Thus, the contradiction shows that $(1 - \eta L)^T |x_0| \leq \sqrt{\frac{2\varepsilon}{L}}$.
    Using \eqref{app:eq:sgd_noise}, we obtain
    \begin{equation}
        f(x_T)-f_* = \frac{L}{2} \left[(1 - \eta L)^T x_0 + \eta \sigma \xi_{T-1}\right]^2.
    \end{equation}
    Furthermore,
    \begin{equation}
        \begin{aligned}
            \prob\left\{f(x_T)-f_* \geq \varepsilon\right\} 
            &= \prob\left\{\left|(1 - \eta L)^T  x_0 + \eta \sigma \xi_{T-1} \right| \geq \sqrt{\frac{2\varepsilon}{L}}\right\} \\
            &= \prob\left\{\left|\eta \sigma \xi_{T-1}\right| \geq \sqrt{\frac{2\varepsilon}{L}} + (1 - \eta L)^T  |x_0| \right\} \\
            &\geq \prob\left\{\left|\eta \sigma \xi_{T-1}\right| \geq 2\sqrt{\frac{2\varepsilon}{L}}\right\} \\
            &= \prob\left\{\left|\xi_{T-1}\right| \geq \frac{2\sqrt{\frac{2\varepsilon}{L}}}{\eta \sigma} \right\}.
        \end{aligned}
    \end{equation}
    Now if $\frac{2\sqrt{\frac{2\varepsilon}{L}}}{\eta \sigma} < 1$ then $A = 1$. Therefore, 
    \begin{equation}
        1 = \prob\left\{\left|\xi_{T-1}\right| \geq \frac{2\sqrt{\frac{2\varepsilon}{L}}}{\eta \sigma} \right\} \leq \prob\left\{f(x_T)-f_* > \varepsilon\right\} \leq \delta, 
    \end{equation}
    yielding contradiction, which implies that $\frac{2\sqrt{\frac{2\varepsilon}{L}}}{\eta \sigma} \geq 1$, \ie, $\eta \leq 2\sqrt{\frac{2\varepsilon}{L\sigma^2}}$. 
    In this case, $A=\frac{2\sqrt{\frac{2\varepsilon}{L}}}{\eta \sigma}$ and we have 
    \begin{equation}
         \delta \geq \prob\left\{f(x_T)-f_* \geq \varepsilon\right\} \geq \prob\left\{\left|\xi_{T-1}\right| \geq \frac{2\sqrt{\frac{2\varepsilon}{L}}}{\eta \sigma} \right\} = \frac{1}{A^\alpha}.
    \end{equation}
    This implies that $\eta \leq \frac{2\delta^{1/\alpha}}{\sigma}\sqrt{\frac{2\varepsilon}{L}}$. 
    Combining this inequality with $T \geq \frac{1}{2\eta L} \log\frac{L x_0^2}{2\varepsilon}$ yields
    \begin{equation}
        T = \Omega \left(\frac{\sigma}{\delta^{1/\alpha}}\sqrt{\frac{L}{\varepsilon}}\log\frac{L x_0^2}{2\varepsilon}\right).
    \end{equation}
    This concludes the proof.
\end{proof}

\end{document}